\documentclass[final]{siamltex}

\usepackage{amsmath}
\usepackage{amsfonts}
\usepackage{multirow}
\usepackage{lineno}
\usepackage{amssymb}
\usepackage{hyperref}
\usepackage{nameref}
\usepackage{cleveref}
\usepackage{mathrsfs}
\usepackage{graphicx}
\usepackage{epstopdf}
\usepackage{rotating}
\usepackage{color}
\usepackage{algorithm}
\usepackage{algorithmic}
\usepackage{mathrsfs}
\usepackage{cite}
\usepackage{subfigure}
\usepackage{longtable}
\usepackage{booktabs}
\usepackage{makecell}

\newcommand{\bv}{\operatorname{BV}}

\newcommand{\stvc}{\operatorname{\rm SVS-NLTV}}
\newcommand{\sbvc}{\operatorname{\rm SVS-NLBV}}
\newcommand{\stvct}{\operatorname{\rm SVS-NLTV-L2}}
\newcommand{\stvco}{\operatorname{\rm SVS-NLTV-L1}}

\graphicspath{{figs/}} 

\title{Color image restoration based on nonlocal saturation-value similarity}

\author{
	Wei Wang\thanks{The Corresponding Author. School of Mathematical Sciences, Key Laboratory of Intelligent Computing and Applications (Ministry of Education), Tongji University, Shanghai, China (wangw@tongji.edu.cn).
		W. Wang is supported by Natural Science Foundation of Shanghai (22ZR1465300).}
	\and
	Yakun Li\thanks{School of Mathematical Sciences, Key Laboratory of Intelligent Computing and Applications (Ministry of Education), Tongji University, Shanghai, China (2311740@tongji.edu.cn).}
}

\begin{document}
	\maketitle
	
	\begin{abstract} 
		In this paper, we propose and develop a novel nonlocal variational technique based on saturation-value similarity for color image restoration. In traditional nonlocal methods, image patches are extracted from red, green and blue channels of a color image directly, and the color information can not be described finely because the patch similarity is mainly based on the grayscale value of independent channel. The main aim of this paper is to propose and develop a novel nonlocal regularization method by considering the similarity of image patches in saturation-value channel of a color image. In particular, we first establish saturation-value similarity based nonlocal total variation by incorporating saturation-value similarity of color image patches into the proposed nonlocal gradients, which can describe the saturation and value similarity of two adjacent color image patches. The proposed nonlocal variational models are then formulated based on saturation-value similarity based nonlocal total variation. Moreover, we design an effective and efficient algorithm to solve the proposed optimization problem numerically by employing bregmanized operator splitting method, and we also study the convergence of the proposed algorithms.	 Numerical examples are presented to demonstrate that the performance of the proposed models is better than that of other testing methods in terms of visual quality and some quantitative metrics including peak signal-to-noise ratio (PSNR), structural similarity index (SSIM), quaternion structural similarity index (QSSIM) and S-CIELAB color error.  
		
	\end{abstract}
	
	\begin{keywords}
		color image restoration, saturation, value, nonlocal regularization, operator splitting algorithm
	\end{keywords}
	
	\begin{AMS}
		65K10, 68U10, 90C25
		
	\end{AMS}
	
	\pagestyle{myheadings}
	\thispagestyle{plain}
	\markboth{}{Color image restoration based on $\stvc$}

 \section{Introduction}
	Image restoration can be formulated as an inverse problem.
The goal of image restoration is to find the unknown true image $\mathbf{u}$ from an observed degraded image $\mathbf{f}$. However, inverse problems are usually ill-posed, and it is standard to use a regularization technique to make them well-posed. In \cite{rudin1992nonlinear}, Rudin et al. proposed the classical total variation (TV) regularization which has become one of the most popular regularization methods in image processing, and has been developed into many other forms for handling corresponding image processing problems. For instance, anisotropic TV \cite{esedoḡlu2004decomposition} is originally designed for image decomposition problem, weighted TV \cite{coll2015half} improves the traditional TV method through a weighting mechanism to make it more adaptable and flexible in image recovery tasks, higher-order TV \cite{benning2013higher, bredies2010total, chan2000high, papafitsoros2014combined} uses higher order gradient information (like second order gradient) to improve the regularization effect and results in better preservation of image details, fourth-order PDE model \cite{lysaker2003noise, lysaker2004noise} introduces a fourth-order regularization term which helps to reduce artifacts. Liu et al. proposed a hybrid model combining the TV regularizer and the high-order TV regularizer with the L1 data fitting term in \cite{liu2014high}. TV regularization is also generalized for vector-valued (color or multichannel) image regularization. Blomgren and Chan proposed a synthetic measure of the image gradient for vector-valued images in \cite{blomgren1998color}. Bresson and Chan \cite{bresson2008fast} presented another color TV (CTV) regularization method based on local channel-coupling. Paul et al. \cite{rodriguez2009generalized} proposed the generalized vector-valued total variation (GVTV) by coupling different channels with different norms.

On the other hand, Buades et al. introduced an efficient technique for image restoration called nonlocal means (NL-means) filtering in \cite{buades2005review}. NL-means filtering is a nonlocal technique where the filtering weights are defined based on the similarity between the current image patch and the other patches in the image within a neighborhood. Based on the idea of NL-means filtering, Buades et al. considered patch similarity based on grayscale values and proposed a new regularization method for image denoising \cite{buades2006image}. Another nonlocal model for texture restoration is introduced in \cite{brox2007iterated}, where the similarity information is updated during each iteration. Inspired from the effectiveness of the graph Laplacian in \cite{chung1997spectral}, Gilboa and Osher proposed a nonlocal quadratic variational framework for image and signal regularization in \cite{gilboa2007nonlocal}. The nonlocal total variation (NLTV) regularization was then proposed in \cite{goldstein2009split}. The applications of NLTV include image restoration \cite{jidesh2018non, lou2010image, nie2016nonlocal, wang2016convex, Xiaoqun2010Bregmanized}, image inpainting \cite{li2018weighted, zhang2010wavelet}, image enhancement \cite{wang2014nonlocal}, etc. In \cite{wang2019structural}, Wang developed a nonlocal variational technique based on structural similarity, and established a nonlocal quadratic model and a nonlocal total variation model for image restoration. However, we remark here that all the above mentioned methods are proposed based on red-green-blue (RGB) color space for color image processing. The disadvantage of these approaches is that color information can not be described finely, and the patch similarity is mainly based on the grayscale similarity of independent channel. 

Different from the commonly used RGB color space, hue-saturation-value (HSV) color space is more closely related to the way humans perceive color, and is usually used for human visual perception \cite{gonzalez2009digital}. In \cite{jia2019color}, Jia, Ng and Wang utilized the representation of color images in quaternion framework and ultimately proposed a saturation-value total variation (SVTV) regularization model in HSV color space for color image restoration. The idea considers the coupling of different components and  makes use of neighborhood color pixel values in saturation and value components to control regularization in color image restoration. Therefore, color images are effectively processed by preserving the edges and the color information, and the unexpected chromatic intersection is significantly reduced. The applications of SVTV include color image restoration \cite{huang2021quaternion, wang2021color, jung2023saturation, jung2024group}, color image fusion \cite{wang2024color}, color image enhancement \cite{wang2022spatial} and color image segmentation \cite{wang2023two}, etc.

In this paper, we propose and develop a novel nonlocal variational technique based on saturation-value similarity for color image restoration. By considering the similarity of image patches in saturation-value channel of a color image, the novel saturation-value similarity based nonlocal total variation ($\stvc$) is proposed for color image regularization. Specifically, we first define nonlocal gradients in saturation-value space, which are able to describe the saturation and value similarity of two adjacent color image patches. We then define saturation-value similarity based nonlocal bounded variation function space ($\sbvc(\Omega)$) and study some properties of $\sbvc(\Omega)$.
The proposed nonlocal variational models are then formulated based on the novel nonlocal gradients and $\sbvc(\Omega)$ in saturation-value space.

The contribution of this paper is threefold. First, we incorporate saturation-value similarity of color image patches into the nonlocal weight of the proposed nonlocal gradients, and establish saturation-value similarity based nonlocal total variation. We then formulate the proposed color image restoration models by considering L2 fidelity ($\stvct$) and L1 fidelity ($\stvco$), which allows the proposed models to handle different types of noise, such as Gaussian noise, Poisson noise, etc.
Second, we design an effective and efficient algorithm to solve the proposed optimization problems numerically by employing bregmanized operator splitting method \cite{Xiaoqun2010Bregmanized, goldstein2009split}. Third, theoretically we define saturation-value similarity based nonlocal bounded variation function space ($\sbvc(\Omega)$) and study some properties of $\sbvc(\Omega)$,
meanwhile, we also study the convergence of the proposed algorithm. 
Numerical examples are presented to demonstrate that the performance of the proposed $\stvc$ regularization is better than that of other testing regularization methods in terms of visual quality and some criteria such as peak signal-to-noise ratio (PSNR), structural similarity index (SSIM) \cite{wang2004image}, quaternion structural similarity index (QSSIM) \cite{kolaman2011quaternion} 
 and S-CIELAB color error \cite{zhang1996spatial}.

The paper is organized as follows. In section 2, we first introduce saturation-value similarity and nonlocal gradients in saturation-value space, we then define $\sbvc(\Omega)$ and study some properties of $\sbvc(\Omega)$. In section 3, we present the proposed saturation-value similarity based nonlocal total variation and the proposed color image restoration models, meanwhile, we study the properties of the proposed regularization models. In section 4, we introduce the proposed algorithms to solve the proposed optimization problems. In section 5 we provide numerical results validating the effectiveness of the proposed methods.Finally, some concluding remarks are given in section 6.

\section{Saturation-value similarity and nonlocal saturation-value gradients}
\subsection{Saturation-value similarity}

HSV color space has been proven to be more compatible with human perception \cite{gonzalez2009digital}. By using operations on quaternions \cite{denis2007spatial}, the saturation and value components are given in the following formulas,
\begin{equation*}
c_s(x)=\frac{1}{2}|\textbf{u}(x)+\beta\textbf{u}(x)\beta|, \ \ 
c_v(x)=\frac{1}{2}|\textbf{u}(x)-\beta\textbf{u}(x)\beta|,
\end{equation*}
where $x$ is a pixel position in the image domain $\Omega$, $\beta = (i+j+k)/\sqrt{3}$ refers to the grey-value axis, and $\mathbf{u} = u_r i+u_g j+u_b k$ is a color image in the quaternion version. The saturation components $c_s$ is the distance between the color image $\mathbf{u}$ and the grey axis $\mu$. The value component $c_v$ represents the norm of the orthogonal projection of $\mathbf{u}$ on $\mu$. In \cite{jia2019color}, $c_s(x)$ and $c_v(x)$ are reformulated as follows,
\begin{gather*}
c_s(x)=\frac{1}{3}||C\textbf{u}(x)||_2, \quad c_v(x)=\frac{1}{\sqrt{3}}|u_r(x)+u_g(x)+u_b(x)|,
\end{gather*}
where
\begin{gather*}
\textbf{C}=\left[\begin{array}{ccc}
2 & -1  &  -1 \\
-1 &  2 &  -1\\
-1 & -1 &  2\\
\end{array}\right],\ \ \textbf{u}^T=\left[\begin{array}{c}
u_r \\
u_g \\
u_b \\
\end{array}\right].
\end{gather*}
Noting that matrix $\textbf{C}$ can be diagonalized as
\begin{equation}
\textbf{C}=\textbf{P}^T\left[\begin{array}{ccc}
3 & 0  &  0 \\
0 & 3 &  0\\
0 & 0 &  0\\
\end{array}\right] \textbf{P},\ \ 
\textbf{P}=\left[\begin{array}{ccc}
\frac{1}{\sqrt{2}}  & \frac{-1}{\sqrt{2}}  &  0 \\
\frac{1}{\sqrt{6}} &  \frac{1}{\sqrt{6}} &  \frac{-2}{\sqrt{6}}\\
\frac{1}{\sqrt{3}}  &  \frac{1}{\sqrt{3}} &  \frac{1}{\sqrt{3}}\\
\end{array}\right].
\label{eqn 2.1}
\end{equation}
Therefore, we define the saturation channel and the value channel of a color image as follows,
\begin{gather*}
\mathbf{u}_s = [\frac{1}{\sqrt{2}}u_r-\frac{1}{\sqrt{2}}u_g, \frac{1}{\sqrt{6}}u_r+\frac{1}{\sqrt{6}}u_g-\frac{2}{\sqrt{6}}u_b],\ \  
\mathbf{u}_v = \frac{1}{\sqrt{3}}u_r+\frac{1}{\sqrt{3}}u_g+\frac{1}{\sqrt{3}}u_b.
\end{gather*}
We then define $\omega_s$ and $\omega_v$ based on the proposed saturation channel and value channel 
to measure the saturation similarity and value similarity between two image patches $N_x$ and $N_y$,
\begin{gather*}
\begin{aligned}
\omega_s(x, y)=\exp \Big( -\frac{G_{\alpha}*||\mathbf{u}_s(x)-\mathbf{u}_s(y)||^2}{2h_0^2} \Big),\ \
\omega_v(x, y)=\exp \Big( -\frac{G_{\alpha}*|\mathbf{u}_v(x)-\mathbf{u}_v(y)|^2}{2h_0^2} \Big),  
\label{weight}
\end{aligned}
\end{gather*}
where $G_{\alpha}$ is a Gaussian kernel with standard deviation $\alpha$ and $h_0$ is a filtering parameter which corresponds to the noise level in general, $*$ is the convolution operator which is given as
\begin{eqnarray*}
G_{\alpha}*||\mathbf{u}_s(x)-\mathbf{u}_s(y)||^2 = \int_{U(0)} G_\alpha(t)||\mathbf{u}_s(x-t)-\mathbf{u}_s(y-t)||^2 dt.
\end{eqnarray*}

\subsection{Nonlocal saturation-value gradients}

Based on the definitions of saturation-value similarity measurements $\omega_s$ and $\omega_v$, we define nonlocal gradients in saturation-value space as follows,
\begin{gather*}
\nabla_{\omega}^s \mathbf{u}(x, y) = \Big( \big(\mathbf{u}_s^1(y)-\mathbf{u}_s^1(x) \big)\sqrt{\omega_s(x, y)},\ \big(\mathbf{u}_s^2(y)-\mathbf{u}_s^2(x)\big)\sqrt{\omega_s(x, y)}\Big),\\
\nabla_{\omega}^v \mathbf{u}(x, y) = \big(\mathbf{u}_v(y)-\mathbf{u}_v(x)\big)\sqrt{\omega_v(x, y)}.
\end{gather*}
We assume $\Omega$ to be a bounded open subset of $R^2$, and assume $p$, $p_1$, $p_2$: $\Omega \times \Omega \longrightarrow \mathbf{R}$ are functions, and $\mathbf{p}$, $\mathbf{p}_1$, $\mathbf{p}_2$: $\Omega \times \Omega \longrightarrow \mathbf{R}^2$ are vector valued functions defined in $\Omega \times \Omega$. In order to complete the calculation system related to nonlocal gradients in saturation-value space, we further define nonlocal inner product, nonlocal divergence and nonlocal Laplacian in saturation-value space as follows,
\begin{gather*}
\left \{
\begin{aligned}
<p_1, p_2>&=\int_{\Omega \times \Omega} p_1(x, y)p_2(x, y) dx dy,\\
(div_{\omega}^v p)(x)&=\int_{\Omega}\big(p(x, y)-p(y, x)\big)\sqrt{\omega_v }dy,\\
(\Delta_{\omega}^v \mathbf{u})(x)&= div_{\omega}^v(\nabla_{\omega}^v \mathbf{u})(x)=2 \int_{\Omega}\big(\mathbf{u}_v(y)-\mathbf{u}_v(x)\big) \omega_v  dy.\\
\end{aligned}
\right.
\end{gather*}
\begin{gather*}
\left \{
\begin{aligned}
<\textbf{p}_1, \textbf{p}_2>&=\big(\int_{\Omega \times \Omega} \textbf{p}^1_{1}(x, y)\textbf{p}^1_{2}(x, y) dx dy, \int_{\Omega \times \Omega} \textbf{p}^2_{1}(x, y)\textbf{p}^2_{2}(x, y) dx dy\big),\\
(div_{\omega}^{s} \textbf{p})(x)&=\Big(\int_{\Omega}\big(\textbf{p}^1(x, y)-\textbf{p}^1(y, x)\big)\sqrt{\omega_s}dy, \int_{\Omega}\big(\textbf{p}^2(x, y)-\textbf{p}^2(y, x)\big)\sqrt{\omega_s}dy\Big),\\
(\Delta_{\omega}^s \mathbf{u})(x)&= div_{\omega}^s(\nabla_{\omega}^s \mathbf{u})(x) = \Big(2 \int_{\Omega}\big(\mathbf{u}_s^1(y)-\mathbf{u}_s^1(x)\big) \omega_s dy, 2 \int_{\Omega}\big(\mathbf{u}_s^2(y)-\mathbf{u}_s^2(x)\big) \omega_s dy\Big).
\end{aligned}
\right.
\end{gather*}
Then we establish the following properties for the above operators.
\begin{proposition}
Assume $\mathbf{u}\in W^{1, 1}(\Omega)$,  $p\in C^1(\Omega \times \Omega)$, $\mathbf{p} = (p_1, p_2)$, $p_1, p_2 \in C^1(\Omega \times \Omega)$, $\omega_v, \omega_s\in C^1(\Omega \times \Omega)$, then the following formulas hold in saturation-value space,
\begin{gather*}
\left \{
\begin{aligned}
&<\nabla_{\omega}^v \mathbf{u}, p >=-<\mathbf{u}_v, div_{\omega}^v p>,\\
&<\nabla_{\omega}^v \mathbf{u}, \nabla_{\omega}^v \mathbf{u}>=-<\Delta_{\omega}^v \mathbf{u}, \mathbf{u}_v>,\\
&<\Delta_{\omega}^v \mathbf{u}, \mathbf{u}_v>=<\mathbf{u}_v, \Delta_{\omega}^v \mathbf{u}>,\\
&\int_{\Omega}(div_{\omega}^v p )dx=0,
\end{aligned}    
\right.\ \
\left \{
\begin{aligned}
&<\nabla_{\omega}^s \mathbf{u}, \mathbf{p}> = -<\mathbf{u}_v, div_{\omega}^s \mathbf{p}>,\\
&<\nabla_{\omega}^s \mathbf{u}, \nabla_{\omega}^s \mathbf{u}> = -<\Delta_{\omega}^s \mathbf{u}, \mathbf{u}_s>,\\
&<\Delta_{\omega}^s \mathbf{u}, \mathbf{u}_s> = <\mathbf{u}_s, \Delta_{\omega}^s \mathbf{u}>,\\
&\int_{\Omega}(div_{\omega}^s \mathbf{p})dx = 0.
\end{aligned}    
\right.
\end{gather*}
\label{pro2.1}
\end{proposition}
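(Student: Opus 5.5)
The whole proposition reduces to one discrete-style summation by parts: Fubini's theorem on $\Omega\times\Omega$ combined with the symmetry of the weights, $\omega_v(x,y)=\omega_v(y,x)$ and $\omega_s(x,y)=\omega_s(y,x)$. I would prove the value-channel identities first and then transfer them componentwise to the saturation channel.

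\textbf{Step 1: symmetry of the weights.} Substituting into the convolution gives
\[
G_\alpha*|\mathbf{u}_v(x)-\mathbf{u}_v(y)|^2=\int G_\alpha(t)\,|\mathbf{u}_v(x-t)-\mathbf{u}_v(y-t)|^2\,dt .
\]
This expression is visibly invariant under $x\leftrightarrow y$, and the same holds for $\omega_s$. The hypotheses $\mathbf{u}\in W^{1,1}(\Omega)$, $p\in C^1$ and $\omega_v,\omega_s\in C^1$ are only used to ensure that all the integrands below are integrable. Then $\mathbf{u}_v,\mathbf{u}_s\in L^1$, and since the weights are bounded by $1$, Fubini and the change of variables $(x,y)\mapsto(y,x)$ are legitimate.

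\textbf{Step 2: the adjoint identity.} Expanding the left-hand side,
\[
<\nabla^v_\omega\mathbf{u},p>=\int\!\!\int \mathbf{u}_v(y)p(x,y)\sqrt{\omega_v}\,dx\,dy-\int\!\!\int \mathbf{u}_v(x)p(x,y)\sqrt{\omega_v}\,dx\,dy .
\]
In the first term, rename $x\leftrightarrow y$ and use the symmetry of $\omega_v$. This gives
\[
\int_\Omega \mathbf{u}_v(x)\int_\Omega\big(p(y,x)-p(x,y)\big)\sqrt{\omega_v}\,dy\,dx=-<\mathbf{u}_v,div^v_\omega p>,
\]
where the pairing on the right is the $L^2(\Omega)$ pairing in $x$.

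\textbf{Step 3: the remaining value-channel identities.}
\begin{itemize}
\item Taking $p=\nabla^v_\omega\mathbf{u}$ in Step 2 gives the second identity, since $div^v_\omega\nabla^v_\omega=\Delta^v_\omega$ by definition. Here I would check the factor $2$: because $\nabla^v_\omega\mathbf{u}$ is antisymmetric, $p(x,y)-p(y,x)=2(\mathbf{u}_v(y)-\mathbf{u}_v(x))\sqrt{\omega_v}$, which yields the stated Laplacian.
\item The third identity is immediate from the commutativity of the real $L^2$ inner product.
\item For the fourth, $\int_\Omega div^v_\omega p\,dx=\int\!\!\int p(x,y)\sqrt{\omega_v}-\int\!\!\int p(y,x)\sqrt{\omega_v}$. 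The same swap $x\leftrightarrow y$ together with the symmetry of $\omega_v$ shows the two double integrals coincide, so the difference is $0$.
\end{itemize}

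\textbf{Step 4: the saturation channel.} Each of the two components $\mathbf{u}^1_s,\mathbf{u}^2_s$ is a scalar function paired with the symmetric weight $\omega_s$. The inner product and divergence are defined componentwise, so applying Steps 2 and 3 to each component gives the vector identities. Interpreted this way, the first saturation identity should read $-<\mathbf{u}_s,div^s_\omega\mathbf{p}>$; the $\mathbf{u}_v$ in the statement appears to be a typo for $\mathbf{u}_s$, and I would prove it in that corrected form.

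\textbf{Main obstacle.} The only point needing care is the symmetry of the weights together with the integrability that justifies Fubini and the variable swap. The rest is direct computation.
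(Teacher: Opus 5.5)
Your proposal is correct and follows essentially the paper's argument: Fubini with the renaming $x\leftrightarrow y$, then $p=\nabla^v_\omega\mathbf{u}$ for the second identity and commutativity for the third. It is also more explicit than the paper, which uses $\omega_v(x,y)=\omega_v(y,x)$ without comment, obtains $\int_\Omega div^v_\omega p\,dx=0$ from $\nabla^v_\omega\mathbf{1}=0$ rather than a direct swap, and never notices that the first saturation identity should read $-<\mathbf{u}_s,div^s_\omega\mathbf{p}>$; the one loose end you share with the paper is that $p=\nabla^v_\omega\mathbf{u}$ is not $C^1$, so your Fubini justification (which invokes only $\mathbf{u}_v\in L^1$ and $\omega_v\le 1$) does not cover the second identity, which instead needs $\mathbf{u}_v\in L^2(\Omega)$, available via $W^{1,1}(\Omega)\hookrightarrow L^2(\Omega)$ in two dimensions for, e.g., Lipschitz $\Omega$.
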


\begin{proof} Noting that the formulas corresponding to saturation and value have similar structure, thus we focus on the value part in the following proof.
\begin{gather*}
\begin{aligned}
     &<\nabla_{\omega}^v \mathbf{u}_v, p> = \int_{\Omega \times \Omega}\big(\mathbf{u}_v(y)-\mathbf{u}_v(x)\big)\sqrt{\omega_v} p (x, y)dy dx\\
     &=\int_{\Omega \times \Omega}\mathbf{u}_v(y)\sqrt{\omega_v} \cdot p(x, y)dy dx-\int_{\Omega \times \Omega}\mathbf{u}_v(x)\sqrt{\omega_v} \cdot p(x, y)dy dx\\
     &=\int_{\Omega \times \Omega}\mathbf{u}_v(x)\sqrt{\omega_v} \cdot p(y, x)dx dy-\int_{\Omega \times \Omega}\mathbf{u}_v(x)\sqrt{\omega_v} \cdot p(x, y)dy dx\\
     &=\int_{\Omega}\mathbf{u}_v(x)\int_{\Omega }\sqrt{\omega_v} \cdot p(y, x)-\sqrt{\omega_v} \cdot p(x, y)dydx\\
     &=-\int_{\Omega}\mathbf{u}_v(x) (div_{\omega}^v p)(x)dx\\
     &=-<\textbf{u}_v, div_{\omega}^v p>.
\end{aligned}
\end{gather*}
Therefore,
\begin{gather*}
\begin{aligned}
       & <\nabla_{\omega}^v \mathbf{u}, \nabla^v_{\omega} \mathbf{u}> = -<div_{\omega}^v (\nabla^v_{\omega} \mathbf{u}), \mathbf{u}_v> = -<\Delta_{\omega}^v \mathbf{u}, \mathbf{u}_v>,\\
       &\int_{\Omega}(div_{\omega}^v p)(x) dx = <1, div_{\omega}^v p> = -<\nabla^v_{\omega} \textbf{1}, p> = 0.
\end{aligned}
\end{gather*}

Finally, the following formula hold by using the commutativity of the inner product directly,
$$<\Delta_{\omega}^v \mathbf{u}, \mathbf{u}_v>=<\mathbf{u}_v, \Delta_{\omega}^v \mathbf{u}>.$$ 
\end{proof}

\subsection{Saturation-value similarity based nonlocal bounded variation space}

In this section, we will define saturation-value similarity based nonlocal bounded variation function space ($\sbvc(\Omega)$) and study some properties of $\sbvc(\Omega)$.
We first give the norm of the nonlocal saturation-value gradients,
\begin{gather*}
\begin{aligned}
\bigl|\nabla_{\omega}^s \mathbf{u}\bigr| &= \sqrt{\int_{\Omega}  \Big(\big(\mathbf{u}_s^1(x)-\mathbf{u}_s^1(y)\big)^2 +\big(\mathbf{u}_s^2(x)-\mathbf{u}_s^2(y)\big)^2 \Big) \omega_s \mathrm d y },\\
\bigl|\nabla_{\omega}^v \mathbf{u}\bigr|&= \sqrt{\int_{\Omega} \big(\mathbf{u}_v(x)-\mathbf{u}_v(y) \big)^2 \omega_v \mathrm d y}.
\end{aligned}
\end{gather*}
We define the semi-norm
\begin{gather*}
|\mathbf{u}|_{\sbvc}=\int_{\Omega} |\nabla_{\omega}^s \mathbf{u}| \mathrm d x\,+ \int_{\Omega} |\nabla_{\omega}^v \mathbf{u}| \mathrm d x,
\end{gather*}
and the norm of $\sbvc(\Omega)$ as
\begin{gather*}
||\mathbf{u}||_{\sbvc}=||\mathbf{u}||_{L_1}+|\mathbf{u}|_{\sbvc}. 
\end{gather*}
$\sbvc(\Omega)$ is then defined as

\begin{gather*}
\sbvc(\Omega)=\{ \mathbf{u} \in L^1(\Omega): \int_{\Omega} |\nabla_{\omega}^s \mathbf{u}|\mathrm d x +\int_{\Omega} |\nabla_{\omega}^v \mathbf{u} | \mathrm d x < \infty \},
\end{gather*}
and some properties such as lower semi-continuity, approximation, and compactness are given as follows,

\begin{proposition}[Lower semicontinuity] Let \(\mathbf{u}^n \in \sbvc (\Omega)\) and \( \mathbf{u}^n \xrightarrow{L^1(\Omega)} \mathbf{u}\).  Then
\begin{gather*}
\liminf_{n\to\infty}\int_\Omega |\nabla_{\omega}^s \mathbf{u}^n|\mathrm d x
\;\ge\;
\int_\Omega |\nabla_{\omega}^s \mathbf{u}|\mathrm d x, \ \ \liminf_{n\to\infty}\int_\Omega |\nabla_{\omega}^v \mathbf{u}^n|\mathrm d x
\;\ge\;
\int_\Omega |\nabla_{\omega}^v \mathbf{u}|\mathrm d x.
\end{gather*}
\label{prole}
\end{proposition}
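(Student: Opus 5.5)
The plan is to use Fatou's lemma, after passing to a subsequence along which the $\liminf$ is attained and the convergence is pointwise a.e. The weights $\omega_s,\omega_v$ are regarded as fixed, given functions on $\Omega\times\Omega$. They are bounded by $1$ and nonnegative, and they do not depend on $\mathbf{u}^n$: in the regularization they are computed from a fixed reference image. With this convention, the functional $\mathbf{u}\mapsto\int_\Omega|\nabla_\omega^v\mathbf{u}|\,\mathrm dx$ is convex and depends only on the linear images $\mathbf{u}_v$, $\mathbf{u}_s$. The saturation and value parts have the same structure, so, as in the proof of Proposition~\ref{pro2.1}, I would treat only the value part in detail.

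\textbf{Step 1.} Since $\mathbf{u}_v=\frac{1}{\sqrt3}(u_r+u_g+u_b)$ and $\mathbf{u}_s=\mathbf{P}_{1:2}\mathbf{u}$ with $\mathbf{P}$ from (\ref{eqn 2.1}) are bounded linear maps, $\mathbf{u}^n\to\mathbf{u}$ in $L^1(\Omega)$ gives $\mathbf{u}^n_v\to\mathbf{u}_v$ and $\mathbf{u}^n_s\to\mathbf{u}_s$ in $L^1(\Omega)$. Choose a subsequence $n_k$ with $\lim_k\int_\Omega|\nabla^v_\omega\mathbf{u}^{n_k}|\,\mathrm dx=\liminf_n\int_\Omega|\nabla^v_\omega\mathbf{u}^n|\,\mathrm dx$. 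Passing to a further subsequence, we may assume $\mathbf{u}^{n_k}_v\to\mathbf{u}_v$ a.e. in $\Omega$. Then
\[
F_k(x,y):=\big(\mathbf{u}_v^{n_k}(x)-\mathbf{u}_v^{n_k}(y)\big)^2\omega_v(x,y)\longrightarrow F(x,y):=\big(\mathbf{u}_v(x)-\mathbf{u}_v(y)\big)^2\omega_v(x,y)
\]
for a.e. $(x,y)\in\Omega\times\Omega$. This holds because the exceptional set is contained in $(E\times\Omega)\cup(\Omega\times E)$ with $|E|=0$.

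\textbf{Step 2.} Apply Fatou's lemma twice to the nonnegative functions $F_k$. First, for a.e. $x$, Fatou in $y$ gives $\int_\Omega F(x,y)\,\mathrm dy\le\liminf_k\int_\Omega F_k(x,y)\,\mathrm dy$. Since the square root is continuous and monotone, this yields $|\nabla^v_\omega\mathbf{u}|(x)\le\liminf_k|\nabla^v_\omega\mathbf{u}^{n_k}|(x)$. Second, Fatou in $x$ gives $\int_\Omega|\nabla^v_\omega\mathbf{u}|\,\mathrm dx\le\liminf_k\int_\Omega|\nabla^v_\omega\mathbf{u}^{n_k}|\,\mathrm dx$, which equals the original $\liminf$. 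Measurability of $x\mapsto|\nabla_\omega^v\mathbf{u}|(x)$ follows from Tonelli's theorem. The saturation term is handled identically, with $F_k$ replaced by the sum of the squared differences of the two components of $\mathbf{u}_s^{n_k}$ times $\omega_s$.

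\textbf{Alternative route and main obstacle.} An alternative is a duality argument. Using Proposition~\ref{pro2.1}, write
\[
\int_\Omega|\nabla^v_\omega\mathbf{u}|\,\mathrm dx=\sup\Big\{-\langle\mathbf{u}_v,\operatorname{div}^v_\omega p\rangle:\ p\in C^1,\ \sup_x\|p(x,\cdot)\|_{L^2}\le1\Big\}.
\]
The functional is then a supremum of $L^1$-continuous linear functionals, since $\operatorname{div}^v_\omega p$ is bounded, and is therefore lower semicontinuous. The main obstacle on this route would be justifying the density that makes the supremum exact. The Fatou route avoids this, so I would use it. The only delicate point there is that the weights must not depend on $\mathbf{u}^n$. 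If they did, I would additionally need $\omega_v$ to be continuous in the image, and I would use a.e. convergence of the weights together with the bound $0\le\omega_v\le1$, which still lets Fatou apply.
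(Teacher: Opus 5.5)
Your proof is correct, but it takes a different route from the paper. The paper's proof is essentially your ``alternative''. For a fixed test function $\phi$ with $\|\phi\|_\infty\le 1$, it uses Proposition~\ref{pro2.1} to write $\int_\Omega \mathbf{u}^n_v\,\mathrm{div}^v_\omega\phi\,\mathrm dx$. It then passes to the limit using $L^1$ convergence, which tacitly needs $\mathrm{div}^v_\omega\phi\in L^\infty$, i.e.\ bounded weights independent of $n$. Finally it bounds the result by $\liminf_n\int_\Omega|\nabla^v_\omega\mathbf{u}^n|\,\mathrm dx$ and takes the supremum over $\phi$.

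That argument is shorter and mirrors the classical BV proof. Exhibiting the functional as a supremum of $L^1$-continuous linear functionals gives convexity and lower semicontinuity at once. However, it silently assumes the dual formula reproduces $\int_\Omega|\nabla^v_\omega\mathbf{u}|\,\mathrm dx$ for an arbitrary $L^1$ limit. The paper only argues that formula for differentiable $\mathbf{u}$ (Proposition~\ref{pro3.1}), and this is exactly the density issue you flagged. Moreover, read literally, the pointwise constraint $\|\phi\|_\infty\le1$ produces $\int\!\!\int|\nabla^v_\omega\mathbf{u}(x,y)|\,\mathrm dy\,\mathrm dx$ rather than the isotropic quantity $\int_x\bigl(\int_y|\nabla^v_\omega\mathbf{u}|^2\,\mathrm dy\bigr)^{1/2}\mathrm dx$. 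The constraint $\sup_x\|p(x,\cdot)\|_{L^2}\le 1$ that you wrote is the correct one.

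Your Fatou argument avoids both the dual representation and the choice of constraint. It needs only that $\omega_v\ge0$ is measurable and fixed; the bound $\omega_v\le 1$ is not used. It works directly with the functional as defined, allowing values in $[0,\infty]$. It also proves a stronger statement: lower semicontinuity under a.e.\ convergence, not just $L^1$ convergence.
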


\begin{proposition}[Approximation]\label{proapp}
For a bounded set $\Omega$, any \(\mathbf{u}\in \sbvc (\Omega)\), there exists a sequence \(\{\mathbf{u}^\varepsilon\}\subset W^{1, 1}(\Omega)\cap C^\infty(\Omega)\) such that
\begin{gather*}
\lim_{\varepsilon\to0}\int_\Omega|\mathbf{u}^\varepsilon - \mathbf{u}|\,\mathrm d x= 0,
\lim_{\varepsilon\to0}\int_\Omega |\nabla_{\omega}^s \mathbf{u}^\varepsilon|\mathrm d x
=\int_\Omega |\nabla_{\omega}^s \mathbf{u}|\mathrm d x,
\lim_{\varepsilon\to0}\int_\Omega |\nabla_{\omega}^v \mathbf{u}^\varepsilon|\mathrm d x
=\int_\Omega |\nabla_{\omega}^v \mathbf{u}|\mathrm d x.
\end{gather*}
\end{proposition}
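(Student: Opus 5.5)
We will prove Proposition \ref{proapp} by the standard mollification argument used for nonlocal BV spaces, treating the weights $\omega_s,\omega_v$ as fixed, bounded, nonnegative functions on $\Omega\times\Omega$. We assume they are computed from a fixed reference image, as in the algorithm, so they do not depend on $\mathbf{u}$. Since $0\le\omega_s,\omega_v\le 1$ and $\Omega$ is bounded, the basic estimate is
\begin{equation*}
\int_\Omega \bigl|\nabla_\omega^v \mathbf{u}-\nabla_\omega^v \mathbf{w}\bigr|\,\mathrm d x
\le \int_\Omega\Big(\int_\Omega \big((\mathbf{u}_v-\mathbf{w}_v)(x)-(\mathbf{u}_v-\mathbf{w}_v)(y)\big)^2\omega_v\,\mathrm d y\Big)^{1/2}\mathrm d x .
\end{equation*}
This follows from the triangle inequality for the weighted $L^2(\Omega,\omega_v\,\mathrm dy)$ norm; the saturation part is handled the same way. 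The first step is therefore to reduce the claim to showing that the right-hand side tends to zero as $\mathbf{w}=\mathbf{u}^\varepsilon\to\mathbf{u}$. Because $|a|-|b|\le|a-b|$ pointwise in $x$, such convergence gives equality of the limits, not merely the inequality supplied by Proposition \ref{prole}.

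Next we construct $\mathbf{u}^\varepsilon$. Each channel $u_r,u_g,u_b$ is mollified with a Meyers--Serrin type partition of unity and standard mollifiers $\rho_\varepsilon$, exactly as in the local BV approximation theorem. This yields functions in $W^{1,1}(\Omega)\cap C^\infty(\Omega)$ with $\|\mathbf{u}^\varepsilon-\mathbf{u}\|_{L^1}\le\varepsilon$. Since $\mathbf{u}_s$ and $\mathbf{u}_v$ are fixed linear combinations of the channels through the orthogonal matrix $\textbf{P}$ in \eqref{eqn 2.1}, $L^1$ (or $L^2$) convergence of the channels transfers directly to $\mathbf{u}_s^\varepsilon\to\mathbf{u}_s$ and $\mathbf{u}_v^\varepsilon\to\mathbf{u}_v$. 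In the nonlocal setting no boundary or trace issue arises, because the nonlocal gradient involves no derivatives. One could even mollify after extending by zero, but the smoothness requirement $W^{1,1}$ on $\Omega$ is best obtained via Meyers--Serrin.

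The main obstacle is controlling the square-root integrand. Write $g^\varepsilon=\mathbf{u}_v^\varepsilon-\mathbf{u}_v$. By Cauchy--Schwarz on the bounded set $\Omega$ we have
\begin{equation*}
\int_\Omega\Big(\int_\Omega (g^\varepsilon(x)-g^\varepsilon(y))^2\omega_v\,\mathrm dy\Big)^{1/2}\mathrm dx \le |\Omega|^{1/2}\Big(\int_{\Omega\times\Omega}(g^\varepsilon(x)-g^\varepsilon(y))^2\,\mathrm dx\,\mathrm dy\Big)^{1/2}\le 2|\Omega|\,\|g^\varepsilon\|_{L^2}.
\end{equation*}
So the argument needs $L^2$ convergence, whereas $\mathbf{u}$ is only assumed to be in $L^1$. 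Our first attempt is to note that finiteness of $|\mathbf{u}|_{\sbvc}$ combined with the positivity of $\omega$ suggests $\mathbf{u}_v\in L^2$. However, since $\omega$ may decay, we instead avoid $L^2$ altogether and use pointwise arguments. For a.e.\ $x$, the inner integral converges along a subsequence. Convergence of the outer integral then follows from Vitali's theorem, with equi-integrability derived from the local averaging property of mollifiers: $\int_\Omega|\nabla^v_\omega(\rho_\varepsilon*\mathbf{u})|\,\mathrm dx$ is bounded by an average of translates of $\int_\Omega|\nabla^v_\omega\mathbf{u}|\,\mathrm dx$, by Minkowski's inequality, provided $\omega$ is continuous. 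This is where the assumed $C^1$ regularity of $\omega_v,\omega_s$ from Proposition \ref{pro2.1} enters. Combining the resulting upper bound with the lower bound of Proposition \ref{prole} gives the stated equalities for both the saturation and value parts.
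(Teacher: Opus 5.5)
Your reduction is sound and has the same core as the paper's proof: by Minkowski's inequality in $L^2(\Omega,\omega_v(x,\cdot)\,\mathrm dy)$ it suffices to show $\int_\Omega|\nabla^v_\omega(\mathbf u^\varepsilon-\mathbf u)|\,\mathrm dx\to0$ (and likewise for the saturation part). You are also right that this quantity is controlled by an $L^2$-type norm, not by $\|\mathbf u^\varepsilon-\mathbf u\|_{L^1}$. The paper obtains the bound $2\sqrt M\,C(\Omega)\|\mathbf u^\varepsilon-\mathbf u\|_{L^1}$ from ``Jensen'', but that step needs $(\int_\Omega F^2\,\mathrm dy)^{1/2}\le\int_\Omega|F|\,\mathrm dy$, which is false, so your suspicion is justified.

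The gap is in what you substitute for the $L^2$ estimate. First, ``for a.e.\ $x$ the inner integral converges along a subsequence'' does not follow from $L^1$ or a.e.\ convergence. Convergence of $\int_\Omega(\mathbf u^\varepsilon_v(x)-\mathbf u^\varepsilon_v(y))^2\omega_v(x,y)\,\mathrm dy$ requires $\mathbf u^\varepsilon_v\to\mathbf u_v$ in $L^2(\omega_v(x,\cdot)\,\mathrm dy)$. That is exactly the control you set out to avoid, and your construction supplies neither this convergence nor a dominating function. Second, the Minkowski/averaging step does not give a bound by translates of $\int_\Omega|\nabla^v_\omega\mathbf u|\,\mathrm dx$. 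After mollifying and substituting $x'=x-z$, $y'=y-z$, you get $\int_{\Omega-z}(\mathbf u_v(y')-\mathbf u_v(x'))^2\,\omega_v(x'+z,y'+z)\,\mathrm dy'$, with $x'$ possibly outside $\Omega$. The weights are not translation invariant. Continuity of $\omega_v$ gives no multiplicative comparison of $\omega_v(x'+z,y'+z)$ with $\omega_v(x',y')$ where $\omega_v$ is small. The domain also shifts near $\partial\Omega$, and with Meyers--Serrin the radius even varies between annuli. Finally, a bound on total integrals is not the equi-integrability Vitali's theorem needs.

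What is missing is a mechanism that actually produces convergence in $L^2(\omega_v(x,\cdot)\,\mathrm dy)$. Two routes work.

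\emph{With a lower bound.} Suppose $\omega_s,\omega_v\ge N>0$. This is automatic under your own standing assumption, since exponential weights computed from a bounded reference image are bounded below, and it is assumed explicitly in Proposition~\ref{procom}. Then the observation you discarded closes the proof. Finiteness of $\int_\Omega|\nabla^v_\omega\mathbf u|\,\mathrm dx$ gives $N\int_\Omega(\mathbf u_v(x)-\mathbf u_v(y))^2\,\mathrm dy<\infty$ for a.e.\ $x$, so $\mathbf u_v\in L^2(\Omega)$. Likewise $\mathbf u_s\in L^2(\Omega)$, hence $\mathbf u\in L^2(\Omega)$. Take $\mathbf u^\varepsilon\in C_c^\infty(\Omega)$ with $\mathbf u^\varepsilon\to\mathbf u$ in $L^2$ and apply your Cauchy--Schwarz bound $2|\Omega|\,\|\mathbf u^\varepsilon_v-\mathbf u_v\|_{L^2}$.

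\emph{Without a lower bound.} Truncate before smoothing: set $\mathbf u_k=\mathbf P^T T_k(\mathbf P\mathbf u)$ with $T_k(t)=\max(-k,\min(k,t))$ applied componentwise. We have $|T_k(\mathbf u_v)-\mathbf u_v|\le|\mathbf u_v|$ and $\|\mathbf u_v\|_{L^2(\omega_v(x,\cdot)\,\mathrm dy)}\le|\nabla^v_\omega\mathbf u|(x)+|\Omega|^{1/2}|\mathbf u_v(x)|\in L^1(\Omega)$. So dominated convergence, first in $y$ and then in $x$, gives $\int_\Omega|\nabla^v_\omega(\mathbf u_k-\mathbf u)|\,\mathrm dx\to0$; the same holds for $\mathbf u_s$ with $\omega_s$. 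Next, approximate the bounded $T_k(\mathbf P\mathbf u)$ by $C_c^\infty(\Omega)$ functions bounded by $k$ that converge in $L^1$ and a.e. Dominated convergence applies again, with the constant bound $4k|\Omega|^{1/2}$. A diagonal choice, mapped back by $\mathbf P^T$, gives the sequence. The reverse triangle inequality then yields both equalities without Proposition~\ref{prole}. This also removes your Meyers--Serrin step, which for $\mathbf u\notin\bv(\Omega)$ need not produce functions in $W^{1,1}(\Omega)$.
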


\begin{proposition}[Compactness]\label{procom}
Let $\Omega$ be a bounded subset of $\mathbb R^2$. Assume $\{\mathbf{u}^n\}$ is uniformly bounded in $\sbvc (\Omega)$, and $\omega_s(x, y)$, $\omega_v(x, y)$ have a lower bound N, then there exists a subsequence (still denoted as $\{\mathbf{u}^n\}$) and a limit $\mathbf{u}\in \sbvc (\Omega)$ such that
\begin{gather*}
\mathbf{u}^n \;\longrightarrow\; \mathbf{u}
\quad\text{in }L^1(\Omega).
\end{gather*}
\end{proposition}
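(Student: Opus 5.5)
The plan is to show that the uniform bound on the seminorm, together with the lower bound $\omega_s,\omega_v\ge N>0$, forces an $L^2$-type nonlocal oscillation bound. That bound yields equi-integrability and, via a Fréchet--Kolmogorov argument, strong precompactness in $L^1$. The lower semicontinuity in Proposition~\ref{prole} then puts the limit in $\sbvc(\Omega)$.

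\textbf{Step 1 (reduction to the channels).} Since $\mathbf{P}$ in (\ref{eqn 2.1}) is orthogonal, $\mathbf{u}\mapsto(\mathbf{u}_s^1,\mathbf{u}_s^2,\mathbf{u}_v)$ is a linear isometry of $\mathbb R^3$ at each pixel. Hence $L^1$ convergence of $\mathbf{u}^n$ is equivalent to $L^1$ convergence of the three scalar channels. It therefore suffices to treat a scalar function $g^n\in\{\mathbf{u}_s^{n,1},\mathbf{u}_s^{n,2},\mathbf{u}_v^n\}$ with weight $\omega\in\{\omega_s,\omega_v\}$.

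\textbf{Step 2 (nonlocal Poincaré-type bound).} From $\omega\ge N$ we get
\[
\int_\Omega\Big(\int_\Omega |g^n(x)-g^n(y)|^2\,dy\Big)^{1/2}dx\le N^{-1/2}\int_\Omega|\nabla_\omega g^n|\,dx\le C.
\]
Since $\Omega$ is bounded, Jensen gives $\int_\Omega|g^n(x)-g^n(y)|\,dy\le|\Omega|^{1/2}\big(\int_\Omega|g^n(x)-g^n(y)|^2dy\big)^{1/2}$. Write $m_n$ for the mean of $g^n$. Then
\[
\|g^n-m_n\|_{L^1}\le|\Omega|^{-1}\int_\Omega\!\int_\Omega|g^n(x)-g^n(y)|\,dy\,dx\le C'.
\]
This is a weak statement, but the $L^2$ inner integral gives more. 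For a.e.\ $x$, set $a_n(x)=\|g^n(x)-g^n(\cdot)\|_{L^2(\Omega)}$, so that $\int_\Omega a_n\le C$. Since $\|g^n(x)-g^n(\cdot)\|_{L^2}\ge|\Omega|^{1/2}|g^n(x)-m_n|$, we obtain $|g^n(x)-m_n|\le|\Omega|^{-1/2}a_n(x)$. Moreover, taking any $x_0$ with $a_n(x_0)\le C/|\Omega|$ shows that $\|g^n-g^n(x_0)\|_{L^2}\le C/|\Omega|$. Hence $\{g^n\}$ is bounded in $L^2(\Omega)$, using the $L^1$ bound to control the constants $g^n(x_0)$. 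Boundedness in $L^2$ gives equi-integrability, and a subsequence converges weakly in $L^2$ to some $g$.

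\textbf{Step 3 (upgrade to strong $L^1$ convergence) — the main obstacle.} Weak convergence is what comes out of Step 2 for free. A nonlocal seminorm with a kernel merely bounded below does not by itself encode translation regularity: the bound $\int\!\int|g(x)-g(y)|^2$ controls only the variance, not oscillation at small scales. My first attempt is to exploit the structure of the weights, which are exponentials of Gaussian-smoothed patch distances and decay quickly when $|g(x)-g(y)|$ is large. I would seek a translation estimate $\sup_n\|g^n(\cdot+h)-g^n\|_{L^1}\to0$ as $|h|\to0$ and then apply Fréchet--Kolmogorov.

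If that fails, I would interpret the proposition, as is standard in the nonlocal TV literature, under the implicit assumption that the sequence is also bounded in $\bv$ or $W^{1,1}$. Such a bound is available through Proposition~\ref{proapp} for the smooth approximants, and the Rellich--Kondrachov theorem then gives strong $L^1$ convergence of a subsequence.

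\textbf{Step 4 (membership of the limit).} Once $\mathbf{u}^n\to\mathbf{u}$ in $L^1$, Proposition~\ref{prole} gives
\[
\int_\Omega|\nabla^s_\omega\mathbf{u}|+\int_\Omega|\nabla^v_\omega\mathbf{u}|\le\liminf_n|\mathbf{u}^n|_{\sbvc}<\infty,
\]
so $\mathbf{u}\in\sbvc(\Omega)$.
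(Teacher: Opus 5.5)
Your Steps 1, 2 and 4 are fine. Step 3 is the only step that has to produce strong convergence, and it is not carried out. Neither of your two routes can work.

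The ``weight structure'' route is blocked by the hypothesis itself. Since $N\le\omega_s,\omega_v\le 1$, the weighted seminorm is comparable to the unweighted quantity $\int_\Omega\|g(x)-g(\cdot)\|_{L^2(\Omega)}\,dx$. By Minkowski's inequality this is bounded by $|\Omega|^{1/2}\|g\|_{L^1}+|\Omega|\,\|g\|_{L^2}$, so it carries no small-scale information.

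The fallback quietly adds a hypothesis, and its justification is wrong. Proposition~\ref{proapp} only gives, for each fixed $\mathbf{u}$, approximants with $\|\mathbf{u}^\varepsilon-\mathbf{u}\|_{L^1}\to0$ and convergent \emph{nonlocal} seminorms. It says nothing about $\int_\Omega|\nabla\mathbf{u}^\varepsilon|\,dx$, let alone uniformly in $n$.

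Your diagnosis in Step 3 is in fact correct, and the gap cannot be closed: the proposition is false as stated. Take $\Omega=(0,2\pi)^2$ and $\mathbf{u}^n=(\sin nx_1,\sin nx_1,\sin nx_1)$.
\begin{itemize}
\item[(i)] $\mathbf{u}^n_s=0$ and $\mathbf{u}^n_v=\sqrt3\sin nx_1$, and $\|\mathbf{u}^n\|_{L^1}$ is independent of $n$.
\item[(ii)] $|\mathbf{u}^n|_{\sbvc}\le 2\sqrt3\,|\Omega|^{3/2}$ for any weights with $N\le\omega_s,\omega_v\le1$. Even weights computed from $\mathbf{u}^n$ itself, with $G_\alpha$ normalized, are bounded below by $e^{-6/h_0^2}$ here.
\item[(iii)] Yet $\mathbf{u}^n\rightharpoonup0$ weakly in $L^2(\Omega)$ while $\|\mathbf{u}^n\|_{L^1}\not\to0$. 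An $L^1$ limit of a subsequence would have to coincide with this weak limit $0$, so no subsequence converges in $L^1$.
\end{itemize}

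The paper's proof tries to supply exactly the uniform $\bv$ bound you were missing, in four moves:
\begin{itemize}
\item[(a)] it replaces $\mathbf{u}^n$ by smooth $\mathbf{p}^n$ via Proposition~\ref{proapp};
\item[(b)] it drops the weights using $\omega\ge N$;
\item[(c)] it invokes Lemma~\ref{lemma2.5}, $\int_{B_{r_0}(x)}|\mathbf{p}(y)-\mathbf{p}(x)|\,dy\ge C\,|\nabla\mathbf{p}(x)|$, to claim $\sup_n\int_\Omega|\nabla\mathbf{p}^n|\,dx<\infty$;
\item[(d)] it concludes with the compact embedding $\bv(\Omega)\hookrightarrow L^1(\Omega)$ and Proposition~\ref{prole}.
\end{itemize}
The failure is in (c). The constant in that lemma is $C=\tfrac12 C_0 r_0^{3}$ with $r_0=C_0/(D_0L)$ and $L=\|\nabla\mathbf{p}\|_\infty$. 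So $C$ degenerates as $L\to\infty$, which is exactly what happens for oscillating sequences like the one above, and the uniform $\bv$ bound does not follow.

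What your Steps 1--2 actually prove is the correct weaker statement: boundedness in $L^2$ and weak $L^2$ compactness. Strong $L^1$ compactness needs an extra uniform local-regularity assumption, such as a $\bv$ bound on $\{\mathbf{u}^n\}$. That is a different proposition from the one stated.
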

We give the proof of the above properties in the Appendix \ref{appendix}.

\section{SVS-NLTV and the proposed color image restoration models}

\subsection{SVS-NLTV}

Based on the formulation of the nonlocal gradients in saturation-value space, we propose saturation-value similarity based nonlocal total variation ($\stvc$) as follows,
\begin{gather*}
\begin{aligned}
&\stvc(\mathbf{u}) = \int_{\Omega}|\nabla_{\omega}^s \mathbf{u}|dx+\mu \int_{\Omega}|\nabla_{\omega}^v \mathbf{u}|dx,
\end{aligned}
\end{gather*}
where $\mu$ is a parameter which is designed to balance the regularization of the saturation part and the value part.
For the proposed $\stvc$ regularization, we have the following properties.
\begin{proposition} Assume $\mathbf{u} = [u_r, u_g, u_b]$ is differentiable, and let $K^m=C^1(\Omega, B^{2m})$ be the set of continuously differentiable and bounded functions from the compact support in $\Omega$ to $B^{2m}$, then $\stvc(\mathbf{u})$ is given by the following dual form,
\begin{gather*}
\begin{aligned}
&\sup_{(\epsilon_1, \epsilon_2) \in K^2, \epsilon_3 \in K^1}
     \left\{ \int_{\Omega }\frac{1}{\sqrt{2}}(u_r-u_g) div_{\omega}^s(\epsilon_1)+\frac{1}{\sqrt{6}}(u_r+u_g-2u_b)div_{\omega}^s(\epsilon_2)\right.\\
     &\hspace{70mm}\left.+\frac{\mu}{\sqrt{3}} \int_{\Omega}(u_r+u_g+u_b)div_{\omega}^v(\epsilon_3) dx
    \right\}.
\end{aligned}
\end{gather*}
\label{pro3.1}
\end{proposition}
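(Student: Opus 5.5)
We plan to prove the dual form by the standard Fenchel argument for the Euclidean norm, applied pointwise in $x$ in the Hilbert space $L^2(\Omega,dy)$, combined with the integration-by-parts formula of Proposition \ref{pro2.1}. We interpret $\epsilon_i(x,y)$ as functions on $\Omega\times\Omega$. The constraint $(\epsilon_1,\epsilon_2)\in K^2$ is read as $\int_\Omega(\epsilon_1^2+\epsilon_2^2)(x,y)\,dy\le 1$ for every $x$, and likewise $\int_\Omega \epsilon_3^2(x,y)\,dy\le1$.

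\textbf{Step 1 (duality rewriting).} By Proposition \ref{pro2.1}, $\int_\Omega \mathbf{u}_v\, div_\omega^v \epsilon_3\,dx=-\langle \nabla_\omega^v\mathbf{u},\epsilon_3\rangle$, and the analogous identity holds componentwise for the saturation channel. Recall that $\mathbf{u}_s^1=\frac{1}{\sqrt2}(u_r-u_g)$, $\mathbf{u}_s^2=\frac1{\sqrt6}(u_r+u_g-2u_b)$ and $\mathbf{u}_v=\frac1{\sqrt3}(u_r+u_g+u_b)$. Hence the functional inside the supremum equals
\[
-\int_\Omega\int_\Omega\Big(\nabla^{s,1}_\omega\mathbf{u}\,\epsilon_1+\nabla^{s,2}_\omega\mathbf{u}\,\epsilon_2+\mu\,\nabla^v_\omega\mathbf{u}\,\epsilon_3\Big)(x,y)\,dy\,dx .
\]
Since $K^m$ is symmetric under $\epsilon\mapsto-\epsilon$, the minus sign can be dropped inside the supremum.

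\textbf{Step 2 (upper bound).} For fixed $x$, apply Cauchy--Schwarz in $L^2(\Omega,dy;\mathbb{R}^2)$. This gives $\int(\nabla^{s,1}\epsilon_1+\nabla^{s,2}\epsilon_2)dy\le |\nabla^s_\omega\mathbf{u}|(x)$ and $\int\nabla^v\epsilon_3\,dy\le|\nabla^v_\omega\mathbf{u}|(x)$, by the definition of the norms of the nonlocal gradients. Integrating in $x$ shows that the supremum is at most $\stvc(\mathbf{u})$.

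\textbf{Step 3 (attainment / lower bound).} This step is the main obstacle. The natural maximizers are
\[
\epsilon_i=\nabla^{s,i}_\omega\mathbf{u}/|\nabla^s_\omega\mathbf{u}|(x),\qquad \epsilon_3=\nabla^v_\omega\mathbf{u}/|\nabla^v_\omega\mathbf{u}|(x),
\]
set to $0$ where the denominator vanishes. These are generally neither $C^1$ nor compactly supported. We regularize in two stages.

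First, replace the denominator by $\sqrt{|\cdot|^2+\delta^2}$. Using $\mathbf{u}$ differentiable and $\omega_s,\omega_v\in C^1$, this makes the fields $C^1$ with constraint norm $<1$. The resulting value is $\int_\Omega |\nabla\cdot|^2/\sqrt{|\nabla\cdot|^2+\delta^2}\,dx$, which tends to the target integral as $\delta\to0$ by monotone convergence.

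Second, multiply by a cutoff $\chi_k\in C_c^\infty(\Omega\times\Omega)$ with $0\le\chi_k\le1$ and $\chi_k\uparrow 1$. The constraint is preserved, and the value converges again by monotone convergence, since each integrand is nonnegative.

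If $\stvc(\mathbf{u})=\infty$, the same sequence shows that the supremum is infinite. Combining Steps 2 and 3 gives equality.
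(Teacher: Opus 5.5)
Your proposal is correct and follows the same route as the paper: pass to the channels $\mathbf{u}_s^1,\mathbf{u}_s^2,\mathbf{u}_v$, integrate by parts with Proposition~\ref{pro2.1}, and identify the supremum with the nonlocal gradient norms through the normalized fields $\nabla_\omega\mathbf{u}/|\nabla_\omega\mathbf{u}|$. Your version is more complete than the paper's. The paper drops the minus sign from the integration by parts, leaves the constraint $\sup_x\|\epsilon(x,\cdot)\|_{L^2(dy)}\le 1$ implicit, gives no upper-bound argument, and evaluates directly at $\nabla_\omega\mathbf{u}/|\nabla_\omega\mathbf{u}|$ even though that field is not in $K^m$. (Your one loose point: mere differentiability of $\mathbf{u}$ neither makes your regularized fields $C^1$ nor guarantees $|\nabla_\omega\mathbf{u}|(x)<\infty$. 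Cutting off before normalizing and then mollifying in $(x,y)$ repairs this, since mollification preserves $\sup_x\|\epsilon(x,\cdot)\|_{L^2(dy)}\le 1$ by Minkowski's integral inequality.)
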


\begin{proof} We first set
\begin{gather*}
\begin{aligned}
\left[\begin{array}{ccc}
\textbf{u}_s^1(x)  \\
\textbf{u}_s^2(x)  \\
\textbf{u}_v(x) \\
\end{array} \right]=\textbf{P}\left[\begin{array}{ccc}
u_r(x)  \\
u_g(x)  \\
u_b(x) \\
\end{array} \right],
\end{aligned}
\end{gather*}
where $\textbf{P}$ is given as in ($\ref{eqn 2.1}$). We set $\mathbf{u}_s(x) = [\mathbf{u}_s^1(x), \mathbf{u}_s^2(x)]$, $\epsilon_{s}(x) = [\epsilon_1(x), \epsilon_2(x)]$, then we have the following result by using Proposition \ref{pro2.1}, 
\begin{gather*}
\begin{aligned}
&\sup_{(\epsilon_1, \epsilon_2) \in K^2, \epsilon_3 \in K^1}
     \left\{ \int_{\Omega }\frac{1}{\sqrt{2}}(u_r-u_g) div_{\omega}^s(\epsilon_1)+\frac{1}{\sqrt{6}}(u_r+u_g-2u_b)div_{\omega}^s(\epsilon_2)\right.\\
     &\hspace{70mm}\left.+\frac{\mu}{\sqrt{3}} \int_{\Omega}(u_r+u_g+u_b)div_{\omega}^v(\epsilon_3) dx \right\}\\
=&\sup_{(\epsilon_1, \epsilon_2) \in K^2, \epsilon_3 \in K^1}\left\{ \int_{\Omega}\mathbf{u}_s^1 (div_{\omega}^s \epsilon_1)+\mathbf{u}_s^2 (div_{\omega}^s\epsilon_2) dx +\mu\int_{\Omega}\mathbf{u}_v div_{\omega}^v(\epsilon_3)dx\right\}\\
=&\sup_{(\epsilon_1, \epsilon_2) \in K^2, \epsilon_3 \in K^1}\left\{ \int_{\Omega}\mathbf{u}_s div_{\omega}^s (\epsilon_{s}) dx +\mu\int_{\Omega}\mathbf{u}_v div_{\omega}^v(\epsilon_3)dx\right\}\\
=&\sup_{(\epsilon_1, \epsilon_2) \in K^2, ||(\epsilon_1,\epsilon_2)||<1} \left\{ \int_{\Omega}\nabla^s_{\omega} \mathbf{u}(x) \epsilon_s dx\right\}  +\mu \sup_{(\epsilon_3) \in K^1, ||\epsilon_3||<1} \left\{\int_{\Omega} \nabla^v_{\omega} \mathbf{u}(x) \epsilon_3  dx\right\}\\
=&<\nabla^s_{\omega}\mathbf{u}, \frac{\nabla^s_{\omega}\mathbf{u}}{|\nabla^s_{\omega}\mathbf{u}|}>+\mu <\nabla^v_{\omega}\mathbf{u}, \frac{\nabla^v_{\omega}\mathbf{u}}{|\nabla^v_{\omega}\mathbf{u}|}> = \stvc(\mathbf{u}).
\end{aligned}
\end{gather*}
\end{proof}

\begin{proposition} Let $\textbf{q}=[q_1, q_2, q_3]^T=\textbf{P}[u_r, u_g, u_b]^T$, $\textbf{q}_{s}=[q_1, q_2]^T$, where $\textbf{P}$ is an orthogonal matrix defined in \ref{eqn 2.1}. Then $\stvc(\mathbf{u})$ can be written into the following equivalent form,
\begin{gather*}
\int_{\Omega}\sqrt{\int_{\Omega}   ||\textbf{q}_{s}(x)-\textbf{q}_{s}(y)||^2  \omega_{s} dy }dx+\mu \int_{\Omega}\sqrt{\int_{\Omega} \big(q_3(x)-q_3(y)\big)^2 \omega_v dy}dx
\end{gather*}
\label{pro3.2}
\end{proposition}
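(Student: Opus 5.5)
The claim follows by unwinding the definitions; no earlier proposition is really needed, since the identity is a direct substitution. By the definition of the saturation and value channels in Section 2.1, $\mathbf{u}_s = [\mathbf{u}_s^1,\mathbf{u}_s^2]$ and $\mathbf{u}_v$ are exactly the first two rows and the third row of $\textbf{P}$ applied to $[u_r,u_g,u_b]^T$. So with $\textbf{q}=\textbf{P}[u_r,u_g,u_b]^T$ we have $q_1=\mathbf{u}_s^1$, $q_2=\mathbf{u}_s^2$, $q_3=\mathbf{u}_v$ pointwise, hence $\textbf{q}_s=\mathbf{u}_s$. This is the same identification used at the start of the proof of Proposition \ref{pro3.1}.

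Next, I substitute into the norms of the nonlocal saturation-value gradients from Section 2.3. For each fixed $x$, I expand
\begin{equation*}
\big(\mathbf{u}_s^1(x)-\mathbf{u}_s^1(y)\big)^2+\big(\mathbf{u}_s^2(x)-\mathbf{u}_s^2(y)\big)^2=\|\textbf{q}_s(x)-\textbf{q}_s(y)\|^2 ,
\end{equation*}
which gives $|\nabla_\omega^s\mathbf{u}|(x)=\big(\int_\Omega\|\textbf{q}_s(x)-\textbf{q}_s(y)\|^2\omega_s\,dy\big)^{1/2}$. In the same way, $|\nabla_\omega^v\mathbf{u}|(x)=\big(\int_\Omega(q_3(x)-q_3(y))^2\omega_v\,dy\big)^{1/2}$. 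Integrating both over $x\in\Omega$ and inserting them into the definition $\stvc(\mathbf{u})=\int_\Omega|\nabla^s_\omega\mathbf{u}|\,dx+\mu\int_\Omega|\nabla^v_\omega\mathbf{u}|\,dx$ yields the stated expression.

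There is no real obstacle here; the only point to watch is bookkeeping. The weights $\omega_s$ and $\omega_v$ are themselves built from $\mathbf{u}_s$ and $\mathbf{u}_v$, so they are unchanged by renaming the channels as $\textbf{q}$. Orthogonality of $\textbf{P}$ is not needed for the identity itself. It matters only if one wishes to remark that $\|\textbf{q}(x)-\textbf{q}(y)\|=\|\mathbf{u}(x)-\mathbf{u}(y)\|$, i.e.\ that the transform is an isometric change of color coordinates separating the saturation plane from the gray axis; I would add this as a remark rather than as part of the proof.
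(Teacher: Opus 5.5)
Your proof is correct, and it is more direct than the paper's.

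For the value term the paper does exactly what you do: it substitutes $q_3=\frac{1}{\sqrt3}(u_r+u_g+u_b)=\mathbf{u}_v$. For the saturation term, however, it does not identify $\textbf{q}_s$ with $\mathbf{u}_s$ outright. Instead it inserts the diagonalization $\textbf{C}=\textbf{P}^T\mathrm{diag}(3,3,0)\textbf{P}$ and uses the orthogonality of $\textbf{P}$ ($\textbf{P}^T\textbf{P}=\textbf{P}\textbf{P}^T=I$, so $\textbf{P}^T$ preserves Euclidean norms). This shows $\|\textbf{q}_s(x)-\textbf{q}_s(y)\|=\frac13\|\textbf{C}(\mathbf{u}(x)-\mathbf{u}(y))\|$, which it then reads back as $\|\mathbf{u}_s(x)-\mathbf{u}_s(y)\|$.

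That round trip through $\textbf{C}$ is the only place the orthogonality hypothesis in the statement is used. What it buys is an explicit link between the nonlocal saturation term and the quaternion saturation $c_s=\frac13\|\textbf{C}\mathbf{u}\|_2$ from Section 2.1. In other words, it justifies that the first two rows of $\textbf{P}$ really encode saturation.

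Your route shows that, since $\mathbf{u}_s$ and $\mathbf{u}_v$ are defined as the rows of $\textbf{P}\mathbf{u}$, the proposition is a pure notational identity. It needs neither orthogonality nor the spectral decomposition. Your closing remark correctly places those facts as motivation rather than as part of the proof.
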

\begin{proof} 
First, we have the second term of $\stvc(\mathbf{u})$
\begin{gather*}
    \begin{aligned}
        &\mu \int_{\Omega}\sqrt{\int_{\Omega} \big(q_3(x)-q_3(y)\big)^2 \omega_v dy}dx\\
        =&\frac{\mu}{\sqrt{3}}\int_{\Omega}\sqrt{\int_{\Omega} |u_r(x)-u_r(y)+u_g(x)-u_g(y)+u_b(x)-u_b(y)|^2 \omega_v dy}dx\\
        =&\mu \int_{\Omega}\sqrt{\int_{\Omega}\big(\mathbf{u}_v(x)-\mathbf{u}_v(y)\big)^2  \omega_v dy}dx,\\
    \end{aligned}
\end{gather*}
Similarly, we have the first term of $\stvc(\mathbf{u})$
\begin{gather*}
    \begin{aligned}
        &\int_{\Omega}\sqrt{\int_{\Omega}   ||\textbf{q}_{s}(x)-\textbf{q}_{s}(y)||^2  \omega_{s} dy }dx\\
        =& \int_{\Omega}\sqrt{\int_{\Omega}   \big(q_1(x)-q_1(y)\big)^2 \omega_{s} +\big(q_2(x)-q_2(y)\big)^2  \omega_{s} dy }dx\\
    \end{aligned}
\end{gather*}
Noting that
   $\textbf{C}=\textbf{P}^T\left[\begin{array}{ccc}
3 & 0 & 0  \\
0 & 3 & 0  \\
0 & 0 & 0  \\
\end{array} \right]\textbf{P}$,
we have the following transformation, 
\begin{gather*}
\begin{aligned}
&\frac{1}{3} \int_{\Omega}\sqrt{\int_{\Omega}||\textbf{P}^T\left[\begin{array}{ccc}
3 & 0 & 0  \\
0 & 3 & 0  \\
0 & 0 & 0  \\
\end{array} \right]\textbf{P}\textbf{P}^T\left[\begin{array}{c}
q_1(x)-q_1(y) \\
q_2(x)-q_2(y) \\
q_3(x)-q_3(y) \\
\end{array} \right]  \sqrt{\omega_s }||^2 dy}dx\\
=&\frac{1}{3} \int_{\Omega}\sqrt{\int_{\Omega}||\textbf{C}\left[\begin{array}{c}
u_r(x)-u_r(y) \\
u_g(x)-u_g(y) \\
u_b(x)-u_b(y) \\
\end{array} \right]   \sqrt{\omega_s}||^2dy}dx\\
=&\int_{\Omega}\sqrt{\int_{\Omega}\Big( \big(\mathbf{u}_s^1(x)-\mathbf{u}_s^1(y)\big)^2 +\big(\mathbf{u}_s^2(x)-\mathbf{u}_s^2(y)\big)^2\Big)\omega_s(x, y) dy}dx.\\
    \end{aligned}
\end{gather*}
which completes the proof. \end{proof}

\subsection{The proposed color image restoration models}

In this section, we propose the nonlocal total variation models based on saturation-value similarity for color image restoration. In order to deal with the diversity of noise, we consider $L_2$ and $L_1$ fidelity. Then the proposed $\stvct$ model is given as 
\begin{equation}
\begin{aligned}
    \min_{\mathbf{u}} \left\{ \stvc(\mathbf{u}) + \frac{\lambda}{2} \int_{\Omega} \bigl|(K * \mathbf{u})(x)-\textbf{f}(x)\bigr|^2dx\right\},    
\end{aligned}
\label{eqn3.2.1}
\end{equation}
and the proposed $\stvco$ model is as
\begin{equation}
\begin{aligned}
    \min_{\mathbf{u}} \left\{ \stvc(\mathbf{u}) + \frac{\lambda}{2} \int_{\Omega} \bigl|(K * \mathbf{u})(x)-\textbf{f}(x)\bigr|dx\right\},   
\end{aligned}
\end{equation}
where $K$ is a given blurring operator, $*$ is the convolution operation, and $\lambda > 0$ is a positive regularization parameter. We take the $L_2$ fidelity model as example, and the next theorem states the existence and uniqueness of a solution of the above model, the $L_1$ fidelity model can be proved in the same way.
\begin{theorem}[Existence and Uniqueness]\label{thm:svs-nltv_exist_unique}
The above minimization problems have at least one solution.  If the mapping $\mathbf{u}(x)\;\mapsto\;(K*\mathbf{u})(x)$ is injective, then the solution is unique.
\end{theorem}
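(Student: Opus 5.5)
I will use the direct method of the calculus of variations in $\sbvc(\Omega)$, relying on the lower semicontinuity and compactness results (Propositions \ref{prole} and \ref{procom}). Write $E(\mathbf{u})=\stvc(\mathbf{u})+\frac{\lambda}{2}\int_\Omega |K*\mathbf{u}-\mathbf{f}|^2dx$. Then $E\ge 0$, and $E(\mathbf{0})=\frac{\lambda}{2}\|\mathbf{f}\|_{L^2}^2<\infty$. So $m=\inf E$ is finite and we may take a minimizing sequence $\{\mathbf{u}^n\}$ with $E(\mathbf{u}^n)\to m$. Along this sequence the seminorm part is bounded: $|\mathbf{u}^n|_{\sbvc}\le \max(1,1/\mu)\,E(\mathbf{u}^n)\le C$. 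The fidelity $\|K*\mathbf{u}^n-\mathbf{f}\|_{L^2}$ is also bounded.

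The step I expect to be the main obstacle is an $L^1$ bound on $\mathbf{u}^n$ itself, which Proposition \ref{procom} needs. I would decompose $\mathbf{u}^n=\bar{\mathbf{u}}^n+\mathbf{v}^n$, where $\bar{\mathbf{u}}^n$ is the componentwise mean (constant on $\Omega$). Nonlocal gradients of constants vanish, so $\mathbf{v}^n$ has the same seminorm. Using the assumed lower bound $\omega_s,\omega_v\ge N$ together with the orthogonality of $\mathbf{P}$, I would prove a nonlocal Poincaré inequality:
\begin{equation*}
\int_\Omega|\mathbf{q}(x)-\bar{\mathbf{q}}|dx\le \frac{1}{|\Omega|}\int_\Omega\int_\Omega|\mathbf{q}(x)-\mathbf{q}(y)|dydx\le \frac{1}{\sqrt{N|\Omega|}}\int_\Omega\Big(\int_\Omega|\mathbf{q}(x)-\mathbf{q}(y)|^2\omega\, dy\Big)^{1/2}dx,
\end{equation*}
where the second step is Cauchy--Schwarz in $y$. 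Applied to $\mathbf{q}=\mathbf{P}\mathbf{u}$, this gives $\|\mathbf{v}^n\|_{L^1}\le C$. For the constants I would use the standard TV-existence assumption that $K$ does not annihilate constants, i.e. $K*\mathbf{1}\neq 0$ componentwise; this is the natural hypothesis here, and injectivity implies it. Then
\begin{equation*}
\|K*\bar{\mathbf{u}}^n\|_{L^2}\le \|K*\mathbf{u}^n-\mathbf{f}\|_{L^2}+\|\mathbf{f}\|_{L^2}+\|K\|\,\|\mathbf{v}^n\|_{L^2}.
\end{equation*}
This bounds $|\bar{\mathbf{u}}^n|$, provided $\mathbf{v}^n$ is controlled in $L^2$. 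That control holds in the discrete or bounded-image setting; otherwise I would work with $L^1$ and $K$ bounded $L^1\to L^1$ and use $L^1$-fidelity control. This is where the care goes.

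With $\{\mathbf{u}^n\}$ bounded in $\sbvc(\Omega)$, Proposition \ref{procom} gives a subsequence with $\mathbf{u}^n\to\mathbf{u}$ in $L^1$. Proposition \ref{prole} then gives $\stvc(\mathbf{u})\le\liminf\stvc(\mathbf{u}^n)$. For the fidelity, $K*\mathbf{u}^n\to K*\mathbf{u}$ in $L^1$ (since $K\in L^1$), so along a further subsequence it converges a.e. Fatou's lemma then gives lower semicontinuity of $\int|K*\mathbf{u}-\mathbf{f}|^2$, and the same argument handles the $L^1$ fidelity. Hence $E(\mathbf{u})\le m$, and $\mathbf{u}$ is a minimizer.

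For uniqueness, $\stvc$ is convex: $\mathbf{u}\mapsto\nabla^s_\omega\mathbf{u},\nabla^v_\omega\mathbf{u}$ are linear, and the $L^1(L^2)$ mixed norms are convex. The map $\mathbf{u}\mapsto\|K*\mathbf{u}-\mathbf{f}\|^2$ is strictly convex exactly when $\mathbf{u}\mapsto K*\mathbf{u}$ is injective, since it is the composition of the strictly convex $\|\cdot-\mathbf{f}\|^2$ with an injective linear map. So $E$ is strictly convex. If $\mathbf{u}_1\neq\mathbf{u}_2$ were both minimizers, then $E(\frac{\mathbf{u}_1+\mathbf{u}_2}{2})<m$, a contradiction. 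For the $L^1$ fidelity, which is not strictly convex, uniqueness would require extra argument. I would note this and keep the uniqueness claim for the $L^2$ model, which the statement singles out.
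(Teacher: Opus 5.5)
Up to its reliance on the paper's stated propositions, your argument is the paper's own proof: finite energy at a constant, a minimizing sequence, Proposition \ref{procom}, Proposition \ref{prole}, Fatou, and strict convexity. It is more careful than the paper in two places.

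First, the paper only invokes ``the boundedness of $u_r^{(n)}, u_g^{(n)}, u_b^{(n)}$'' to get the $L^1$ bound. That presumably comes from a pixel-range constraint like $l\le\mathbf{u}\le L$, which is not part of the continuous model. Your mean/oscillation splitting with the nonlocal Poincar\'e inequality actually proves the bound. Second, you are right not to claim uniqueness for the $L^1$ fidelity. Strict convexity is lost there, so the paper's ``in the same way'' does not cover it. Your extra hypothesis $K*\mathbf{1}\neq0$ can also be dropped for existence: if $K*\mathbf{1}=0$, the energy is invariant under adding constant vectors, so you may replace $\mathbf{u}^n$ by $\mathbf{v}^n$.

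The step that does not hold up, in your proposal and equally in the paper, is the appeal to Proposition \ref{procom}. That proposition is false in the continuous setting, because $\omega_s,\omega_v\le 1$ means the $\sbvc$ seminorm does not control oscillation. Take $\mathbf{u}^n(x)=\sin(nx_1)(1,1,1)$ on $\Omega=(0,1)^2$. Then $\nabla^s_\omega\mathbf{u}^n=0$ and $|\nabla^v_\omega\mathbf{u}^n|\le 2\sqrt{3|\Omega|}$, so the sequence is bounded in $\sbvc(\Omega)$ for any weights with a lower bound $N$. Yet it converges weakly to $0$ while $\|\mathbf{u}^n\|_{L^1}$ stays bounded away from $0$, so no subsequence converges in $L^1$.

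You can repair this with your own ingredients.
\begin{itemize}
\item For every $x$, $\int_\Omega|\mathbf{q}(x)-\mathbf{q}(y)|^2dy=|\Omega|\,|\mathbf{q}(x)-\bar{\mathbf{q}}|^2+\|\mathbf{q}-\bar{\mathbf{q}}\|_{L^2}^2$. With $\omega_v\ge N$ this gives $\int_\Omega|\nabla^v_\omega\mathbf{u}|\,dx\ge\sqrt N\,|\Omega|\,\|q_3-\bar q_3\|_{L^2}$, and likewise for the saturation part.
\item Hence $\mathbf{v}^n$ is bounded in $L^2$, which also settles your concern about $\|K*\mathbf{v}^n\|_{L^2}$. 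Your argument bounds $\bar{\mathbf{u}}^n$, so $\mathbf{u}^n$ has a weakly convergent subsequence in $L^2$.
\item For $\omega\le1$ one has $\int_\Omega\bigl(\int_\Omega|w(x)-w(y)|^2\omega\,dy\bigr)^{1/2}dx\le 2|\Omega|\,\|w\|_{L^2}$. So $\stvc$ is convex and Lipschitz on $L^2(\Omega)$.
\item Both fidelity terms are convex and continuous on $L^2(\Omega)$ when $K\in L^1$. Convex continuous functionals are weakly lower semicontinuous, so the weak limit is a minimizer.
\end{itemize}
This replaces Propositions \ref{procom} and \ref{prole} and the a.e./Fatou step, none of which is available under weak convergence alone. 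In the discrete setting your original argument is fine as written, since bounded sets are precompact there.
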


\begin{proof}
We choose $\mathbf{u}$ to be constant, so that the energy in \ref{eqn3.2.1} is finite, and the infimum of the energy is finite. Suppose $\{\mathbf{u}^{(n)}\}$ is a minimizing sequence for \ref{eqn3.2.1}.  Then there exists a constant $M>0$ such that
\[
\stvc\bigl(\mathbf{u}^{(n)}\bigr)\;\le\;M.
\]
By combining this with the boundedness of $u_r^{(n)}(x)$, $u_g^{(n)}(x)$, $u_b^{(n)}(x)$, we get that \\$\bigl\{ \stvc (\mathbf{u}^{(n)})+\sum_{i=r, g, b}\|u^{(n)}_i(x)\|_{L^1(\Omega)}\bigr\}$ is uniformly bounded.  Noting the compactness property of Proposition \ref{procom}, up to a subsequence (still denoted as $\{u^{(n)}_r(x), \\ \; u^{(n)}_g(x), \; u^{(n)}_b(x)\}$), there exist $u_r^*(x),\;u_g^*(x),\;u_b^* (x)\in \sbvc (\Omega)$
such that
\begin{gather*}
\begin{aligned}
&u_r^{(n)}(x)\xrightarrow[L^1(\Omega)]{}u_r^*(x),\quad u_r^{(n)}(x)\longrightarrow u_r^*(x), \quad\text{a.e.\ in }\Omega.\\
&u_g^{(n)}(x)\xrightarrow[L^1(\Omega)]{}u_g^*(x),\quad u_g^{(n)}(x)\longrightarrow u_g^*(x), \quad\text{a.e.\ in }\Omega.\\
&u_b^{(n)}(x)\xrightarrow[L^1(\Omega)]{}u_b^*(x),\quad u_b^{(n)}(x)\longrightarrow u_b^*(x), \quad\text{a.e.\ in }\Omega.
\end{aligned}
\end{gather*}
As a consequence of the lower semicontinuity,
\begin{equation}\label{eq:svtv_lsc}
\liminf_{n\to\infty}\stvc\bigl(\mathbf{u}^{(n)}\bigr)\;\ge\;
\stvc\bigl(\mathbf{u}^*\bigr).
\end{equation}
Meanwhile, the following convergence results hold
\begin{gather*}
\begin{aligned}
\bigl(K*u_r^{(n)}(x)-z_r(x)\bigr)^2 \longrightarrow\ \bigl(K*u_r^*(x)-z_r(x)\bigr)^2 \quad\text{a.e.\ in }\Omega, \\
\bigl(K*u_g^{(n)}(x)-z_g(x)\bigr)^2 \longrightarrow\ \bigl(K*u_g^*(x)-z_g(x)\bigr)^2 \quad\text{a.e.\ in }\Omega, \\
\bigl(K*u_b^{(n)}(x)-z_b(x)\bigr)^2 \longrightarrow\ \bigl(K*u_b^*(x)-z_b(x)\bigr)^2 \quad\text{a.e.\ in }\Omega, \\
\end{aligned}
\end{gather*}
By using Fatou's lemma, we have
\begin{gather*}
\begin{aligned}
\liminf \int_{\Omega} (K\star u_{r}^{(n)} (x)- z_{r}(x))^{2}  \mathrm d x &+ \int_{\Omega} (K\star u_{g}^{(n)}(x) - z_{g}(x))^{2}  \mathrm d x  \\ & + \int_{\Omega} (K\star u_{b}^{(n)}(x) - z_{b}(x))^{2}  \mathrm d x \nonumber \\
\geq \int_{\Omega} (K\star u_{r}^{*}(x) - z_{r}(x))^{2}  \mathrm d x &+ \int_{\Omega} (K\star u_{g}^{*}(x) - z_{g}(x))^{2}  \mathrm d x \\
& + \int_{\Omega} (K\star u_{b}^{*}(x) - z_{b}(x))^{2}  \mathrm d x. 
\end{aligned}
\end{gather*}
Combining the above inequalities with \ref{eq:svtv_lsc}, we obtain

\begin{equation}
\begin{aligned}
&\liminf \stvc({\mathbf{u}}^{(n)}) + \int_{\Omega} (K\star u_{r}^{(n)}(x) - z_{r}(x))^{2}  \mathrm d x \\
&+ \int_{\Omega} (K\star u_{g}^{(n)}(x) - z_{g}(x))^{2}  \mathrm d x + \int_{\Omega} (K\star u_{b}^{(n)}(x) - z_{b}(x))^{2}  \mathrm d x \\
&\geq \stvc({\mathbf{u}}^{*}) + \int_{\Omega} (K\star u_{r}^{*}(x) - z_{r}(x))^{2}  \mathrm d x \\
&+ \int_{\Omega} (K\star u_{g}^{*}(x) - z_{g} (x))^{2} \mathrm d x + \int_{\Omega} (K\star u_{b}^{*} (x)- z_{b}(x))^{2}  \mathrm d x.
\end{aligned}
\end{equation}
It leads to the existence of the solution of \ref{eqn3.2.1}. It is clear that if $\mathbf{u}\mapsto K \star \mathbf{u}$ is
injective, it follows the strict convexity of the functional which guarantees the uniqueness of the solution.
\end{proof}

We note that the Euler-Lagrange equation with respect to the red channel of the $\stvct$ model is as follows,
\begin{gather*}
    \begin{aligned}
       \int_{\Omega} div_{\omega}^s\Big(\big(2u_r(x)-u_g(x)-u_b(x)\big)\big(\frac{\sqrt{\omega_s}}{|\nabla_{\omega}^s u(x)|}+\frac{\sqrt{\omega_s}}{|\nabla_{\omega}^s u(y)|}\big)\Big) \mathrm d y\\
       +\mu \int_{\Omega} div_{\omega}^v\Big(\big(u_r(x)+u_g(x)+u_b(x)\big)\big(\frac{\sqrt{\omega_v}}{|\nabla_{\omega}^v u(x)|}+\frac{\sqrt{\omega_v}}{|\nabla_{\omega}^v u(y)|}\big)\Big) \mathrm d y\\
       -3\lambda\big((K^{*}*K*u_r)(x)-(K*f_r)(x)\big)=0,
    \end{aligned}
\end{gather*}
where $K^{*}$ is the conjugate transpose of $K$. As a comparison, we give the Euler-Lagrange equation with respect to the red channel of NLTV model \cite{Xiaoqun2010Bregmanized} as follows, 
\begin{gather*}
    \begin{aligned}
       \int_{\Omega} div_{\omega}\Big(u_r(x)\big(\frac{\sqrt{\omega}}{|\nabla_{\omega}u_r(x)|}+\frac{\sqrt{\omega}}{|\nabla_{\omega}u_r(y)|}\big)\Big)  \mathrm d y
       -\lambda\big((K^{*}*K*u_r)(x)-(K*f_r)(x)\big)=0.
    \end{aligned}
\end{gather*}
By comparing the above two equations, we can tell the difference between $\stvc$ model and NLTV model. For $\stvc$ regularization, we find that the nonlocal divergence operator acts on the coupling of RGB channels, $2u_r(x)-u_g(x)-u_b(x)$, meanwhile, $\stvc$ takes the form of coupling channel diffusion coefficients in the saturation component, 
$\frac{\sqrt{\omega_s}}{\nabla_{\omega}^s u(x)|}+\frac{\sqrt{\omega_s}}{|\nabla_{\omega}^s u(y)|}$, and in the value component, $\frac{\sqrt{\omega_v}}{|\nabla_{\omega}^v u(x)|}+\frac{\sqrt{\omega_v}}{|\nabla_{\omega}^v u(y)|}$. However, For NLTV regularization,  the divergence operator acts directly on red channel, and NLTV takes the form of individual channel diffusion coefficient, $\frac{\sqrt{\omega}}{|\nabla_{\omega}u_r(x)|}+\frac{\sqrt{\omega}}{|\nabla_{\omega}u_r(y)|}$, which is only related to red channel. Because of coupling among red, green, and blue channels in diffusion coefficients and equations, we expect that the color image restoration effect by using $\stvc$ model will be enhanced compared with that of NLTV model. Finally we remark here that the proposed $\stvc$ model will also outperform SVTV model \cite{jia2019color} in detail preservation and restoration due to the application of non-local technology. In section 5, numerical examples are given to demonstrate the effectiveness of the proposed $\stvc$ models.

\section{Numerical algorithm for $\stvc$ model}

In this section, we propose an efficient framework to solve $\stvc$ model.
We first give the discrete nonlocal operator $(\nabla_{\omega}^s \textbf{u})_{ij}$, $(\nabla_{\omega}^v \textbf{u})_{ij}$, and present the discrete version of the $\stvc$ as follows,
\begin{eqnarray}
&&\nonumber\hspace{10mm} (\nabla_{\omega}^s \textbf{u})_{ij} = \Big(\big(\textbf{u}_{s}^1(j)-\textbf{u}_{s}^1(i)\big) \sqrt{\omega_s },\ \big(\textbf{u}_{s}^2(j)-\textbf{u}_{s}^2(i)\big)\sqrt{\omega_s }\Big),\\
&&\nonumber\hspace{20mm} (\nabla_{\omega}^v \textbf{u})_{ij} = \big(\textbf{u}_v(j)-\textbf{u}_v(i)\big)\sqrt{\omega_v },\\
&&\hspace{5mm}\stvc(\textbf{u})=\sum_{i=1}^n \sum_{j=1}^m |(\nabla_{\omega}^s \textbf{u})_{ij}|_1+\mu \sum_{i=1}^n \sum_{j=1}^m |(\nabla_{\omega}^v \textbf{u})_{ij}|,
\end{eqnarray}
where $n$ is the pixel number of the discretized image and $m$ is the pixel number of the nonlocal neighborhood. In this section, we consider the following discrete models for color image restoration, 
\begin{equation}
\begin{aligned}
   \min_{l \leq \textbf{u} \leq L} \alpha \stvc(\textbf{u})+\frac{1}{2}||K\textbf{u}-\mathbf{f}||^2,\\
   \min_{l \leq \textbf{u} \leq L} \alpha \stvc(\textbf{u})+\frac{1}{2}||K\textbf{u}-\mathbf{f}||_1,
   \end{aligned}
   \label{eqn 6.1}
\end{equation}
where $l \leq L$ denote the lower bound and the upper bound of the RGB values. 
\subsection{The proposed algorithm for $\stvc$}\label{sec 4.1}

In this subsection, we consider the following $\stvct$ model, 
\begin{equation}
\min_{\textbf{u}} \alpha \stvc(\textbf{u})+\frac{1}{2}||K\textbf{u}-\mathbf{f}||^2.
\label{eqn L2}
\end{equation}
We introduce $\mathbf{p}$ by setting $\mathbf{p} = K\mathbf{u}-\mathbf{f}$, then we can transform ($\ref{eqn L2}$) into the following equivalent optimization problem,
\begin{gather*}
\min_{\textbf{u}} \alpha \stvc(\textbf{u})+\frac{1}{2}||\textbf{p}||^2 \quad \text{s.t.}\ \ \textbf{p}=K\textbf{u}-\mathbf{f},
\end{gather*}
by reorganizing the variables, we derive the following equivalent version,
\begin{equation}
\min_{\textbf{z}} H(\textbf{z}) \quad \text{s.t.}\ \ B\textbf{z} = \mathbf{f},
\label{eqn 4.3}
\end{equation}
here $\textbf{z}=\left[\begin{array}{c}\textbf{u}\\
\textbf{p}
\end{array} \right]$, $B =\left[K, -I\right]$ and $H(\textbf{z})=\alpha \stvc(\textbf{u})+\frac{1}{2}||\textbf{p}||^2$. Then by introducing $\textbf{w}=\left[\begin{array}{c}\textbf{v}\\
\textbf{q}
\end{array} \right]$ and considering a Moreau-Yosida regularization of ($\ref{eqn 4.3}$),
\begin{gather*}
\min_{\textbf{z}, \textbf{w}} \lambda H(\textbf{z})+\frac{1}{2\delta}||\textbf{z}-\textbf{w}||^2+\frac{1}{2}||B\textbf{w}-\mathbf{f}||^2 \quad \text{s.t.}\ B\textbf{z}=\mathbf{f},
\end{gather*}
we solve ($\ref{eqn 4.3}$) by using Bregman iteration scheme,
\begin{eqnarray}
\left\{
\begin{aligned}
\textbf{z}^{k+1}&=\arg\min_{\textbf{z}}\big(\lambda H(\textbf{z})+\frac{1}{2\delta}||\textbf{z}-\textbf{w}^k||^2\big),\\
\textbf{w}^{k+1}&=\arg\min_{\textbf{w}}\big(\frac{1}{2}||B \textbf{w}-\textbf{f}^k||^2+\frac{1}{2\delta}||\textbf{z}^{k+1}-\textbf{w}||^2\big),\\
\mathbf{f}^{k+1}&=\mathbf{f}^k+\mathbf{f}-\textbf{B}\textbf{z}^{k+1}.
\end{aligned}
\right.
\label{sequenceL2}
\end{eqnarray}
Noting that $\textbf{z}$-subproblem can be transformed into the following $\textbf{u}$-subproblem and $\textbf{p}$-subproblem,
\begin{gather*}
(\textbf{u}^{k+1}, \textbf{p}^{k+1})=\arg\min_{\textbf{u}, \textbf{p}} \big( \lambda \alpha \stvc(\textbf{u})+ \frac{1}{2\delta}||\textbf{u}-\textbf{v}^k||^2+\frac{1}{2}||\textbf{p}||^2+\frac{1}{2\delta}||\textbf{p}-\textbf{q}^k||^2\big),
\end{gather*}
which is equivalent to
\begin{gather*}
\left\{
\begin{aligned}
\textbf{u}^{k+1}&=\arg\min_{\textbf{u}} \big( \lambda \alpha \stvc(\textbf{u})+ \frac{1}{2\delta}||\textbf{u}-\textbf{v}^k||^2\big),\\
\textbf{p}^{k+1}&=\arg\min_{\textbf{p}} \big( \frac{1}{2}||\textbf{p}||^2+\frac{1}{2\delta}||\textbf{p}-\textbf{q}^k||^2\big).
\end{aligned}
\right.
\end{gather*}
For $\mathbf{u}$-subproblem, we show the detailed numerical algorithm in Section $\ref{sec 4.3}$. $\mathbf{p}$-subproblem has a closed form solution,
\begin{gather*}
\textbf{p}^{k+1} = \frac{1}{1+\delta \lambda} \textbf{q}^{k}.
\end{gather*}
$\textbf{w}$-subproblem is equivalent to the following equation,
\begin{gather*}
(B^TB+\frac{1}{\delta}I )\textbf{w} = \frac{1}{\delta}\textbf{z}^{k+1}+B^T\textbf{f}^k,
\end{gather*}
which is equivalent to
\begin{gather*}
\left[\begin{array}{cc}
K^T K+\frac{1}{\delta}I & -K^T\\
-K & \frac{\delta+1}{\delta}I\\
\end{array} \right] \left[\begin{array}{c}
\textbf{v}\\
\textbf{q}\\
\end{array} \right]=\left[\begin{array}{c}
\frac{1}{\delta}\textbf{u}^{k+1}+K^T \textbf{f}^k \\
\frac{1}{\delta}\textbf{p}^{k+1}-\textbf{f}^k\\
\end{array} \right],
\end{gather*}
which is
\begin{gather*}
\begin{aligned}
(K^TK+\frac{1}{\delta}I)\textbf{v}-K^T\textbf{q}&=\frac{1}{\delta}\textbf{u}^{k+1}+K^T \textbf{f}^k, \\
-K \textbf{v}+\frac{\delta+1}{\delta}\textbf{q}&=\frac{1}{\delta}\textbf{p}^{k+1}-\textbf{f}^{k}.
\end{aligned}
\end{gather*}
We then obtain the solution as follows, 
\begin{gather*}
\left\{
\begin{aligned}
\textbf{v}^{k+1}&=\big( \delta K^TK+(\delta+1)I\big)^{-1}\big( (\delta+1)\textbf{u}^{k+1} +\delta K^T(\textbf{p}^{k+1}+\textbf{f}^k)\big),\\
\textbf{q}^{k+1}&=\frac{\delta}{\delta+1}(\frac{1}{\delta}\textbf{p}^{k+1}-\textbf{f}^{k}+K \textbf{v}^{k+1}), 
\end{aligned}
\right.
\end{gather*}
where $\textbf{v}^{k+1}$ can be solved by using the fast Fourier transform and set a periodic boundary condition.

\subsection{The proposed algorithm for $\stvco$}

In this subsection, we consider the following $\stvco$ model, 
\begin{equation}
\min_{\textbf{u}} \alpha \stvc(\textbf{u})+\frac{1}{2}||K\textbf{u}-\mathbf{f}||_1.
\label{eqn L1}
\end{equation}
Again we introduce $\mathbf{p}$ by setting $\mathbf{p} = K\mathbf{u}-f$, and we transform ($\ref{eqn L1}$) into the following equivalent optimization problem,
\begin{gather*}
\min_{\textbf{u}} \alpha \stvc(\textbf{u})+\frac{1}{2}||\textbf{p}||_1 \quad \text{s.t.}\ \ \textbf{p} = K\textbf{u}-\mathbf{f},
\end{gather*}
we reorganize the variables and derive the following equivalent version,
\begin{equation}
\min_{\textbf{z}} H(\textbf{z}) \quad \text{s.t.}\ \ B\textbf{z} = \mathbf{f},
\label{eqn 4.9}
\end{equation}
with $\textbf{z}=\left[\begin{array}{c}\textbf{u}\\
\textbf{p}
\end{array} \right]$, $B=[K, -I]$ and $H(\textbf{z})=\alpha \stvc(\textbf{u})+\frac{1}{2}|\textbf{p}|_1$. By considering the same Moreau-Yosida regularization, we solve ($\ref{eqn 4.9}$) by using Bregman iteration scheme,
\begin{eqnarray}
\left\{
\begin{aligned}
\textbf{z}^{k+1}&=\arg\min_{\textbf{z}}\big(\lambda H(\textbf{z})+\frac{1}{2\delta}||\textbf{z}-\textbf{w}^k||^2\big),\\
\textbf{w}^{k+1}&=\arg\min_{\textbf{w}}\big(\frac{1}{2\delta}||\textbf{z}^{k+1}-\textbf{w}||^2+\frac{1}{2}||B\textbf{w}-\mathbf{f}^k||^2\big),\\
\mathbf{f}^{k+1}&=\mathbf{f}^k+\mathbf{f}-\textbf{B}\textbf{z}^{k+1}.
\end{aligned}
\right.
\label{sequenceL1}
\end{eqnarray}
Noting that $\textbf{z}$-subproblem can be transformed into the following $\textbf{u}$-subproblem and $\textbf{p}$-subproblem,
\begin{gather*}
(\textbf{u}^{k+1}, \textbf{p}^{k+1})=\arg\min_{\textbf{u}, \textbf{p}} \big( \lambda \alpha \stvc(\textbf{u})+ \frac{1}{2\delta}||\textbf{u}-\textbf{v}^k||^2+\frac{1}{2}||\textbf{p}||^2+\frac{1}{2\delta}||\textbf{p}-\textbf{q}^k||^2\big),
\end{gather*}
which is equivalent to
\begin{gather*}
\left\{
\begin{aligned}
\textbf{u}^{k+1}&=\arg\min_{\textbf{u}} \big( \lambda \alpha \stvc(\textbf{u})+ \frac{1}{2\delta}||\textbf{u}-\textbf{v}^k||^2\big),\\
\textbf{p}^{k+1}&=\arg\min_{\textbf{p}} \big( \frac{1}{2}||\textbf{p}||_1+\frac{1}{2\delta}||\textbf{p}-\textbf{q}^k||^2\big),
\end{aligned}
\right.
\end{gather*}
For the $\mathbf{u}$-subproblem, we show the detailed numerical algorithm in {Section} $\ref{sec 4.3}$. $\mathbf{p}$-subproblem can be solved by using shrinkage operator,
\begin{gather*}
\textbf{p}^{k+1}=\text{shrink}(\textbf{q}^k, \lambda \delta).
\end{gather*}
$\textbf{w}$-subproblem is also equivalent to the following equation,
\begin{gather*}
(B^TB+\frac{1}{\delta}I )\textbf{w} = \frac{1}{\delta}\textbf{z}^{k+1}+B^T\textbf{f}^k.
\end{gather*}
By using similar method as in Section \ref{sec 4.1}, we have 
\begin{gather*}
\left\{
\begin{aligned}
\textbf{v}^{k+1}&=\big( \delta K^TK+(\delta+1)I\big)^{-1}\big( (\delta+1)\textbf{u}^{k+1} +\delta K^T(\textbf{p}^{k+1}+\textbf{f}^k)\big),\\
\textbf{q}^{k+1}&=\frac{\delta}{\delta+1}(\frac{1}{\delta}\textbf{p}^{k+1}-\textbf{f}^{k}+K \textbf{v}^{k+1}).
\end{aligned}
\right.
\end{gather*}

\subsection{The proposed algorithm for $\mathbf{u}$-subproblem}\label{sec 4.3}

In this section, we show how to solve $\mathbf{u}$-subproblem which is
\begin{equation}
\mathbf{u}^{k+1} =\arg \min_{\textbf{u}}\big( \lambda\alpha \stvc(\textbf{u}) +\frac{1}{2\delta} ||\textbf{u}-\textbf{v}^k||^2\big).
\label{eqn 4.10}
\end{equation}
We consider the anisotropic discrete version of ($\ref{eqn 4.10}$), 
\begin{equation}
\min_{\textbf{u}} \sum_{i=1}^n \sum_{j=1}^m ||(\nabla_{\omega}^s \textbf{u})_{ij}||_1+\mu \sum_{i=1}^n \sum_{j=1}^m |(\nabla_{\omega}^v \textbf{u})_{ij}| +\frac{1}{2\delta}||\textbf{u}-\textbf{v}^k||^2
\label{eqn bregman u}.
\end{equation}
Noting that we set $\textbf{q}_s=[\textbf{q}_1, \textbf{q}_2]^T$ in Proposition \ref{pro3.2} before, let $\textbf{q}= [\textbf{q}_s, \textbf{q}_3]^T =\mathbf{P} \textbf{u}=\mathbf{P}[\textbf{u}_r, \textbf{u}_g, \textbf{u}_b]^T$ and $\tilde{\textbf{v}}^k=\mathbf{P} \textbf{v}^k$, we transform ($\ref{eqn bregman u}$) into the following equivalent minimization problem by using Proposition $\ref{pro3.2}$, 
\begin{gather*}
\min_{\textbf{u}} \sum_{i=1}^n\sum_{j=1}^m ||(\nabla_{\omega}  \textbf{q}_{s})_{ij}||_1+\mu \sum_{i=1}^n \sum_{j=1}^m |(\nabla_{\omega} \textbf{q}_3)_{ij}| +\frac{1}{2\delta}||\textbf{q}-\tilde{\textbf{v}}^k||^2.
\end{gather*}
We focus on $\mathbf{q}_3$ problem and remark that $ \mathbf{q}_{s}$ can be solved by using similar method.
\begin{equation}
\min_{\textbf{q}_3}  \mu \sum_{i=1}^n \sum_{j=1}^m |(\nabla_{\omega} \textbf{q}_3)_{ij}| +\frac{1}{2\delta}||\textbf{q}_3-\tilde{\textbf{v}}^k_3||^2.
\label{eqn 4.18}
\end{equation}
Let $d_{ij}=(\nabla_{\omega} (\textbf{q}_3)_{ij}$), we can reformulate ($\ref{eqn 4.18}$) as
\begin{gather*}
\min_{\textbf{q}_{3}, d} \sum_{j=1}^m\mu|d_{ij}|+ \frac{1}{2\delta}||\textbf{q}_{3, i}-\tilde{\textbf{v}}^k_{3, i}||^2   \quad \text{s.t.}\ \ d_{ij}=(\nabla_{\omega} \textbf{q}_3)_{ij}, 
\end{gather*}
and force the constraint with the Bregman iteration process as follows,
\begin{gather*}
\left\{
\begin{aligned}
(\textbf{q}_{3, i}^{k+1}, d_{ij}^{k+1})&=\arg \min_{\textbf{q}_{3}, d} \sum_{j=1}^m \big(\mu|d_{ij}|+\frac{\beta}{2}||d_{ij}-(\nabla_{\omega} \textbf{q}_3)_{ij}-b_{ij}^k||^2 \big) + \frac{1}{2\delta}||\textbf{q}_{3, i}-\tilde{\textbf{v}}^k_{3, i}||^2, \\
b_{ij}^{k+1}&=b_{ij}^k+(\nabla_{\omega} \textbf{q}_3)_{ij}^{k+1}-d_{ij}^{k+1},
\end{aligned}
\right.
\end{gather*} 
which is equivalent to
\begin{gather*}
\left\{
\begin{aligned}
\textbf{q}_{3, i}^{k+1}&=\arg \min_{\textbf{q}}\frac{\beta}{2}\sum_{j=1}^m||d_{ij}^k-(\nabla_{\omega} \textbf{q}_3)_{ij}-b_{ij}^k||^2 +\frac{1}{2\delta}||\textbf{q}_{3, i}-\tilde{\textbf{v}}^k_{3, i}||^2, \\
d_{ij}^{k+1}&=\arg \min_{d_{ij}}\mu|d_{ij}|+\frac{\beta}{2}||d_{ij}-(\nabla_{\omega} \textbf{q}_{3}^{k+1})_{ij}-b_{ij}^k||^2, \\
b_{ij}^{k+1}&=b_{ij}^k+(\nabla_{\omega} \textbf{q}_3^{k+1})_{ij}-d_{ij}^{k+1}.
\end{aligned}
\right.
\end{gather*}
The Euler-Lagrange equation for $\textbf{q}_3$-subproblem is given by
\begin{gather*}
(\textbf{q}_{3, i}^{k+1}-\tilde{\textbf{v}}_{3, i}^k)-\beta div_{\omega} \big((\nabla_{\omega} \textbf{q}_3)_i-d^k_i+b^k_i \big) = 0,
\end{gather*}
where $div_{\omega} (\nabla_{\omega} \textbf{q}_3)_{i, j} = (\Delta_{\omega} \textbf{q}_3)_{i, j} = 2 \big(\textbf{q}_3 (j)-\textbf{q}_3 (i)\big)\omega$, thus we have
\begin{gather*}
\textbf{q}_3^{k+1}=(1-\Delta_{\omega})^{-1}\big(\tilde{\textbf{v}}_{3}^k+\beta div_{\omega}(b^k_i-d^k_i) \big),
\end{gather*} 
numerically we solve $\textbf{q}^k_3$ by using Gauss-Seidel algorithm.
For $d$-subproblem, it can be solved efficiently by using shrinkage operator,
\begin{gather*}
d_{ij}^{k+1}=\text{shrink}(\nabla_{\omega} \textbf{q}_{3}^{k+1})_{ij} +b_{ij}^k, \frac{\alpha}{\beta}), 
\end{gather*}
Finally, we set $\mathbf{u}^{k+1} = \mathbf{P}^T [\mathbf{q}_s, \mathbf{q}_3]^T$. 
 
\subsection{Convergence analysis}

In this section, we give the following theorem about the convergence of the proposed algorithms.
\begin{theorem}
If $\delta$ and B satisfy $0 < \delta < \frac{1}{||B^TB||}$, then the sequences generated by $\ref{sequenceL2}$ and $\ref{sequenceL1}$ converge along subsequence to solutions of $\ref{eqn 4.3}$ and $\ref{eqn 4.9}$.
\end{theorem}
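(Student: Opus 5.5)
The plan is to recast the iterations (\ref{sequenceL2}) and (\ref{sequenceL1}) as the Bregmanized operator splitting (BOS) scheme of Zhang et al.\ \cite{Xiaoqun2010Bregmanized}, and then adapt their convergence argument. First we eliminate $\mathbf{w}$. The $\mathbf{w}$-update is a single gradient-type step on $\frac12\|B\mathbf{w}-\mathbf{f}^k\|^2$ around $\mathbf{z}^{k+1}$. Up to the standard linearized form used in BOS,
\begin{gather*}
\mathbf{w}^{k} = \mathbf{z}^{k} - \delta B^T(B\mathbf{z}^{k}-\mathbf{f}^{k}).
\end{gather*}
Then the $\mathbf{z}$-step is the proximal map $\mathbf{z}^{k+1}=\mathrm{prox}_{\lambda\delta H}(\mathbf{w}^k)$. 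This prox is well defined because $H$ is proper, convex and lower semicontinuous. For $\mathbf{u}$ this follows from Proposition \ref{pro3.2}, since $\stvc$ is a composition of the orthogonal map $\mathbf{P}$ with weighted norms. For $\mathbf{p}$, the term $\frac12\|\mathbf{p}\|^2$ or $\frac12\|\mathbf{p}\|_1$ is convex.

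Next we write the optimality conditions. The $\mathbf{z}$-step gives $0\in \lambda\partial H(\mathbf{z}^{k+1})+\frac1\delta(\mathbf{z}^{k+1}-\mathbf{w}^k)$. Let $\hat{\mathbf{z}}$ be a solution of (\ref{eqn 4.3}) (resp.\ (\ref{eqn 4.9})). It exists by Theorem \ref{thm:svs-nltv_exist_unique} in the discrete setting, or simply by coercivity of $H$ on the affine set $B\mathbf{z}=\mathbf{f}$. Let $\hat{\mathbf{g}}$ be the associated multiplier, so that $0\in\lambda\partial H(\hat{\mathbf z})-B^T\hat{\mathbf g}$. Introduce the errors $\mathbf{z}_e^k=\mathbf{z}^k-\hat{\mathbf z}$ and $\mathbf{f}_e^k=\mathbf{f}^k-\hat{\mathbf g}$. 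Subtract the two inclusions and use monotonicity of $\partial H$, taking the inner product with $\mathbf{z}_e^{k+1}$. Then substitute the update $\mathbf{f}^{k+1}=\mathbf{f}^k+\mathbf{f}-B\mathbf{z}^{k+1}$, i.e.\ $\mathbf{f}_e^{k+1}=\mathbf{f}_e^k-B\mathbf{z}_e^{k+1}$. The goal is a Fejér-type inequality of the form
\begin{gather*}
\Big(\tfrac1\delta\|\mathbf z_e^{k}\|^2-\|B\mathbf z_e^{k}\|^2+\|\mathbf f_e^{k}\|^2\Big)-\Big(\tfrac1\delta\|\mathbf z_e^{k+1}\|^2-\|B\mathbf z_e^{k+1}\|^2+\|\mathbf f_e^{k+1}\|^2\Big)\ \ge\ c\big(\|B\mathbf z_e^{k+1}\|^2+\langle \partial H\text{-gap}\rangle\big)\ge 0 .
\end{gather*}
The hypothesis $0<\delta<1/\|B^TB\|$ is exactly what makes $\frac1\delta I-B^TB$ positive definite. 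Hence the bracketed quantity is a genuine squared norm and is nonincreasing.

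From this we conclude in three steps. First, the sequences $\{\mathbf z^k\}$ and $\{\mathbf f^k\}$ are bounded. Second, $\sum_k\|B\mathbf z^{k+1}-\mathbf f\|^2<\infty$, so $B\mathbf z^k\to\mathbf f$. Third, the Bregman distance $H(\mathbf z^{k+1})-H(\hat{\mathbf z})-\langle B^T\hat{\mathbf g},\mathbf z^{k+1}-\hat{\mathbf z}\rangle$ tends to $0$. By boundedness we extract a convergent subsequence $\mathbf z^{k_j}\to\mathbf z^*$. Lower semicontinuity of $H$ then gives $B\mathbf z^*=\mathbf f$ and $H(\mathbf z^*)\le H(\hat{\mathbf z})$, so $\mathbf z^*$ solves the constrained problem. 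The same computation covers both the $L_2$ and the $L_1$ cases, since only convexity and lower semicontinuity of $H$ are used.

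The main obstacle is the telescoping step. The cross terms $\langle \mathbf f_e^k, B\mathbf z_e^{k+1}\rangle$ and $\langle B^TB\mathbf z_e^{k},\mathbf z_e^{k+1}\rangle$ must combine into differences of the weighted norm $\|\cdot\|^2_{I/\delta-B^TB}$ with a nonnegative remainder. I would follow the algebra of \cite{Xiaoqun2010Bregmanized} directly, first checking that the exact $\mathbf w$-solve written in (\ref{sequenceL2}) coincides with, or is a contraction-compatible variant of, their linearized step.
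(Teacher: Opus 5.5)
You follow the paper's route: its proof is only a pointer to the BOS analysis in \cite{goldstein2009split, Xiaoqun2010Bregmanized}, and you propose to reproduce that analysis. However, the step you pass over with ``up to the standard linearized form used in BOS'' is exactly where the argument breaks, and the paper's citation does not address it either. The schemes (\ref{sequenceL2}) and (\ref{sequenceL1}) solve the $\mathbf{w}$-subproblem exactly, $\mathbf{w}^{k}=(I+\delta B^TB)^{-1}(\mathbf{z}^{k}+\delta B^T\mathbf{f}^{k-1})$. Eliminating $\mathbf{f}^{k-1}=\mathbf{f}^{k}-\mathbf{f}+B\mathbf{z}^{k}$ gives
\[
\mathbf{w}^{k}=\mathbf{z}^{k}+\delta\,(I+\delta B^TB)^{-1}B^T(\mathbf{f}^{k}-\mathbf{f})
\qquad\text{versus BOS:}\qquad
\mathbf{w}^{k}=\mathbf{z}^{k}+\delta B^T\bigl((\mathbf{f}^{k}-\mathbf{f})+(\mathbf{f}-B\mathbf{z}^{k})\bigr).
\]
The residual $\mathbf{f}-B\mathbf{z}^{k}$, present only in BOS, is a dual extrapolation: with $\mathbf{f}^k-\mathbf{f}$ as the dual variable, BOS is a Chambolle--Pock-type primal--dual method. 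This residual is what puts $-B^TB\mathbf{z}^{k}$ into the $\mathbf{z}$-optimality condition, and hence produces your cross term $\langle B^TB\mathbf{z}_e^k,\mathbf{z}_e^{k+1}\rangle$. It is also where $\delta<1/\|B^TB\|$ is used, via $\frac1\delta I-B^TB\succ0$. Without it, the paper's iteration is of Arrow--Hurwicz type (no extrapolation, dual step preconditioned by $(I+\delta BB^T)^{-1}$), and the Fej\'er inequality you aim for is false for it.

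More careful algebra cannot repair this at the generality you claim. Take $H\equiv0$, $B=1$ (so the hypothesis reads $0<\delta<1$) and data $\mathbf{f}\neq0$; the unique solution is $z=\mathbf{f}$. Then $z^{k+1}=w^k$, and the deviations $(w^k-\mathbf{f},\,\mathbf{f}^k-\mathbf{f})$ are propagated by
\[
\begin{pmatrix} \frac{1}{1+\delta} & \frac{\delta}{1+\delta}\\ -1 & 1\end{pmatrix}.
\]
This matrix has determinant $1$ and trace $\frac{2+\delta}{1+\delta}\in(\frac32,2)$, so both eigenvalues lie on the unit circle. The iterates therefore circulate on an ellipse and never converge. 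For a rational rotation angle (e.g.\ $\delta\approx0.88$, period $9$) and generic initialization, no subsequence of $z^k$ tends to $\mathbf{f}$. By contrast, the linearized BOS step on the same example has spectral radius $\sqrt{1-\delta}<1$. So your claim that ``only convexity and lower semicontinuity of $H$ are used'' cannot be true for the algorithm as written.

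To close the gap you have two options. You can prove the result for the linearized BOS iteration, which is what the theorem in \cite{Xiaoqun2010Bregmanized} covers; this amounts to changing the $\mathbf{w}$-update of the algorithm. Alternatively, you can give a different argument for the exact-solve scheme that exploits the specific structure of $H$, for instance the strong convexity of $\frac12\|\mathbf{p}\|^2$ in the $L_2$ case. The $L_1$ case has no such strong convexity and would need separate treatment.

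The rest of your outline is standard and fine once a valid Fej\'er inequality is in hand: existence of $\hat{\mathbf{z}}$ and a multiplier in finite dimensions, $B\mathbf{z}^k\to\mathbf{f}$, vanishing Bregman distance, and a subsequential limit that solves the constrained problem.
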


The proof of the above theorem can refer to the discussion in \cite{goldstein2009split, Xiaoqun2010Bregmanized}.

\section{Numerical experiments}

To demonstrate the effectiveness of the proposed $\stvc$ regularization and the proposed color image restoration models, we present the experimental results in this section. 
The quality of the recovered images is measured by

\begin{itemize}
\item the SSIM index \cite{wang2004image}, which has been proven to be consistent with human eye perception.
\item the QSSIM index \cite{kolaman2011quaternion}, which has been shown to be a better measure than SSIM for color
image quality.
\item the PSNR index, which measures the ratio between the maximum possible power of a signal and the power of corrupting noise that affects the fidelity of its representation.
\item the S-CIELAB color metric \cite{zhang1996spatial}, which includes a spatial processing step and is useful and effcient for measuring color reproduction errors of digital images.
\end{itemize}

In all experiments, we report the best result in terms of the best PSNR value corresponding to the optimal regularization parameter in some ranges. For the stopping criteria, we break the iteration when the relative error of the successive iterates is less than or equal to 1$\times 10^{-6}$. The proposed main algorithm is implemented in MATLAB. All the computations are performed on a PC with an Intel(R) Core(TM) i7-13700 2.10 GHz CPU.
\begin{figure}[htbp]
\centering
\begin{subfigure}
\centering
\includegraphics[width=0.23\textwidth]{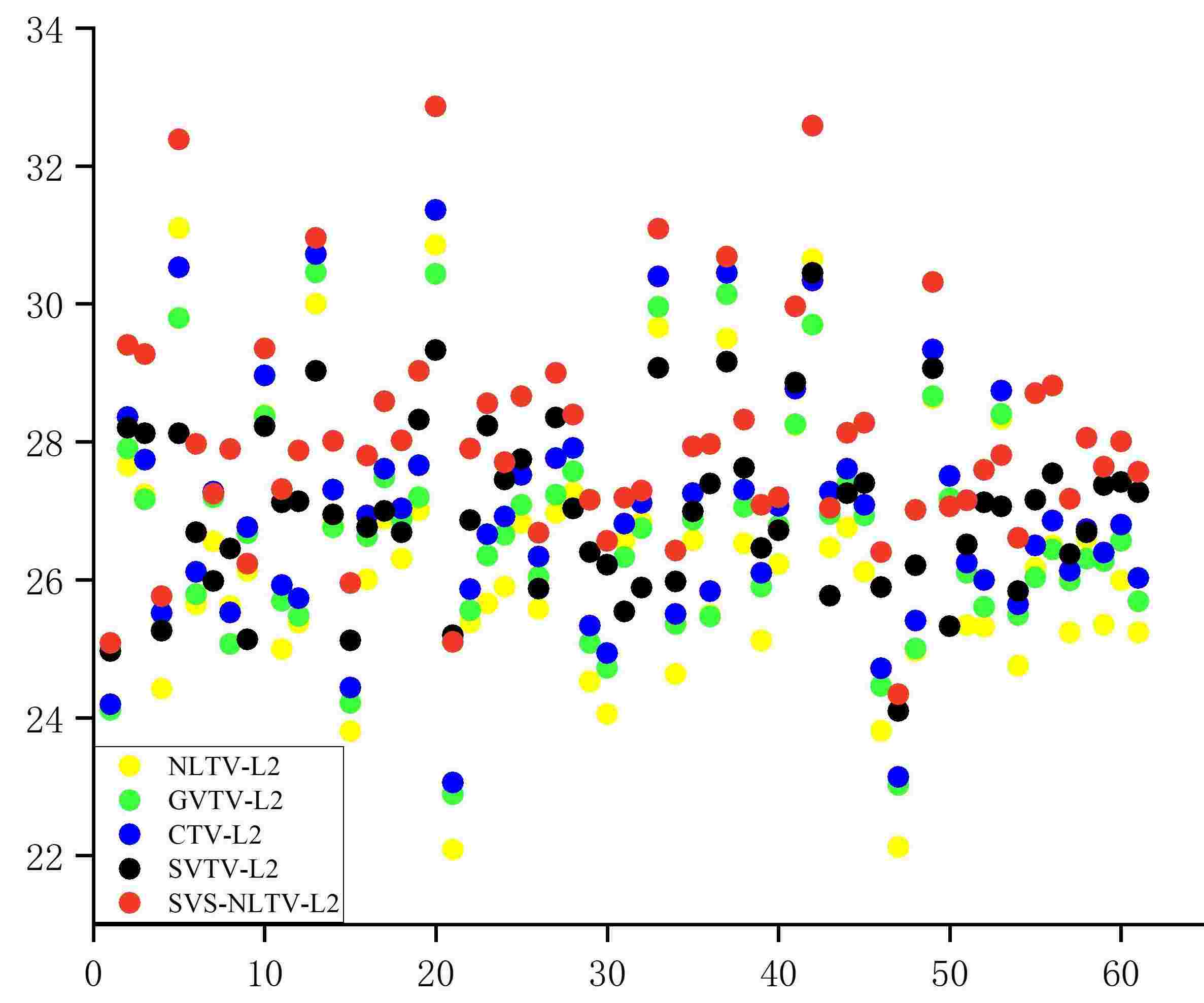}
\end{subfigure}
\begin{subfigure}
\centering
\includegraphics[width=0.23\textwidth]{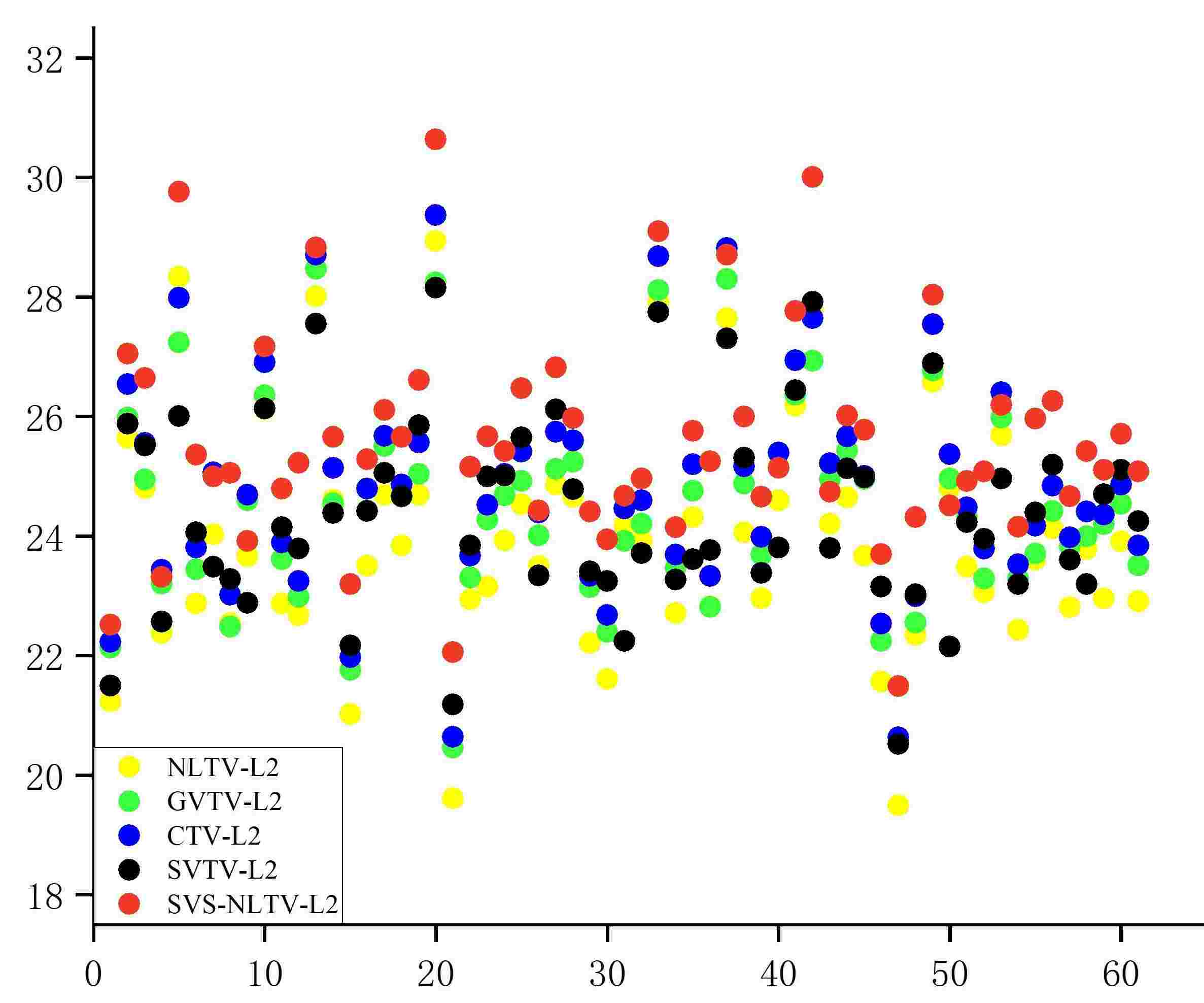}
\end{subfigure}
\begin{subfigure}
\centering
\includegraphics[width=0.23\textwidth]{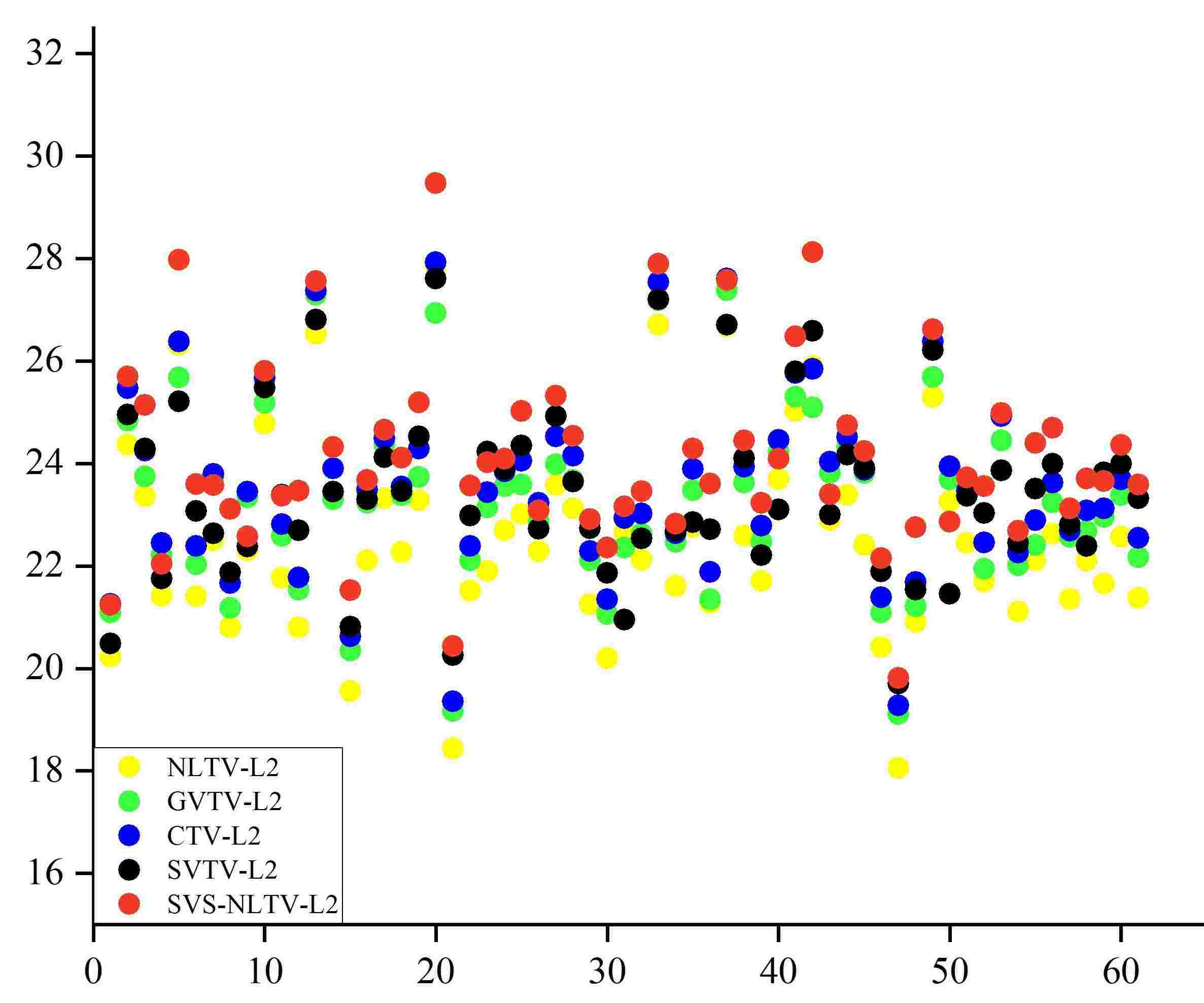}
\end{subfigure}
\begin{subfigure}
\centering
\includegraphics[width=0.23\textwidth]{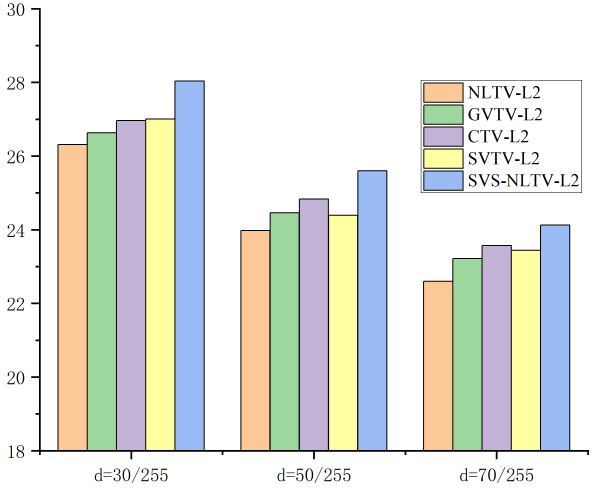}
\end{subfigure}
\caption{First to third: The spatial distributions of PSNR values of the restored results by using different methods corresponding to d = 30/255, 50/255, 70/255 respectively; Fourth: the histogram of the average PSNR values of the restored results by using different methods.}
\label{g-psnr}
\end{figure}

\begin{figure}[htbp]
\centering
\begin{subfigure}
\centering
\includegraphics[width=0.23\textwidth]{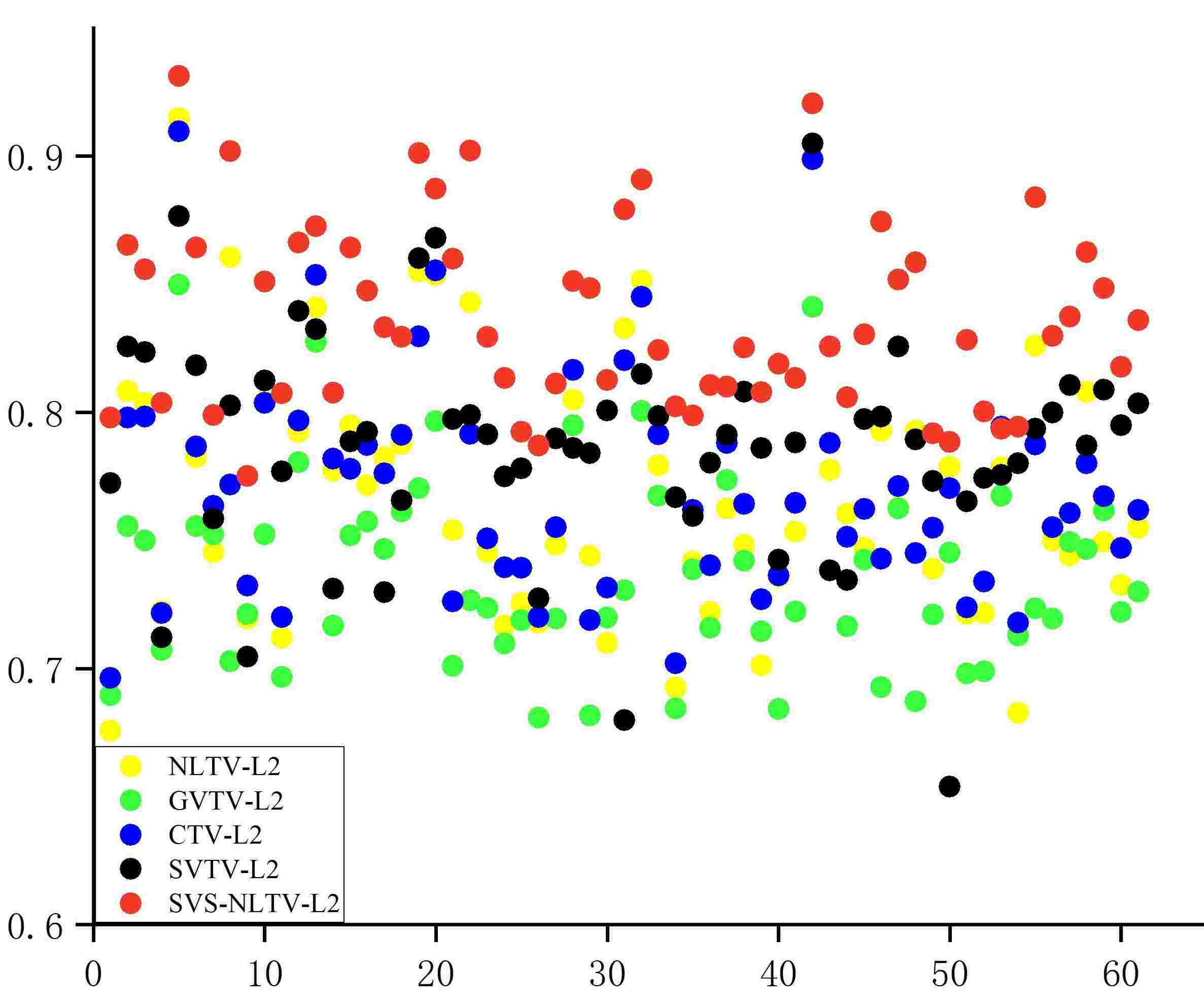}
\end{subfigure}
\begin{subfigure}
\centering
\includegraphics[width=0.23\textwidth]{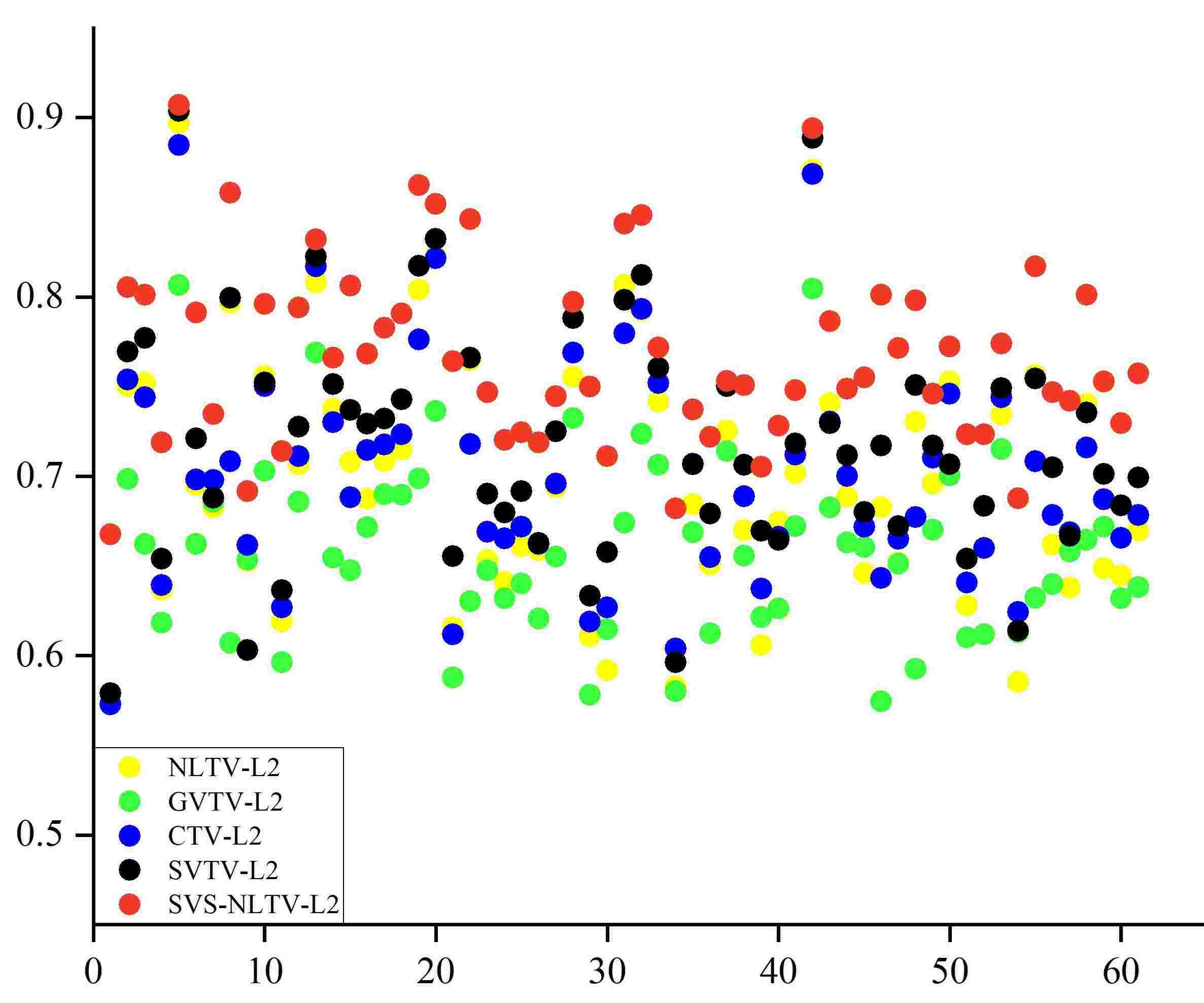}
\end{subfigure}
\begin{subfigure}
\centering
\includegraphics[width=0.23\textwidth]{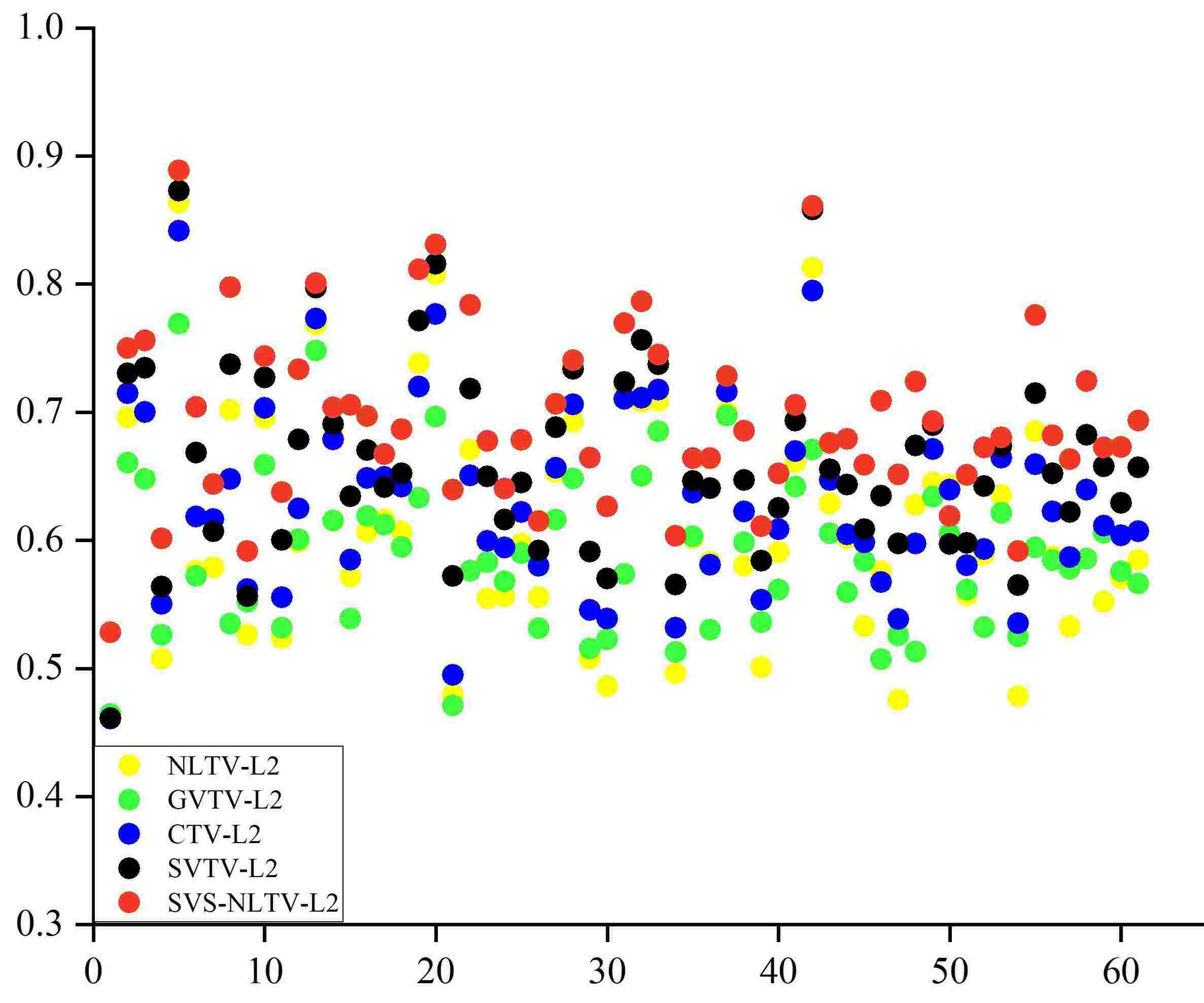}
\end{subfigure}
\begin{subfigure}
\centering
\includegraphics[width=0.23\textwidth]{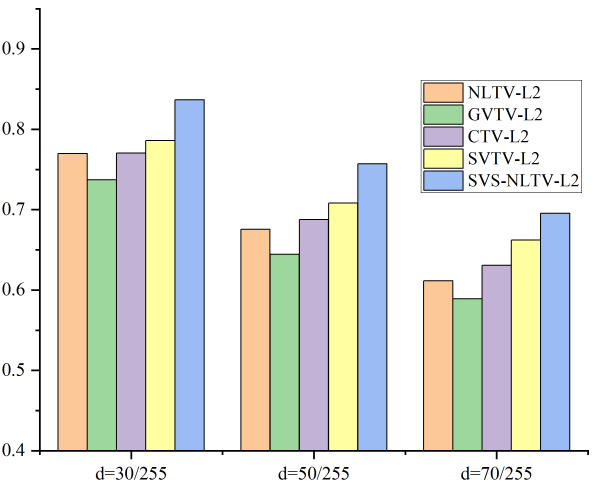}
\end{subfigure}
\caption{First to third: The spatial distributions of SSIM values of the restored results by using different methods corresponding to d = 30/255, 50/255, 70/255 respectively; Fourth: the histogram of the average SSIM values of the restored results by using different methods.}
\label{g-ssim}
\end{figure}

\begin{figure}[htbp]
\centering
\begin{subfigure}
\centering
\includegraphics[width=0.23\textwidth]{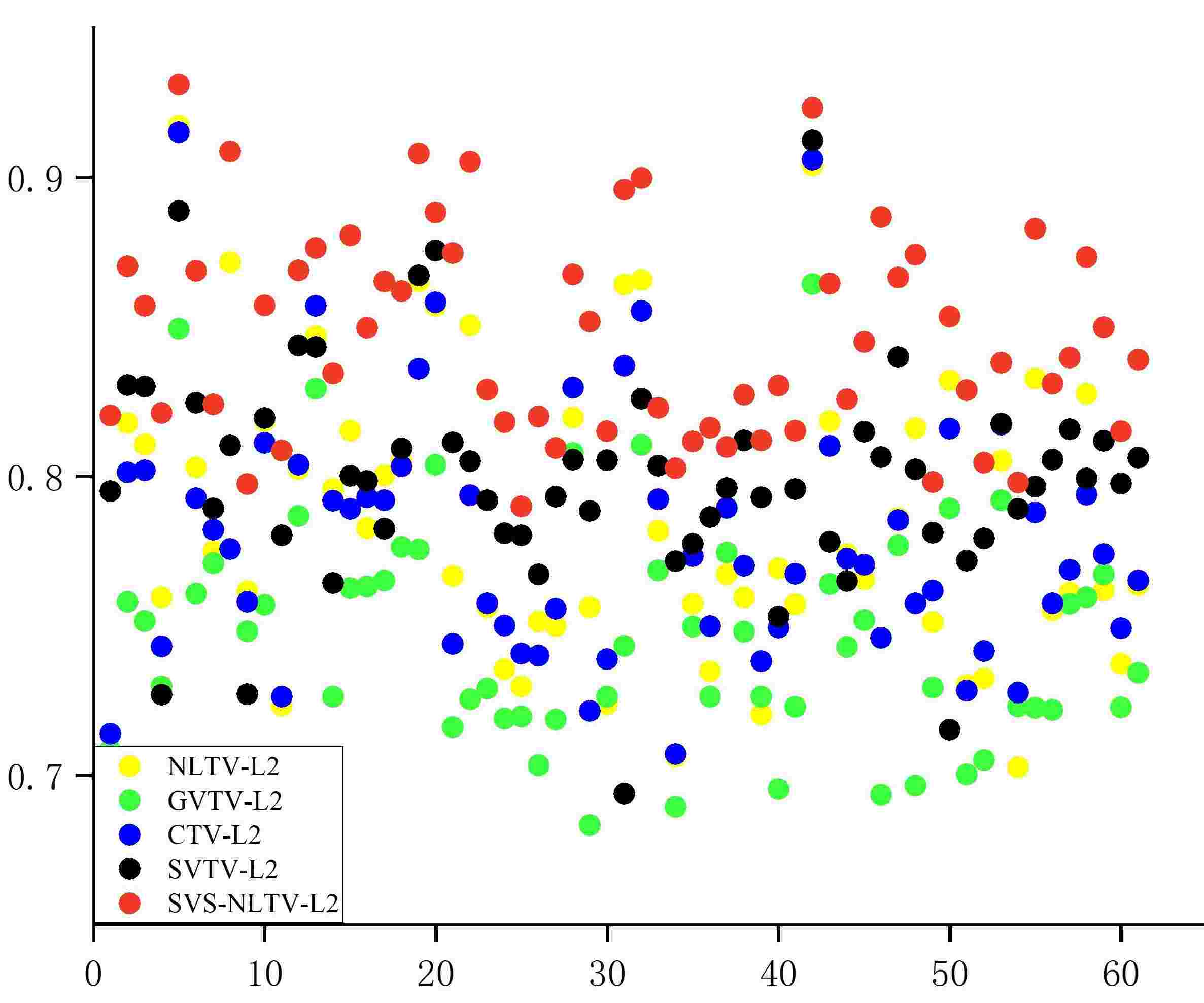}
\end{subfigure}
\begin{subfigure}
\centering
\includegraphics[width=0.23\textwidth]{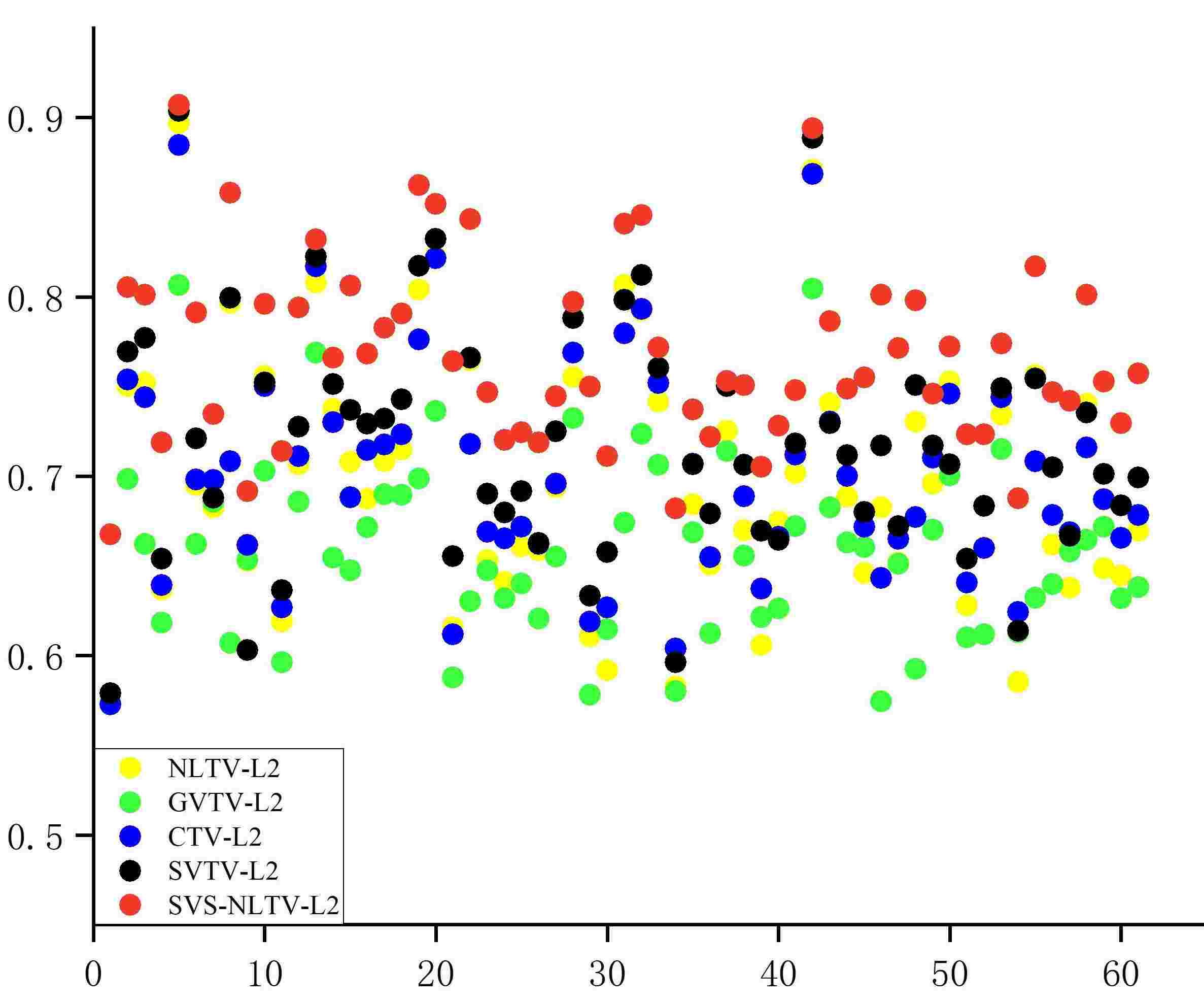}
\end{subfigure}
\begin{subfigure}
\centering
\includegraphics[width=0.23\textwidth]{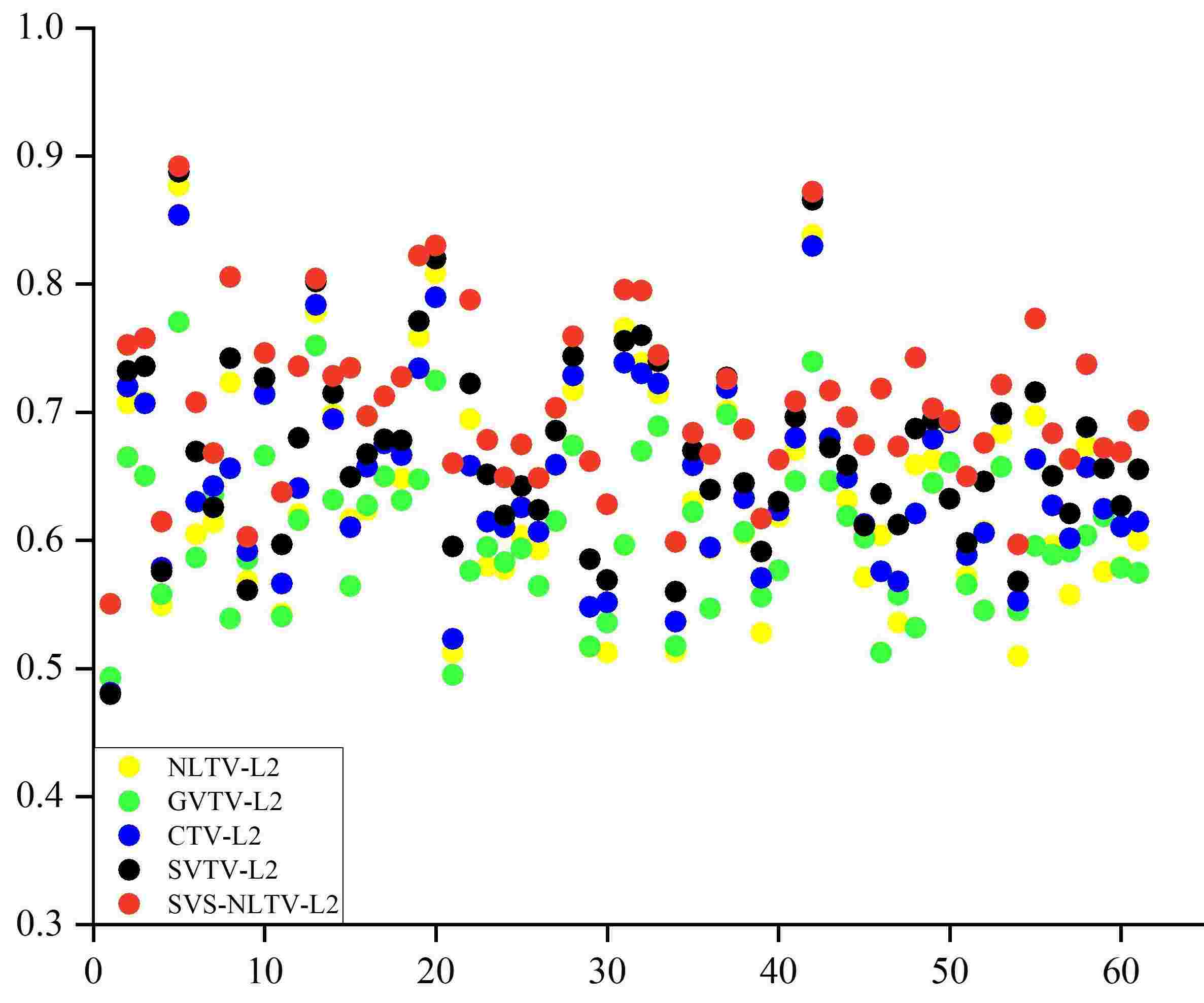}
\end{subfigure}
\begin{subfigure}
\centering
\includegraphics[width=0.23\textwidth]{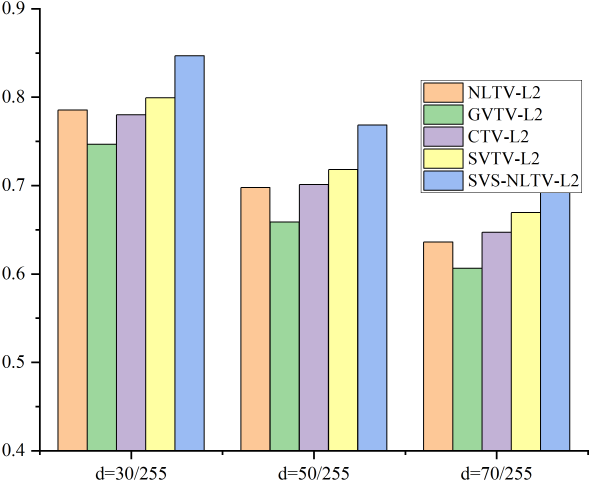}
\end{subfigure}
\caption{First to third: The spatial distributions of QSSIM values of the restored results by using different methods corresponding to d = 30/255, 50/255, 70/255 respectively; Fourth: the histogram of the average QSSIM values of the restored results by using different methods.}
\label{g-qssim}
\end{figure}

\subsection{Image denoising \uppercase\expandafter{\romannumeral1}: Gaussian noise}

In this section, we use 60 images taken from Berkeley Segmentation Database \cite{martin2001database} to test the proposed $\stvc$ plus L2 fidelity model for color image restoration with respect to different noise levels of degradation. We compare CTV-L2\cite{bresson2008fast}, GVTV-L2\cite{rodriguez2009generalized}, SVTV-L2\cite{jia2019color}, NLTV-L2 \cite{Xiaoqun2010Bregmanized} and the proposed $\stvct$ on the testing images. For the proposed $\stvct$ model, we set the parameter of the value channel to be $\mu$ = 0.05, the parameters $\lambda$, $\delta$ in Bregman iteration to be $\lambda =1$, $\delta = 1$. For the regularization parameter $\alpha$, we set a range of [$\frac{\sqrt{N}}{1000}$, $\frac{\sqrt{N}}{10}$] with a step size of 0.01 for both $\stvc$ model and NLTV model where $N$ is the total pixel numbers. The regularization parameter ($\lambda$) range for CTV model is set to be [$\frac{\sqrt{N}}{1000}$, $\frac{\sqrt{N}}{10}$]. For SVTV and GVTV model, we set a range of [$\frac{\sqrt{N}}{1000}$, $\frac{\sqrt{N}}{10}$] for the parameter of the regularization parameter($\lambda$).

We assume each pixel in the three channels takes value between 0 and 255, and artificially add Gaussian noises of standard deviation 30/255, 50/255, 70/255 in each channel to degrade the ground-truth color images. We compute the PSNR, SSIM, QSSIM values and the S-CIELAB error value (pixel number) for each restored result by comparing it with the ground-truth image. By choosing the optimal value of the regularization parameter in terms of PSNR value for each testing method, we get the optimal restored result and the corresponding values of the measures. In Figures \ref{g-psnr}-\ref{g-qssim}, we give the spatial distributions of PSNR, SSIM, and QSSIM values of the restored results corresponding to d = 30/255, 50/255, 70/255 respectively. We also show the histograms of the average PSNR, SSIM, and QSSIM values. We clearly observe from the figures that the proposed $\stvct$ model provides almost all the best PSNR, SSIM, and QSSIM values.

As examples, we display 6 sets of restored results in Figures \ref{242078L2}, \ref{159002L2}, \ref{56028L2}, \ref{167083L2}, \ref{101084L2}, \ref{24077L2}. The restored results and the corresponding histograms of PSNR, SSIM, QSSIM, and S-CIELAB error values by using different methods are also given in the figures. Again we see from the histograms that the proposed SVS-NLTV-L2 model always give the best PSNR, SSIM, QSSIM, and S-CIELAB error values compared to other testing methods. The zoom-in parts and the spatial distributions of the pixels with S-CIELAB error larger than 15 units are give in Figures \ref{242078L2zoom}, \ref{159002L2zoom}, \ref{56028L2zoom}, \ref{167083L2zoom}, \ref{101084L2zoom}, \ref{24077L2zoom}. We see from the restored results that CTV, GVTV, and NLTV methods can not handle color artifacts because of less coupling of RGB channels and lack of saturation-value information. SVS-NLTV-L2 and SVTV-L2 are more effective in handling color artifacts because of the application of saturation-value color space, however, SVTV-L2 produces unsatisfactory denoised results due to the staircase effect of total variation regularization. As expected, because of the combination of saturation-value similarity and nonlocal technique, SVS-NLTV-L2 always give visually better restored results, in which the noise and the color artifacts are removed more thoroughly and the fine edge and texture information are better preserved. See especially the denoising effect in the background regions of Figures \ref{56028L2} and the sky region of Figure \ref{167083L2}, the texture preserving effect in the fur region of Figure \ref{159002L2} and the face region of Figure \ref{101084L2}. In summary, we remark here that the proposed SVS-NLTV-L2 model is very effective, efficient, and competitive in terms of visual quality and the testing criteria such as PSNR, SSIM, QSSIM and S-CIELAB color error, especially QSSIM and S-CIELAB color error which are corresponding to the color restoration.

\begin{figure}[htbp]
\centering
\tabcolsep=1pt
\begin{tabular}{ccccc}
    \begin{minipage}[c]{0.184\textwidth} \centering
   \subfigure{\includegraphics[width =\textwidth]{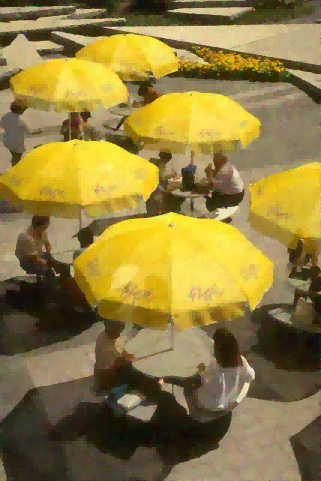}}
    \end{minipage} &
        \begin{minipage}[c]{0.184\textwidth} \centering
   \subfigure{\includegraphics[width =\textwidth]{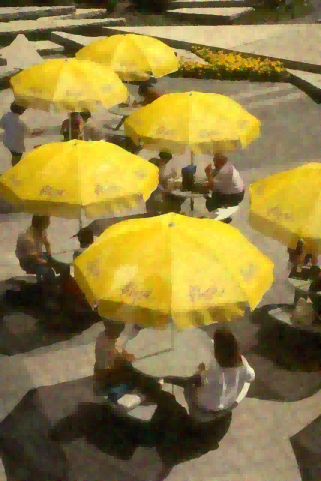}}
    \end{minipage} &
        \begin{minipage}[c]{0.184\textwidth} \centering
   \subfigure{\includegraphics[width =\textwidth]{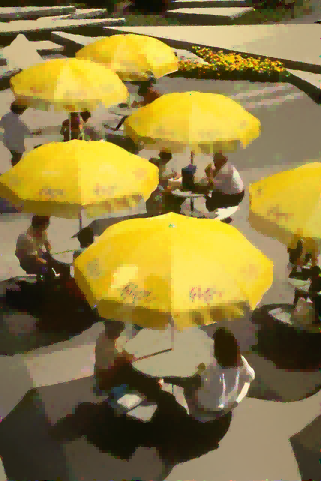}}
    \end{minipage} &
        \begin{minipage}[c]{0.184\textwidth} \centering
   \subfigure{\includegraphics[width = \textwidth]{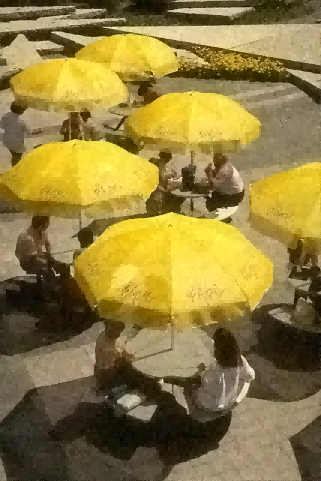}}
    \end{minipage} &
        \begin{minipage}[c]{0.184\textwidth} \centering
   \subfigure{\includegraphics[width = \textwidth]{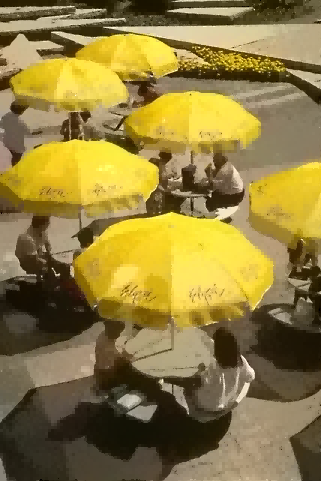}}
    \end{minipage} \\
    \begin{minipage}[c]{0.184\textwidth} \centering
   \subfigure{\includegraphics[width =\textwidth]{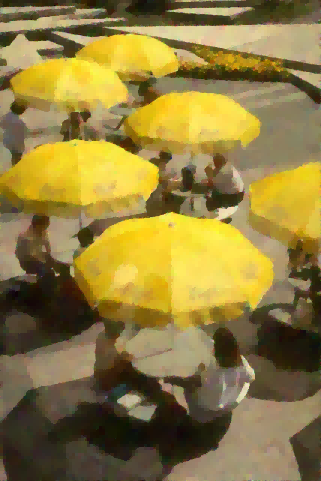}}
    \end{minipage} &
        \begin{minipage}[c]{0.184\textwidth} \centering
\subfigure{\includegraphics[width =\textwidth]{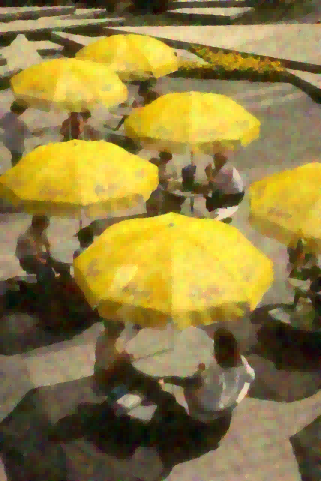}}
    \end{minipage} &
        \begin{minipage}[c]{0.184\textwidth} \centering
\subfigure{\includegraphics[width=\textwidth]{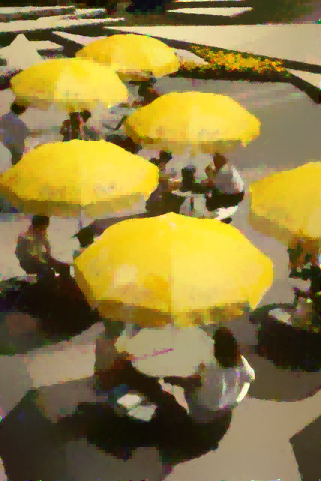}}
    \end{minipage} &
        \begin{minipage}[c]{0.184\textwidth} \centering
\subfigure{\includegraphics[width =\textwidth]{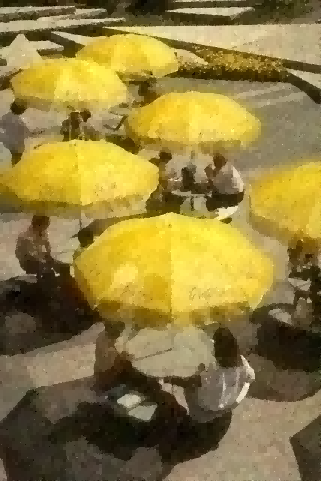}}
    \end{minipage} &
        \begin{minipage}[c]{0.184\textwidth} \centering
\subfigure{\includegraphics[width =\textwidth]{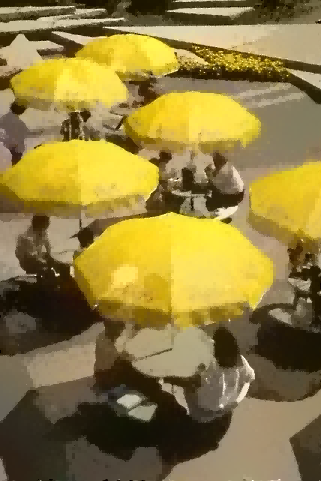}}
    \end{minipage} \\
    \begin{minipage}[c]{0.184\textwidth} \centering
\subfigure{\includegraphics[width =\textwidth]{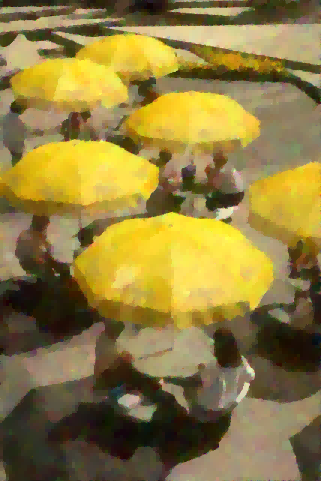}}
    \end{minipage} &
        \begin{minipage}[c]{0.184\textwidth} \centering
\subfigure{\includegraphics[width=\textwidth]{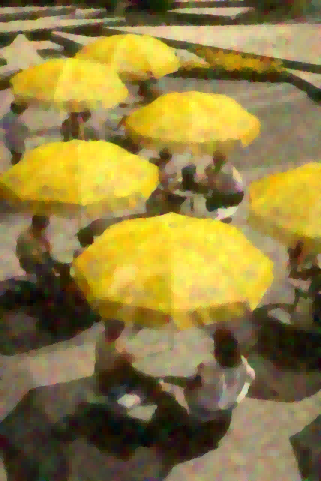}}
    \end{minipage} &
        \begin{minipage}[c]{0.184\textwidth} \centering
\subfigure{\includegraphics[width =\textwidth]{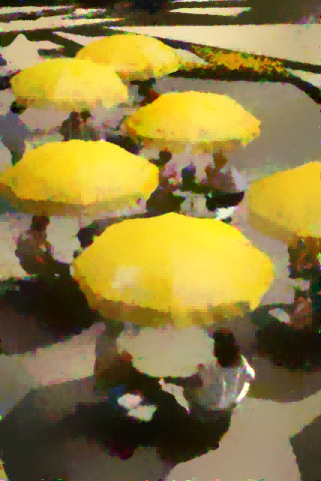}}
    \end{minipage} &
        \begin{minipage}[c]{0.184\textwidth} \centering
\subfigure{\includegraphics[width =\textwidth]{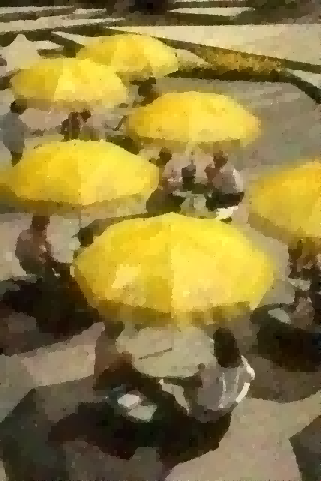}}
    \end{minipage} &
        \begin{minipage}[c]{0.184\textwidth} \centering
\subfigure{\includegraphics[width =\textwidth]{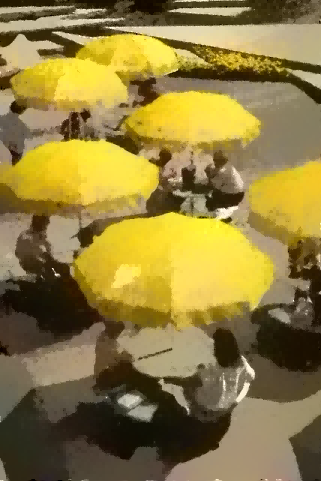}}
    \end{minipage} \\
\end{tabular}
\begin{subfigure}
\centering
\includegraphics[width=0.23\textwidth]{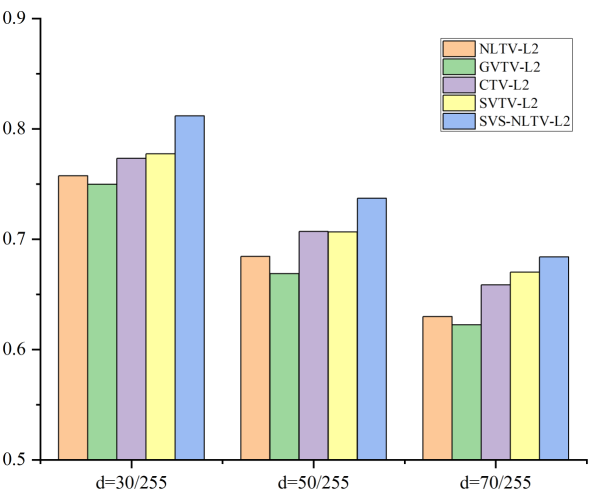}
\end{subfigure}
\begin{subfigure}
\centering
\includegraphics[width=0.23\textwidth]{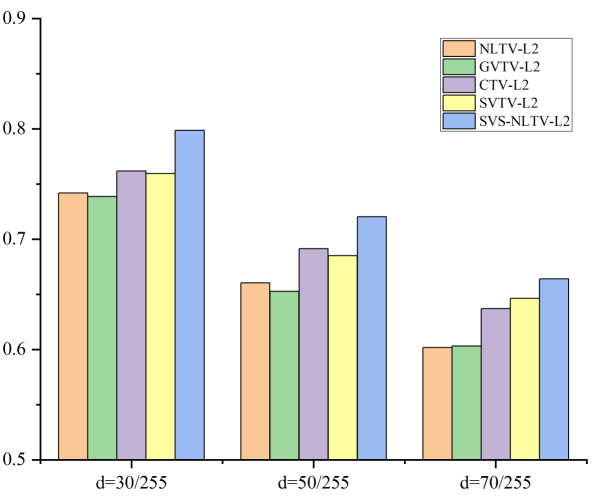}
\end{subfigure}
\begin{subfigure}
\centering
\includegraphics[width=0.23\textwidth]{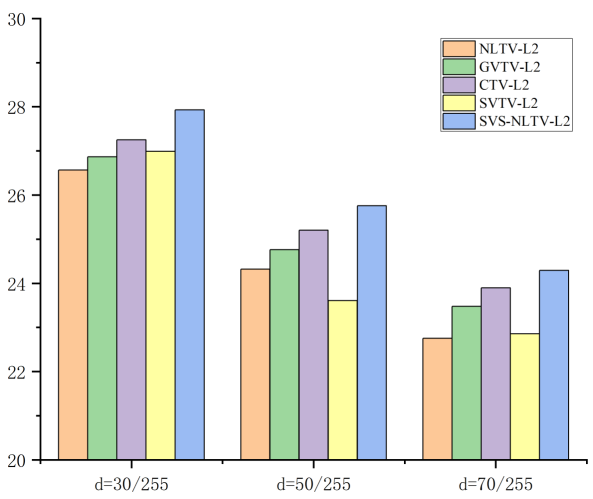}
\end{subfigure}
\begin{subfigure}
\centering
\includegraphics[width=0.23\textwidth]{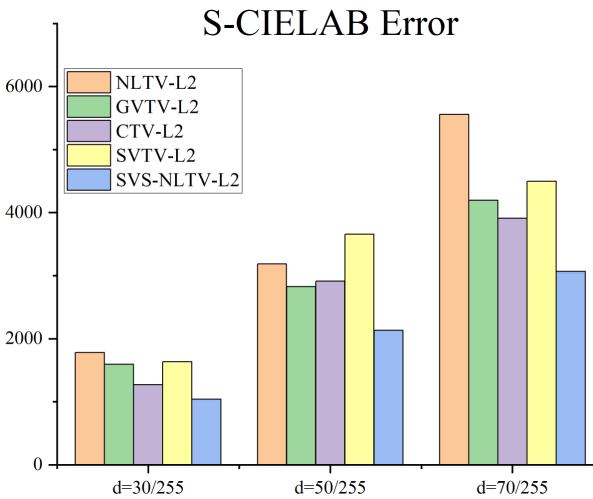}
\end{subfigure}
\caption{The first three rows: top to bottom: degraded and restored images with noise level d = 30/255, 50/255, 70/255 respectively; left to right: the restored results by using CTV, GVTV, NLTV, SVTV, and SVS-NLTV respectively. The fourth row: the histograms of measure values by using different methods.}
\label{242078L2}
\end{figure}

\begin{figure}[htbp]
\centering
\tabcolsep=1pt
\begin{minipage}[c]{0.21\textwidth} \centering
    {\includegraphics[height=\textwidth,width=\textwidth]{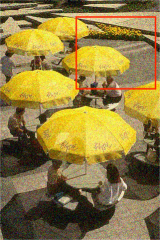}}
    \end{minipage}
\begin{tabular}{ccccccc}
    \begin{minipage}[c]{0.1\textwidth} \centering
    {\includegraphics[height=\textwidth,width=\textwidth]{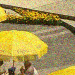}}
    \end{minipage} &
    \begin{minipage}[c]{0.1\textwidth} \centering
    {\includegraphics[height=\textwidth,width=\textwidth]{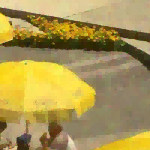}}
    \end{minipage} &
    \begin{minipage}[c]{0.1\textwidth} \centering
    {\includegraphics[height=\textwidth,width=\textwidth]{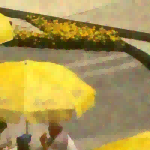}}
    \end{minipage} &
    \begin{minipage}[c]{0.1\textwidth} \centering
    {\includegraphics[height=\textwidth,width=\textwidth]{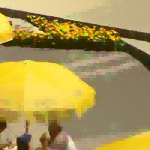}}
    \end{minipage} &
    \begin{minipage}[c]{0.1\textwidth} \centering
    {\includegraphics[height=\textwidth,width=\textwidth]{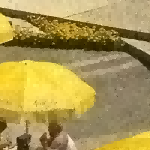}}
    \end{minipage} &
    \begin{minipage}[c]{0.1\textwidth} \centering
    {\includegraphics[height=\textwidth,width=\textwidth]{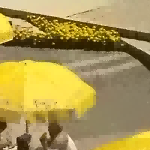}}
    \end{minipage} &
    \begin{minipage}[c]{0.1\textwidth} \centering
    {\includegraphics[height=\textwidth,width=\textwidth]{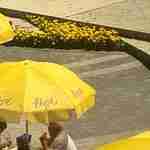}}
    \end{minipage}\\ \specialrule{0em}{1pt}{1pt}
     \begin{minipage}[c]{0.1\textwidth} \centering
    {\includegraphics[height=\textwidth,width=\textwidth]{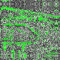}}
    \end{minipage} &
     \begin{minipage}[c]{0.1\textwidth} \centering
    {\includegraphics[height=\textwidth,width=\textwidth]{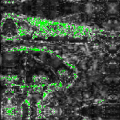}}
    \end{minipage} &
     \begin{minipage}[c]{0.1\textwidth} \centering
    {\includegraphics[height=\textwidth,width=\textwidth]{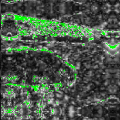}}
    \end{minipage} &
     \begin{minipage}[c]{0.1\textwidth} \centering
    {\includegraphics[height=\textwidth,width=\textwidth]{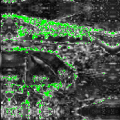}}
    \end{minipage} &
     \begin{minipage}[c]{0.1\textwidth} \centering
    {\includegraphics[height=\textwidth,width=\textwidth]{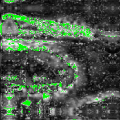}}
    \end{minipage} &
     \begin{minipage}[c]{0.1\textwidth} \centering
    {\includegraphics[height=\textwidth,width=\textwidth]{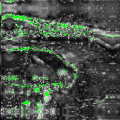}}
    \end{minipage}\\
    \end{tabular}\\
    \begin{minipage}[c]{0.21\textwidth} \centering
    {\includegraphics[height=\textwidth,width=\textwidth]{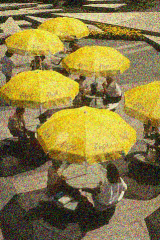}}
    \end{minipage}
\begin{tabular}{ccccccc}
    \begin{minipage}[c]{0.1\textwidth} \centering
    {\includegraphics[height=\textwidth,width=\textwidth]{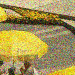}}
    \end{minipage} &
    \begin{minipage}[c]{0.1\textwidth} \centering
    {\includegraphics[height=\textwidth,width=\textwidth]{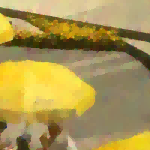}}
    \end{minipage} &
    \begin{minipage}[c]{0.1\textwidth} \centering
    {\includegraphics[height=\textwidth,width=\textwidth]{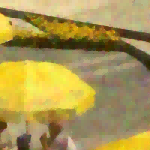}}
    \end{minipage} &
    \begin{minipage}[c]{0.1\textwidth} \centering
    {\includegraphics[height=\textwidth,width=\textwidth]{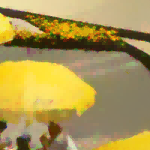}}
    \end{minipage} &
    \begin{minipage}[c]{0.1\textwidth} \centering
    {\includegraphics[height=\textwidth,width=\textwidth]{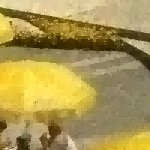}}
    \end{minipage} &
    \begin{minipage}[c]{0.1\textwidth} \centering
    {\includegraphics[height=\textwidth,width=\textwidth]{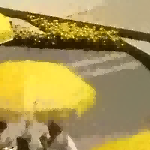}}
    \end{minipage} &
     \begin{minipage}[c]{0.1\textwidth} \centering
    {\includegraphics[height=\textwidth,width=\textwidth]{figs/242078,denoise/242078_zoom.jpg}}
    \end{minipage}\\ \specialrule{0em}{1pt}{1pt}
     \begin{minipage}[c]{0.1\textwidth} \centering
    {\includegraphics[height=\textwidth,width=\textwidth]{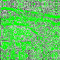}}
    \end{minipage} &
     \begin{minipage}[c]{0.1\textwidth} \centering
    {\includegraphics[height=\textwidth,width=\textwidth]{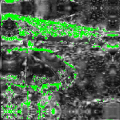}}
    \end{minipage} &
     \begin{minipage}[c]{0.1\textwidth} \centering
    {\includegraphics[height=\textwidth,width=\textwidth]{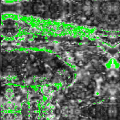}}
    \end{minipage} &
     \begin{minipage}[c]{0.1\textwidth} \centering
    {\includegraphics[height=\textwidth,width=\textwidth]{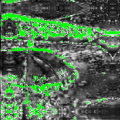}}
    \end{minipage} &
     \begin{minipage}[c]{0.1\textwidth} \centering
    {\includegraphics[height=\textwidth,width=\textwidth]{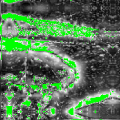}}
    \end{minipage} &
     \begin{minipage}[c]{0.1\textwidth} \centering
    {\includegraphics[height=\textwidth,width=\textwidth]{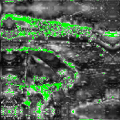}}
    \end{minipage}\\
    \end{tabular}\\
     \begin{minipage}[c]{0.21\textwidth} \centering
    {\includegraphics[height=\textwidth,width=\textwidth]{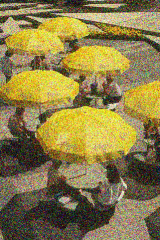}}
    \end{minipage}
\begin{tabular}{ccccccc}
    \begin{minipage}[c]{0.1\textwidth} \centering
    {\includegraphics[height=\textwidth,width=\textwidth]{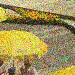}}
    \end{minipage} &
    \begin{minipage}[c]{0.1\textwidth} \centering
    {\includegraphics[height=\textwidth,width=\textwidth]{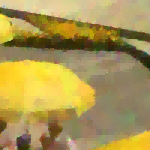}}
    \end{minipage} &
    \begin{minipage}[c]{0.1\textwidth} \centering
    {\includegraphics[height=\textwidth,width=\textwidth]{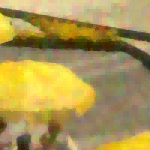}}
    \end{minipage} &
    \begin{minipage}[c]{0.1\textwidth} \centering
    {\includegraphics[height=\textwidth,width=\textwidth]{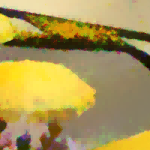}}
    \end{minipage} &
    \begin{minipage}[c]{0.1\textwidth} \centering
    {\includegraphics[height=\textwidth,width=\textwidth]{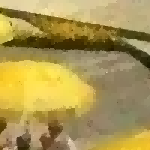}}
    \end{minipage} &
    \begin{minipage}[c]{0.1\textwidth} \centering
    {\includegraphics[height=\textwidth,width=\textwidth]{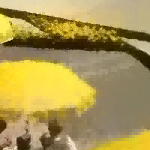}}
    \end{minipage} &
     \begin{minipage}[c]{0.1\textwidth} \centering
    {\includegraphics[height=\textwidth,width=\textwidth]{figs/242078,denoise/242078_zoom.jpg}}
    \end{minipage}\\ \specialrule{0em}{1pt}{1pt}
     \begin{minipage}[c]{0.1\textwidth} \centering
    {\includegraphics[height=\textwidth,width=\textwidth]{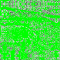}}
    \end{minipage} &
     \begin{minipage}[c]{0.1\textwidth} \centering
    {\includegraphics[height=\textwidth,width=\textwidth]{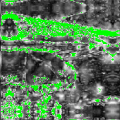}}
    \end{minipage} &
     \begin{minipage}[c]{0.1\textwidth} \centering
    {\includegraphics[height=\textwidth,width=\textwidth]{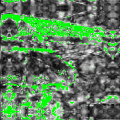}}
    \end{minipage} &
     \begin{minipage}[c]{0.1\textwidth} \centering
    {\includegraphics[height=\textwidth,width=\textwidth]{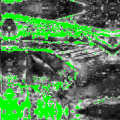}}
    \end{minipage} &
     \begin{minipage}[c]{0.1\textwidth} \centering
    {\includegraphics[height=\textwidth,width=\textwidth]{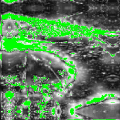}} 
    \end{minipage} &
     \begin{minipage}[c]{0.1\textwidth} \centering
    {\includegraphics[height=\textwidth,width=\textwidth]{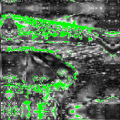}}
    \end{minipage}\\
    \end{tabular}\\
    \caption{Top to bottom: the corresponding results with noise level d = 30/255, 50/255, 70/255 respectively. The results include the noisy image (left large picture), the corresponding zoom-in parts of the noise image, the restored results by using CTV, HTV, NLTV, SVTV, SVS-NLTV, the ground-truth image respectively. The spatial distributions of S-CIELAB error (larger than 15 units) are also shown.}
    \label{242078L2zoom}
\end{figure}

\begin{figure}[htbp]
\centering
\tabcolsep=1pt
\begin{tabular}{ccccc}
    \begin{minipage}[c]{0.184\textwidth} \centering
   \subfigure{\includegraphics[width =\textwidth]{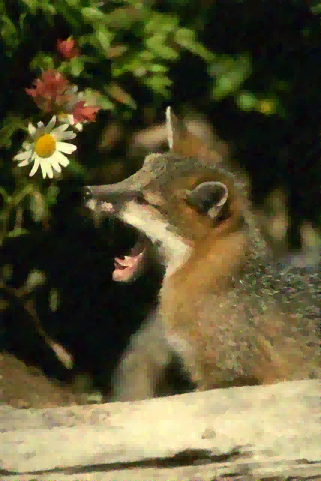}}
    \end{minipage} &
        \begin{minipage}[c]{0.184\textwidth} \centering
   \subfigure{\includegraphics[width =\textwidth]{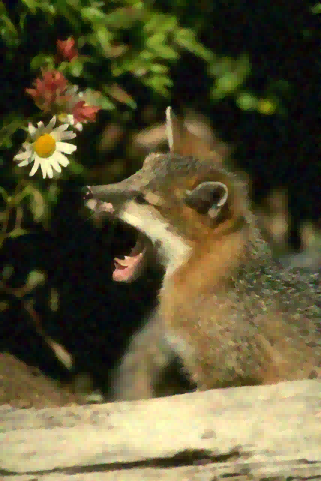}}
    \end{minipage} &
        \begin{minipage}[c]{0.184\textwidth} \centering
   \subfigure{\includegraphics[width =\textwidth]{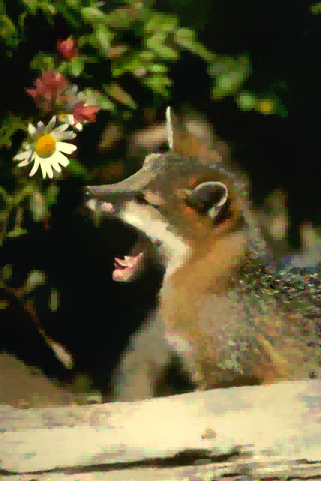}}
    \end{minipage} &
        \begin{minipage}[c]{0.184\textwidth} \centering
   \subfigure{\includegraphics[width = \textwidth]{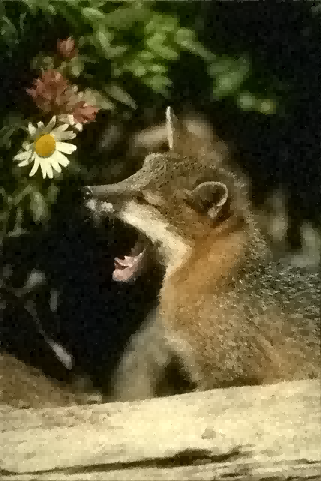}}
    \end{minipage} &
        \begin{minipage}[c]{0.184\textwidth} \centering
   \subfigure{\includegraphics[width = \textwidth]{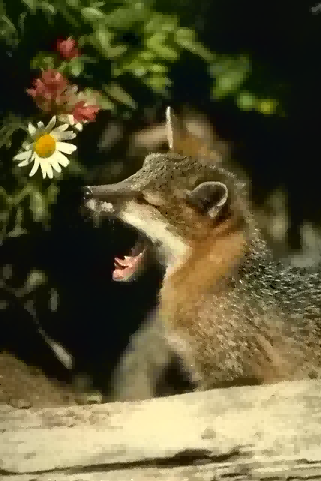}}
    \end{minipage} \\
    \begin{minipage}[c]{0.184\textwidth} \centering
   \subfigure{\includegraphics[width =\textwidth]{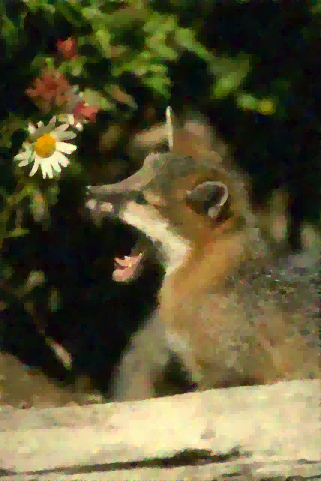}}
    \end{minipage} &
        \begin{minipage}[c]{0.184\textwidth} \centering
\subfigure{\includegraphics[width =\textwidth]{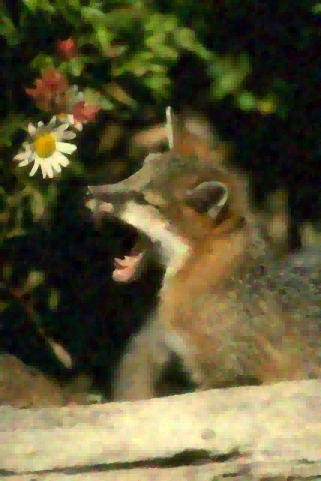}}
    \end{minipage} &
        \begin{minipage}[c]{0.184\textwidth} \centering
\subfigure{\includegraphics[width=\textwidth]{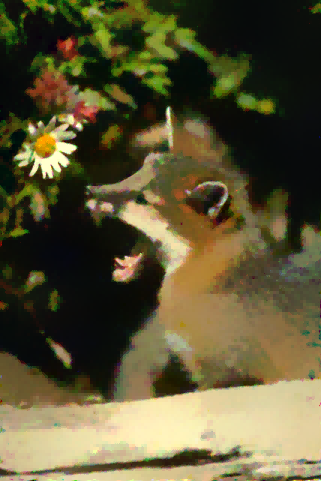}}
    \end{minipage} &
        \begin{minipage}[c]{0.184\textwidth} \centering
\subfigure{\includegraphics[width =\textwidth]{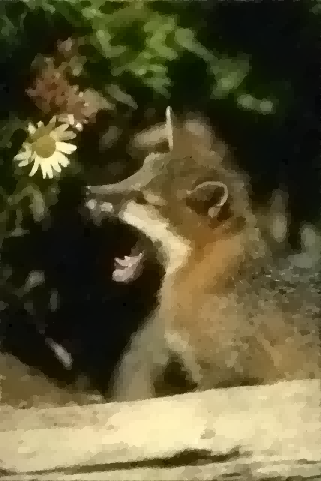}}
    \end{minipage} &
        \begin{minipage}[c]{0.184\textwidth} \centering
\subfigure{\includegraphics[width =\textwidth]{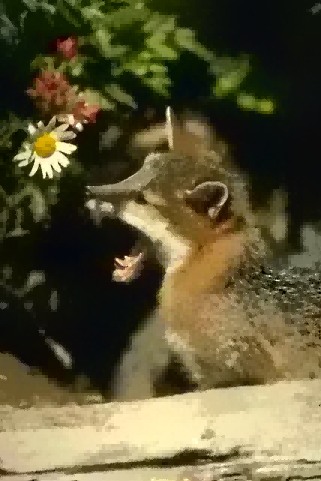}}
    \end{minipage} \\
    \begin{minipage}[c]{0.184\textwidth} \centering
\subfigure{\includegraphics[width =\textwidth]{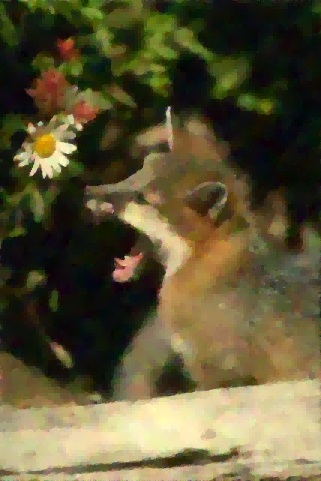}}
    \end{minipage} &
        \begin{minipage}[c]{0.184\textwidth} \centering
\subfigure{\includegraphics[width=\textwidth]{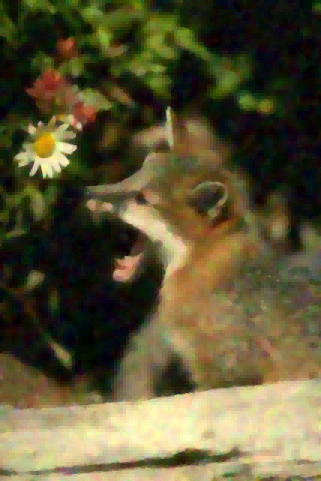}}
    \end{minipage} &
        \begin{minipage}[c]{0.184\textwidth} \centering
\subfigure{\includegraphics[width =\textwidth]{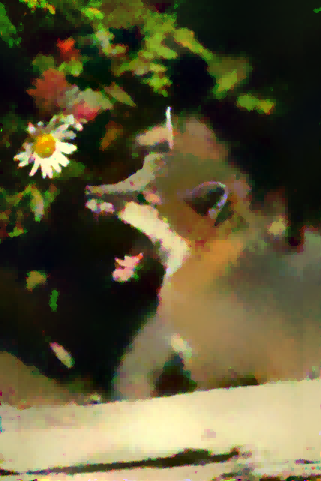}}
    \end{minipage} &
        \begin{minipage}[c]{0.184\textwidth} \centering
\subfigure{\includegraphics[width =\textwidth]{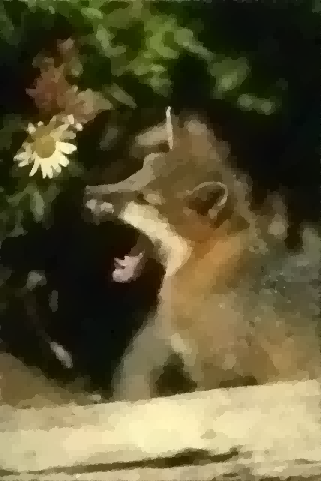}}
    \end{minipage} &
        \begin{minipage}[c]{0.184\textwidth} \centering
\subfigure{\includegraphics[width =\textwidth]{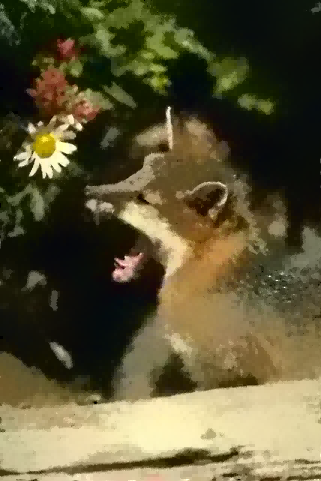}}
    \end{minipage} \\
\end{tabular}
\begin{subfigure}
\centering
\includegraphics[width=0.23\textwidth]{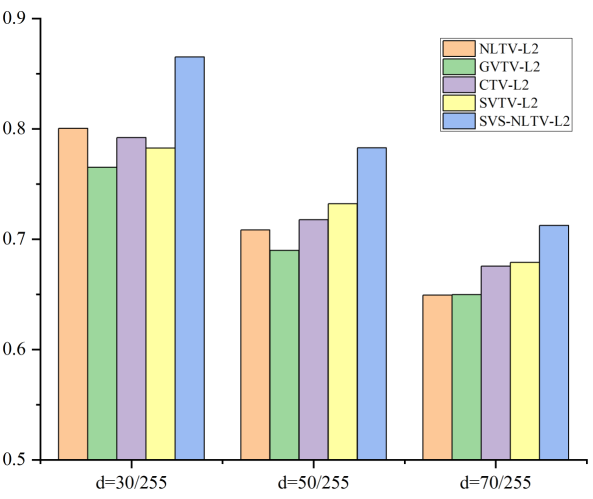}
\end{subfigure}
\begin{subfigure}
\centering
\includegraphics[width=0.23\textwidth]{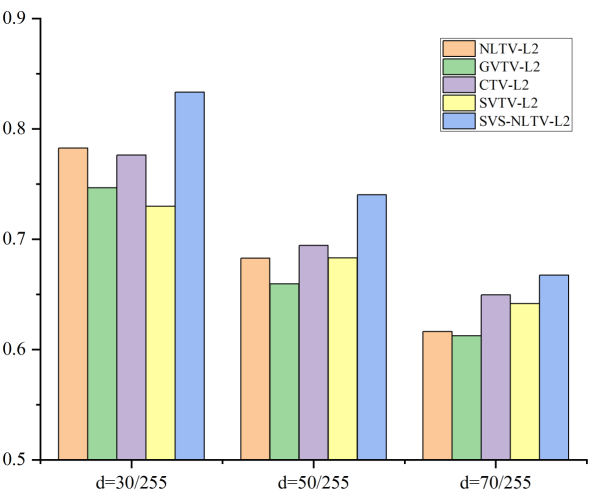}
\end{subfigure}
\begin{subfigure}
\centering
\includegraphics[width=0.23\textwidth]{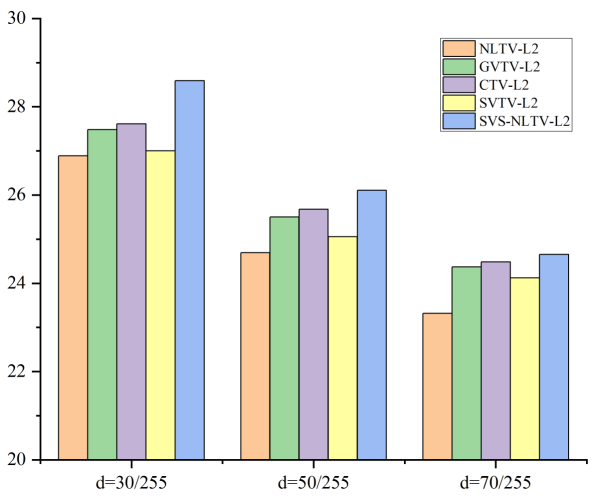}
\end{subfigure}
\begin{subfigure}
\centering
\includegraphics[width=0.23\textwidth]{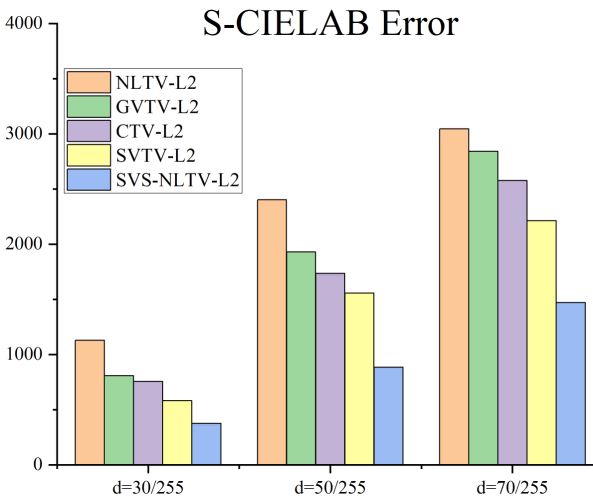}
\end{subfigure}
\caption{The first three rows: top to bottom: degraded and restored images with noise level d = 30/255, 50/255, 70/255 respectively; left to right: the restored results by using CTV, GVTV, NLTV, SVTV, and SVS-NLTV respectively. The fourth row: the histograms of measure values by using different methods.}
\label{159002L2}
\end{figure}

\begin{figure}[htbp]
\centering
\tabcolsep=1pt
\begin{minipage}[c]{0.21\textwidth} \centering
    {\includegraphics[height=\textwidth,width=\textwidth]{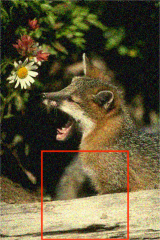}}
    \end{minipage}
\begin{tabular}{ccccccc}
    \begin{minipage}[c]{0.1\textwidth} \centering
    {\includegraphics[height=\textwidth,width=\textwidth]{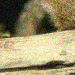}}
    \end{minipage} &
    \begin{minipage}[c]{0.1\textwidth} \centering
    {\includegraphics[height=\textwidth,width=\textwidth]{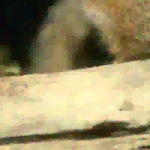}}
    \end{minipage} &
    \begin{minipage}[c]{0.1\textwidth} \centering
    {\includegraphics[height=\textwidth,width=\textwidth]{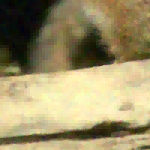}}
    \end{minipage} &
    \begin{minipage}[c]{0.1\textwidth} \centering
    {\includegraphics[height=\textwidth,width=\textwidth]{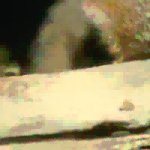}}
    \end{minipage} &
    \begin{minipage}[c]{0.1\textwidth} \centering
    {\includegraphics[height=\textwidth,width=\textwidth]{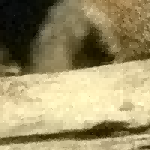}}
    \end{minipage} &
    \begin{minipage}[c]{0.1\textwidth} \centering
    {\includegraphics[height=\textwidth,width=\textwidth]{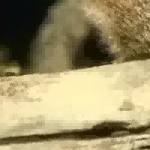}}
    \end{minipage} &
    \begin{minipage}[c]{0.1\textwidth} \centering
    {\includegraphics[height=\textwidth,width=\textwidth]{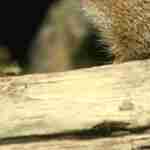}}
    \end{minipage}\\ \specialrule{0em}{1pt}{1pt}
     \begin{minipage}[c]{0.1\textwidth} \centering
    {\includegraphics[height=\textwidth,width=\textwidth]{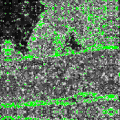}}
    \end{minipage} &
     \begin{minipage}[c]{0.1\textwidth} \centering
    {\includegraphics[height=\textwidth,width=\textwidth]{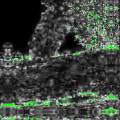}}
    \end{minipage} &
     \begin{minipage}[c]{0.1\textwidth} \centering
    {\includegraphics[height=\textwidth,width=\textwidth]{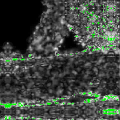}}
    \end{minipage} &
     \begin{minipage}[c]{0.1\textwidth} \centering
    {\includegraphics[height=\textwidth,width=\textwidth]{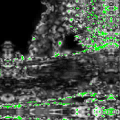}}
    \end{minipage} &
     \begin{minipage}[c]{0.1\textwidth} \centering
    {\includegraphics[height=\textwidth,width=\textwidth]{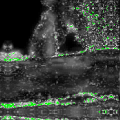}}
    \end{minipage} &
     \begin{minipage}[c]{0.1\textwidth} \centering
    {\includegraphics[height=\textwidth,width=\textwidth]{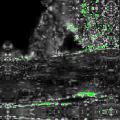}}
    \end{minipage}\\
    \end{tabular}
    \begin{minipage}[c]{0.21\textwidth} \centering
    {\includegraphics[height=\textwidth,width=\textwidth]{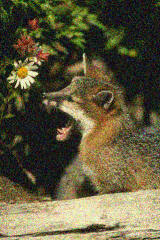}}
    \end{minipage}
\begin{tabular}{ccccccc}
    \begin{minipage}[c]{0.1\textwidth} \centering
    {\includegraphics[height=\textwidth,width=\textwidth]{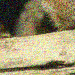}}
    \end{minipage} &
    \begin{minipage}[c]{0.1\textwidth} \centering
    {\includegraphics[height=\textwidth,width=\textwidth]{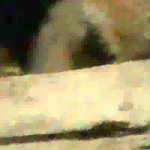}}
    \end{minipage} &
    \begin{minipage}[c]{0.1\textwidth} \centering
    {\includegraphics[height=\textwidth,width=\textwidth]{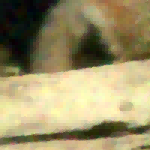}}
    \end{minipage} &
    \begin{minipage}[c]{0.1\textwidth} \centering
    {\includegraphics[height=\textwidth,width=\textwidth]{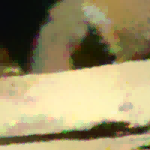}}
    \end{minipage} &
    \begin{minipage}[c]{0.1\textwidth} \centering
    {\includegraphics[height=\textwidth,width=\textwidth]{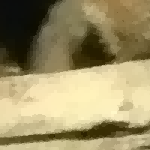}}
    \end{minipage} &
    \begin{minipage}[c]{0.1\textwidth} \centering
    {\includegraphics[height=\textwidth,width=\textwidth]{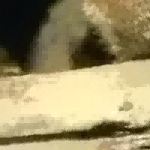}}
    \end{minipage} &
    \begin{minipage}[c]{0.1\textwidth} \centering
    {\includegraphics[height=\textwidth,width=\textwidth]{figs/159002,denoise/159002_zoom.jpg}}
    \end{minipage}\\ \specialrule{0em}{1pt}{1pt}
     \begin{minipage}[c]{0.1\textwidth} \centering
    {\includegraphics[height=\textwidth,width=\textwidth]{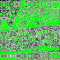}}
    \end{minipage} &
     \begin{minipage}[c]{0.1\textwidth} \centering
    {\includegraphics[height=\textwidth,width=\textwidth]{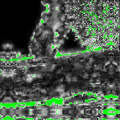}}
    \end{minipage} &
     \begin{minipage}[c]{0.1\textwidth} \centering
    {\includegraphics[height=\textwidth,width=\textwidth]{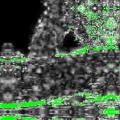}}
    \end{minipage} &
     \begin{minipage}[c]{0.1\textwidth} \centering
    {\includegraphics[height=\textwidth,width=\textwidth]{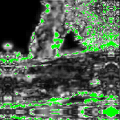}}
    \end{minipage} &
     \begin{minipage}[c]{0.1\textwidth} \centering
    {\includegraphics[height=\textwidth,width=\textwidth]{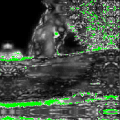}}
    \end{minipage} &
     \begin{minipage}[c]{0.1\textwidth} \centering
    {\includegraphics[height=\textwidth,width=\textwidth]{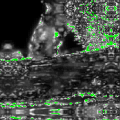}}
    \end{minipage}\\
    \end{tabular}
     \begin{minipage}[c]{0.21\textwidth} \centering
    {\includegraphics[height=\textwidth,width=\textwidth]{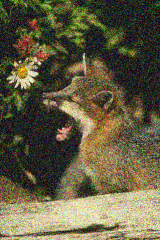}}
    \end{minipage}
\begin{tabular}{ccccccc}
    \begin{minipage}[c]{0.1\textwidth} \centering
    {\includegraphics[height=\textwidth,width=\textwidth]{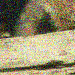}}
    \end{minipage} &
    \begin{minipage}[c]{0.1\textwidth} \centering
    {\includegraphics[height=\textwidth,width=\textwidth]{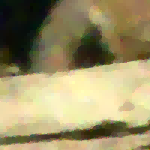}}
    \end{minipage} &
    \begin{minipage}[c]{0.1\textwidth} \centering
    {\includegraphics[height=\textwidth,width=\textwidth]{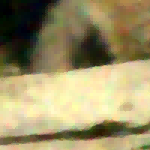}}
    \end{minipage} &
    \begin{minipage}[c]{0.1\textwidth} \centering
    {\includegraphics[height=\textwidth,width=\textwidth]{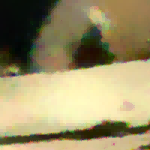}}
    \end{minipage} &
    \begin{minipage}[c]{0.1\textwidth} \centering
    {\includegraphics[height=\textwidth,width=\textwidth]{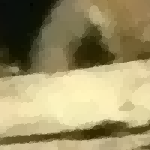}}
    \end{minipage} &
    \begin{minipage}[c]{0.1\textwidth} \centering
    {\includegraphics[height=\textwidth,width=\textwidth]{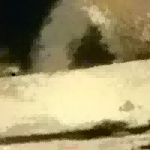}}
    \end{minipage} &
    \begin{minipage}[c]{0.1\textwidth} \centering
    {\includegraphics[height=\textwidth,width=\textwidth]{figs/159002,denoise/159002_zoom.jpg}}
    \end{minipage}\\ \specialrule{0em}{1pt}{1pt}
     \begin{minipage}[c]{0.1\textwidth} \centering
    {\includegraphics[height=\textwidth,width=\textwidth]{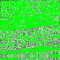}}
    \end{minipage} &
     \begin{minipage}[c]{0.1\textwidth} \centering
    {\includegraphics[height=\textwidth,width=\textwidth]{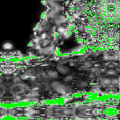}}
    \end{minipage} &
     \begin{minipage}[c]{0.1\textwidth} \centering
    {\includegraphics[height=\textwidth,width=\textwidth]{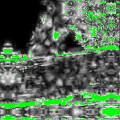}}
    \end{minipage} &
     \begin{minipage}[c]{0.1\textwidth} \centering
    {\includegraphics[height=\textwidth,width=\textwidth]{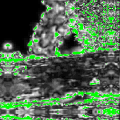}}
    \end{minipage} &
     \begin{minipage}[c]{0.1\textwidth} \centering
    {\includegraphics[height=\textwidth,width=\textwidth]{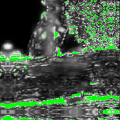}}
    \end{minipage} &
     \begin{minipage}[c]{0.1\textwidth} \centering
    {\includegraphics[height=\textwidth,width=\textwidth]{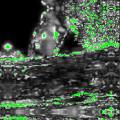}}
    \end{minipage}\\
    \end{tabular}\\
    \caption{Top to bottom: the corresponding results with noise level d = 30/255, 50/255, 70/255 respectively. The results include the noisy image (left large picture), the corresponding zoom-in parts of the noise image, the restored results by using CTV, HTV, NLTV, SVTV, SVS-NLTV, the ground-truth image respectively. The spatial distributions of S-CIELAB error (larger than 15 units) are also shown.}
    \label{159002L2zoom}
\end{figure}

\begin{figure}[htbp]
\centering
\tabcolsep=1pt
\begin{tabular}{ccccc}
    \begin{minipage}[c]{0.184\textwidth} \centering
   \subfigure{\includegraphics[width =\textwidth]{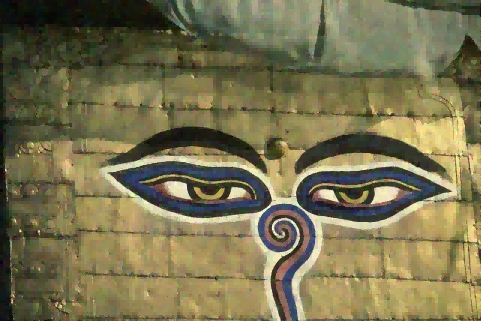}}
    \end{minipage} &
        \begin{minipage}[c]{0.184\textwidth} \centering
   \subfigure{\includegraphics[width =\textwidth]{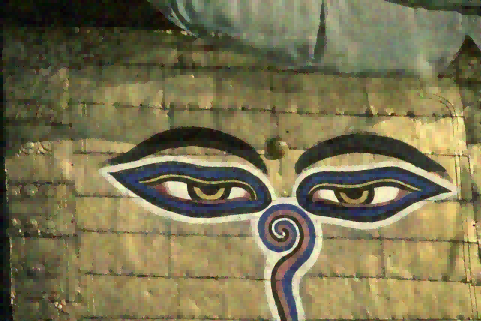}}
    \end{minipage} &
        \begin{minipage}[c]{0.184\textwidth} \centering
   \subfigure{\includegraphics[width =\textwidth]{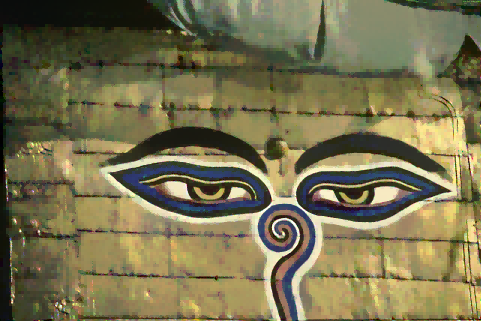}}
    \end{minipage} &
        \begin{minipage}[c]{0.184\textwidth} \centering
   \subfigure{\includegraphics[width = \textwidth]{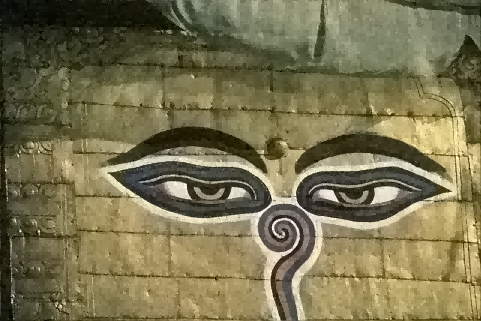}}
    \end{minipage} &
        \begin{minipage}[c]{0.184\textwidth} \centering
   \subfigure{\includegraphics[width = \textwidth]{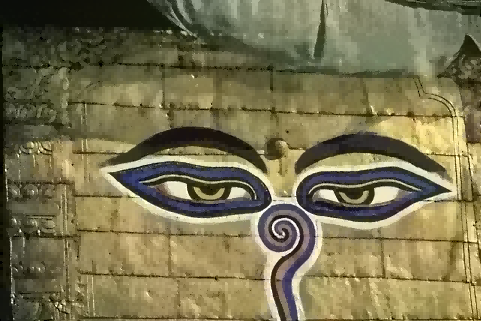}}
    \end{minipage} \\
    \begin{minipage}[c]{0.184\textwidth} \centering
   \subfigure{\includegraphics[width =\textwidth]{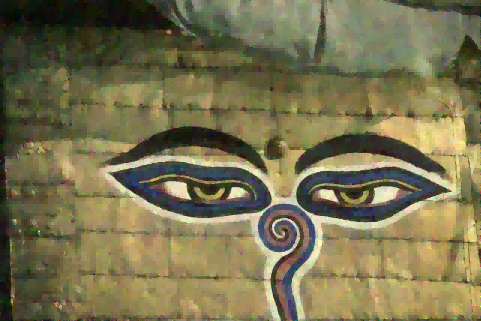}}
    \end{minipage} &
        \begin{minipage}[c]{0.184\textwidth} \centering
\subfigure{\includegraphics[width =\textwidth]{figs/56028,denoise/56028_GVTV-L2_sigma=30_a=008_qssim=072292_ssim=071285_psnr=254837.png}}
    \end{minipage} &
        \begin{minipage}[c]{0.184\textwidth} \centering
\subfigure{\includegraphics[width=\textwidth]{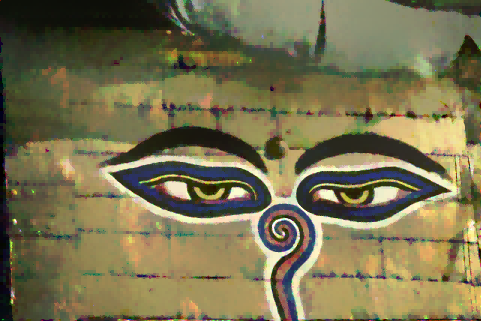}}
    \end{minipage} &
        \begin{minipage}[c]{0.184\textwidth} \centering
\subfigure{\includegraphics[width =\textwidth]{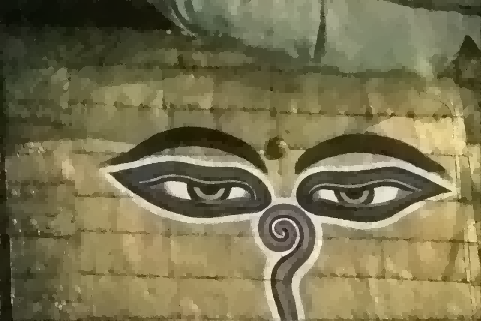}}
    \end{minipage} &
        \begin{minipage}[c]{0.184\textwidth} \centering
\subfigure{\includegraphics[width =\textwidth]{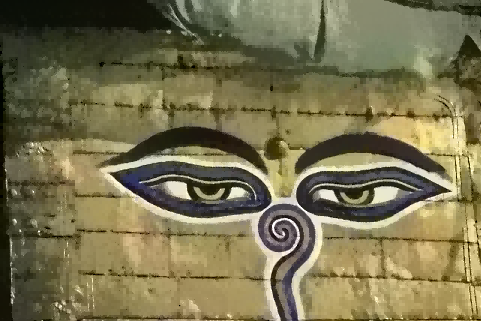}}
    \end{minipage} \\
    \begin{minipage}[c]{0.184\textwidth} \centering
\subfigure{\includegraphics[width =\textwidth]{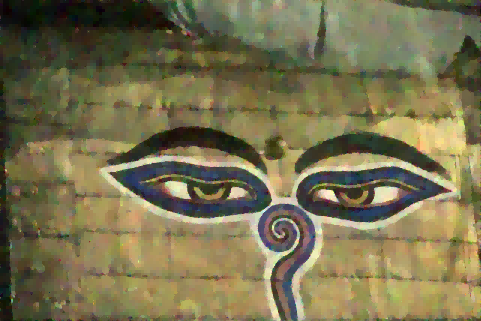}}
    \end{minipage} &
        \begin{minipage}[c]{0.184\textwidth} \centering
\subfigure{\includegraphics[width=\textwidth]{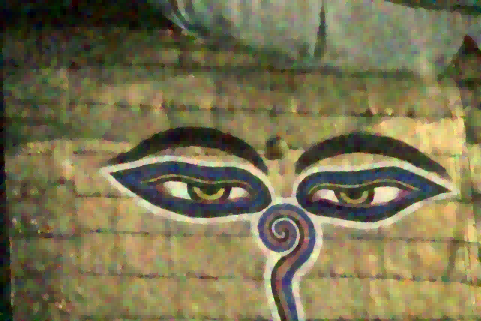}}
    \end{minipage} &
        \begin{minipage}[c]{0.184\textwidth} \centering
\subfigure{\includegraphics[width =\textwidth]{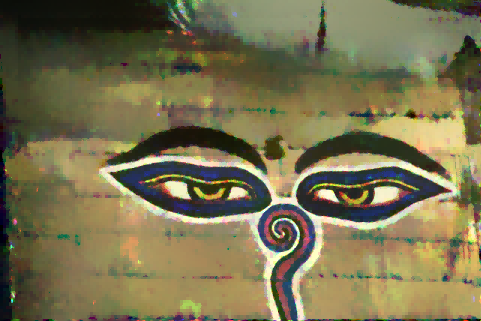}}
    \end{minipage} &
        \begin{minipage}[c]{0.184\textwidth} \centering
\subfigure{\includegraphics[width =\textwidth]{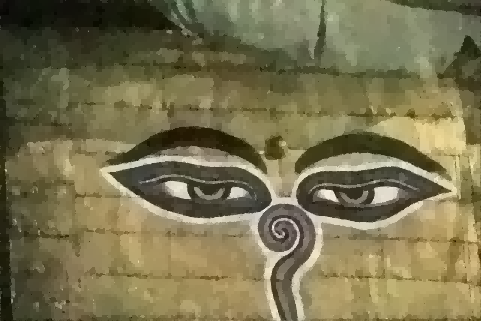}}
    \end{minipage} &
        \begin{minipage}[c]{0.184\textwidth} \centering
\subfigure{\includegraphics[width =\textwidth]{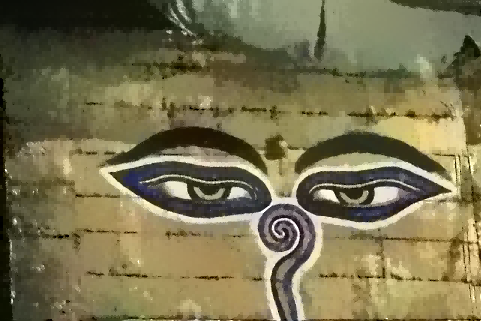}}
\end{minipage}\\
\end{tabular}
\begin{subfigure}
\centering
\includegraphics[width=0.23\textwidth]{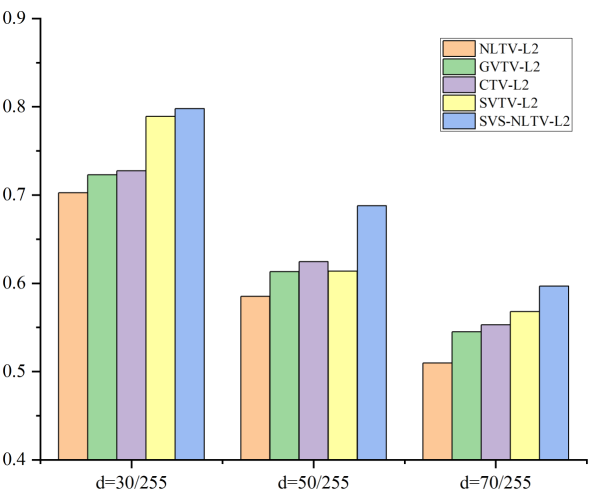}
\end{subfigure}
\begin{subfigure}
\centering
\includegraphics[width=0.23\textwidth]{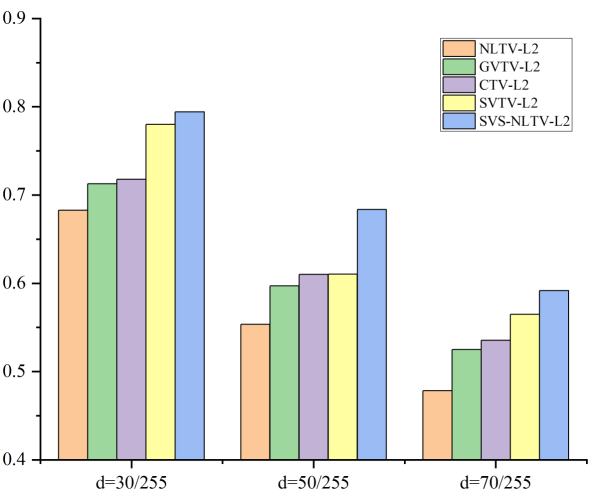}
\end{subfigure}
\begin{subfigure}
\centering
\includegraphics[width=0.23\textwidth]{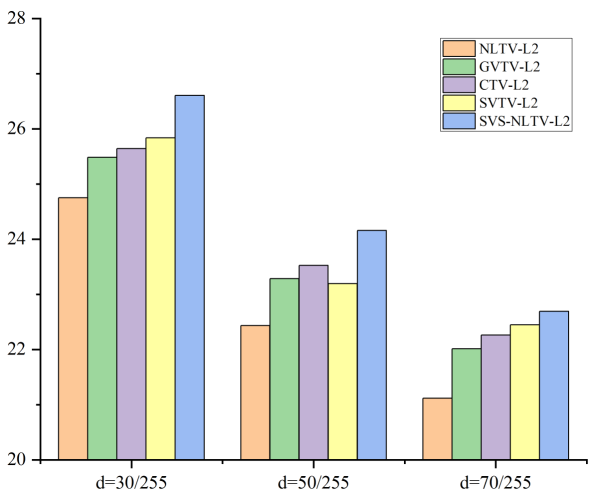}
\end{subfigure}
\begin{subfigure}
\centering
\includegraphics[width=0.23\textwidth]{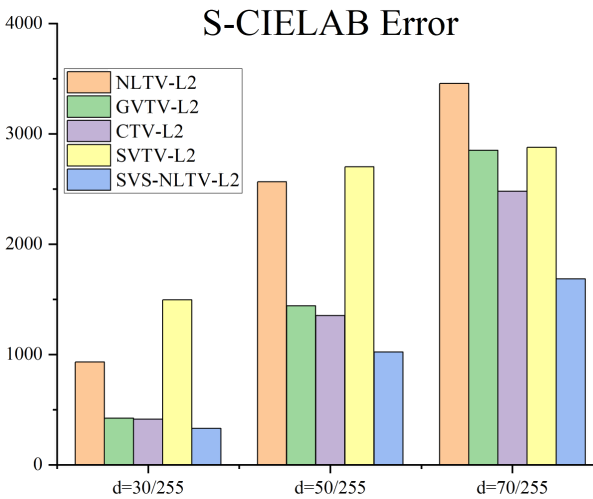}
\end{subfigure}
\caption{The first three rows: top to bottom: degraded and restored images with noise level d = 30/255, 50/255, 70/255 respectively; left to right: the restored results by using CTV, GVTV, NLTV, SVTV, and SVS-NLTV respectively. The fourth row: the histograms of measure values by using different methods.}
\label{56028L2}
\end{figure}

\begin{figure}[htbp]
\centering
\tabcolsep=1pt
\begin{minipage}[c]{0.21\textwidth} \centering
    {\includegraphics[height=\textwidth,width=\textwidth]{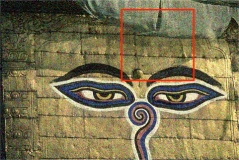}}
    \end{minipage}
\begin{tabular}{ccccccc}
    \begin{minipage}[c]{0.1\textwidth} \centering
    {\includegraphics[height=\textwidth,width=\textwidth]{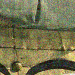}}
    \end{minipage} &
    \begin{minipage}[c]{0.1\textwidth} \centering
    {\includegraphics[height=\textwidth,width=\textwidth]{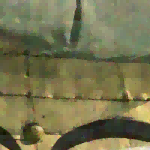}}
    \end{minipage} &
    \begin{minipage}[c]{0.1\textwidth} \centering
    {\includegraphics[height=\textwidth,width=\textwidth]{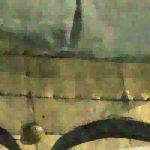}}
    \end{minipage} &
    \begin{minipage}[c]{0.1\textwidth} \centering
    {\includegraphics[height=\textwidth,width=\textwidth]{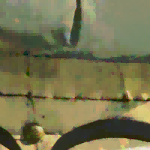}}
    \end{minipage} &
    \begin{minipage}[c]{0.1\textwidth} \centering
    {\includegraphics[height=\textwidth,width=\textwidth]{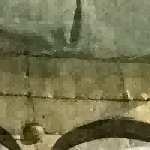}}
    \end{minipage} &
    \begin{minipage}[c]{0.1\textwidth} \centering
    {\includegraphics[height=\textwidth,width=\textwidth]{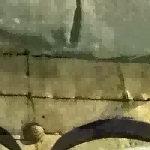}}
    \end{minipage} &
    \begin{minipage}[c]{0.1\textwidth} \centering
    {\includegraphics[height=\textwidth,width=\textwidth]{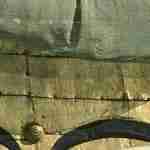}}
    \end{minipage}\\ \specialrule{0em}{1pt}{1pt}
     \begin{minipage}[c]{0.1\textwidth} \centering
    {\includegraphics[height=\textwidth,width=\textwidth]{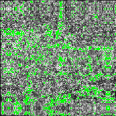}}
    \end{minipage} &
     \begin{minipage}[c]{0.1\textwidth} \centering
    {\includegraphics[height=\textwidth,width=\textwidth]{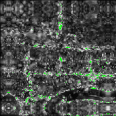}}
    \end{minipage} &
     \begin{minipage}[c]{0.1\textwidth} \centering
    {\includegraphics[height=\textwidth,width=\textwidth]{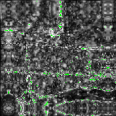}}
    \end{minipage} &
     \begin{minipage}[c]{0.1\textwidth} \centering
    {\includegraphics[height=\textwidth,width=\textwidth]{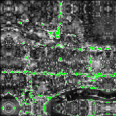}}
    \end{minipage} &
     \begin{minipage}[c]{0.1\textwidth} \centering
    {\includegraphics[height=\textwidth,width=\textwidth]{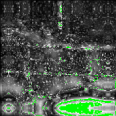}}
    \end{minipage} &
     \begin{minipage}[c]{0.1\textwidth} \centering
    {\includegraphics[height=\textwidth,width=\textwidth]{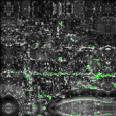}}
    \end{minipage}\\
    \end{tabular}
    \begin{minipage}[c]{0.21\textwidth} \centering
    {\includegraphics[height=\textwidth,width=\textwidth]{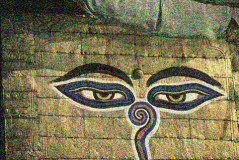}}
    \end{minipage}
\begin{tabular}{ccccccc}
    \begin{minipage}[c]{0.1\textwidth} \centering
    {\includegraphics[height=\textwidth,width=\textwidth]{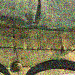}}
    \end{minipage} &
    \begin{minipage}[c]{0.1\textwidth} \centering
    {\includegraphics[height=\textwidth,width=\textwidth]{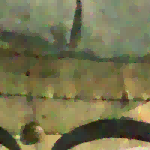}}
    \end{minipage} &
    \begin{minipage}[c]{0.1\textwidth} \centering
    {\includegraphics[height=\textwidth,width=\textwidth]{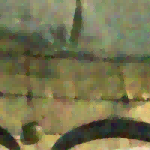}}
    \end{minipage} &
    \begin{minipage}[c]{0.1\textwidth} \centering
    {\includegraphics[height=\textwidth,width=\textwidth]{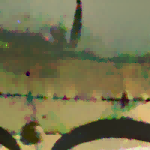}}
    \end{minipage} &
    \begin{minipage}[c]{0.1\textwidth} \centering
    {\includegraphics[height=\textwidth,width=\textwidth]{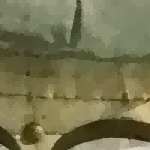}}
    \end{minipage} &
    \begin{minipage}[c]{0.1\textwidth} \centering
    {\includegraphics[height=\textwidth,width=\textwidth]{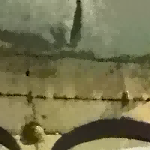}}
    \end{minipage} &
    \begin{minipage}[c]{0.1\textwidth} \centering
    {\includegraphics[height=\textwidth,width=\textwidth]{figs/56028,denoise/56028_zoom.jpg}}
    \end{minipage}\\ \specialrule{0em}{1pt}{1pt}
     \begin{minipage}[c]{0.1\textwidth} \centering
    {\includegraphics[height=\textwidth,width=\textwidth]{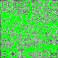}}
    \end{minipage} &
     \begin{minipage}[c]{0.1\textwidth} \centering
    {\includegraphics[height=\textwidth,width=\textwidth]{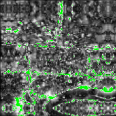}}
    \end{minipage} &
     \begin{minipage}[c]{0.1\textwidth} \centering
    {\includegraphics[height=\textwidth,width=\textwidth]{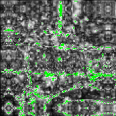}}
    \end{minipage} &
     \begin{minipage}[c]{0.1\textwidth} \centering
    {\includegraphics[height=\textwidth,width=\textwidth]{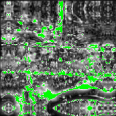}}
    \end{minipage} &
     \begin{minipage}[c]{0.1\textwidth} \centering
    {\includegraphics[height=\textwidth,width=\textwidth]{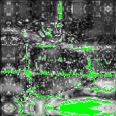}}
    \end{minipage} &
     \begin{minipage}[c]{0.1\textwidth} \centering
    {\includegraphics[height=\textwidth,width=\textwidth]{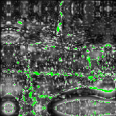}}
    \end{minipage}\\
    \end{tabular}
     \begin{minipage}[c]{0.21\textwidth} \centering
    {\includegraphics[height=\textwidth,width=\textwidth]{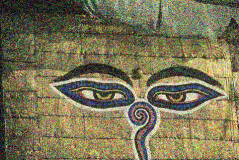}}
    \end{minipage}
\begin{tabular}{ccccccc}
    \begin{minipage}[c]{0.1\textwidth} \centering
    {\includegraphics[height=\textwidth,width=\textwidth]{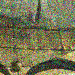}}
    \end{minipage} &
    \begin{minipage}[c]{0.1\textwidth} \centering
    {\includegraphics[height=\textwidth,width=\textwidth]{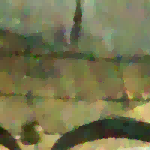}}
    \end{minipage} &
    \begin{minipage}[c]{0.1\textwidth} \centering
    {\includegraphics[height=\textwidth,width=\textwidth]{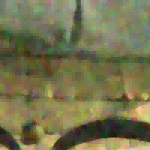}}
    \end{minipage} &
    \begin{minipage}[c]{0.1\textwidth} \centering
    {\includegraphics[height=\textwidth,width=\textwidth]{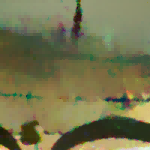}}
    \end{minipage} &
    \begin{minipage}[c]{0.1\textwidth} \centering
    {\includegraphics[height=\textwidth,width=\textwidth]{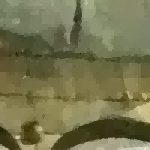}}
    \end{minipage} &
    \begin{minipage}[c]{0.1\textwidth} \centering
    {\includegraphics[height=\textwidth,width=\textwidth]{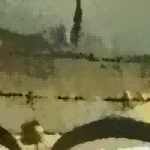}}
    \end{minipage} &
    \begin{minipage}[c]{0.1\textwidth} \centering
    {\includegraphics[height=\textwidth,width=\textwidth]{figs/56028,denoise/56028_zoom.jpg}}
    \end{minipage}\\ \specialrule{0em}{1pt}{1pt}
     \begin{minipage}[c]{0.1\textwidth} \centering
    {\includegraphics[height=\textwidth,width=\textwidth]{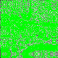}}
    \end{minipage} &
     \begin{minipage}[c]{0.1\textwidth} \centering
    {\includegraphics[height=\textwidth,width=\textwidth]{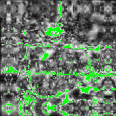}}
    \end{minipage} &
     \begin{minipage}[c]{0.1\textwidth} \centering
    {\includegraphics[height=\textwidth,width=\textwidth]{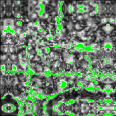}}
    \end{minipage}&
     \begin{minipage}[c]{0.1\textwidth} \centering
    {\includegraphics[height=\textwidth,width=\textwidth]{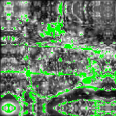}}
    \end{minipage} &
     \begin{minipage}[c]{0.1\textwidth} \centering
    {\includegraphics[height=\textwidth,width=\textwidth]{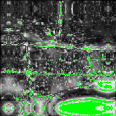}}
    \end{minipage} &
     \begin{minipage}[c]{0.1\textwidth} \centering
    {\includegraphics[height=\textwidth,width=\textwidth]{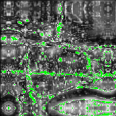}}
    \end{minipage} \\
    \end{tabular}\\
    \caption{Top to bottom: the corresponding results with noise level d = 30/255, 50/255, 70/255 respectively. The results include the noisy image (left large picture), the corresponding zoom-in parts of the noise image, the restored results by using CTV, HTV, NLTV, SVTV, SVS-NLTV, the ground-truth image respectively. The spatial distributions of S-CIELAB error (larger than 15 units) are also shown.}
    \label{56028L2zoom}
\end{figure}

\begin{figure}[htbp]
\centering
\tabcolsep=1pt
\begin{tabular}{ccccc}
    \begin{minipage}[c]{0.184\textwidth} \centering
   \subfigure{\includegraphics[width =\textwidth]{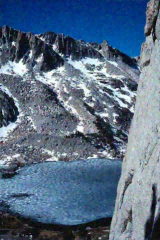}}
    \end{minipage} &
        \begin{minipage}[c]{0.184\textwidth} \centering
   \subfigure{\includegraphics[width =\textwidth]{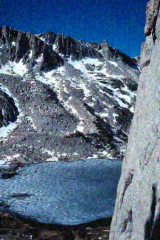}}
    \end{minipage} &
        \begin{minipage}[c]{0.184\textwidth} \centering
   \subfigure{\includegraphics[width =\textwidth]{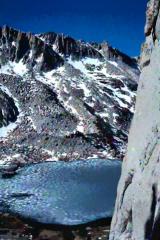}}
    \end{minipage} &
    \begin{minipage}[c]{0.184\textwidth} \centering
   \subfigure{\includegraphics[width = \textwidth]{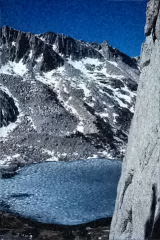}}
    \end{minipage} &
        \begin{minipage}[c]{0.184\textwidth} \centering
   \subfigure{\includegraphics[width = \textwidth]{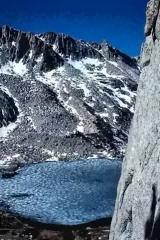}}
    \end{minipage} \\
    \begin{minipage}[c]{0.184\textwidth} \centering
   \subfigure{\includegraphics[width =\textwidth]{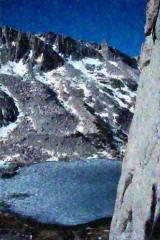}}
    \end{minipage} &
        \begin{minipage}[c]{0.184\textwidth} \centering
\subfigure{\includegraphics[width =\textwidth]{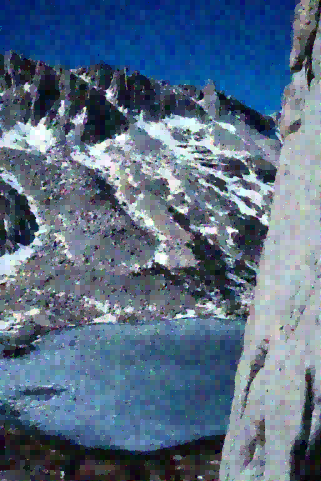}}
    \end{minipage} &
        \begin{minipage}[c]{0.184\textwidth} \centering
\subfigure{\includegraphics[width=\textwidth]{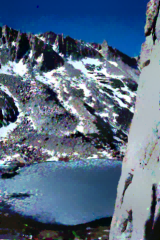}}
    \end{minipage} &
        \begin{minipage}[c]{0.184\textwidth} \centering
\subfigure{\includegraphics[width =\textwidth]{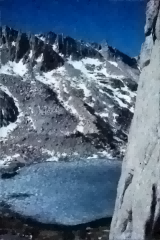}}
    \end{minipage} &
        \begin{minipage}[c]{0.184\textwidth} \centering
\subfigure{\includegraphics[width =\textwidth]{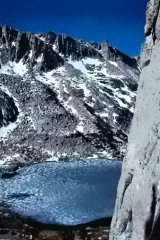}}
    \end{minipage} \\
    \begin{minipage}[c]{0.184\textwidth} \centering
\subfigure{\includegraphics[width =\textwidth]{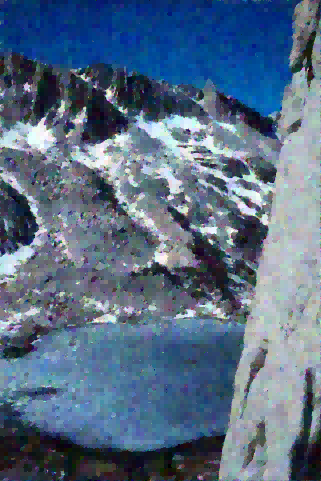}}
    \end{minipage} &
        \begin{minipage}[c]{0.184\textwidth} \centering
\subfigure{\includegraphics[width=\textwidth]{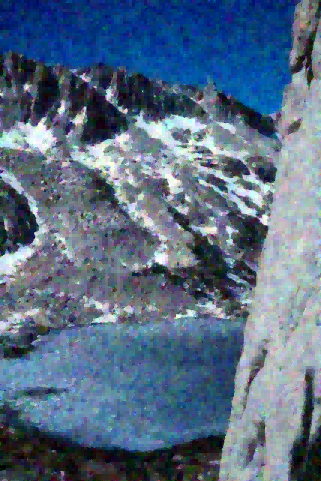}}
    \end{minipage} &
        \begin{minipage}[c]{0.184\textwidth} \centering
\subfigure{\includegraphics[width =\textwidth]{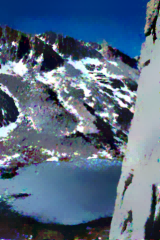}}
   \end{minipage} &
        \begin{minipage}[c]{0.184\textwidth} \centering
\subfigure{\includegraphics[width =\textwidth]{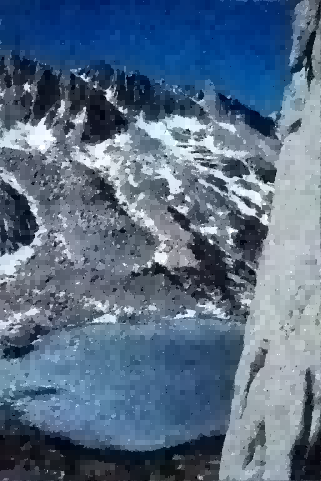}}
    \end{minipage} &
        \begin{minipage}[c]{0.184\textwidth} \centering
\subfigure{\includegraphics[width =\textwidth]{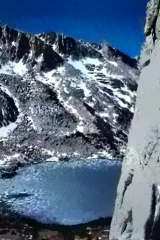}}
\end{minipage}\\
\end{tabular}
\begin{subfigure}
\centering
\includegraphics[width=0.23\textwidth]{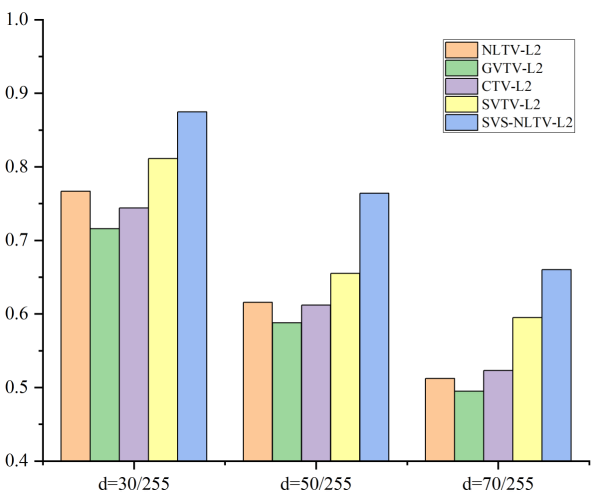}
\end{subfigure}
\begin{subfigure}
\centering
\includegraphics[width=0.23\textwidth]{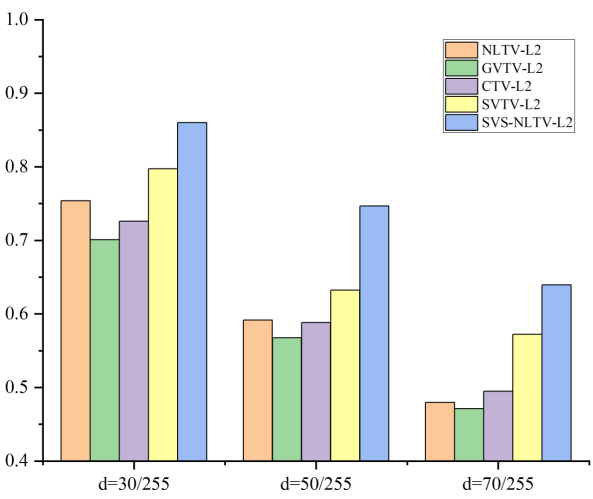}
\end{subfigure}
\begin{subfigure}
\centering
\includegraphics[width=0.23\textwidth]{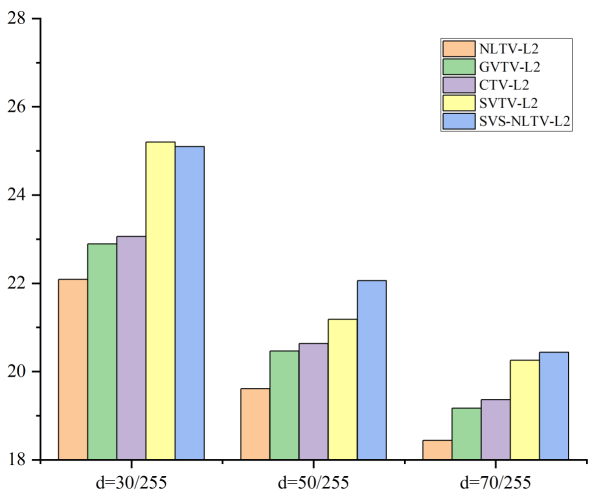}
\end{subfigure}
\begin{subfigure}
\centering
\includegraphics[width=0.23\textwidth]{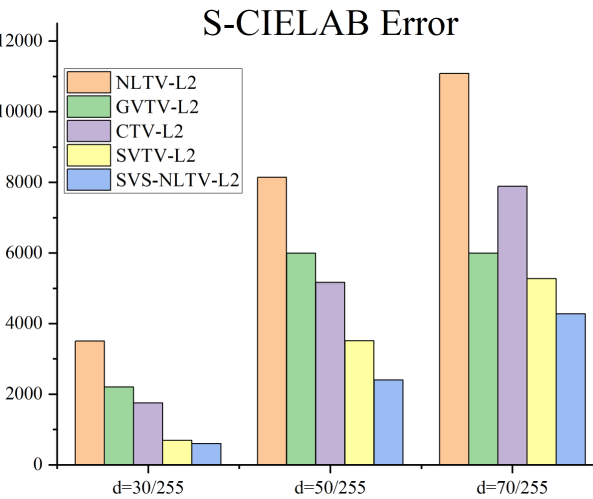}
\end{subfigure}
\caption{The first three rows: top to bottom: degraded and restored images with noise level d = 30/255, 50/255, 70/255 respectively; left to right: the restored results by using CTV, GVTV, NLTV, SVTV, and SVS-NLTV respectively. The fourth row: the histograms of measure values by using different methods.}
\label{167083L2}
\end{figure}

\begin{figure}[htbp]
\centering
\tabcolsep=1pt
\begin{minipage}[c]{0.21\textwidth} \centering
    {\includegraphics[height=\textwidth,width=\textwidth]{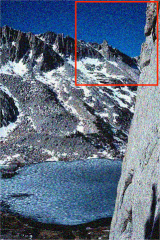}}
    \end{minipage}
\begin{tabular}{ccccccc}
    \begin{minipage}[c]{0.1\textwidth} \centering
    {\includegraphics[height=\textwidth,width=\textwidth]{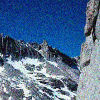}}
    \end{minipage} &
    \begin{minipage}[c]{0.1\textwidth} \centering
    {\includegraphics[height=\textwidth,width=\textwidth]{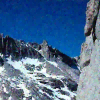}}
    \end{minipage} &
    \begin{minipage}[c]{0.1\textwidth} \centering
    {\includegraphics[height=\textwidth,width=\textwidth]{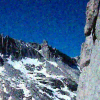}}
    \end{minipage} &
    \begin{minipage}[c]{0.1\textwidth} \centering
    {\includegraphics[height=\textwidth,width=\textwidth]{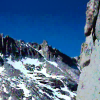}}
    \end{minipage} &
    \begin{minipage}[c]{0.1\textwidth} \centering
    {\includegraphics[height=\textwidth,width=\textwidth]{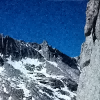}}
    \end{minipage} &
    \begin{minipage}[c]{0.1\textwidth} \centering
    {\includegraphics[height=\textwidth,width=\textwidth]{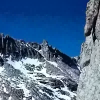}}
    \end{minipage} &
    \begin{minipage}[c]{0.1\textwidth} \centering
    {\includegraphics[height=\textwidth,width=\textwidth]{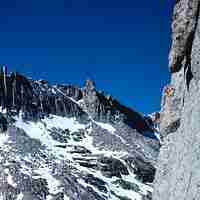}}
    \end{minipage}\\ \specialrule{0em}{1pt}{1pt}
    \begin{minipage}[c]{0.1\textwidth} \centering
    {\includegraphics[height=\textwidth,width=\textwidth]{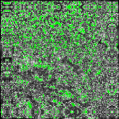}}
    \end{minipage} &
     \begin{minipage}[c]{0.1\textwidth} \centering
    {\includegraphics[height=\textwidth,width=\textwidth]{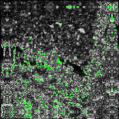}}
    \end{minipage} &
     \begin{minipage}[c]{0.1\textwidth} \centering
    {\includegraphics[height=\textwidth,width=\textwidth]{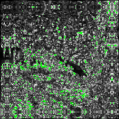}}
    \end{minipage} &
     \begin{minipage}[c]{0.1\textwidth} \centering
    {\includegraphics[height=\textwidth,width=\textwidth]{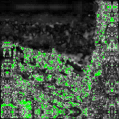}}
    \end{minipage} &
     \begin{minipage}[c]{0.1\textwidth} \centering
    {\includegraphics[height=\textwidth,width=\textwidth]{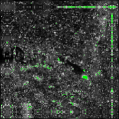}}
    \end{minipage} &
     \begin{minipage}[c]{0.1\textwidth} \centering
    {\includegraphics[height=\textwidth,width=\textwidth]{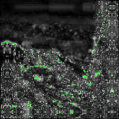}}
    \end{minipage} \\
    \end{tabular}
    \begin{minipage}[c]{0.21\textwidth} \centering
    {\includegraphics[height=\textwidth,width=\textwidth]{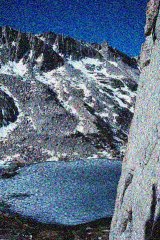}}
    \end{minipage}
\begin{tabular}{ccccccc}
    \begin{minipage}[c]{0.1\textwidth} \centering
    {\includegraphics[height=\textwidth,width=\textwidth]{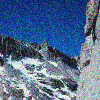}}
    \end{minipage} &
    \begin{minipage}[c]{0.1\textwidth} \centering
    {\includegraphics[height=\textwidth,width=\textwidth]{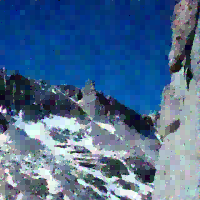}}
    \end{minipage} &
    \begin{minipage}[c]{0.1\textwidth} \centering
    {\includegraphics[height=\textwidth,width=\textwidth]{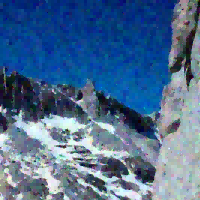}}
    \end{minipage} &
    \begin{minipage}[c]{0.1\textwidth} \centering
    {\includegraphics[height=\textwidth,width=\textwidth]{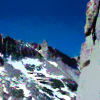}}
    \end{minipage} &
    \begin{minipage}[c]{0.1\textwidth} \centering
    {\includegraphics[height=\textwidth,width=\textwidth]{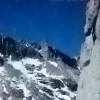}}
    \end{minipage} &
    \begin{minipage}[c]{0.1\textwidth} \centering
    {\includegraphics[height=\textwidth,width=\textwidth]{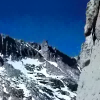}}
    \end{minipage} &
    \begin{minipage}[c]{0.1\textwidth} \centering
    {\includegraphics[height=\textwidth,width=\textwidth]{figs/167083,denoise/167083_zoom.jpg}}
    \end{minipage}\\ \specialrule{0em}{1pt}{1pt}
     \begin{minipage}[c]{0.1\textwidth} \centering
    {\includegraphics[height=\textwidth,width=\textwidth]{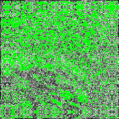}}
    \end{minipage} &
     \begin{minipage}[c]{0.1\textwidth} \centering
    {\includegraphics[height=\textwidth,width=\textwidth]{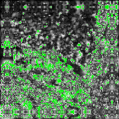}}
    \end{minipage} &
     \begin{minipage}[c]{0.1\textwidth} \centering
    {\includegraphics[height=\textwidth,width=\textwidth]{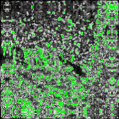}}
    \end{minipage} &
     \begin{minipage}[c]{0.1\textwidth} \centering
    {\includegraphics[height=\textwidth,width=\textwidth]{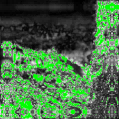}}
    \end{minipage} &
     \begin{minipage}[c]{0.1\textwidth} \centering
    {\includegraphics[height=\textwidth,width=\textwidth]{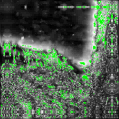}}
    \end{minipage} &
     \begin{minipage}[c]{0.1\textwidth} \centering
    {\includegraphics[height=\textwidth,width=\textwidth]{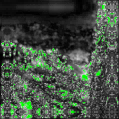}}
    \end{minipage}\\
    \end{tabular}
     \begin{minipage}[c]{0.21\textwidth} \centering
    {\includegraphics[height=\textwidth,width=\textwidth]{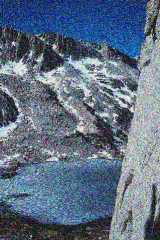}}
    \end{minipage}
\begin{tabular}{ccccccc}
    \begin{minipage}[c]{0.1\textwidth} \centering
    {\includegraphics[height=\textwidth,width=\textwidth]{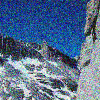}}
    \end{minipage} &
    \begin{minipage}[c]{0.1\textwidth} \centering
    {\includegraphics[height=\textwidth,width=\textwidth]{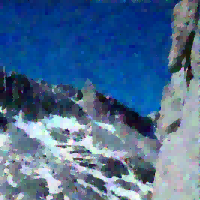}}
    \end{minipage} &
    \begin{minipage}[c]{0.1\textwidth} \centering
    {\includegraphics[height=\textwidth,width=\textwidth]{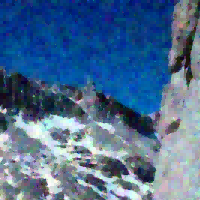}}
    \end{minipage} &
    \begin{minipage}[c]{0.1\textwidth} \centering
    {\includegraphics[height=\textwidth,width=\textwidth]{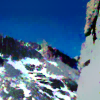}}
    \end{minipage} &
    \begin{minipage}[c]{0.1\textwidth} \centering
    {\includegraphics[height=\textwidth,width=\textwidth]{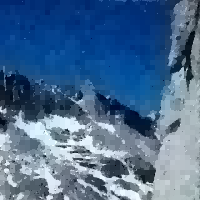}}
    \end{minipage} &
    \begin{minipage}[c]{0.1\textwidth} \centering
    {\includegraphics[height=\textwidth,width=\textwidth]{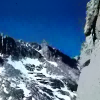}}
    \end{minipage} &
    \begin{minipage}[c]{0.1\textwidth} \centering
    {\includegraphics[height=\textwidth,width=\textwidth]{figs/167083,denoise/167083_zoom.jpg}}
    \end{minipage}\\ \specialrule{0em}{1pt}{1pt}
     \begin{minipage}[c]{0.1\textwidth} \centering
    {\includegraphics[height=\textwidth,width=\textwidth]{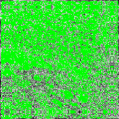}}
    \end{minipage} &
     \begin{minipage}[c]{0.1\textwidth} \centering
    {\includegraphics[height=\textwidth,width=\textwidth]{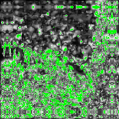}}
    \end{minipage} &
     \begin{minipage}[c]{0.1\textwidth} \centering
    {\includegraphics[height=\textwidth,width=\textwidth]{figs/167083,denoise/167083_GVTV_5993_zoom.png}}
    \end{minipage}&
     \begin{minipage}[c]{0.1\textwidth} \centering
    {\includegraphics[height=\textwidth,width=\textwidth]{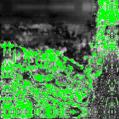}}
    \end{minipage} &
     \begin{minipage}[c]{0.1\textwidth} \centering
    {\includegraphics[height=\textwidth,width=\textwidth]{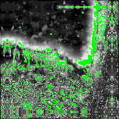}}
    \end{minipage} &
     \begin{minipage}[c]{0.1\textwidth} \centering
    {\includegraphics[height=\textwidth,width=\textwidth]{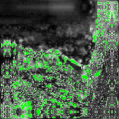}}
    \end{minipage} \\
    \end{tabular}\\
    \caption{Top to bottom: the corresponding results with noise level d = 30/255, 50/255, 70/255 respectively. The results include the noisy image (left large picture), the corresponding zoom-in parts of the noise image, the restored results by using CTV, HTV, NLTV, SVTV, SVS-NLTV, the ground-truth image respectively. The spatial distributions of S-CIELAB error (larger than 15 units) are also shown.}
    \label{167083L2zoom}
\end{figure}

\begin{figure}[htbp]
\centering
\tabcolsep=1pt
\begin{tabular}{ccccc}
    \begin{minipage}[c]{0.184\textwidth} \centering
   \subfigure{\includegraphics[width =\textwidth]{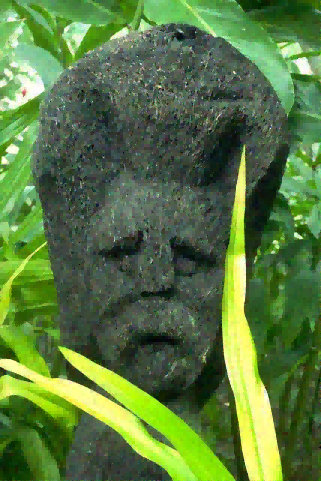}}
    \end{minipage} &
        \begin{minipage}[c]{0.184\textwidth} \centering
   \subfigure{\includegraphics[width =\textwidth]{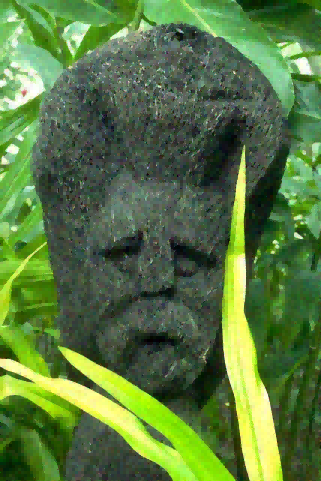}}
    \end{minipage} &
        \begin{minipage}[c]{0.184\textwidth} \centering
   \subfigure{\includegraphics[width =\textwidth]{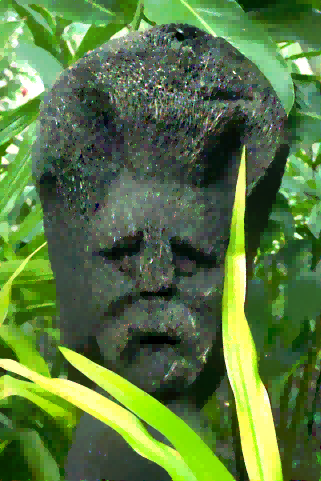}}
    \end{minipage} &
    \begin{minipage}[c]{0.184\textwidth} \centering
   \subfigure{\includegraphics[width = \textwidth]{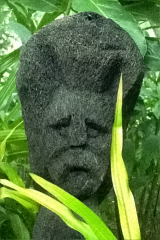}}
    \end{minipage} &
        \begin{minipage}[c]{0.184\textwidth} \centering
   \subfigure{\includegraphics[width = \textwidth]{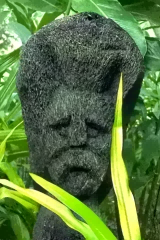}}
    \end{minipage} \\
    \begin{minipage}[c]{0.184\textwidth} \centering
   \subfigure{\includegraphics[width =\textwidth]{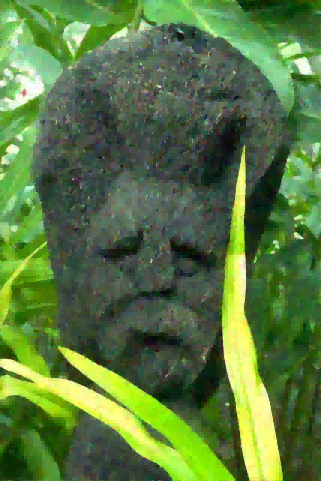}}
    \end{minipage} &
        \begin{minipage}[c]{0.184\textwidth} \centering
\subfigure{\includegraphics[width =\textwidth]{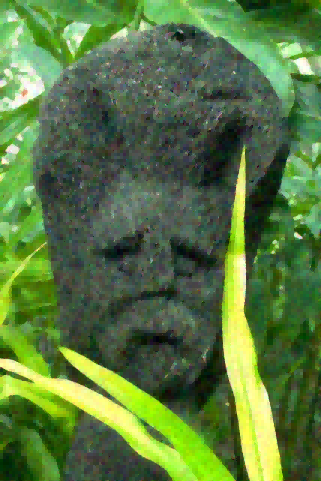}}
    \end{minipage} &
        \begin{minipage}[c]{0.184\textwidth} \centering
\subfigure{\includegraphics[width=\textwidth]{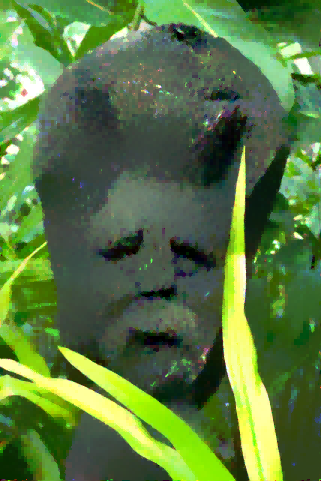}}
    \end{minipage} &
        \begin{minipage}[c]{0.184\textwidth} \centering
\subfigure{\includegraphics[width =\textwidth]{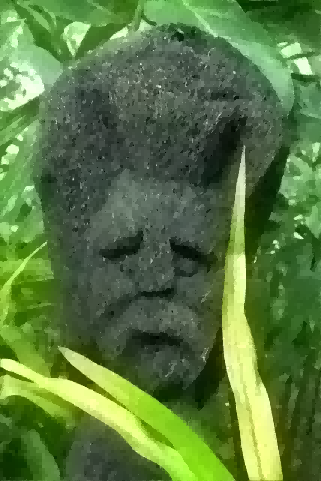}}
    \end{minipage} &
        \begin{minipage}[c]{0.184\textwidth} \centering
\subfigure{\includegraphics[width =\textwidth]{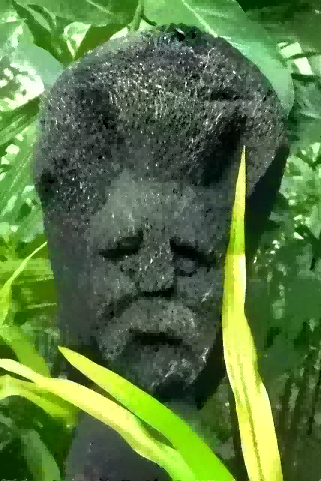}}
    \end{minipage} \\
    \begin{minipage}[c]{0.184\textwidth} \centering
\subfigure{\includegraphics[width =\textwidth]{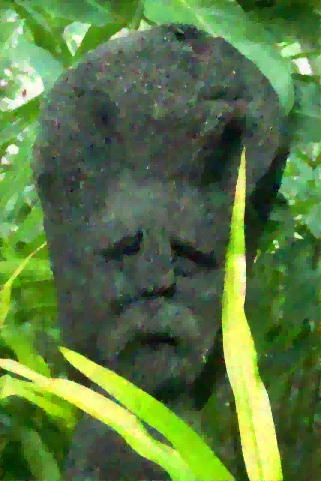}}
    \end{minipage} &
        \begin{minipage}[c]{0.184\textwidth} \centering
\subfigure{\includegraphics[width=\textwidth]{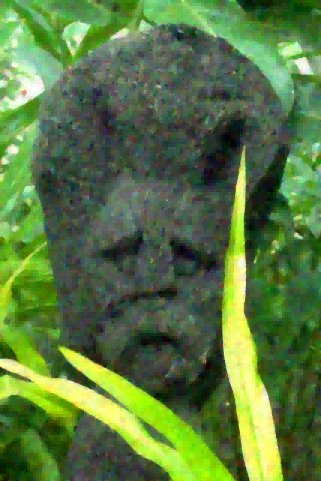}}
    \end{minipage} &
        \begin{minipage}[c]{0.184\textwidth} \centering
\subfigure{\includegraphics[width =\textwidth]{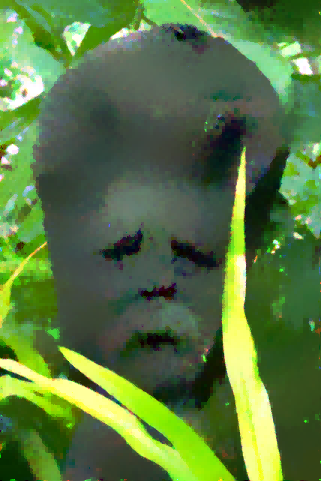}}
    \end{minipage} &
        \begin{minipage}[c]{0.184\textwidth} \centering
\subfigure{\includegraphics[width =\textwidth]{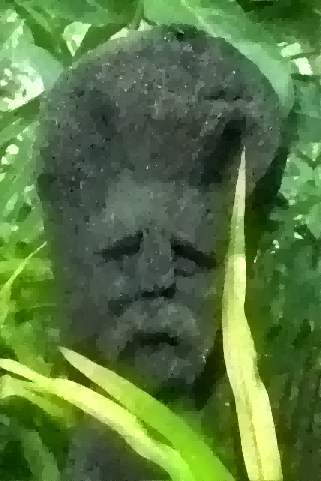}}
    \end{minipage} &
        \begin{minipage}[c]{0.184\textwidth} \centering
\subfigure{\includegraphics[width =\textwidth]{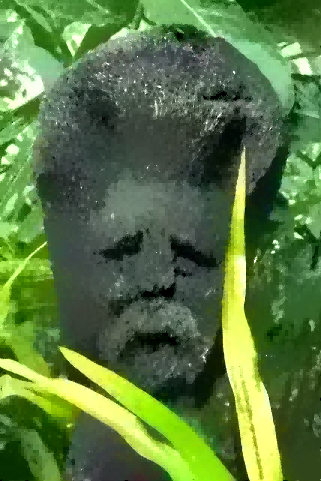}}
\end{minipage}\\
\end{tabular}
\begin{subfigure}
\centering
\includegraphics[width=0.23\textwidth]{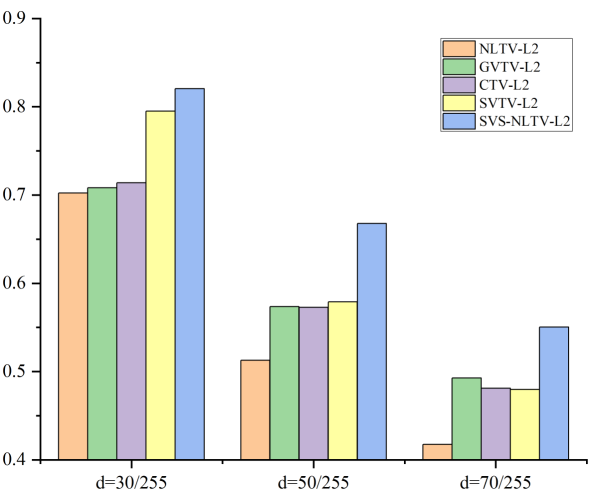}
\end{subfigure}
\begin{subfigure}
\centering
\includegraphics[width=0.23\textwidth]{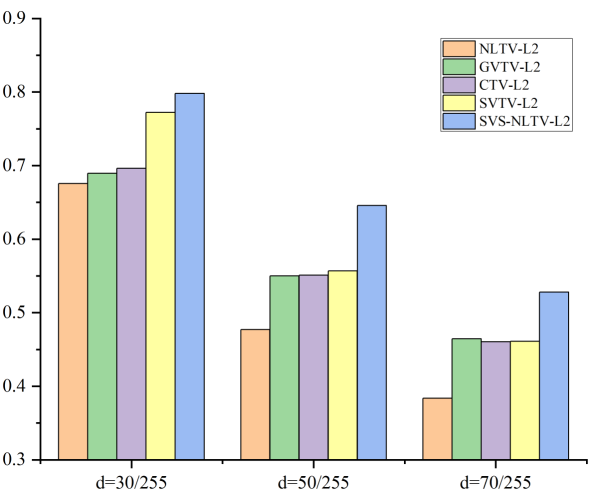}
\end{subfigure}
\begin{subfigure}
\centering
\includegraphics[width=0.23\textwidth]{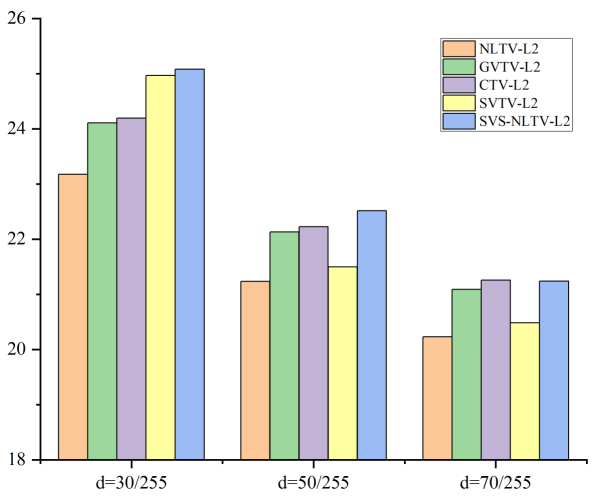}
\end{subfigure}
\begin{subfigure}
\centering
\includegraphics[width=0.23\textwidth]{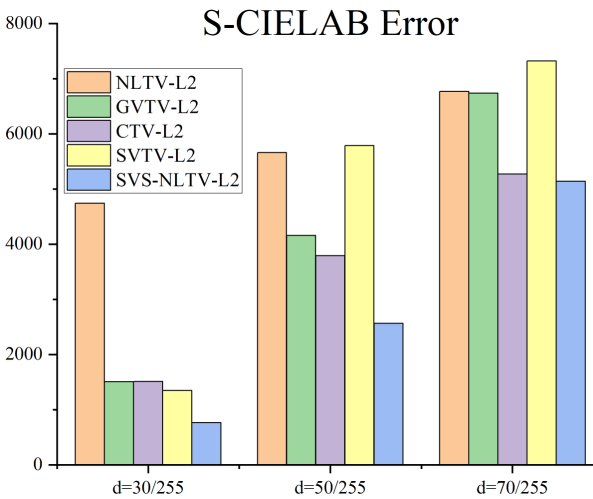}
\end{subfigure}
\caption{The first three rows: top to bottom: degraded and restored images with noise level d = 30/255, 50/255, 70/255 respectively; left to right: the restored results by using CTV, GVTV, NLTV, SVTV, and SVS-NLTV respectively. The fourth row: the histograms of measure values by using different methods.}
\label{101084L2}
\end{figure}

\begin{figure}[htbp]
\centering
\tabcolsep=1pt
\begin{minipage}[c]{0.21\textwidth} \centering
    {\includegraphics[height=\textwidth,width=\textwidth]{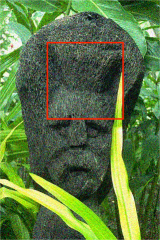}}
    \end{minipage}
\begin{tabular}{ccccccc}
    \begin{minipage}[c]{0.1\textwidth} \centering
    {\includegraphics[height=\textwidth,width=\textwidth]{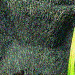}}
    \end{minipage} &
    \begin{minipage}[c]{0.1\textwidth} \centering
    {\includegraphics[height=\textwidth,width=\textwidth]{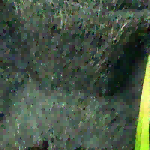}}
    \end{minipage} &
    \begin{minipage}[c]{0.1\textwidth} \centering
    {\includegraphics[height=\textwidth,width=\textwidth]{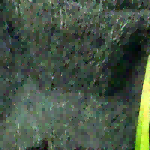}}
    \end{minipage} &
    \begin{minipage}[c]{0.1\textwidth} \centering
    {\includegraphics[height=\textwidth,width=\textwidth]{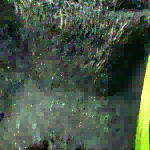}}
    \end{minipage} &
    \begin{minipage}[c]{0.1\textwidth} \centering
    {\includegraphics[height=\textwidth,width=\textwidth]{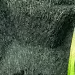}}
    \end{minipage} &
    \begin{minipage}[c]{0.1\textwidth} \centering
    {\includegraphics[height=\textwidth,width=\textwidth]{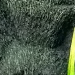}}
    \end{minipage} &
     \begin{minipage}[c]{0.1\textwidth} \centering
    {\includegraphics[height=\textwidth,width=\textwidth]{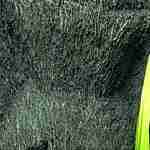}}
    \end{minipage}    \\ \specialrule{0em}{1pt}{1pt}
    \begin{minipage}[c]{0.1\textwidth} \centering
    {\includegraphics[height=\textwidth,width=\textwidth]{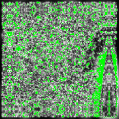}}
    \end{minipage} &
     \begin{minipage}[c]{0.1\textwidth} \centering
    {\includegraphics[height=\textwidth,width=\textwidth]{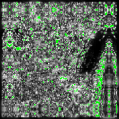}}
    \end{minipage} &
     \begin{minipage}[c]{0.1\textwidth} \centering
    {\includegraphics[height=\textwidth,width=\textwidth]{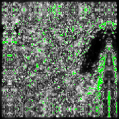}}
    \end{minipage} &
     \begin{minipage}[c]{0.1\textwidth} \centering
    {\includegraphics[height=\textwidth,width=\textwidth]{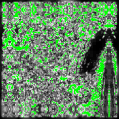}}
    \end{minipage} &
     \begin{minipage}[c]{0.1\textwidth} \centering
    {\includegraphics[height=\textwidth,width=\textwidth]{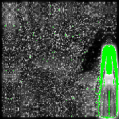}}
    \end{minipage} &
     \begin{minipage}[c]{0.1\textwidth} \centering
    {\includegraphics[height=\textwidth,width=\textwidth]{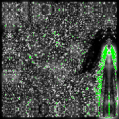}}
    \end{minipage} \\
    \end{tabular}
    \begin{minipage}[c]{0.21\textwidth} \centering
    {\includegraphics[height=\textwidth,width=\textwidth]{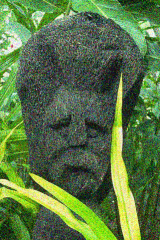}}
    \end{minipage}
\begin{tabular}{ccccccc}
    \begin{minipage}[c]{0.1\textwidth} \centering
    {\includegraphics[height=\textwidth,width=\textwidth]{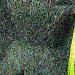}}
    \end{minipage} &
    \begin{minipage}[c]{0.1\textwidth} \centering
    {\includegraphics[height=\textwidth,width=\textwidth]{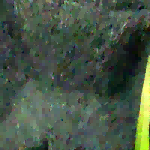}}
    \end{minipage} &
    \begin{minipage}[c]{0.1\textwidth} \centering
    {\includegraphics[height=\textwidth,width=\textwidth]{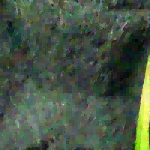}}
    \end{minipage} &
    \begin{minipage}[c]{0.1\textwidth} \centering
    {\includegraphics[height=\textwidth,width=\textwidth]{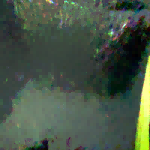}}
    \end{minipage} &
    \begin{minipage}[c]{0.1\textwidth} \centering
    {\includegraphics[height=\textwidth,width=\textwidth]{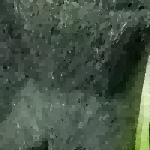}}
    \end{minipage} &
    \begin{minipage}[c]{0.1\textwidth} \centering
    {\includegraphics[height=\textwidth,width=\textwidth]{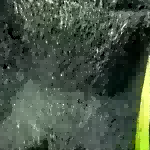}}
    \end{minipage} &
    \begin{minipage}[c]{0.1\textwidth} \centering
    {\includegraphics[height=\textwidth,width=\textwidth]{figs/101084,denoise/101084_zoom.jpg}}
    \end{minipage} \\ \specialrule{0em}{1pt}{1pt}
     \begin{minipage}[c]{0.1\textwidth} \centering
    {\includegraphics[height=\textwidth,width=\textwidth]{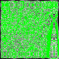}}
    \end{minipage} &
     \begin{minipage}[c]{0.1\textwidth} \centering
    {\includegraphics[height=\textwidth,width=\textwidth]{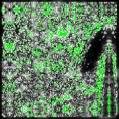}}
    \end{minipage} &
     \begin{minipage}[c]{0.1\textwidth} \centering
    {\includegraphics[height=\textwidth,width=\textwidth]{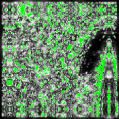}}
    \end{minipage} &
     \begin{minipage}[c]{0.1\textwidth} \centering
    {\includegraphics[height=\textwidth,width=\textwidth]{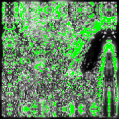}}
    \end{minipage} &
     \begin{minipage}[c]{0.1\textwidth} \centering
    {\includegraphics[height=\textwidth,width=\textwidth]{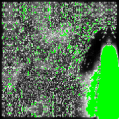}}
    \end{minipage} &
     \begin{minipage}[c]{0.1\textwidth} \centering
    {\includegraphics[height=\textwidth,width=\textwidth]{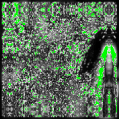}}
    \end{minipage}\\
    \end{tabular}
     \begin{minipage}[c]{0.21\textwidth} \centering
    {\includegraphics[height=\textwidth,width=\textwidth]{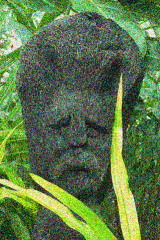}}
    \end{minipage}
\begin{tabular}{ccccccc}
    \begin{minipage}[c]{0.1\textwidth} \centering
    {\includegraphics[height=\textwidth,width=\textwidth]{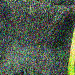}}
    \end{minipage} &
    \begin{minipage}[c]{0.1\textwidth} \centering
    {\includegraphics[height=\textwidth,width=\textwidth]{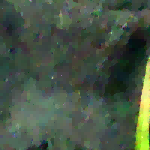}}
    \end{minipage} &
    \begin{minipage}[c]{0.1\textwidth} \centering
    {\includegraphics[height=\textwidth,width=\textwidth]{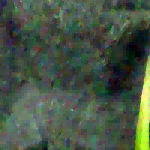}}
    \end{minipage} &
    \begin{minipage}[c]{0.1\textwidth} \centering
    {\includegraphics[height=\textwidth,width=\textwidth]{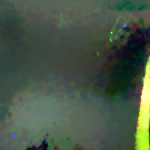}}
    \end{minipage} &
    \begin{minipage}[c]{0.1\textwidth} \centering
    {\includegraphics[height=\textwidth,width=\textwidth]{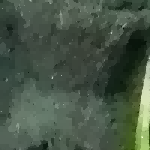}}
    \end{minipage} &
    \begin{minipage}[c]{0.1\textwidth} \centering
    {\includegraphics[height=\textwidth,width=\textwidth]{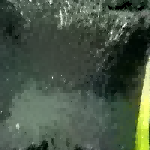}}
    \end{minipage} &
    \begin{minipage}[c]{0.1\textwidth} \centering
    {\includegraphics[height=\textwidth,width=\textwidth]{figs/101084,denoise/101084_zoom.jpg}}
    \end{minipage} \\ \specialrule{0em}{1pt}{1pt}
     \begin{minipage}[c]{0.1\textwidth} \centering
    {\includegraphics[height=\textwidth,width=\textwidth]{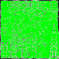}}
    \end{minipage} &
     \begin{minipage}[c]{0.1\textwidth} \centering
    {\includegraphics[height=\textwidth,width=\textwidth]{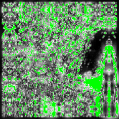}}
    \end{minipage} &
     \begin{minipage}[c]{0.1\textwidth} \centering
    {\includegraphics[height=\textwidth,width=\textwidth]{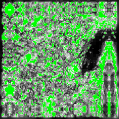}}
    \end{minipage}&
     \begin{minipage}[c]{0.1\textwidth} \centering
    {\includegraphics[height=\textwidth,width=\textwidth]{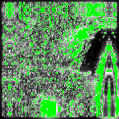}}
    \end{minipage} &
     \begin{minipage}[c]{0.1\textwidth} \centering
    {\includegraphics[height=\textwidth,width=\textwidth]{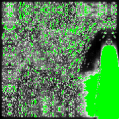}}
    \end{minipage} &
     \begin{minipage}[c]{0.1\textwidth} \centering
    {\includegraphics[height=\textwidth,width=\textwidth]{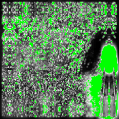}}
    \end{minipage} \\
    \end{tabular}\\
    \caption{Top to bottom: the corresponding results with noise level d = 30/255, 50/255, 70/255 respectively. The results include the noisy image (left large picture), the corresponding zoom-in parts of the noise image, the restored results by using CTV, HTV, NLTV, SVTV, SVS-NLTV, the ground-truth image respectively. The spatial distributions of S-CIELAB error (larger than 15 units) are also shown.}
    \label{101084L2zoom}
\end{figure}

\begin{figure}[htbp]
\centering
\tabcolsep=1pt
\begin{tabular}{ccccc}
    \begin{minipage}[c]{0.184\textwidth} \centering
   \subfigure{\includegraphics[width =\textwidth]{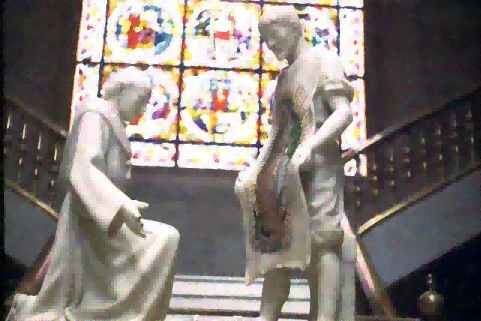}}
    \end{minipage} &
        \begin{minipage}[c]{0.184\textwidth} \centering
   \subfigure{\includegraphics[width =\textwidth]{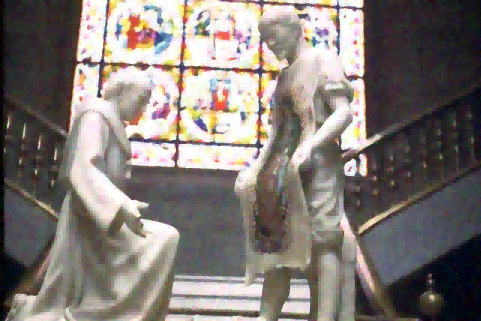}}
    \end{minipage} &
        \begin{minipage}[c]{0.184\textwidth} \centering
   \subfigure{\includegraphics[width =\textwidth]{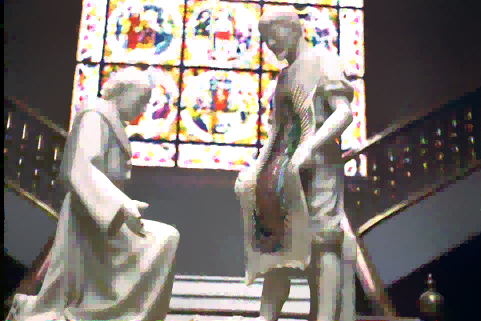}}
    \end{minipage} &
    \begin{minipage}[c]{0.184\textwidth} \centering
   \subfigure{\includegraphics[width = \textwidth]{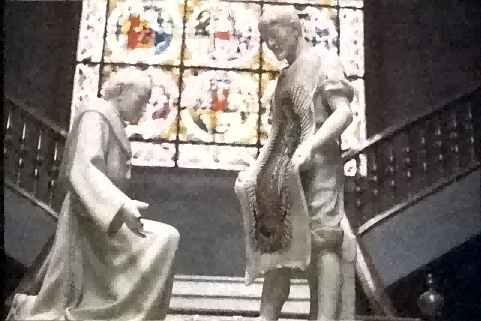}}
    \end{minipage} &
        \begin{minipage}[c]{0.184\textwidth} \centering
   \subfigure{\includegraphics[width = \textwidth]{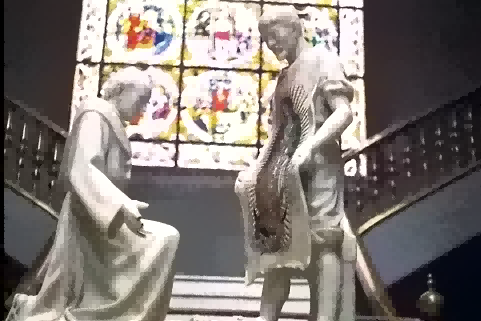}}
    \end{minipage} \\
    \begin{minipage}[c]{0.184\textwidth} \centering
   \subfigure{\includegraphics[width =\textwidth]{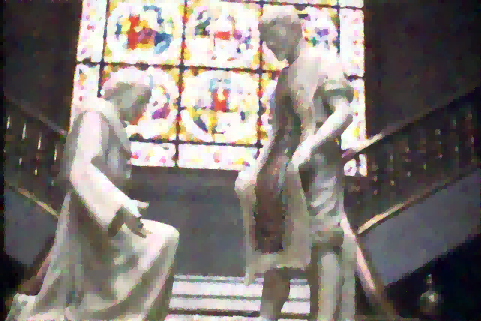}}
    \end{minipage} &
        \begin{minipage}[c]{0.184\textwidth} \centering
\subfigure{\includegraphics[width =\textwidth]{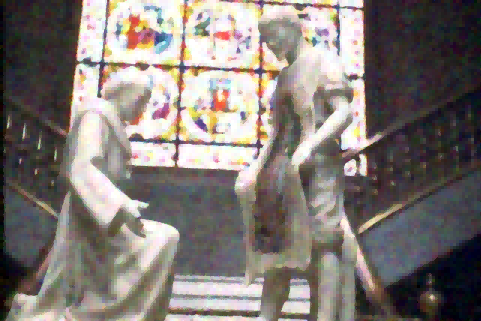}}
    \end{minipage} &
        \begin{minipage}[c]{0.184\textwidth} \centering
\subfigure{\includegraphics[width=\textwidth]{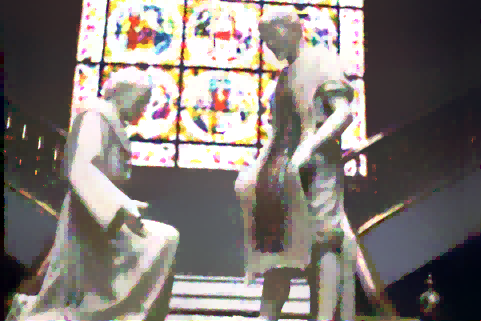}}
    \end{minipage} &
        \begin{minipage}[c]{0.184\textwidth} \centering
\subfigure{\includegraphics[width =\textwidth]{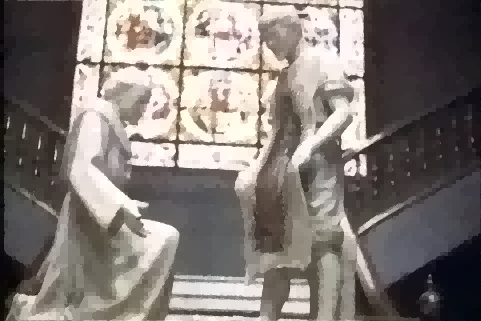}}
    \end{minipage} &
        \begin{minipage}[c]{0.184\textwidth} \centering
\subfigure{\includegraphics[width=\textwidth]{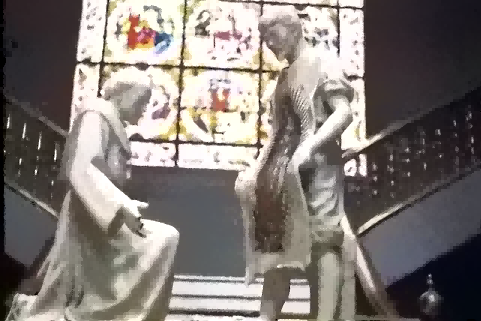}}
    \end{minipage} \\
    \begin{minipage}[c]{0.184\textwidth} \centering
\subfigure{\includegraphics[width =\textwidth]{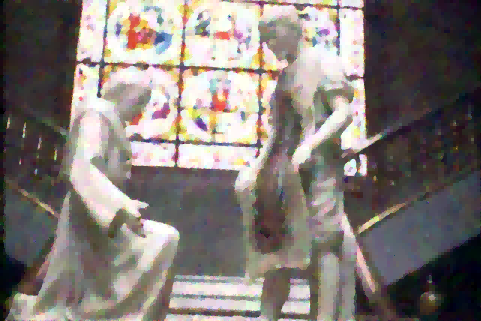}}
    \end{minipage} &
        \begin{minipage}[c]{0.184\textwidth} \centering
\subfigure{\includegraphics[width=\textwidth]{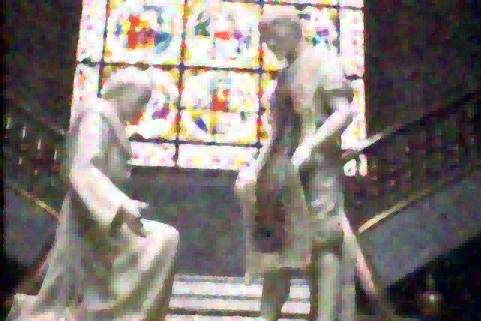}}
    \end{minipage} &
        \begin{minipage}[c]{0.184\textwidth} \centering
\subfigure{\includegraphics[width =\textwidth]{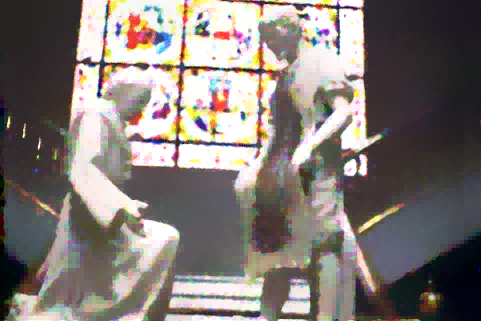}}
    \end{minipage} &
        \begin{minipage}[c]{0.184\textwidth} \centering
\subfigure{\includegraphics[width =\textwidth]{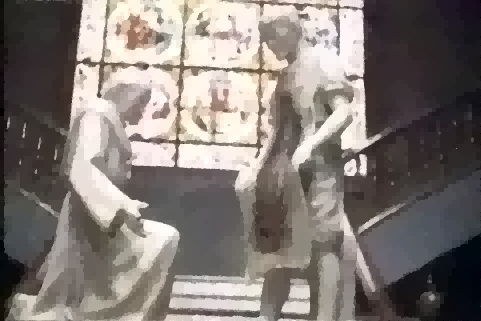}}
    \end{minipage} &
        \begin{minipage}[c]{0.184\textwidth} \centering
\subfigure{\includegraphics[width =\textwidth]{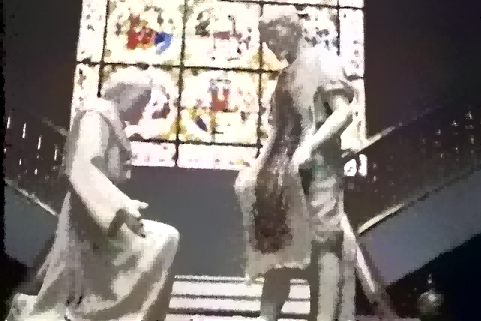}}
\end{minipage}\\
\end{tabular}
\begin{subfigure}
\centering
\includegraphics[width=0.23\textwidth]{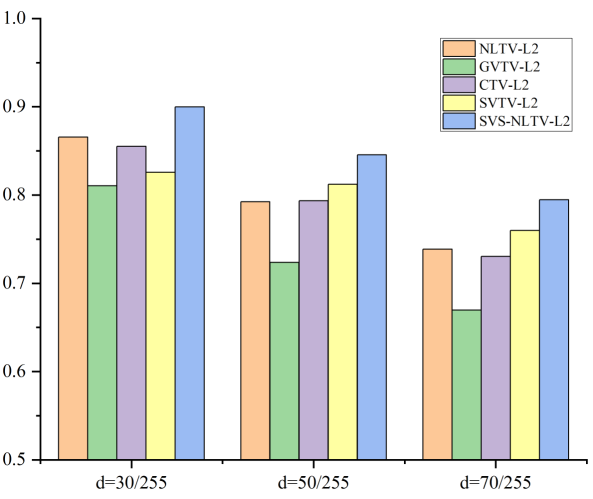}
\end{subfigure}
\begin{subfigure}
\centering
\includegraphics[width=0.22\textwidth]{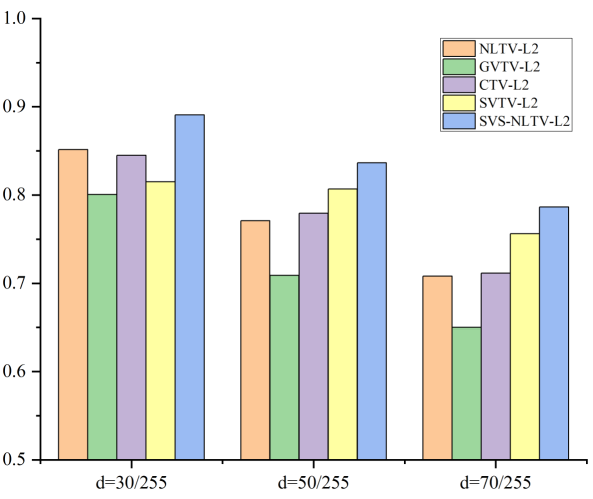}
\end{subfigure}
\begin{subfigure}
\centering
\includegraphics[width=0.23\textwidth]{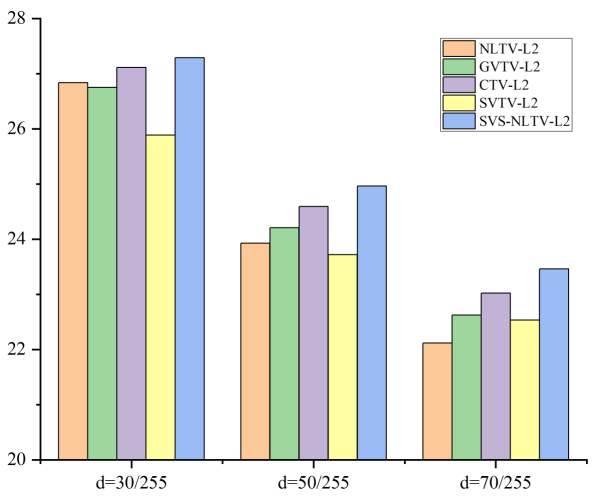}
\end{subfigure}
\begin{subfigure}
\centering
\includegraphics[width=0.23\textwidth]{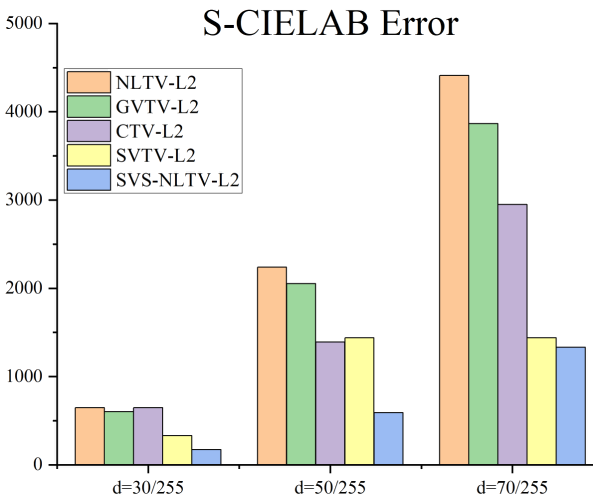}
\end{subfigure}
\caption{The first three rows: top to bottom: degraded and restored images with noise level d = 30/255, 50/255, 70/255 respectively; left to right: the restored results by using CTV, GVTV, NLTV, SVTV, and SVS-NLTV respectively. The fourth row: the histograms of measure values by using different methods.}
\label{24077L2}
\end{figure}

\begin{figure}[htbp]
\centering
\tabcolsep=1pt
\begin{minipage}[c]{0.21\textwidth} \centering
    {\includegraphics[height=\textwidth,width=\textwidth]{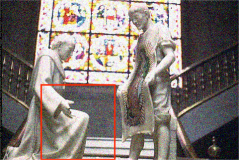}}
    \end{minipage}
\begin{tabular}{ccccccc}
    \begin{minipage}[c]{0.1\textwidth} \centering
    {\includegraphics[height=\textwidth,width=\textwidth]{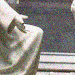}}
    \end{minipage} &
    \begin{minipage}[c]{0.1\textwidth} \centering
    {\includegraphics[height=\textwidth,width=\textwidth]{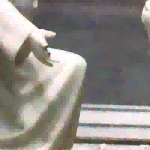}}
    \end{minipage} &
    \begin{minipage}[c]{0.1\textwidth} \centering
    {\includegraphics[height=\textwidth,width=\textwidth]{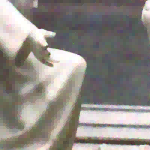}}
    \end{minipage} &
    \begin{minipage}[c]{0.1\textwidth} \centering
    {\includegraphics[height=\textwidth,width=\textwidth]{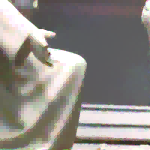}}
    \end{minipage} &
    \begin{minipage}[c]{0.1\textwidth} \centering
    {\includegraphics[height=\textwidth,width=\textwidth]{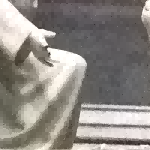}}
    \end{minipage} &
    \begin{minipage}[c]{0.1\textwidth} \centering
    {\includegraphics[height=\textwidth,width=\textwidth]{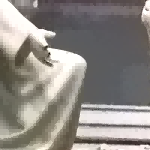}}
    \end{minipage} &
    \begin{minipage}[c]{0.1\textwidth} \centering
    {\includegraphics[height=\textwidth,width=\textwidth]{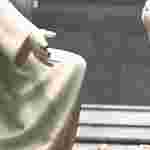}}
    \end{minipage}\\ \specialrule{0em}{1pt}{1pt}
    \begin{minipage}[c]{0.1\textwidth} \centering
    {\includegraphics[height=\textwidth,width=\textwidth]{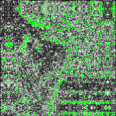}}
    \end{minipage} &
     \begin{minipage}[c]{0.1\textwidth} \centering
    {\includegraphics[height=\textwidth,width=\textwidth]{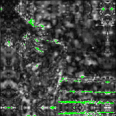}}
    \end{minipage} &
     \begin{minipage}[c]{0.1\textwidth} \centering
    {\includegraphics[height=\textwidth,width=\textwidth]{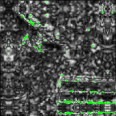}}
    \end{minipage} &
     \begin{minipage}[c]{0.1\textwidth} \centering
    {\includegraphics[height=\textwidth,width=\textwidth]{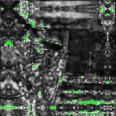}}
    \end{minipage} &
     \begin{minipage}[c]{0.1\textwidth} \centering
    {\includegraphics[height=\textwidth,width=\textwidth]{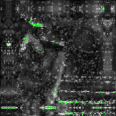}}
    \end{minipage} &
     \begin{minipage}[c]{0.1\textwidth} \centering
    {\includegraphics[height=\textwidth,width=\textwidth]{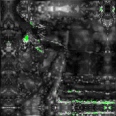}}
    \end{minipage} \\
    \end{tabular}
    \begin{minipage}[c]{0.21\textwidth} \centering
    {\includegraphics[height=\textwidth,width=\textwidth]{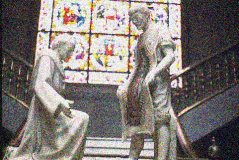}}
    \end{minipage}
\begin{tabular}{ccccccc}
    \begin{minipage}[c]{0.1\textwidth} \centering
    {\includegraphics[height=\textwidth,width=\textwidth]{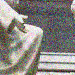}}
    \end{minipage} &
    \begin{minipage}[c]{0.1\textwidth} \centering
    {\includegraphics[height=\textwidth,width=\textwidth]{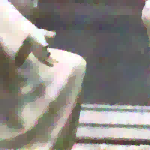}}
    \end{minipage} &
    \begin{minipage}[c]{0.1\textwidth} \centering
    {\includegraphics[height=\textwidth,width=\textwidth]{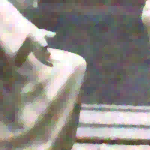}}
    \end{minipage} &
    \begin{minipage}[c]{0.1\textwidth} \centering
    {\includegraphics[height=\textwidth,width=\textwidth]{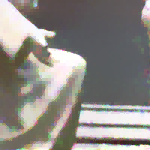}}
    \end{minipage} &
    \begin{minipage}[c]{0.1\textwidth} \centering
    {\includegraphics[height=\textwidth,width=\textwidth]{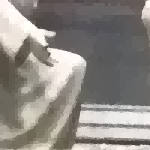}}
    \end{minipage} &
    \begin{minipage}[c]{0.1\textwidth} \centering
    {\includegraphics[height=\textwidth,width=\textwidth]{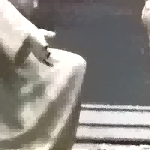}}
    \end{minipage} &
    \begin{minipage}[c]{0.1\textwidth} \centering
    {\includegraphics[height=\textwidth,width=\textwidth]{figs/24077,denoise/24077_zoom.jpg}}
    \end{minipage}\\ \specialrule{0em}{1pt}{1pt}
     \begin{minipage}[c]{0.1\textwidth} \centering
    {\includegraphics[height=\textwidth,width=\textwidth]{figs/24077,denoise/24077_noisy_3707_zoom.png}}
    \end{minipage} &
     \begin{minipage}[c]{0.1\textwidth} \centering
    {\includegraphics[height=\textwidth,width=\textwidth]{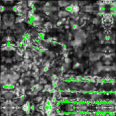}}
    \end{minipage} &
     \begin{minipage}[c]{0.1\textwidth} \centering
    {\includegraphics[height=\textwidth,width=\textwidth]{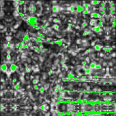}}
    \end{minipage} &
     \begin{minipage}[c]{0.1\textwidth} \centering
    {\includegraphics[height=\textwidth,width=\textwidth]{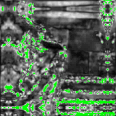}}
    \end{minipage} &
     \begin{minipage}[c]{0.1\textwidth} \centering
    {\includegraphics[height=\textwidth,width=\textwidth]{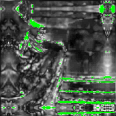}}
    \end{minipage} &
     \begin{minipage}[c]{0.1\textwidth} \centering
    {\includegraphics[height=\textwidth,width=\textwidth]{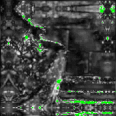}}
    \end{minipage}\\
    \end{tabular}
     \begin{minipage}[c]{0.21\textwidth} \centering
    {\includegraphics[height=\textwidth,width=\textwidth]{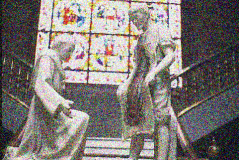}}
    \end{minipage}
\begin{tabular}{ccccccc}
     \begin{minipage}[c]{0.1\textwidth} \centering
    {\includegraphics[height=\textwidth,width=\textwidth]{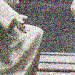}}
    \end{minipage} &
    \begin{minipage}[c]{0.1\textwidth} \centering
    {\includegraphics[height=\textwidth,width=\textwidth]{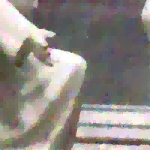}}
    \end{minipage} &
    \begin{minipage}[c]{0.1\textwidth} \centering
    {\includegraphics[height=\textwidth,width=\textwidth]{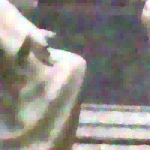}}
    \end{minipage} &
    \begin{minipage}[c]{0.1\textwidth} \centering
    {\includegraphics[height=\textwidth,width=\textwidth]{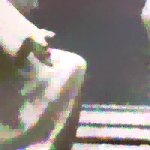}}
    \end{minipage} &
    \begin{minipage}[c]{0.1\textwidth} \centering
    {\includegraphics[height=\textwidth,width=\textwidth]{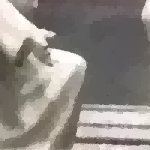}}
    \end{minipage} &
    \begin{minipage}[c]{0.1\textwidth} \centering
    {\includegraphics[height=\textwidth,width=\textwidth]{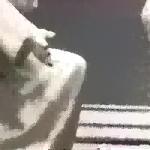}}
    \end{minipage}&
    \begin{minipage}[c]{0.1\textwidth} \centering
    {\includegraphics[height=\textwidth,width=\textwidth]{figs/24077,denoise/24077_zoom.jpg}}
    \end{minipage}\\ \specialrule{0em}{1pt}{1pt}
       \begin{minipage}[c]{0.1\textwidth} \centering
    {\includegraphics[height=\textwidth,width=\textwidth]{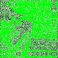}}
    \end{minipage} &
    \begin{minipage}[c]{0.1\textwidth} \centering
    {\includegraphics[height=\textwidth,width=\textwidth]{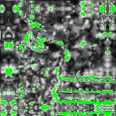}}
    \end{minipage} &
    \begin{minipage}[c]{0.1\textwidth} \centering
    {\includegraphics[height=\textwidth,width=\textwidth]{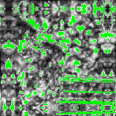}}
    \end{minipage} &
    \begin{minipage}[c]{0.1\textwidth} \centering
    {\includegraphics[height=\textwidth,width=\textwidth]{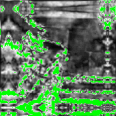}}
    \end{minipage} &
    \begin{minipage}[c]{0.1\textwidth} \centering
    {\includegraphics[height=\textwidth,width=\textwidth]{figs/24077,denoise/24077_SVTV_1438_zoom.png}}
    \end{minipage} &
    \begin{minipage}[c]{0.1\textwidth} \centering
    {\includegraphics[height=\textwidth,width=\textwidth]{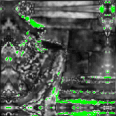}}
    \end{minipage} \\
    \end{tabular}\\
    \caption{Top to bottom: the corresponding results with noise level d = 30/255, 50/255, 70/255 respectively. The results include the noisy image (left large picture), the corresponding zoom-in parts of the noise image, the restored results by using CTV, HTV, NLTV, SVTV, SVS-NLTV, the ground-truth image respectively. The spatial distributions of S-CIELAB error (larger than 15 units) are also shown.}
    \label{24077L2zoom}
\end{figure}

\begin{table}[htbp]
\centering
\caption{Measure values of the restored results (Gaussian blur) by using different methods}
\begin{tabular}{|c|c|c|c|c|c|c|}
\hline
\multicolumn{1}{|l|}{} & Measure & CTV & GVTV  & NLTV & SV-TV & Proposed\\ \hline
\multirow{4}{*}{Fig \ref{figgaussianblur} (a)}  & QSSIM  & 0.54932 & 0.44037 & 0.43908 & 0.58703 & \textbf{0.73872}\\ \cline{2-7} 
& SSIM     & 0.52587 & 0.42153 & 0.50911 & 0.42634 & \textbf{0.57291}\\ \cline{2-7} 
& PSNR  & 19.0002 & 18.1206  & 18.8619 & 18.2246 & \textbf{19.3935} \\ \cline{2-7} 
& S-CIELAB   & 9259  & 13567  & 10447  & 15106  & \textbf{7365}  \\ \hline
\multirow{4}{*}{Fig \ref{figgaussianblur} (b)}  & QSSIM    & 0.63192 & 0.52831  & 0.61665 & 0.5013 & \textbf{0.66251} \\ \cline{2-7} 
& SSIM    & 0.61424 & 0.51153  & 0.59669 & 0.49134 & \textbf{0.65377}  \\ \cline{2-7} 
& PSNR   & 20.9294 & 19.6248  & 20.7522 & 19.7442 & \textbf{21.5578}       \\ \cline{2-7} 
& S-CIELAB    & 19711  & 29256 & 21599  & 28666   & \textbf{16015}  \\ \hline
\multirow{4}{*}{Fig \ref{figgaussianblur} (c)}  & QSSIM  & 0.58846 & 0.49069  & 0.57283 & 0.50512 & \textbf{0.62673}   \\ \cline{2-7} 
& SSIM  & 0.58054 & 0.48477  & 0.56231 & 50154 & \textbf{0.62733}   \\ \cline{2-7} 
& PSNR   & 21.7031 & 20.399 & 21.4503 &  20.5671 & \textbf{22.335} \\ \cline{2-7} 
& S-CIELAB   & 10903  & 18967 & 13083  & 17727   & \textbf{7690}  \\ \hline
\multirow{4}{*}{\thead{\scriptsize Average of\\ 60 testing images}} & QSSIM    & 0.73279 & 0.67031   & 0.73297 & 0.6715         & \textbf{0.76944} \\ \cline{2-7} 
& SSIM    & 0.72219 & 0.6598 & 0.7214 & 0.6588  & \textbf{0.7634} \\ \cline{2-7} 
& PSNR    & 25.3757 & 24.0101 & 25.1935 & 23.8565 & \textbf{26.1201}       \\ \cline{2-7} 
& S-CIELAB & 10903  & 18967 & 13083  & 17727   & \textbf{7690}  \\ \hline
\end{tabular}
\label{table5.1}
\end{table}

\subsection{Image denoising \uppercase\expandafter{\romannumeral2}: Poisson noise}

In this section, we still make use of 60 images taken from Berkeley Segmentation Database \cite{martin2001database} to test the proposed $\stvc$ plus L1 fidelity model for color image restoration with respect to Poisson noise. We compare CTV-L1\cite{blomgren1998total}, SVTV-L1\cite{wang2022color}, HTV-L1\cite{liu2014high}, NLTV-L1 and the proposed $\stvco$ on the testing images. For the proposed $\stvco$ model, we set the parameter of the value channel to be $\mu$ = 0.05, the parameters $\lambda$, $\delta$ in Bregman iteration to be $\lambda =1$, $\delta = 1$. For the regularization parameter $\alpha$, we set a range of [$\frac{\sqrt{N}}{1000}$,$\frac{\sqrt{N}}{10}$] with a step size of 0.01 where N is the total pixel numbers. The regularization parameter($\lambda$) range for CTV model is set to be [$\frac{\sqrt{N}}{1000}$, $\frac{\sqrt{N}}{10}$]. For HTV model, we set c $\in$ $\{0.1, 0.2\}$, $\beta_1$ =15, $\beta_2$ =40, $\beta_3$ $\in$ $\{100, 200\}$ for all the experiments. For SVTV model, we set a range of [$\frac{\sqrt{N}}{1000}$, $\frac{\sqrt{N}}{10}$] with a step size of 0.5 for the parameter of the regularization parameter($\lambda$), and set the penalty parameter $\beta \in \{0.01, 0.1 \}$ for both SVTV and CTV model. 

In order to test the proposed model with respect to different noise levels, we make use of the following Matlab command to generate the degraded image
contaminated by Poisson noise,
\begin{gather*}
\textbf{Z} = \text{poissrnd} \big(\max(0, \textbf{I}/d^2)\big)*d^2, 
\end{gather*}
where $\textbf{I}$ is the ground-truth image, \textbf{Z} is the noisy image, $d$ is the scale factor. The ground-truth images are degraded artificially by Poisson noise with different scales. We compute the PSNR, SSIM, QSSIM values and the S-CIELAB error value (pixel number) for each restored result by comparing it with the ground-truth image. Again we obtain the optimal restored result by choosing the optimal value of the regularization parameter in terms of PSNR value for each testing method. In Figures \ref{p-psnr}-\ref{p-qssim}, we give the spatial distributions of PSNR, SSIM, and QSSIM values of the restored results corresponding to $d = 0.2, 0.3, 0.4$ respectively. We also show the histograms of the average PSNR, SSIM, and QSSIM values. Again we observe from the figures that the proposed $\stvco$ model provides very competitive PSNR, SSIM, and QSSIM values.

As examples of this experiment, we display 4 sets of restored results in Figures \ref{253027L1}, \ref{101084L1}, \ref{167083L1}, \ref{326084L1}. The restored results and the corresponding histograms of PSNR, SSIM, QSSIM, and S-CIELAB error values by using different methods are also given in the figures. We see from the histograms that the proposed SVS-NLTV-L1 model consistently achieves the best PSNR, SSIM, QSSIM, and S-CIELAB error values. Additionally, the zoom-in parts and the spatial distribution of pixels with S-CIELAB errors greater than 15 units are provided in Figures \ref{253027L1zoom}, \ref{101084L1zoom}, \ref{167083L1zoom}, and \ref{326085L1zoom}. Since RGB channels are not coupled in CTV-L1, HTV, and NLTV-L1 models, these three methods do not work effectively for color artifacts removal. SVTV-L1 considers the coupling of RGB channels, but produces unsatisfactory results due to the staircase effect of total variation. $\stvco$ model combines the saturation-value similarity information and nonlocal method, thus it always provides visually better restored results, and the noise and color artifacts
are removed more thoroughly. As expected, the restored results and the corresponding zoom-in parts show that the proposed $\stvco$ model are much better than those by using SVTV-L1, CTV-L1, HTV-L1, and NLTV-L1 visually, see especially the extraordinary effectiveness of edge preserving effect in \ref{253027L1} and \ref{253027L1zoom}, the denoising effect of the sky part in \ref{101084L1} and \ref{101084L1zoom}, the texture preserving effect in the head region of \ref{326085L1zoom}, the face region of \ref{101084L1} and \ref{101084L1zoom}. In summary, we emphasize that the proposed $\stvco$ model is highly competitive in terms of visual quality and performance on testing criteria for Poisson noise removal.

\begin{figure}[htbp]
\centering
\tabcolsep=1pt
\begin{tabular}{ccccc}
    \begin{minipage}[c]{0.184\textwidth} \centering
   \subfigure{\includegraphics[width =\textwidth]{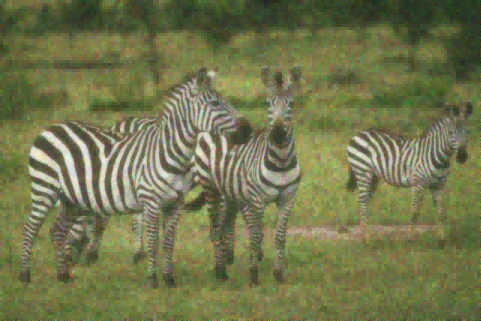}}
    \end{minipage} &
        \begin{minipage}[c]{0.184\textwidth} \centering
   \subfigure{\includegraphics[width =\textwidth]{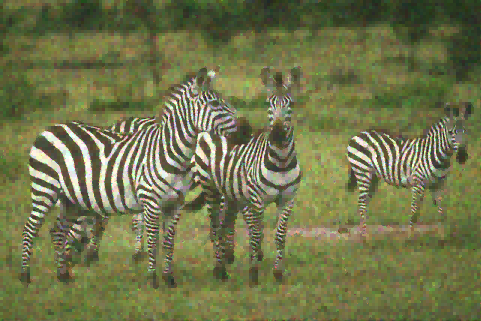}}
    \end{minipage} &
        \begin{minipage}[c]{0.184\textwidth} \centering
   \subfigure{\includegraphics[width =\textwidth]{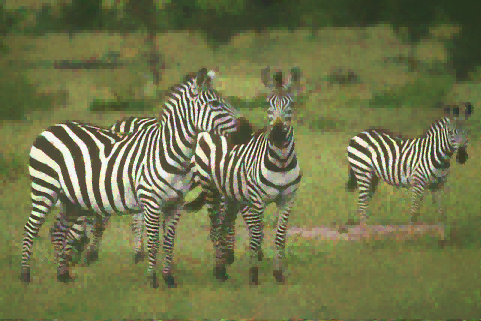}}
    \end{minipage} &
    \begin{minipage}[c]{0.184\textwidth} \centering
   \subfigure{\includegraphics[width = \textwidth]{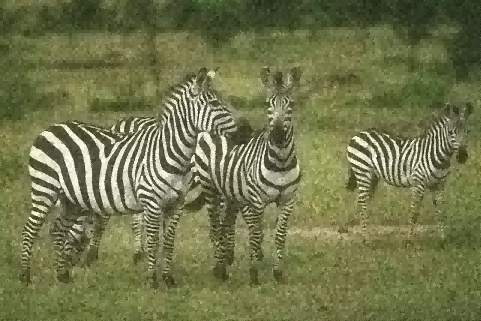}}
    \end{minipage} &
        \begin{minipage}[c]{0.184\textwidth} \centering
   \subfigure{\includegraphics[width = \textwidth]{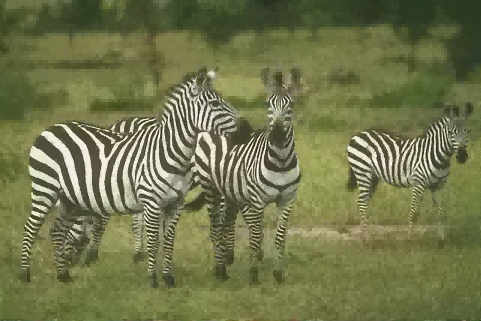}}
    \end{minipage} \\
    \begin{minipage}[c]{0.184\textwidth} \centering
   \subfigure{\includegraphics[width =\textwidth]{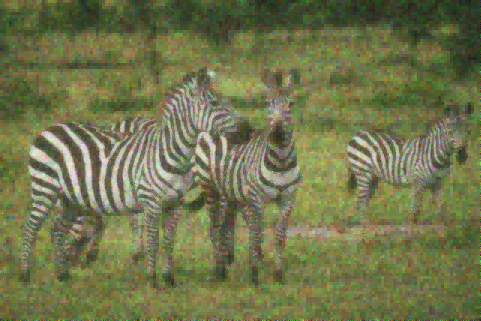}}
    \end{minipage} &
        \begin{minipage}[c]{0.184\textwidth} \centering
\subfigure{\includegraphics[width =\textwidth]{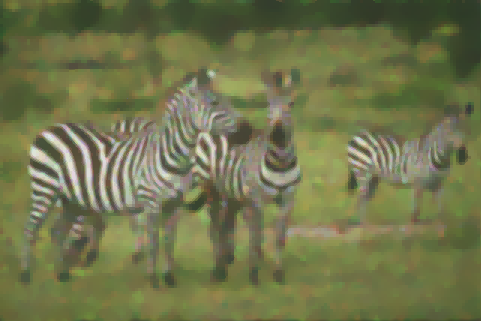}}
    \end{minipage} &
        \begin{minipage}[c]{0.184\textwidth} \centering
\subfigure{\includegraphics[width=\textwidth]{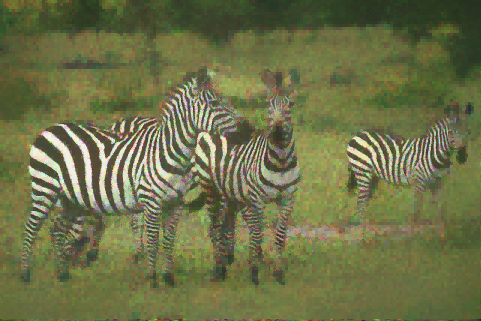}}
    \end{minipage} &
        \begin{minipage}[c]{0.184\textwidth} \centering
\subfigure{\includegraphics[width =\textwidth]{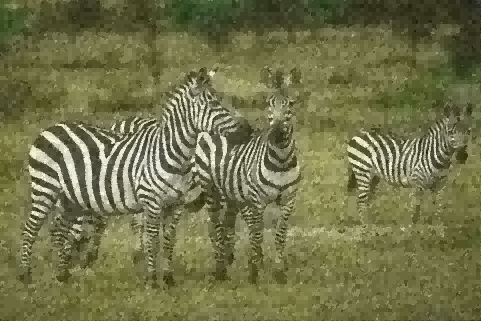}}
    \end{minipage} &
        \begin{minipage}[c]{0.184\textwidth} \centering
\subfigure{\includegraphics[width=\textwidth]{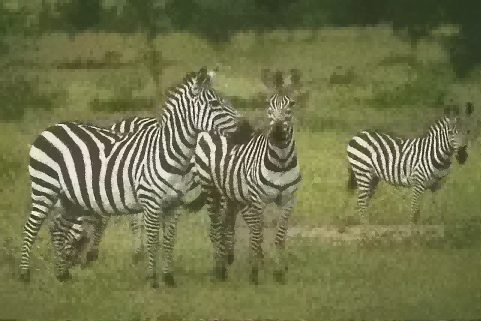}}
    \end{minipage} \\
     \begin{minipage}[c]{0.184\textwidth} \centering
   \subfigure{\includegraphics[width =\textwidth]{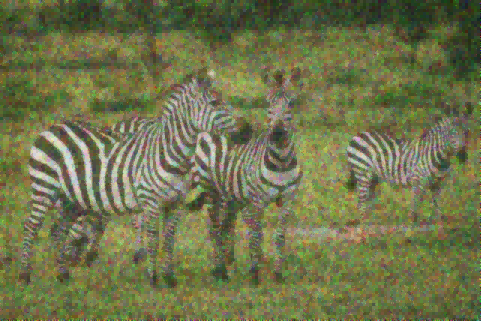}}
    \end{minipage} &
        \begin{minipage}[c]{0.184\textwidth} \centering
\subfigure{\includegraphics[width =\textwidth]{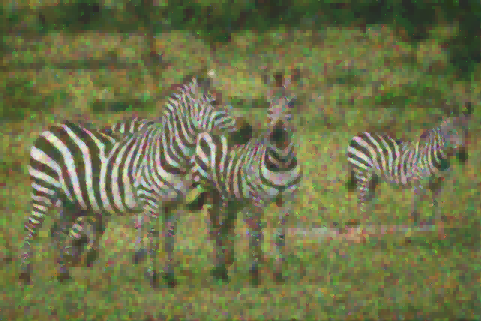}}
    \end{minipage} &
        \begin{minipage}[c]{0.184\textwidth} \centering
\subfigure{\includegraphics[width=\textwidth]{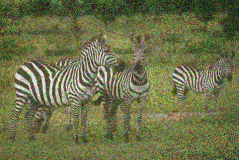}}
    \end{minipage} &
        \begin{minipage}[c]{0.184\textwidth} \centering
\subfigure{\includegraphics[width =\textwidth]{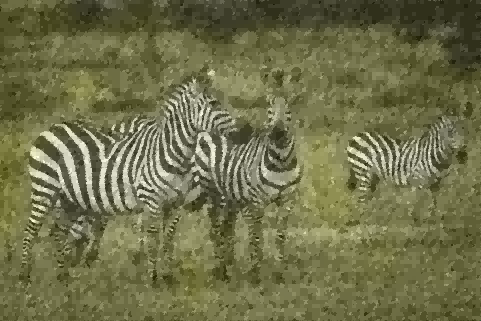}}
    \end{minipage} &
        \begin{minipage}[c]{0.184\textwidth} \centering
\subfigure{\includegraphics[width=\textwidth]{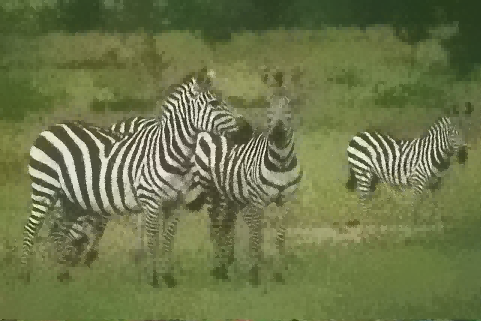}}
    \end{minipage} \\
\end{tabular}
\begin{subfigure}
\centering
\includegraphics[width=0.23\textwidth]{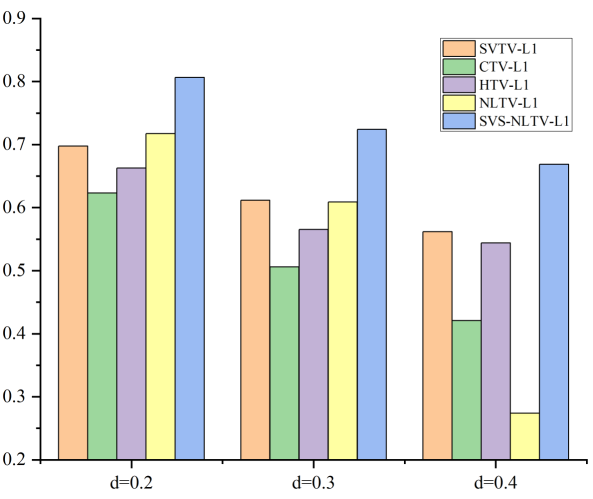}
\end{subfigure}
\begin{subfigure}
\centering
\includegraphics[width=0.23\textwidth]{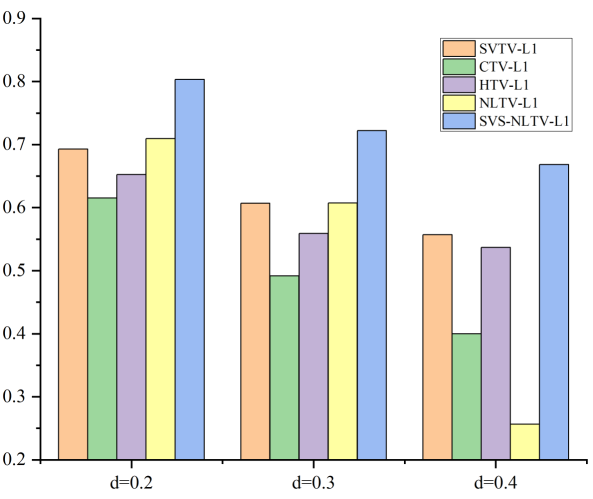}
\end{subfigure}
\begin{subfigure}
\centering
\includegraphics[width=0.23\textwidth]{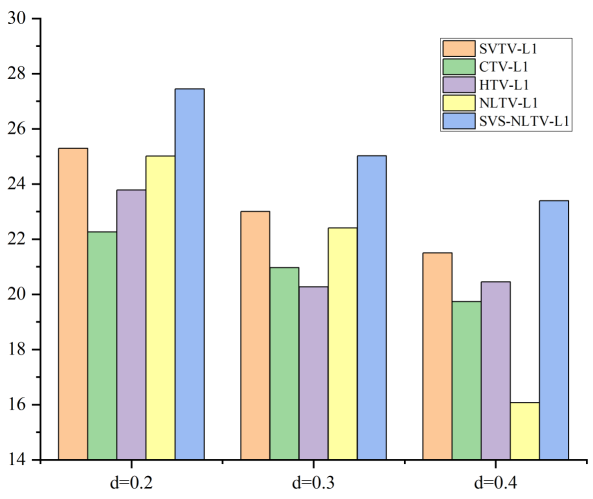}
\end{subfigure}
\begin{subfigure}
\centering
\includegraphics[width=0.23\textwidth]{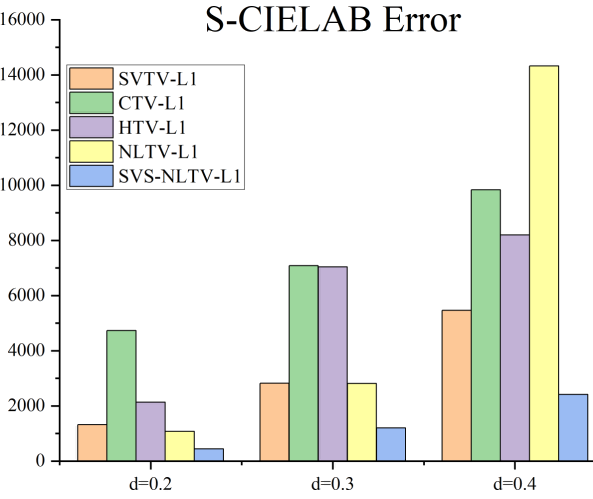}
\end{subfigure}
\caption{The first three rows: top to bottom: degraded and restored images with noise level d = 0.2, 0.3, 0.4 respectively; left to right: the restored results by using CTV, HTV, NLTV, SVTV, and SVS-NLTV respectively. The fourth row: the histograms of measure values by using different methods.}
\label{253027L1}
\end{figure}

\begin{figure}[htbp]
\centering
\tabcolsep=1pt
\begin{minipage}[c]{0.21\textwidth} \centering
    {\includegraphics[height=\textwidth,width=\textwidth]{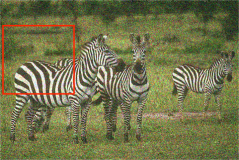}}
    \end{minipage}
\begin{tabular}{ccccccc}
    \begin{minipage}[c]{0.1\textwidth} \centering
    {\includegraphics[height=\textwidth,width=\textwidth]{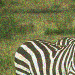}}
    \end{minipage} &
    \begin{minipage}[c]{0.1\textwidth} \centering
    {\includegraphics[height=\textwidth,width=\textwidth]{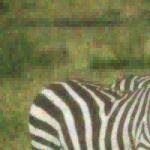}}
    \end{minipage} &
    \begin{minipage}[c]{0.1\textwidth} \centering
    {\includegraphics[height=\textwidth,width=\textwidth]{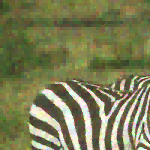}}
    \end{minipage} &
    \begin{minipage}[c]{0.1\textwidth} \centering
    {\includegraphics[height=\textwidth,width=\textwidth]{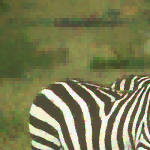}}
    \end{minipage} &
    \begin{minipage}[c]{0.1\textwidth} \centering
    {\includegraphics[height=\textwidth,width=\textwidth]{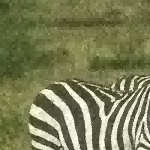}}
    \end{minipage} &
    \begin{minipage}[c]{0.1\textwidth} \centering
    {\includegraphics[height=\textwidth,width=\textwidth]{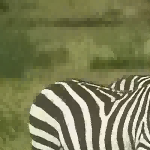}}
    \end{minipage} &
    \begin{minipage}[c]{0.1\textwidth} \centering
    {\includegraphics[height=\textwidth,width=\textwidth]{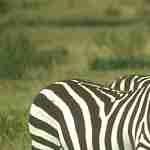}}
    \end{minipage}\\ \specialrule{0em}{1pt}{1pt}
    \begin{minipage}[c]{0.1\textwidth} \centering
    {\includegraphics[height=\textwidth,width=\textwidth]{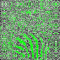}}
    \end{minipage} &
     \begin{minipage}[c]{0.1\textwidth} \centering
    {\includegraphics[height=\textwidth,width=\textwidth]{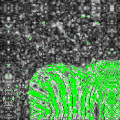}}
    \end{minipage} &
     \begin{minipage}[c]{0.1\textwidth} \centering
    {\includegraphics[height=\textwidth,width=\textwidth]{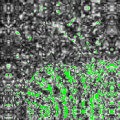}}
    \end{minipage} &
     \begin{minipage}[c]{0.1\textwidth} \centering
    {\includegraphics[height=\textwidth,width=\textwidth]{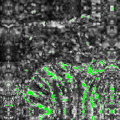}}
    \end{minipage} &
     \begin{minipage}[c]{0.1\textwidth} \centering
    {\includegraphics[height=\textwidth,width=\textwidth]{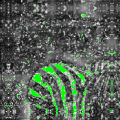}}
    \end{minipage} &
     \begin{minipage}[c]{0.1\textwidth} \centering
    {\includegraphics[height=\textwidth,width=\textwidth]{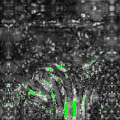}}
    \end{minipage} \\
    \end{tabular}
    \begin{minipage}[c]{0.21\textwidth} \centering
    {\includegraphics[height=\textwidth,width=\textwidth]{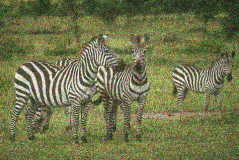}}
    \end{minipage}
\begin{tabular}{ccccccc}
     \begin{minipage}[c]{0.1\textwidth} \centering
    {\includegraphics[height=\textwidth,width=\textwidth]{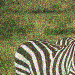}}
    \end{minipage} &
    \begin{minipage}[c]{0.1\textwidth} \centering
    {\includegraphics[height=\textwidth,width=\textwidth]{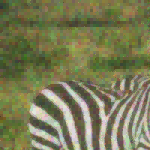}}
    \end{minipage} &
    \begin{minipage}[c]{0.1\textwidth} \centering
    {\includegraphics[height=\textwidth,width=\textwidth]{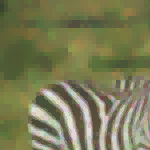}}
    \end{minipage} &
    \begin{minipage}[c]{0.1\textwidth} \centering
    {\includegraphics[height=\textwidth,width=\textwidth]{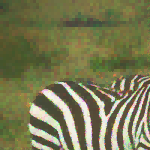}}
    \end{minipage} &
    \begin{minipage}[c]{0.1\textwidth} \centering
    {\includegraphics[height=\textwidth,width=\textwidth]{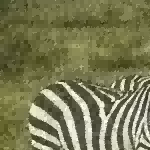}}
    \end{minipage} &
    \begin{minipage}[c]{0.1\textwidth} \centering
    {\includegraphics[height=\textwidth,width=\textwidth]{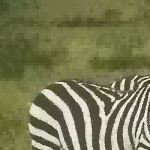}}
    \end{minipage} &
    \begin{minipage}[c]{0.1\textwidth} \centering
    {\includegraphics[height=\textwidth,width=\textwidth]{figs/253027,L1,denoise/253027_zoom.jpg}}
    \end{minipage}\\ \specialrule{0em}{1pt}{1pt}
     \begin{minipage}[c]{0.1\textwidth} \centering
    {\includegraphics[height=\textwidth,width=\textwidth]{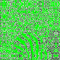}}
    \end{minipage} &
     \begin{minipage}[c]{0.1\textwidth} \centering
    {\includegraphics[height=\textwidth,width=\textwidth]{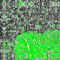}}
    \end{minipage} &
     \begin{minipage}[c]{0.1\textwidth} \centering
    {\includegraphics[height=\textwidth,width=\textwidth]{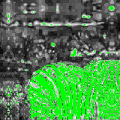}}
    \end{minipage} &
     \begin{minipage}[c]{0.1\textwidth} \centering
    {\includegraphics[height=\textwidth,width=\textwidth]{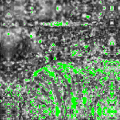}}
    \end{minipage} &
     \begin{minipage}[c]{0.1\textwidth} \centering
    {\includegraphics[height=\textwidth,width=\textwidth]{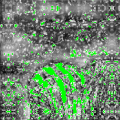}}
    \end{minipage} &
     \begin{minipage}[c]{0.1\textwidth} \centering
    {\includegraphics[height=\textwidth,width=\textwidth]{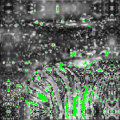}}
    \end{minipage}\\
    \end{tabular}
     \begin{minipage}[c]{0.21\textwidth} \centering
    {\includegraphics[height=\textwidth,width=\textwidth]{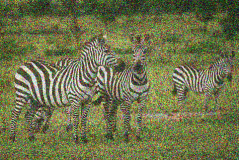}}
    \end{minipage}
\begin{tabular}{ccccccc}
     \begin{minipage}[c]{0.1\textwidth} \centering
    {\includegraphics[height=\textwidth,width=\textwidth]{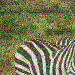}}
    \end{minipage} &
    \begin{minipage}[c]{0.1\textwidth} \centering
    {\includegraphics[height=\textwidth,width=\textwidth]{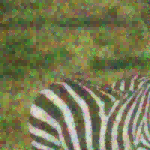}}
    \end{minipage} &
    \begin{minipage}[c]{0.1\textwidth} \centering
    {\includegraphics[height=\textwidth,width=\textwidth]{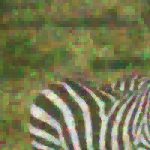}}
    \end{minipage} &
    \begin{minipage}[c]{0.1\textwidth} \centering
    {\includegraphics[height=\textwidth,width=\textwidth]{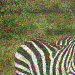}}
    \end{minipage} &
    \begin{minipage}[c]{0.1\textwidth} \centering
    {\includegraphics[height=\textwidth,width=\textwidth]{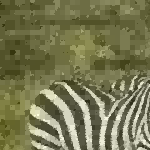}}
    \end{minipage} &
    \begin{minipage}[c]{0.1\textwidth} \centering
    {\includegraphics[height=\textwidth,width=\textwidth]{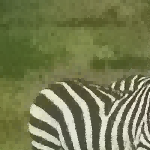}}
    \end{minipage} &
    \begin{minipage}[c]{0.1\textwidth} \centering
    {\includegraphics[height=\textwidth,width=\textwidth]{figs/253027,L1,denoise/253027_zoom.jpg}}
    \end{minipage}\\ \specialrule{0em}{1pt}{1pt}
     \begin{minipage}[c]{0.1\textwidth} \centering
    {\includegraphics[height=\textwidth,width=\textwidth]{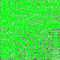}}
    \end{minipage} &
     \begin{minipage}[c]{0.1\textwidth} \centering
    {\includegraphics[height=\textwidth,width=\textwidth]{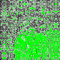}}
    \end{minipage} &
     \begin{minipage}[c]{0.1\textwidth} \centering
    {\includegraphics[height=\textwidth,width=\textwidth]{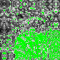}}
    \end{minipage} &
     \begin{minipage}[c]{0.1\textwidth} \centering
    {\includegraphics[height=\textwidth,width=\textwidth]{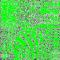}}
    \end{minipage} &
     \begin{minipage}[c]{0.1\textwidth} \centering
    {\includegraphics[height=\textwidth,width=\textwidth]{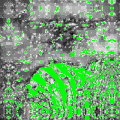}}
    \end{minipage} &
     \begin{minipage}[c]{0.1\textwidth} \centering
    {\includegraphics[height=\textwidth,width=\textwidth]{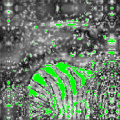}}
    \end{minipage}\\
    \end{tabular}\\
    \caption{Top to bottom: the corresponding results with noise level d = 0.2, 0.3, 0.4 respectively. The results include the noisy image (left large picture), the corresponding zoom-in parts of the noise image, the restored results by using CTV, HTV, NLTV, SVTV, SVS-NLTV, the ground-truth image respectively. The spatial distributions of S-CIELAB error (larger than 15 units) are also shown.}
    \label{253027L1zoom}
\end{figure}

\begin{figure}[htbp]
\centering
\tabcolsep=1pt
\begin{tabular}{ccccc}
    \begin{minipage}[c]{0.184\textwidth} \centering
   \subfigure{\includegraphics[width =\textwidth]{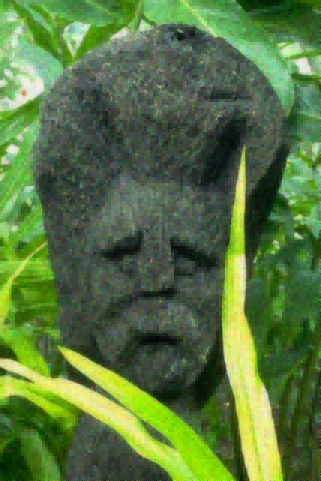}}
    \end{minipage} &
        \begin{minipage}[c]{0.184\textwidth} \centering
   \subfigure{\includegraphics[width =\textwidth]{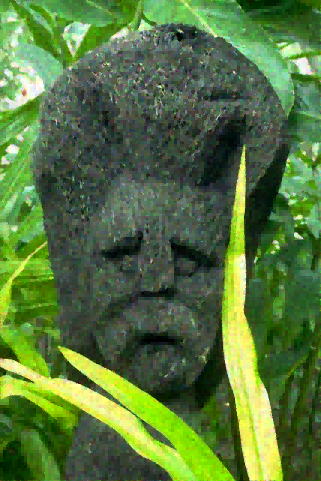}}
    \end{minipage} &
        \begin{minipage}[c]{0.184\textwidth} \centering
   \subfigure{\includegraphics[width =\textwidth]{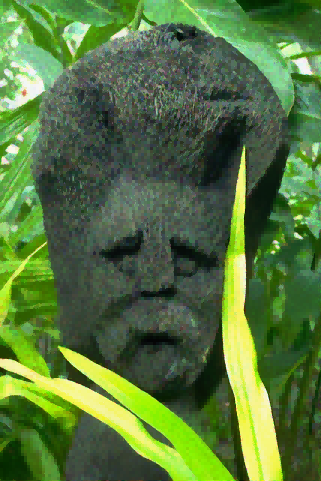}}
    \end{minipage} &
    \begin{minipage}[c]{0.184\textwidth} \centering
   \subfigure{\includegraphics[width = \textwidth]{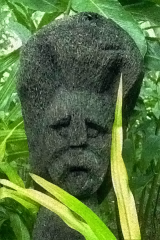}}
    \end{minipage} &
        \begin{minipage}[c]{0.184\textwidth} \centering
   \subfigure{\includegraphics[width = \textwidth]{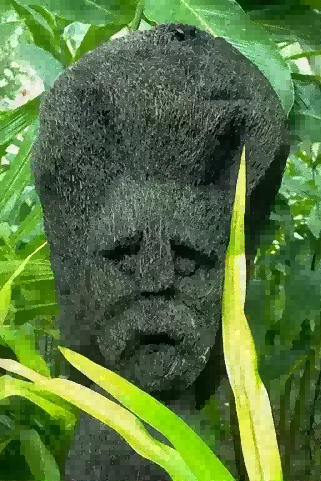}}
    \end{minipage} \\
     \begin{minipage}[c]{0.184\textwidth} \centering
   \subfigure{\includegraphics[width =\textwidth]{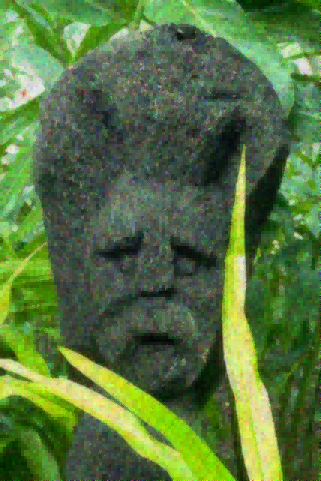}}
    \end{minipage} &
        \begin{minipage}[c]{0.184\textwidth} \centering
   \subfigure{\includegraphics[width =\textwidth]{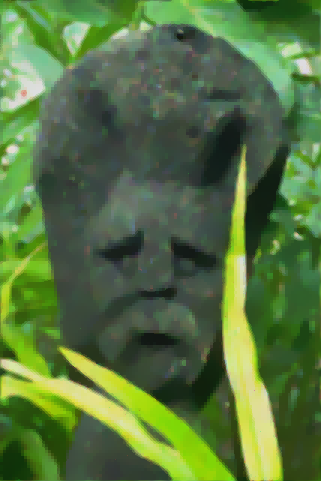}}
   \end{minipage} &
        \begin{minipage}[c]{0.184\textwidth} \centering
   \subfigure{\includegraphics[width =\textwidth]{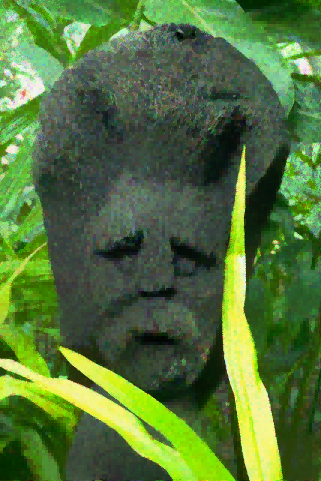}}
    \end{minipage} &
    \begin{minipage}[c]{0.184\textwidth} \centering
   \subfigure{\includegraphics[width = \textwidth]{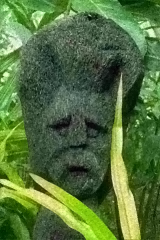}}
    \end{minipage} &
        \begin{minipage}[c]{0.184\textwidth} \centering
   \subfigure{\includegraphics[width = \textwidth]{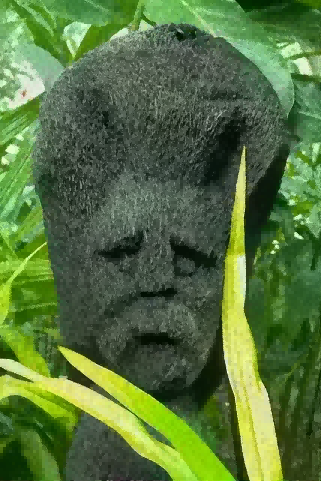}}
    \end{minipage} \\
    \begin{minipage}[c]{0.184\textwidth} \centering
   \subfigure{\includegraphics[width =\textwidth]{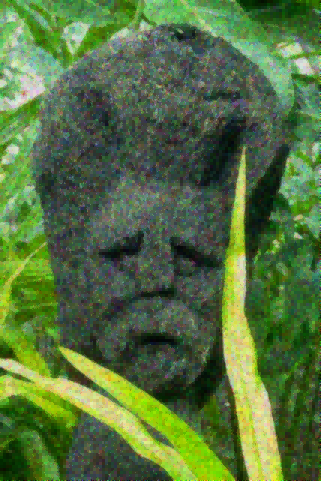}}
    \end{minipage} &
        \begin{minipage}[c]{0.184\textwidth} \centering
\subfigure{\includegraphics[width =\textwidth]{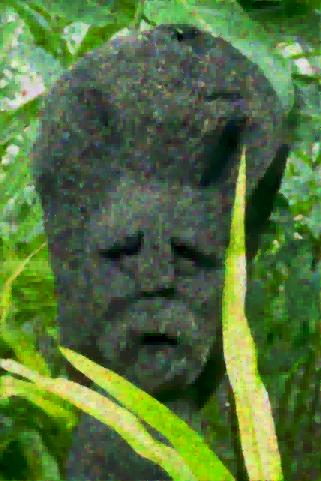}}
    \end{minipage} &
        \begin{minipage}[c]{0.184\textwidth} \centering
\subfigure{\includegraphics[width=\textwidth]{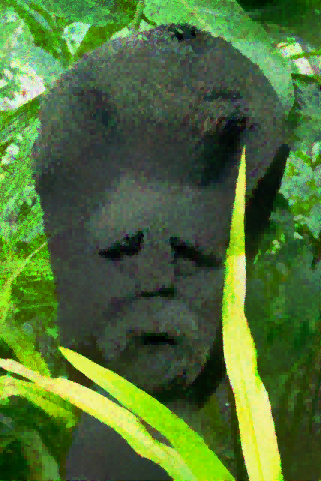}}
    \end{minipage} &
        \begin{minipage}[c]{0.184\textwidth} \centering
\subfigure{\includegraphics[width =\textwidth]{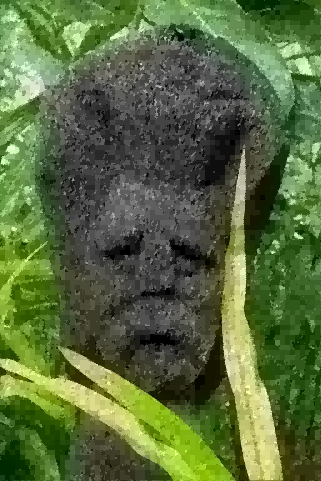}}
    \end{minipage} &
        \begin{minipage}[c]{0.184\textwidth} \centering
\subfigure{\includegraphics[width=\textwidth]{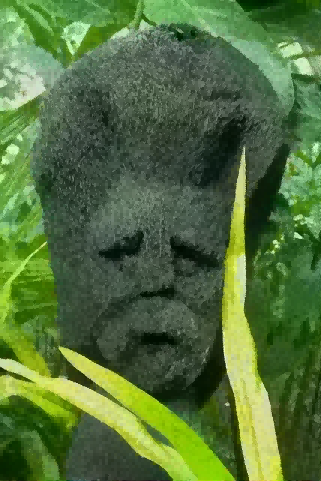}}
    \end{minipage} \\
\end{tabular}
\begin{subfigure}
\centering
\includegraphics[width=0.23\textwidth]{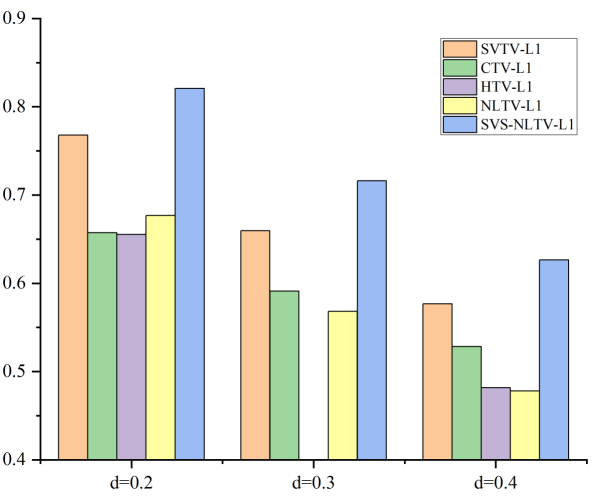}
\end{subfigure}
\begin{subfigure}
\centering
\includegraphics[width=0.23\textwidth]{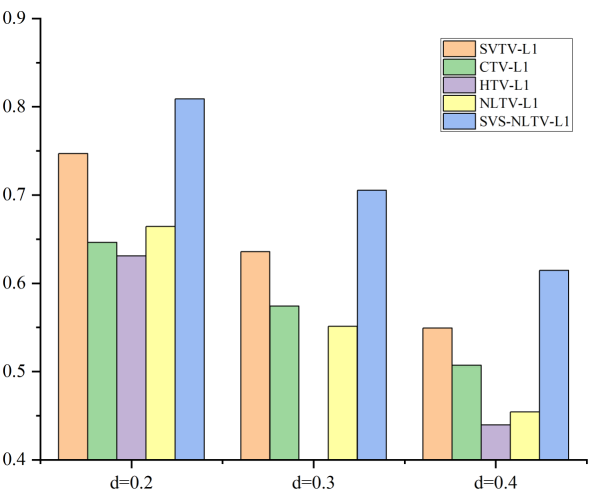}
\end{subfigure}
\begin{subfigure}
\centering
\includegraphics[width=0.23\textwidth]{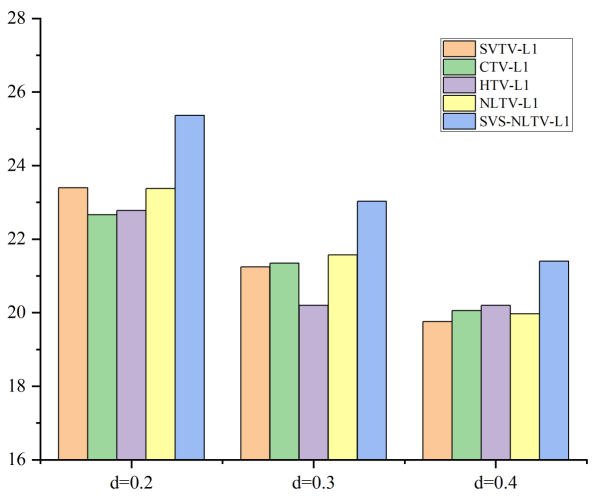}
\end{subfigure}
\begin{subfigure}
\centering
\includegraphics[width=0.23\textwidth]{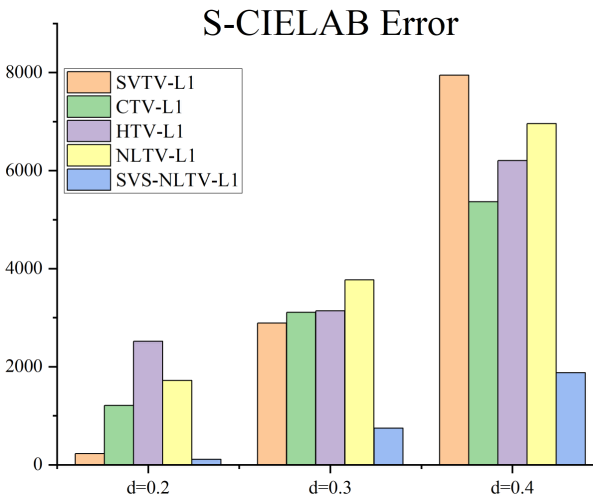}
\end{subfigure}
\caption{The first three rows: top to bottom: degraded and restored images with noise level d = 0.2, 0.3, 0.4 respectively; left to right: the restored results by using CTV, HTV, NLTV, SVTV, and SVS-NLTV respectively. The fourth row: the histograms of measure values by using different methods.}
\label{101084L1}
\end{figure}

\begin{figure}[htbp]
\centering
\begin{subfigure}
\centering
\includegraphics[width=0.23\textwidth]{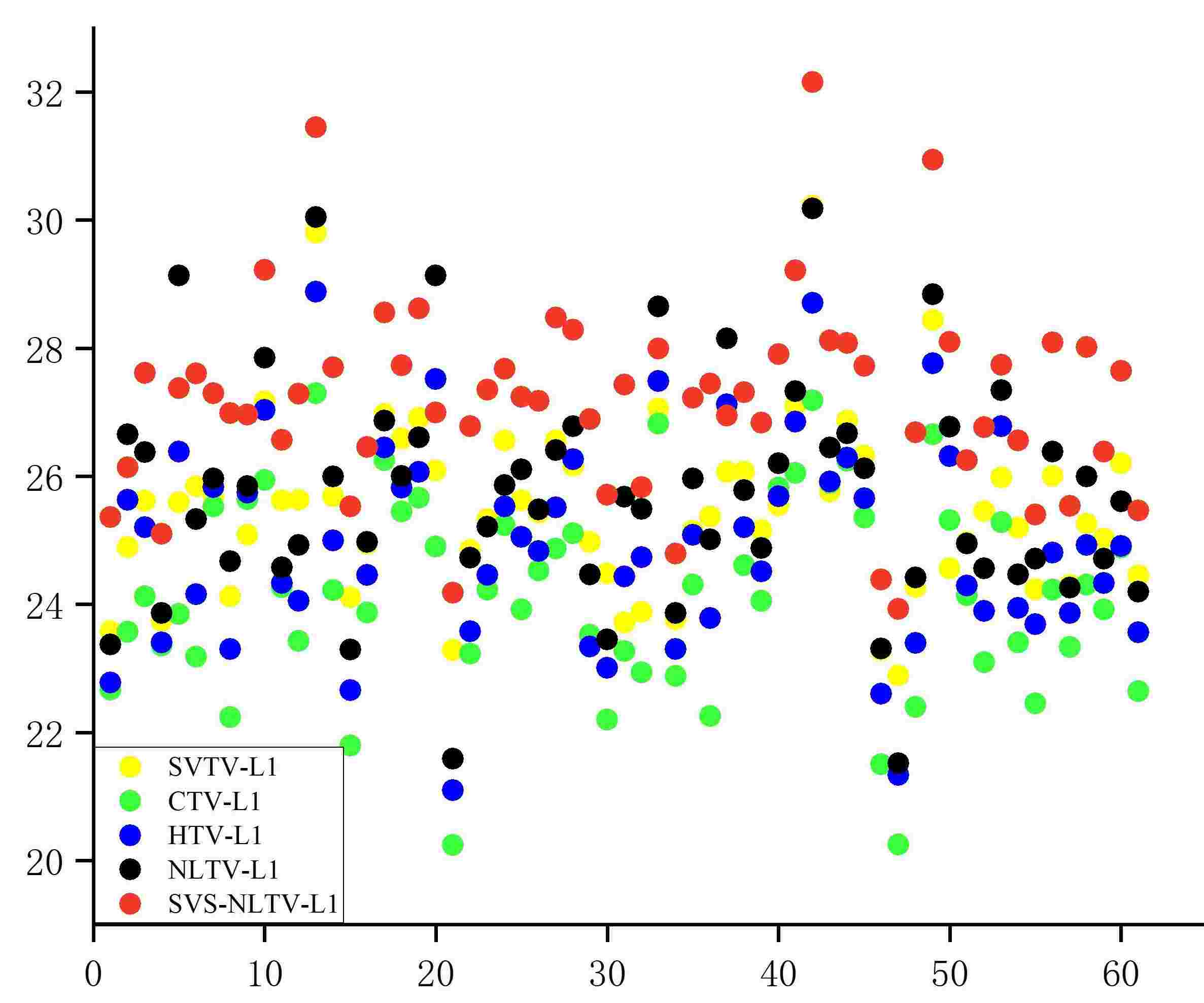}
\end{subfigure}
\begin{subfigure}
\centering
\includegraphics[width=0.23\textwidth]{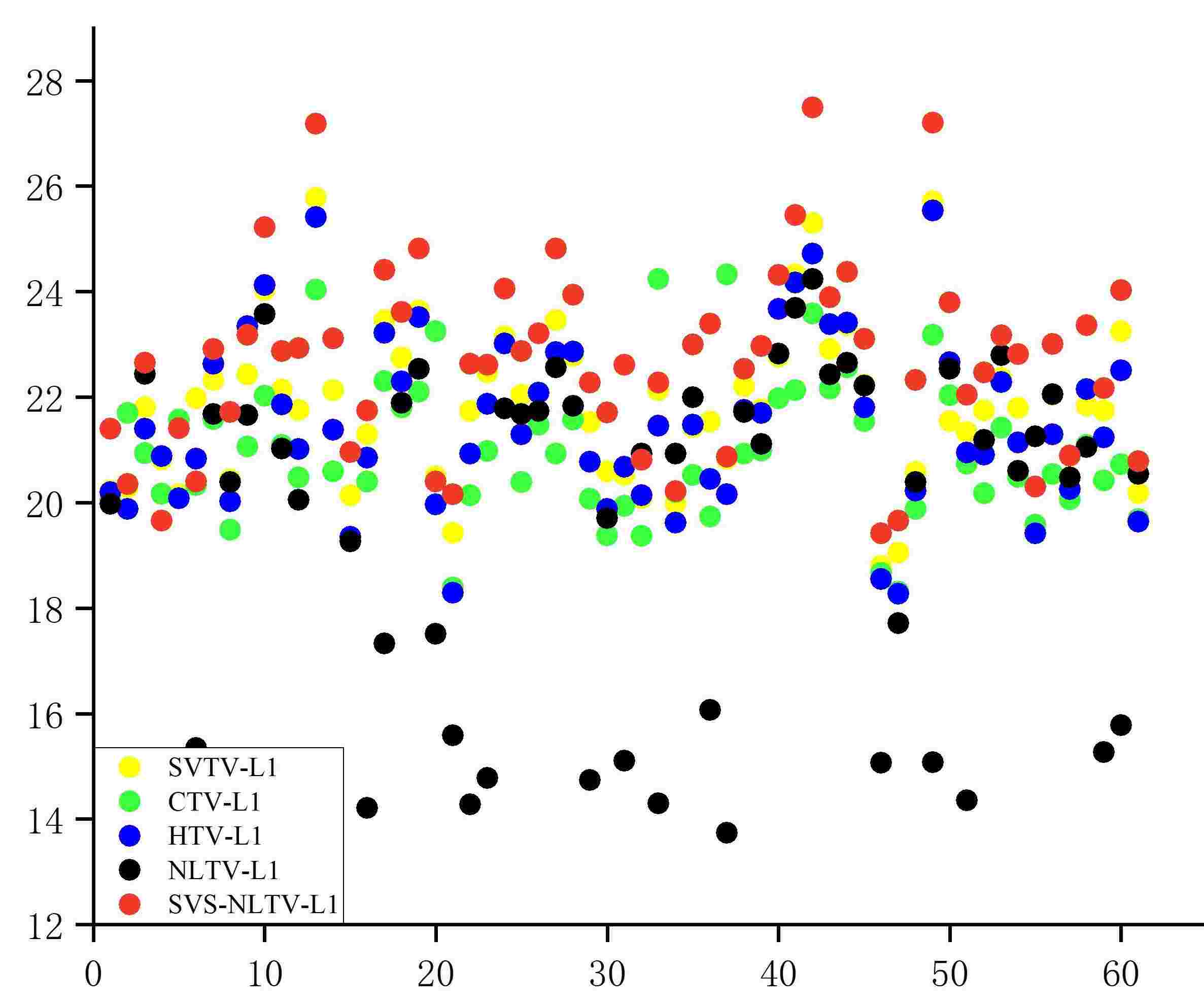}
\end{subfigure}
\begin{subfigure}
\centering
\includegraphics[width=0.23\textwidth]{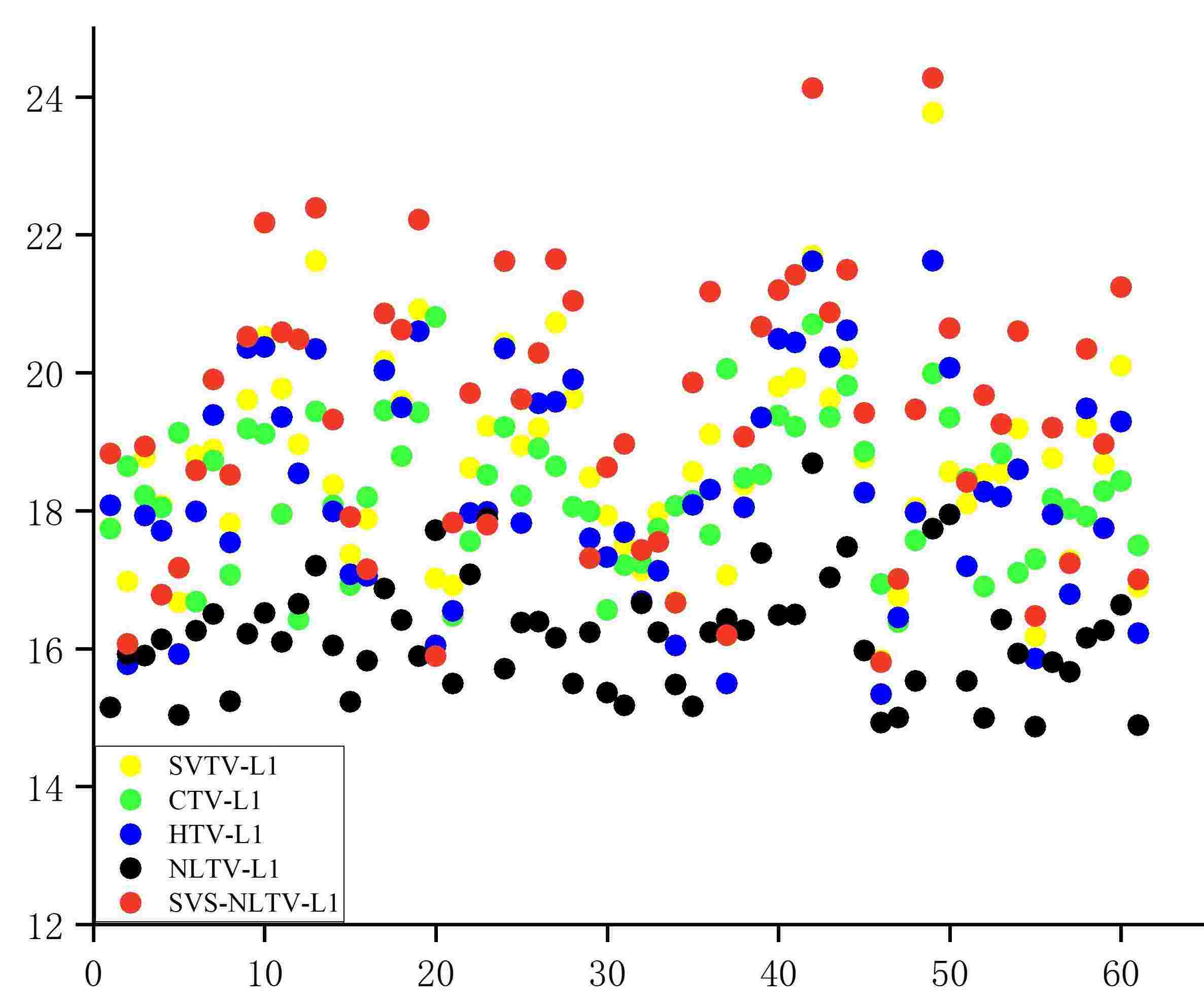}
\end{subfigure}
\begin{subfigure}
\centering
\includegraphics[width=0.23\textwidth]{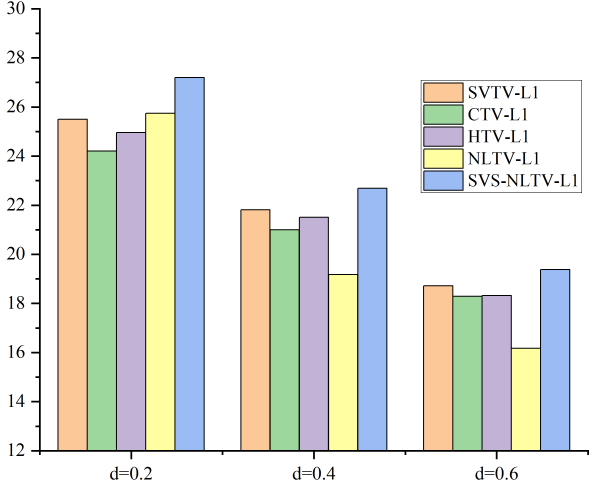}
\end{subfigure}
\caption{First to third: The spatial distributions of PSNR values of the restored results by using different methods corresponding to d = 0.2, 0.4, 0.6 respectively; Fourth: the histogram of the average PSNR values of the restored results by using different methods.}
\label{p-psnr}
\end{figure}

\begin{figure}[htbp]
\centering
\tabcolsep=1pt
\begin{minipage}[c]{0.21\textwidth} \centering
    {\includegraphics[height=\textwidth,width=\textwidth]{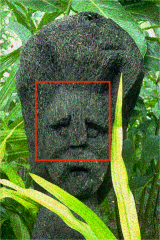}}
    \end{minipage}
\begin{tabular}{ccccccc}
    \begin{minipage}[c]{0.1\textwidth} \centering
    {\includegraphics[height=\textwidth,width=\textwidth]{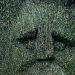}}
    \end{minipage} &
    \begin{minipage}[c]{0.1\textwidth} \centering
    {\includegraphics[height=\textwidth,width=\textwidth]{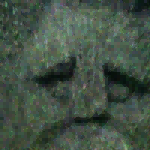}}
    \end{minipage} &
    \begin{minipage}[c]{0.1\textwidth} \centering
    {\includegraphics[height=\textwidth,width=\textwidth]{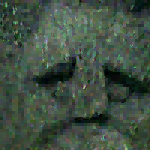}}
    \end{minipage} &
    \begin{minipage}[c]{0.1\textwidth} \centering
    {\includegraphics[height=\textwidth,width=\textwidth]{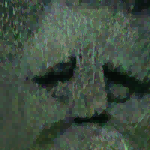}}
    \end{minipage} &
    \begin{minipage}[c]{0.1\textwidth} \centering
    {\includegraphics[height=\textwidth,width=\textwidth]{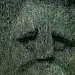}}
    \end{minipage} &
    \begin{minipage}[c]{0.1\textwidth} \centering
    {\includegraphics[height=\textwidth,width=\textwidth]{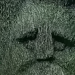}}
    \end{minipage} &
    \begin{minipage}[c]{0.1\textwidth} \centering
    {\includegraphics[height=\textwidth,width=\textwidth]{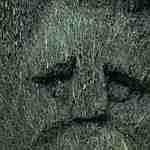}}
    \end{minipage}\\ \specialrule{0em}{1pt}{1pt}
    \begin{minipage}[c]{0.1\textwidth} \centering
    {\includegraphics[height=\textwidth,width=\textwidth]{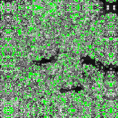}}
    \end{minipage} &
     \begin{minipage}[c]{0.1\textwidth} \centering
    {\includegraphics[height=\textwidth,width=\textwidth]{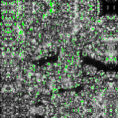}}
    \end{minipage} &
     \begin{minipage}[c]{0.1\textwidth} \centering
    {\includegraphics[height=\textwidth,width=\textwidth]{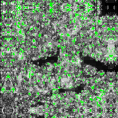}}
    \end{minipage} &
     \begin{minipage}[c]{0.1\textwidth} \centering
    {\includegraphics[height=\textwidth,width=\textwidth]{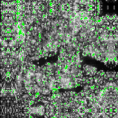}}
    \end{minipage} &
     \begin{minipage}[c]{0.1\textwidth} \centering
    {\includegraphics[height=\textwidth,width=\textwidth]{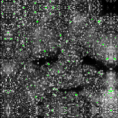}}
    \end{minipage} &
     \begin{minipage}[c]{0.1\textwidth} \centering
    {\includegraphics[height=\textwidth,width=\textwidth]{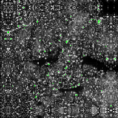}}
    \end{minipage} \\
    \end{tabular}
    \begin{minipage}[c]{0.21\textwidth} \centering
    {\includegraphics[height=\textwidth,width=\textwidth]{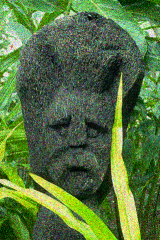}}
    \end{minipage}
\begin{tabular}{ccccccc}
     \begin{minipage}[c]{0.1\textwidth} \centering
    {\includegraphics[height=\textwidth,width=\textwidth]{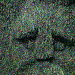}}
    \end{minipage} &
    \begin{minipage}[c]{0.1\textwidth} \centering
    {\includegraphics[height=\textwidth,width=\textwidth]{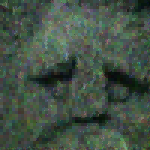}}
    \end{minipage} &
    \begin{minipage}[c]{0.1\textwidth} \centering
    {\includegraphics[height=\textwidth,width=\textwidth]{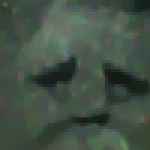}}
    \end{minipage} &
    \begin{minipage}[c]{0.1\textwidth} \centering
    {\includegraphics[height=\textwidth,width=\textwidth]{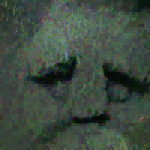}}
    \end{minipage} &
    \begin{minipage}[c]{0.1\textwidth} \centering
    {\includegraphics[height=\textwidth,width=\textwidth]{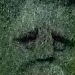}}
    \end{minipage} &
    \begin{minipage}[c]{0.1\textwidth} \centering
    {\includegraphics[height=\textwidth,width=\textwidth]{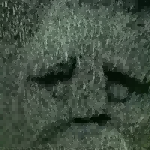}}
    \end{minipage} &
    \begin{minipage}[c]{0.1\textwidth} \centering
    {\includegraphics[height=\textwidth,width=\textwidth]{figs/101084,L1,denoise/101084_zoom.jpg}}
    \end{minipage}\\ \specialrule{0em}{1pt}{1pt}
     \begin{minipage}[c]{0.1\textwidth} \centering
    {\includegraphics[height=\textwidth,width=\textwidth]{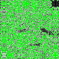}}
    \end{minipage} &
     \begin{minipage}[c]{0.1\textwidth} \centering
    {\includegraphics[height=\textwidth,width=\textwidth]{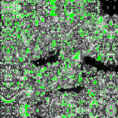}}
    \end{minipage} &
     \begin{minipage}[c]{0.1\textwidth} \centering
    {\includegraphics[height=\textwidth,width=\textwidth]{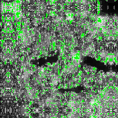}}
    \end{minipage} &
     \begin{minipage}[c]{0.1\textwidth} \centering
    {\includegraphics[height=\textwidth,width=\textwidth]{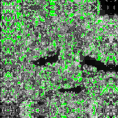}}
    \end{minipage} &
     \begin{minipage}[c]{0.1\textwidth} \centering
    {\includegraphics[height=\textwidth,width=\textwidth]{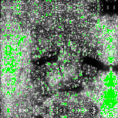}}
    \end{minipage} &
     \begin{minipage}[c]{0.1\textwidth} \centering
    {\includegraphics[height=\textwidth,width=\textwidth]{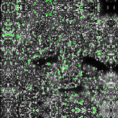}}
    \end{minipage}\\
    \end{tabular}
 \begin{minipage}[c]{0.21\textwidth} \centering
    {\includegraphics[height=\textwidth,width=\textwidth]{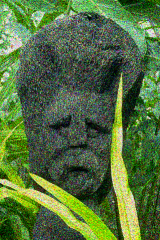}}
    \end{minipage}
\begin{tabular}{ccccccc}
     \begin{minipage}[c]{0.1\textwidth} \centering
    {\includegraphics[height=\textwidth,width=\textwidth]{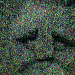}}
    \end{minipage} &
    \begin{minipage}[c]{0.1\textwidth} \centering
    {\includegraphics[height=\textwidth,width=\textwidth]{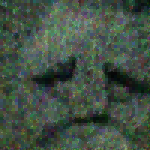}}
    \end{minipage} &
    \begin{minipage}[c]{0.1\textwidth} \centering
    {\includegraphics[height=\textwidth,width=\textwidth]{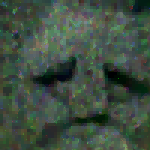}}
    \end{minipage} &
    \begin{minipage}[c]{0.1\textwidth} \centering
    {\includegraphics[height=\textwidth,width=\textwidth]{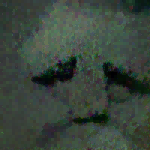}}
    \end{minipage} &
    \begin{minipage}[c]{0.1\textwidth} \centering
    {\includegraphics[height=\textwidth,width=\textwidth]{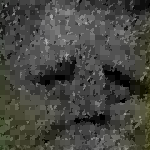}}
    \end{minipage} &
    \begin{minipage}[c]{0.1\textwidth} \centering
    {\includegraphics[height=\textwidth,width=\textwidth]{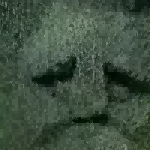}}
    \end{minipage} &
    \begin{minipage}[c]{0.1\textwidth} \centering
    {\includegraphics[height=\textwidth,width=\textwidth]{figs/101084,L1,denoise/101084_zoom.jpg}}
    \end{minipage}\\ \specialrule{0em}{1pt}{1pt}
     \begin{minipage}[c]{0.1\textwidth} \centering
    {\includegraphics[height=\textwidth,width=\textwidth]{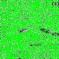}}
    \end{minipage} &
     \begin{minipage}[c]{0.1\textwidth} \centering
    {\includegraphics[height=\textwidth,width=\textwidth]{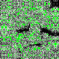}}
    \end{minipage} &
     \begin{minipage}[c]{0.1\textwidth} \centering
    {\includegraphics[height=\textwidth,width=\textwidth]{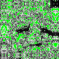}}
    \end{minipage} &
     \begin{minipage}[c]{0.1\textwidth} \centering
    {\includegraphics[height=\textwidth,width=\textwidth]{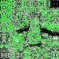}}
    \end{minipage} &
     \begin{minipage}[c]{0.1\textwidth} \centering
    {\includegraphics[height=\textwidth,width=\textwidth]{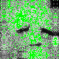}}
    \end{minipage} &
     \begin{minipage}[c]{0.1\textwidth} \centering
    {\includegraphics[height=\textwidth,width=\textwidth]{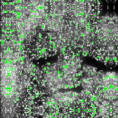}}
    \end{minipage}\\
    \end{tabular}\\
    \caption{Top to bottom: the corresponding results with noise level d = 0.2, 0.3, 0.4 respectively. The results include the noisy image (left large picture), the corresponding zoom-in parts of the noise image, the restored results by using CTV, HTV, NLTV, SVTV, SVS-NLTV, the ground-truth image respectively. The spatial distributions of S-CIELAB error (larger than 15 units) are also shown.}
    \label{101084L1zoom}
\end{figure}

\begin{figure}[htbp]
\centering
\begin{subfigure}
\centering
\includegraphics[width=0.23\textwidth]{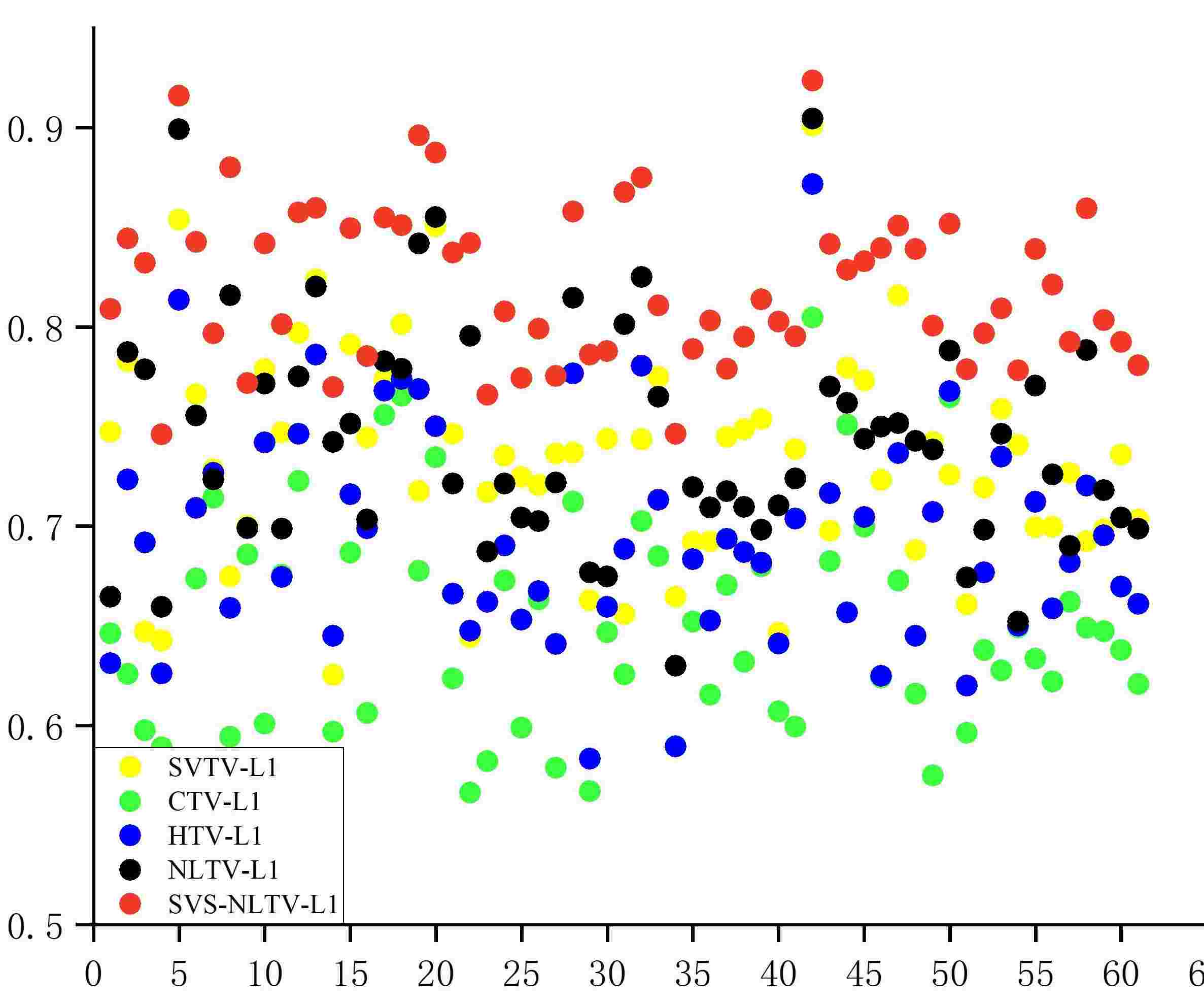}
\end{subfigure}
\begin{subfigure}
\centering
\includegraphics[width=0.23\textwidth]{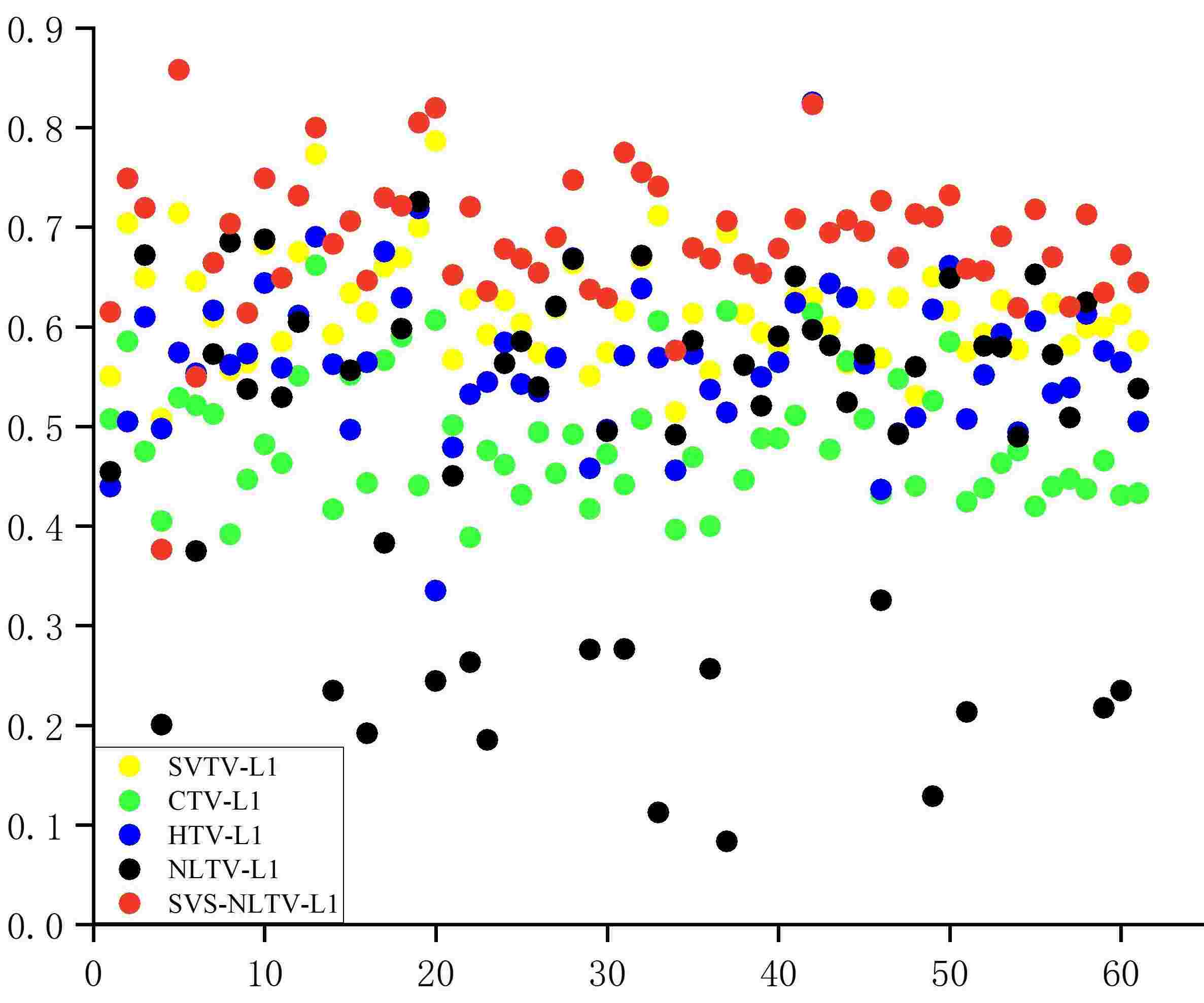}
\end{subfigure}
\begin{subfigure}
\centering
\includegraphics[width=0.23\textwidth]{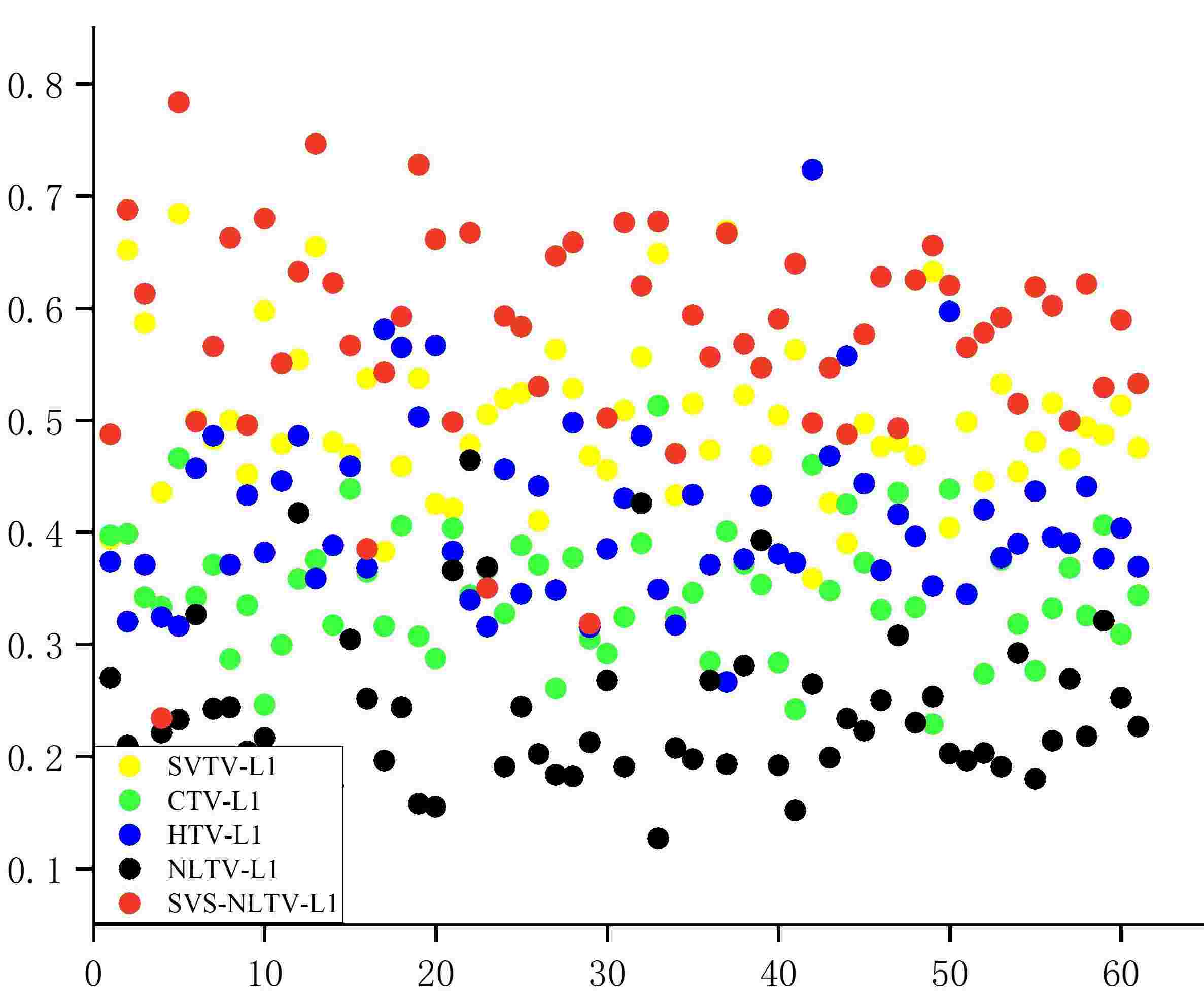}
\end{subfigure}
\begin{subfigure}
\centering
\includegraphics[width=0.23\textwidth]{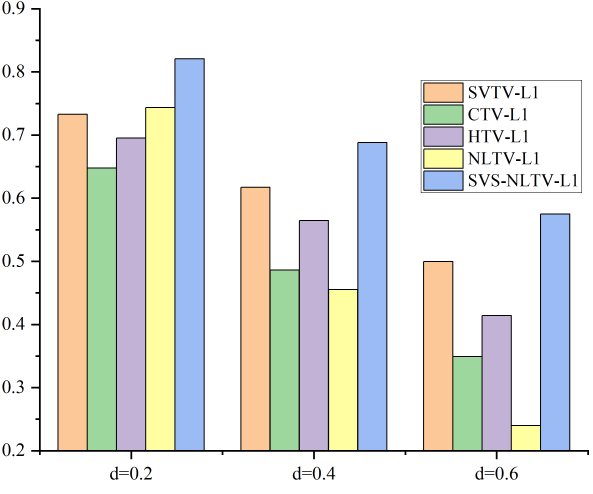}
\end{subfigure}
\caption{First to third: The spatial distributions of SSIM values of the restored results by using different methods corresponding to d = 0.2, 0.4, 0.6 respectively; Fourth: the histogram of the average SSIM values of the restored results by using different methods.}
\label{p-ssim1}
\end{figure}

\begin{figure}[htbp]
\centering
\tabcolsep=1pt
\begin{tabular}{ccccc}
    \begin{minipage}[c]{0.184\textwidth} \centering
   \subfigure{\includegraphics[width =\textwidth]{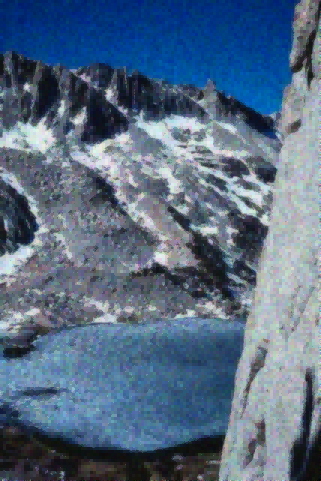}}
    \end{minipage} &
        \begin{minipage}[c]{0.184\textwidth} \centering
   \subfigure{\includegraphics[width =\textwidth]{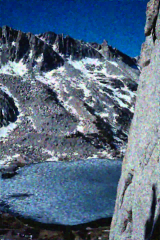}}
    \end{minipage} &
        \begin{minipage}[c]{0.184\textwidth} \centering
   \subfigure{\includegraphics[width =\textwidth]{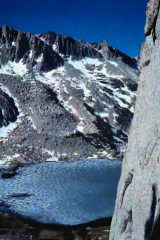}}
    \end{minipage} &
    \begin{minipage}[c]{0.184\textwidth} \centering
   \subfigure{\includegraphics[width = \textwidth]{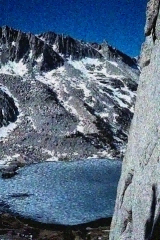}}
    \end{minipage} &
        \begin{minipage}[c]{0.184\textwidth} \centering
   \subfigure{\includegraphics[width = \textwidth]{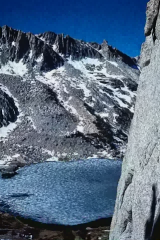}}
    \end{minipage} \\
       \begin{minipage}[c]{0.184\textwidth} \centering
   \subfigure{\includegraphics[width =\textwidth]{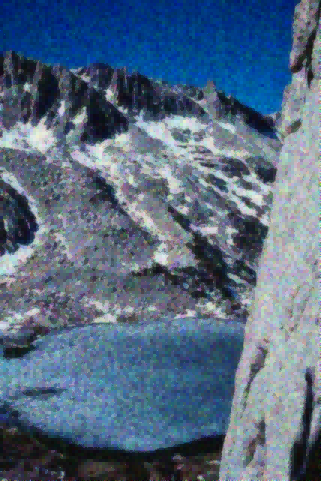}}
    \end{minipage} &
        \begin{minipage}[c]{0.184\textwidth} \centering
   \subfigure{\includegraphics[width =\textwidth]{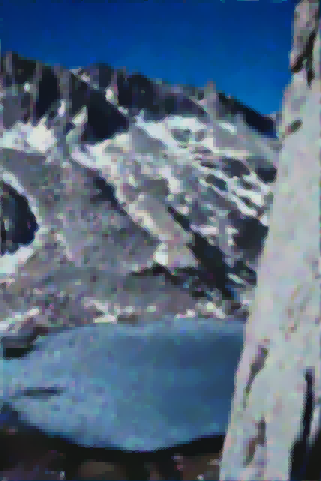}}
    \end{minipage} &
        \begin{minipage}[c]{0.184\textwidth} \centering
   \subfigure{\includegraphics[width =\textwidth]{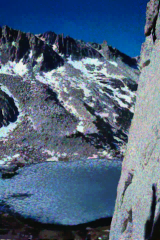}}
    \end{minipage} &
    \begin{minipage}[c]{0.184\textwidth} \centering
   \subfigure{\includegraphics[width = \textwidth]{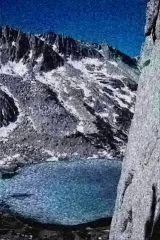}}
    \end{minipage} &
        \begin{minipage}[c]{0.184\textwidth} \centering
   \subfigure{\includegraphics[width = \textwidth]{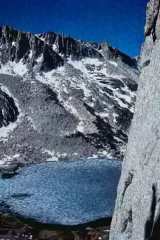}}
    \end{minipage} \\
        \begin{minipage}[c]{0.184\textwidth} \centering
\subfigure{\includegraphics[width =\textwidth]{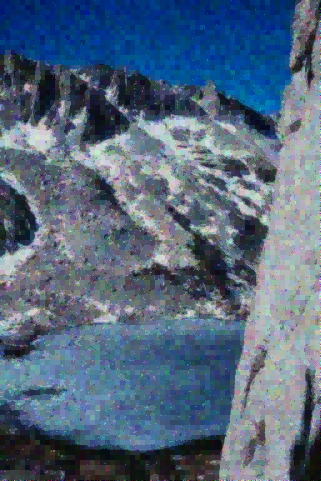}}
    \end{minipage} &
        \begin{minipage}[c]{0.184\textwidth} \centering
\subfigure{\includegraphics[width=\textwidth]{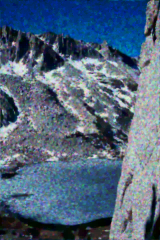}}
    \end{minipage} &
        \begin{minipage}[c]{0.184\textwidth} \centering
\subfigure{\includegraphics[width =\textwidth]{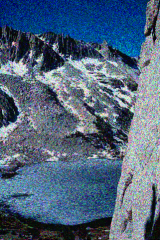}}
    \end{minipage} &
        \begin{minipage}[c]{0.184\textwidth} \centering
\subfigure{\includegraphics[width=\textwidth]{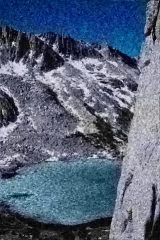}}
    \end{minipage} &
    \begin{minipage}[c]{0.184\textwidth} \centering
\subfigure{\includegraphics[width =\textwidth]{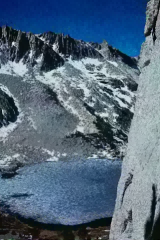}}
    \end{minipage} \\
\end{tabular}
\begin{subfigure}
\centering
\includegraphics[width=0.23\textwidth]{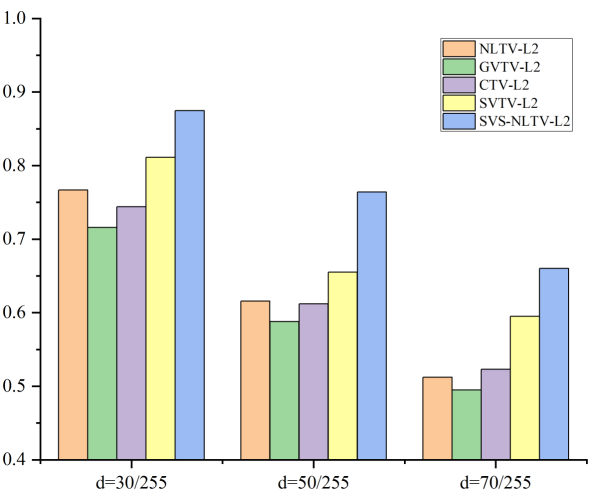}
\end{subfigure}
\begin{subfigure}
\centering
\includegraphics[width=0.23\textwidth]{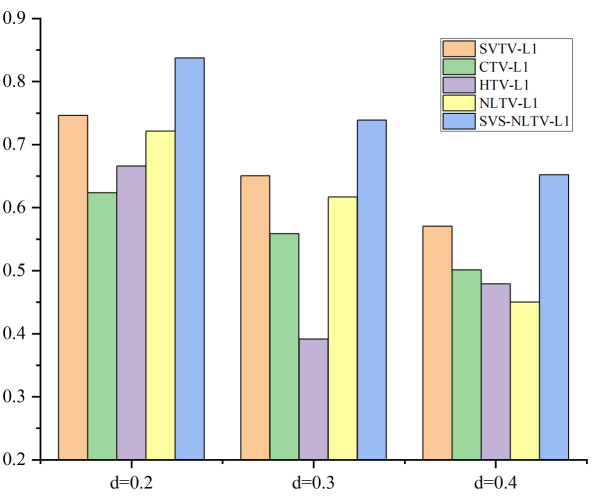}
\end{subfigure}
\begin{subfigure}
\centering
\includegraphics[width=0.23\textwidth]{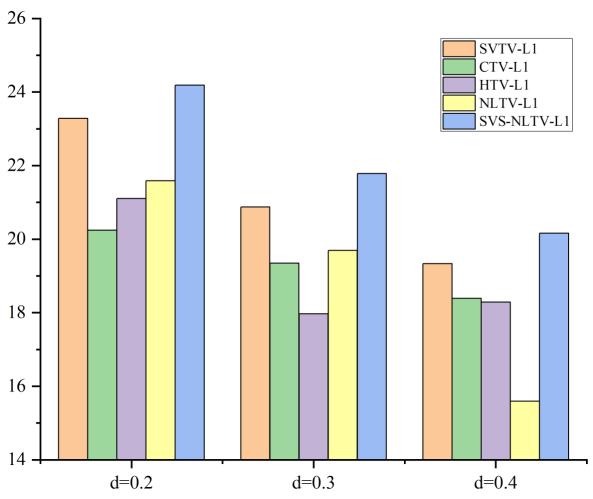}
\end{subfigure}
\begin{subfigure}
\centering
\includegraphics[width=0.23\textwidth]{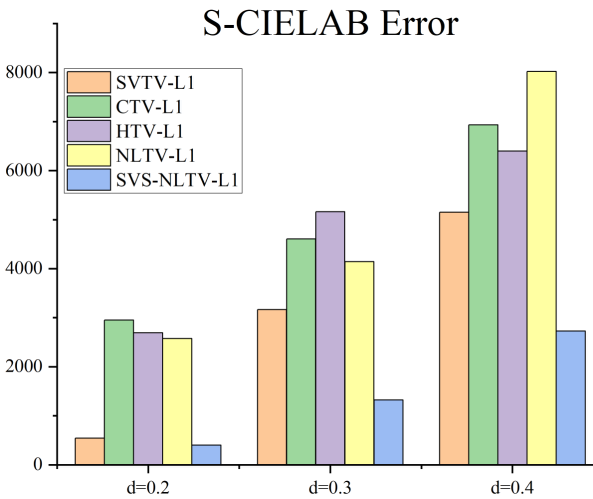}
\end{subfigure}
\caption{The first three rows: top to bottom: degraded and restored images with noise level d = 0.2, 0.3, 0.4 respectively; left to right: the restored results by using CTV, HTV, NLTV, SVTV, and SVS-NLTV respectively. The fourth row: the histograms of measure values by using different methods.}
\label{167083L1}
\end{figure}

\begin{figure}[htbp]
\centering
\begin{subfigure}
\centering
\includegraphics[width=0.23\textwidth]{figs/SSIMPoisson2.jpg}
\end{subfigure}
\begin{subfigure}
\centering
\includegraphics[width=0.23\textwidth]{figs/SSIMPoisson4.jpg}
\end{subfigure}
\begin{subfigure}
\centering
\includegraphics[width=0.23\textwidth]{figs/SSIMPoisson6.jpg}
\end{subfigure}
\begin{subfigure}
\centering
\includegraphics[width=0.23\textwidth]{figs/histogram/AverageSSIML1.png}
\end{subfigure}
\caption{First to third: The spatial distributions of SSIM values of the restored results by using different methods corresponding to d = 0.2, 0.4, 0.6 respectively; Fourth: the histogram of the average SSIM values of the restored results by using different methods.}
\label{p-ssim}
\end{figure}

\begin{figure}[htbp]
\centering
\tabcolsep=1pt
\begin{minipage}[c]{0.21\textwidth} \centering
    {\includegraphics[height=\textwidth,width=\textwidth]{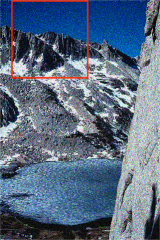}}
    \end{minipage}
\begin{tabular}{ccccccc}
    \begin{minipage}[c]{0.1\textwidth} \centering
    {\includegraphics[height=\textwidth,width=\textwidth]{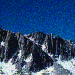}}
    \end{minipage} &
    \begin{minipage}[c]{0.1\textwidth} \centering
    {\includegraphics[height=\textwidth,width=\textwidth]{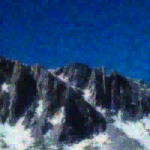}}
    \end{minipage} &
    \begin{minipage}[c]{0.1\textwidth} \centering
    {\includegraphics[height=\textwidth,width=\textwidth]{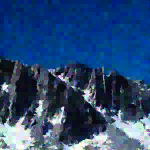}}
    \end{minipage} &
    \begin{minipage}[c]{0.1\textwidth} \centering
    {\includegraphics[height=\textwidth,width=\textwidth]{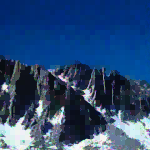}}
    \end{minipage} &
    \begin{minipage}[c]{0.1\textwidth} \centering
    {\includegraphics[height=\textwidth,width=\textwidth]{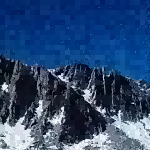}}
    \end{minipage} &
    \begin{minipage}[c]{0.1\textwidth} \centering
    {\includegraphics[height=\textwidth,width=\textwidth]{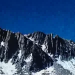}}
    \end{minipage} &
    \begin{minipage}[c]{0.1\textwidth} \centering
    {\includegraphics[height=\textwidth,width=\textwidth]{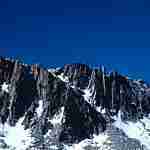}}
    \end{minipage}\\ \specialrule{0em}{1pt}{1pt}
    \begin{minipage}[c]{0.1\textwidth} \centering
    {\includegraphics[height=\textwidth,width=\textwidth]{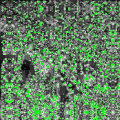}}
    \end{minipage} &
     \begin{minipage}[c]{0.1\textwidth} \centering
    {\includegraphics[height=\textwidth,width=\textwidth]{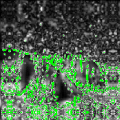}}
    \end{minipage} &
     \begin{minipage}[c]{0.1\textwidth} \centering
    {\includegraphics[height=\textwidth,width=\textwidth]{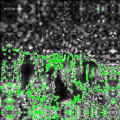}}
    \end{minipage} &
     \begin{minipage}[c]{0.1\textwidth} \centering
    {\includegraphics[height=\textwidth,width=\textwidth]{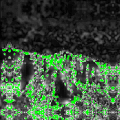}}
    \end{minipage} &
     \begin{minipage}[c]{0.1\textwidth} \centering
    {\includegraphics[height=\textwidth,width=\textwidth]{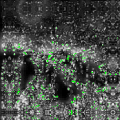}}
    \end{minipage} &
     \begin{minipage}[c]{0.1\textwidth} \centering
    {\includegraphics[height=\textwidth,width=\textwidth]{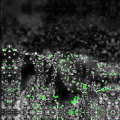}}
    \end{minipage} \\
    \end{tabular}
\begin{minipage}[c]{0.21\textwidth} \centering
    {\includegraphics[height=\textwidth,width=\textwidth]{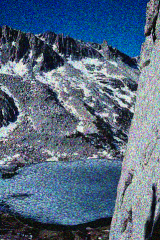}}
    \end{minipage}
\begin{tabular}{ccccccc}
    \begin{minipage}[c]{0.1\textwidth} \centering
    {\includegraphics[height=\textwidth,width=\textwidth]{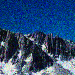}}
    \end{minipage} &
    \begin{minipage}[c]{0.1\textwidth} \centering
    {\includegraphics[height=\textwidth,width=\textwidth]{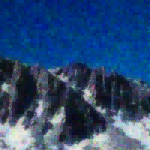}}
    \end{minipage} &
    \begin{minipage}[c]{0.1\textwidth} \centering
    {\includegraphics[height=\textwidth,width=\textwidth]{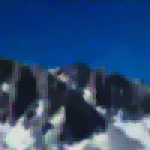}}
    \end{minipage} &
    \begin{minipage}[c]{0.1\textwidth} \centering
    {\includegraphics[height=\textwidth,width=\textwidth]{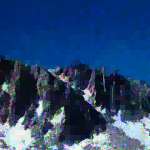}}
    \end{minipage} &
    \begin{minipage}[c]{0.1\textwidth} \centering
    {\includegraphics[height=\textwidth,width=\textwidth]{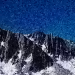}}
    \end{minipage} &
    \begin{minipage}[c]{0.1\textwidth} \centering
    {\includegraphics[height=\textwidth,width=\textwidth]{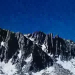}}
    \end{minipage} &
    \begin{minipage}[c]{0.1\textwidth} \centering
    {\includegraphics[height=\textwidth,width=\textwidth]{figs/167083,L1,denoise/167083_zoom.jpg}}
    \end{minipage}\\ \specialrule{0em}{1pt}{1pt}
    \begin{minipage}[c]{0.1\textwidth} \centering
    {\includegraphics[height=\textwidth,width=\textwidth]{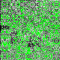}}
    \end{minipage} &
     \begin{minipage}[c]{0.1\textwidth} \centering
    {\includegraphics[height=\textwidth,width=\textwidth]{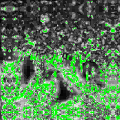}}
    \end{minipage} &
     \begin{minipage}[c]{0.1\textwidth} \centering
    {\includegraphics[height=\textwidth,width=\textwidth]{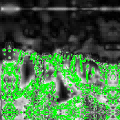}}
    \end{minipage} &
     \begin{minipage}[c]{0.1\textwidth} \centering
    {\includegraphics[height=\textwidth,width=\textwidth]{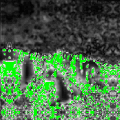}}
    \end{minipage} &
     \begin{minipage}[c]{0.1\textwidth} \centering
    {\includegraphics[height=\textwidth,width=\textwidth]{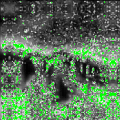}}
    \end{minipage} &
     \begin{minipage}[c]{0.1\textwidth} \centering
    {\includegraphics[height=\textwidth,width=\textwidth]{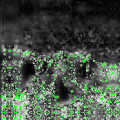}}
    \end{minipage} \\
    \end{tabular}
    \begin{minipage}[c]{0.21\textwidth} \centering
    {\includegraphics[height=\textwidth,width=\textwidth]{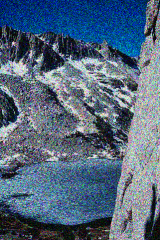}}
    \end{minipage}
\begin{tabular}{ccccccc}
     \begin{minipage}[c]{0.1\textwidth} \centering
    {\includegraphics[height=\textwidth,width=\textwidth]{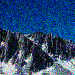}}
    \end{minipage} &
    \begin{minipage}[c]{0.1\textwidth} \centering
    {\includegraphics[height=\textwidth,width=\textwidth]{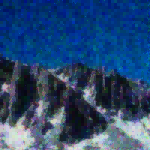}}
    \end{minipage} &
    \begin{minipage}[c]{0.1\textwidth} \centering
    {\includegraphics[height=\textwidth,width=\textwidth]{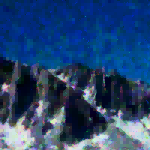}}
    \end{minipage} &
    \begin{minipage}[c]{0.1\textwidth} \centering
    {\includegraphics[height=\textwidth,width=\textwidth]{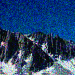}}
    \end{minipage} &
    \begin{minipage}[c]{0.1\textwidth} \centering
    {\includegraphics[height=\textwidth,width=\textwidth]{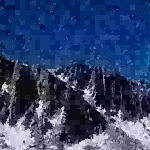}}
    \end{minipage} &
    \begin{minipage}[c]{0.1\textwidth} \centering
    {\includegraphics[height=\textwidth,width=\textwidth]{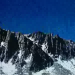}}
    \end{minipage} &
    \begin{minipage}[c]{0.1\textwidth} \centering
    {\includegraphics[height=\textwidth,width=\textwidth]{figs/167083,L1,denoise/167083_zoom.jpg}}
    \end{minipage}\\ \specialrule{0em}{1pt}{1pt}
     \begin{minipage}[c]{0.1\textwidth} \centering
    {\includegraphics[height=\textwidth,width=\textwidth]{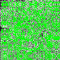}}
    \end{minipage} &
     \begin{minipage}[c]{0.1\textwidth} \centering
    {\includegraphics[height=\textwidth,width=\textwidth]{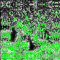}}
    \end{minipage} &
     \begin{minipage}[c]{0.1\textwidth} \centering
    {\includegraphics[height=\textwidth,width=\textwidth]{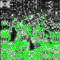}}
    \end{minipage} &
     \begin{minipage}[c]{0.1\textwidth} \centering
    {\includegraphics[height=\textwidth,width=\textwidth]{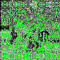}}
    \end{minipage} &
     \begin{minipage}[c]{0.1\textwidth} \centering
    {\includegraphics[height=\textwidth,width=\textwidth]{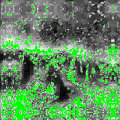}}
    \end{minipage} &
     \begin{minipage}[c]{0.1\textwidth} \centering
    {\includegraphics[height=\textwidth,width=\textwidth]{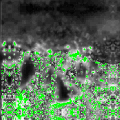}}
    \end{minipage}\\
    \end{tabular}\\
    \caption{Top to bottom: the corresponding results with noise level d = 0.2, 0.3, 0.4 respectively. The results include the noisy image (left large picture), the corresponding zoom-in parts of the noise image, the restored results by using CTV, HTV, NLTV, SVTV, SVS-NLTV, the ground-truth image respectively. The spatial distributions of S-CIELAB error (larger than 15 units) are also shown.}
    \label{167083L1zoom}
\end{figure}

\begin{figure}[htbp]
\centering
\begin{subfigure}
\centering
\includegraphics[width=0.23\textwidth]{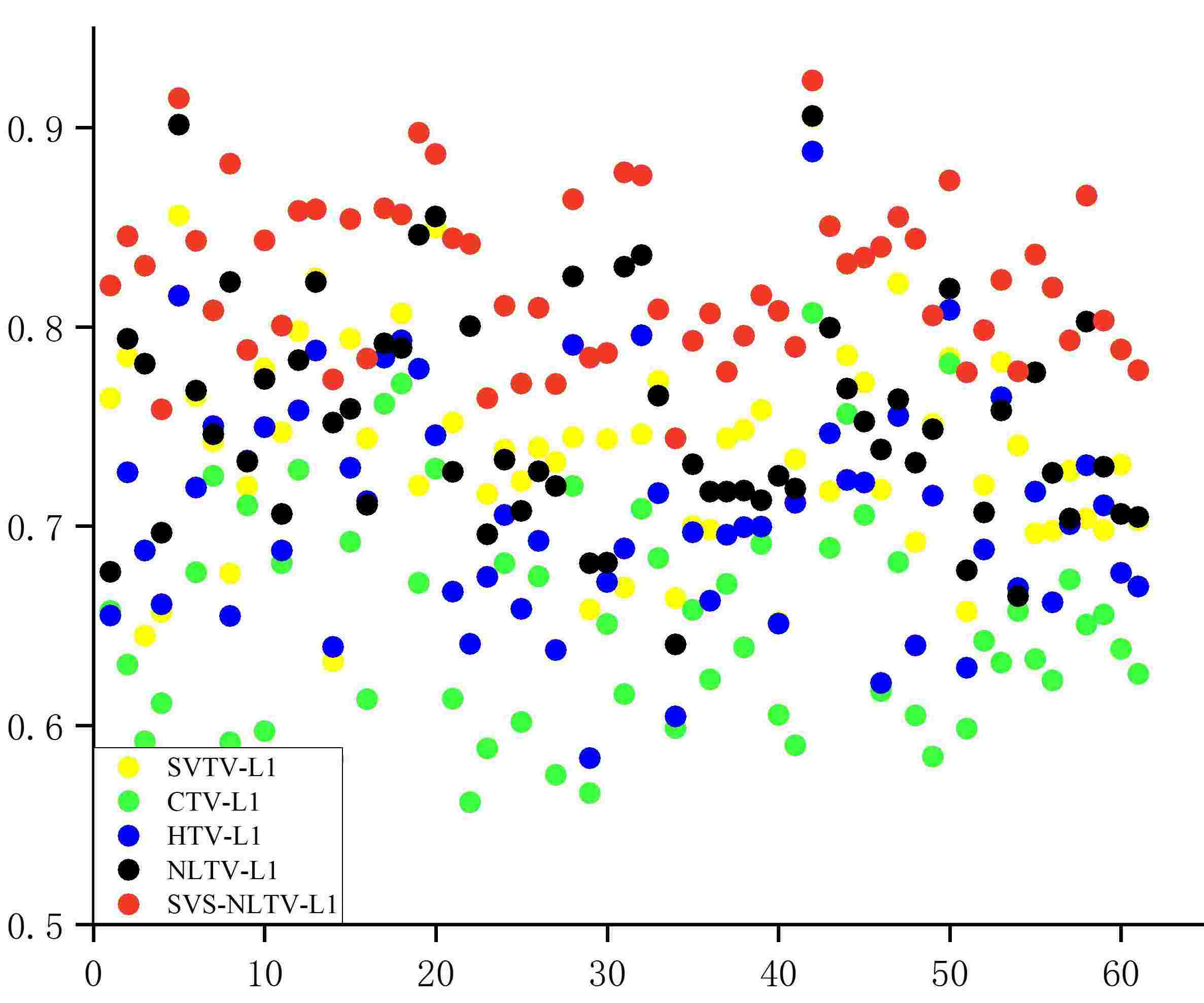}
\end{subfigure}
\begin{subfigure}
\centering
\includegraphics[width=0.23\textwidth]{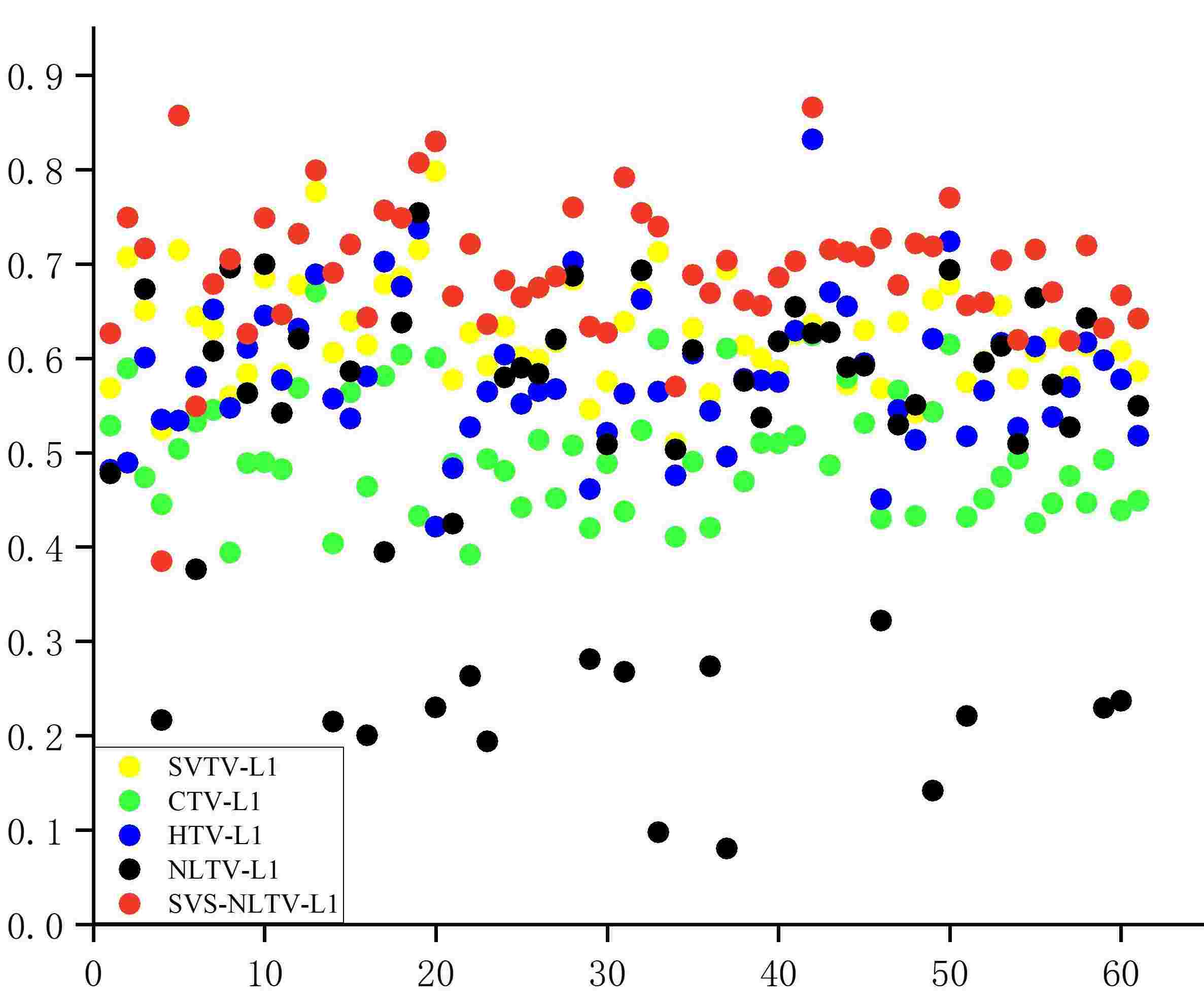}
\end{subfigure}
\begin{subfigure}
\centering
\includegraphics[width=0.23\textwidth]{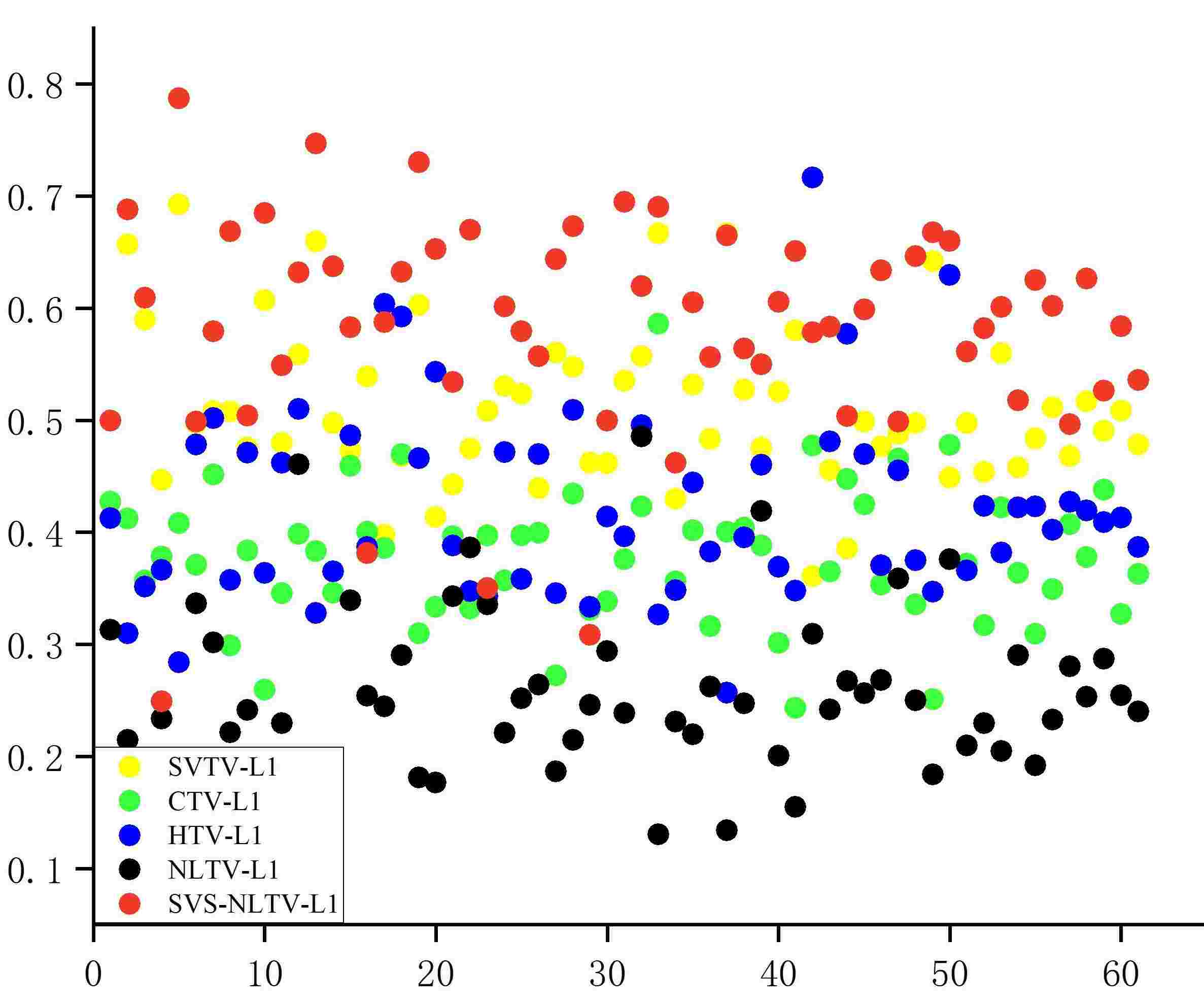}
\end{subfigure}
\begin{subfigure}
\centering
\includegraphics[width=0.23\textwidth]{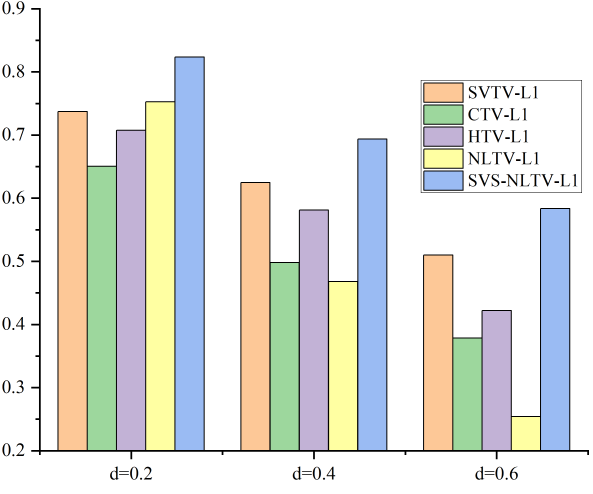}
\end{subfigure}
\caption{First to third: The spatial distributions of QSSIM values of the restored results by using different methods corresponding to d = 0.2, 0.4, 0.6 respectively; Fourth: the histogram of the average QSSIM values of the restored results by using different methods.}
\label{p-qssim}
\end{figure}

\begin{figure}[htbp]
\centering
\begin{subfigure}
\centering
\includegraphics[width=0.32\textwidth]{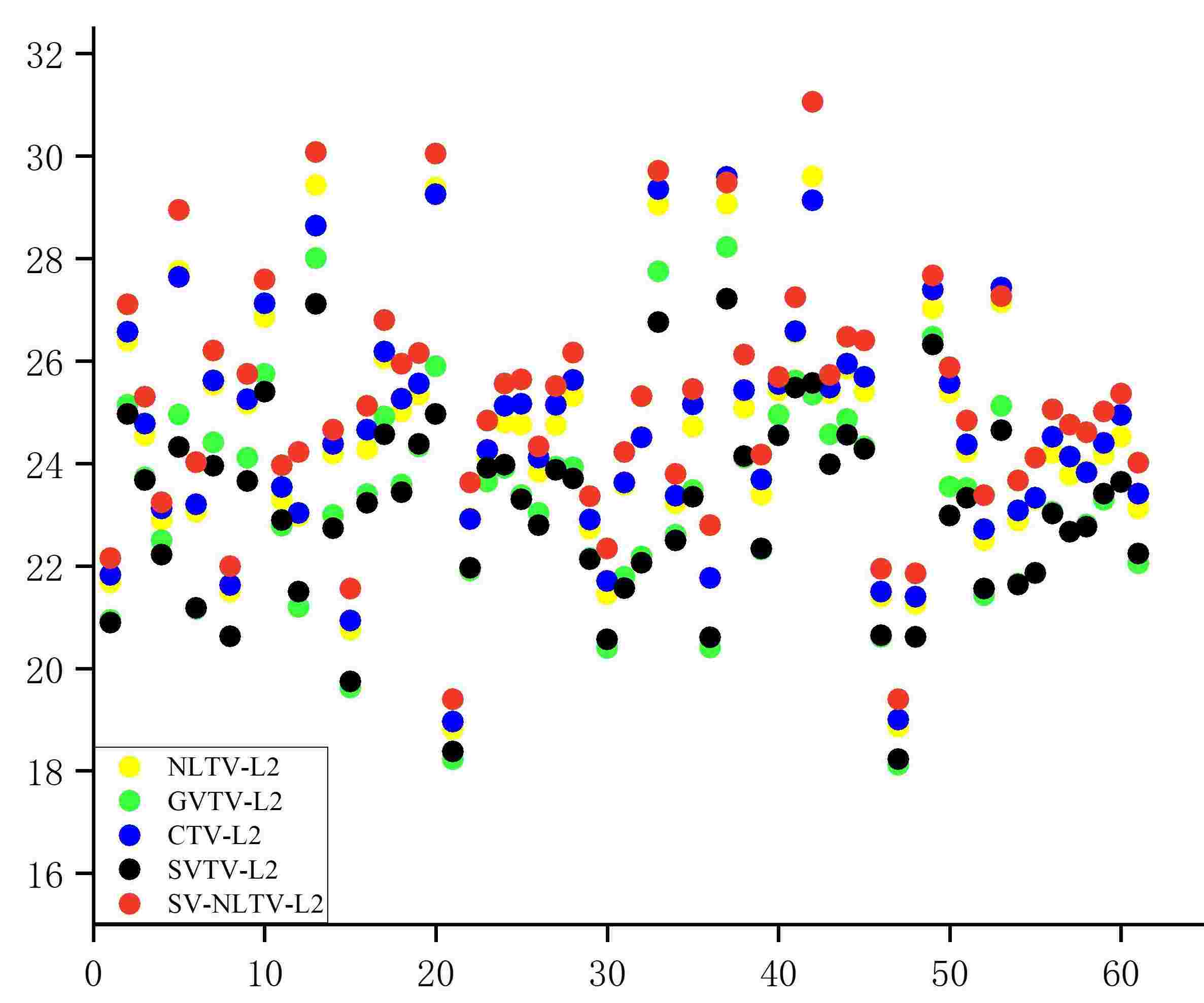}
\end{subfigure}
\begin{subfigure}
\centering
\includegraphics[width=0.32\textwidth]{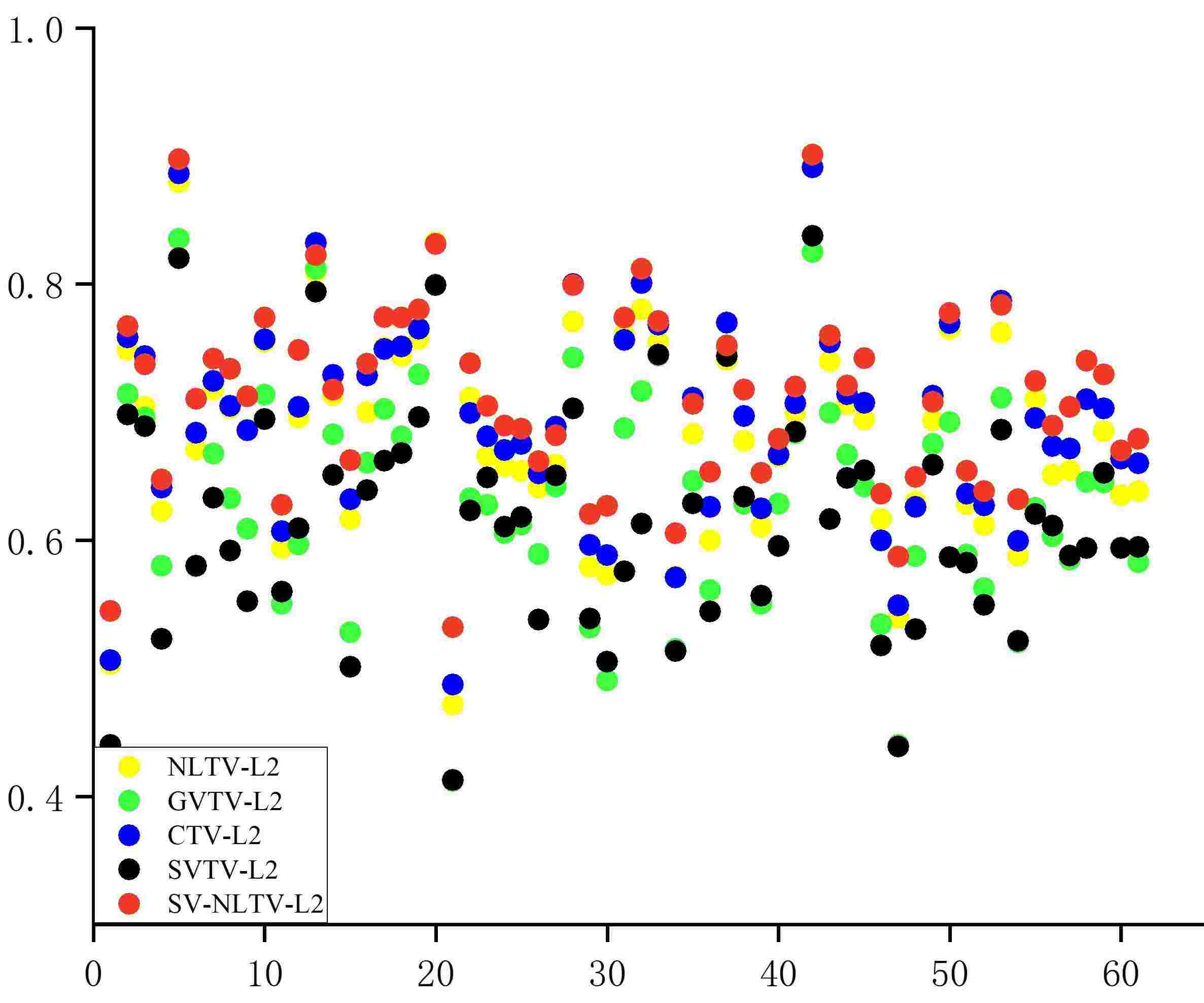}
\end{subfigure}
\begin{subfigure}
\centering
\includegraphics[width=0.32\textwidth]{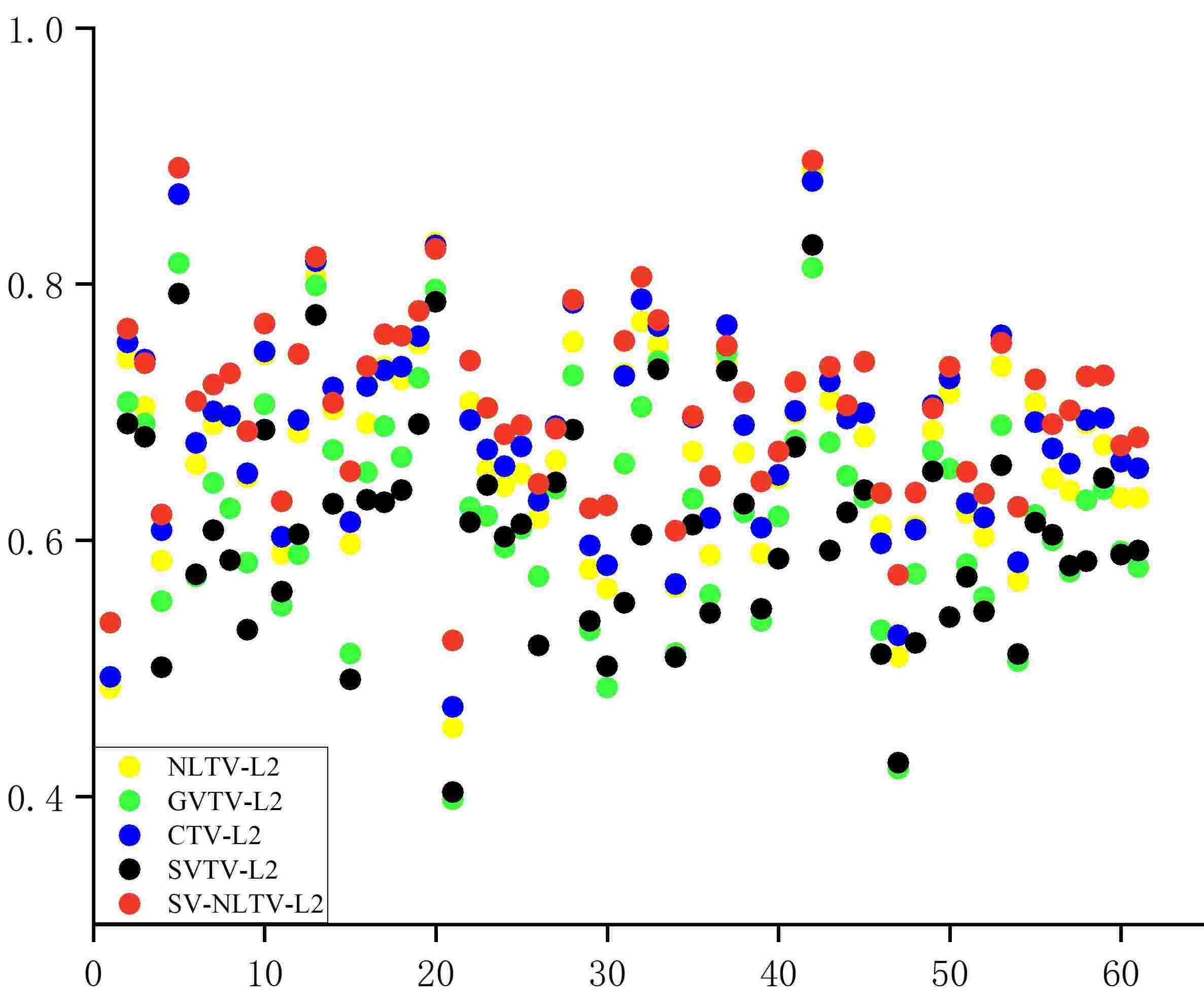}
\end{subfigure}
\caption{The spatial distributions of PSNR, QSSIM and SSIM values of 60 images with Gaussian blur and Gaussian noise with d = 20/255.}
\label{figspatialgaussianblur}
\end{figure}

\begin{figure}[htbp]
\centering
\tabcolsep=1pt
\begin{tabular}{ccccc}
    \begin{minipage}[c]{0.184\textwidth} \centering
   \subfigure{\includegraphics[width =\textwidth]{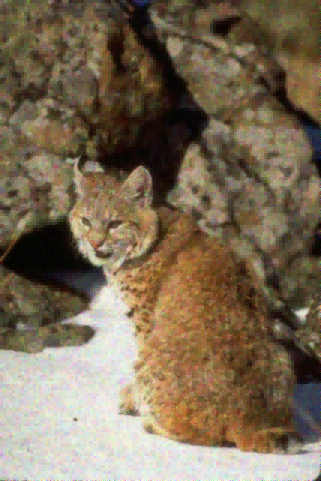}}
    \end{minipage} &
        \begin{minipage}[c]{0.184\textwidth} \centering
   \subfigure{\includegraphics[width =\textwidth]{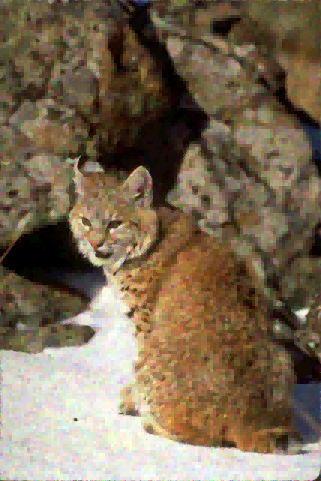}}
    \end{minipage} &
        \begin{minipage}[c]{0.184\textwidth} \centering
   \subfigure{\includegraphics[width =\textwidth]{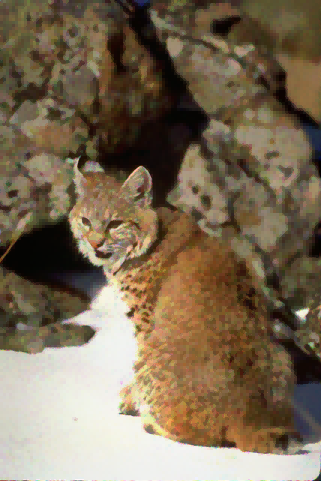}}
    \end{minipage} &
    \begin{minipage}[c]{0.184\textwidth} \centering
   \subfigure{\includegraphics[width = \textwidth]{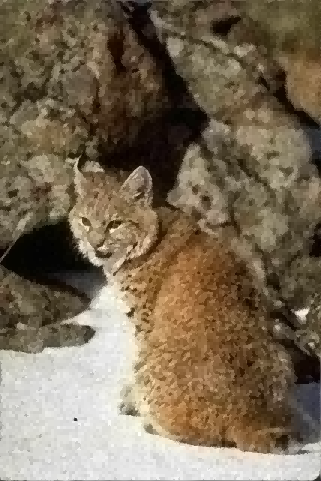}}
    \end{minipage} &
        \begin{minipage}[c]{0.184\textwidth} \centering
   \subfigure{\includegraphics[width = \textwidth]{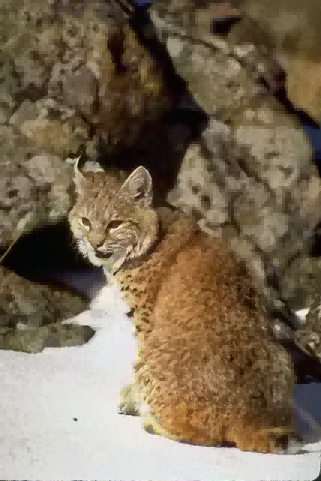}}
    \end{minipage} \\
       \begin{minipage}[c]{0.184\textwidth} \centering
   \subfigure{\includegraphics[width =\textwidth]{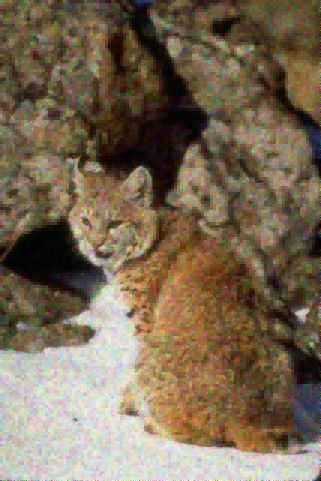}}
    \end{minipage} &
        \begin{minipage}[c]{0.184\textwidth} \centering
   \subfigure{\includegraphics[width =\textwidth]{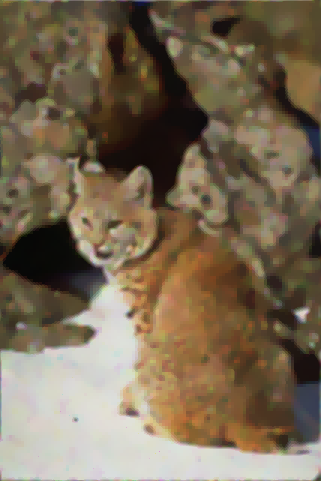}}
    \end{minipage} &
        \begin{minipage}[c]{0.184\textwidth} \centering
   \subfigure{\includegraphics[width =\textwidth]{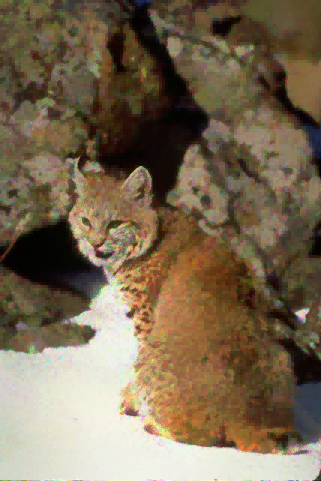}}
    \end{minipage} &
    \begin{minipage}[c]{0.184\textwidth} \centering
   \subfigure{\includegraphics[width = \textwidth]{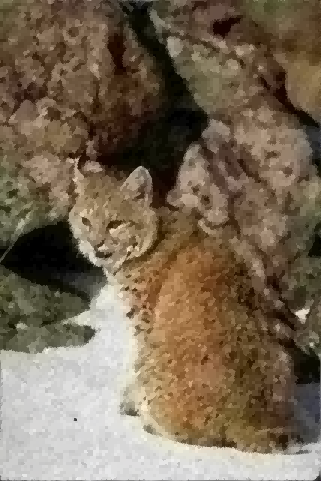}}
    \end{minipage} &
        \begin{minipage}[c]{0.184\textwidth} \centering
   \subfigure{\includegraphics[width = \textwidth]{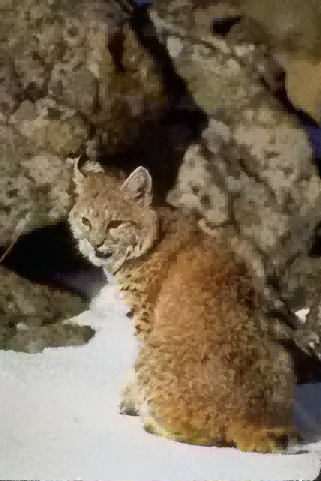}}
    \end{minipage} \\
    \begin{minipage}[c]{0.184\textwidth} \centering
   \subfigure{\includegraphics[width =\textwidth]{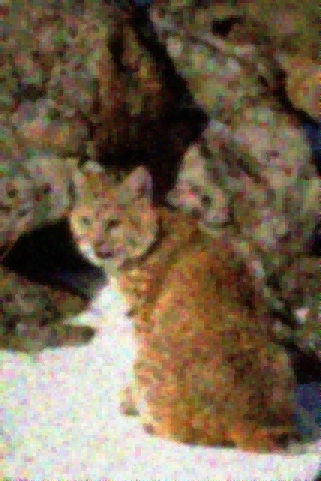}}
    \end{minipage} &
        \begin{minipage}[c]{0.18\textwidth} \centering
\subfigure{\includegraphics[width =\textwidth]{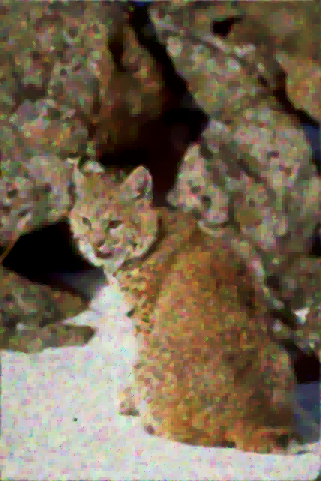}}
    \end{minipage} &
        \begin{minipage}[c]{0.184\textwidth} \centering
\subfigure{\includegraphics[width=\textwidth]{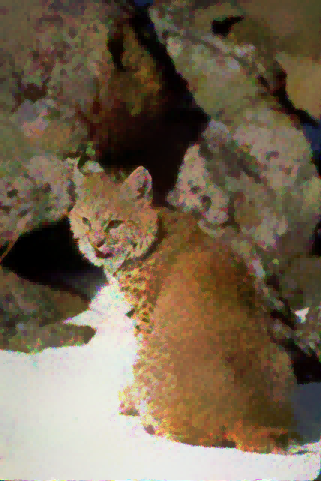}}
    \end{minipage} &
        \begin{minipage}[c]{0.184\textwidth} \centering
\subfigure{\includegraphics[width =\textwidth]{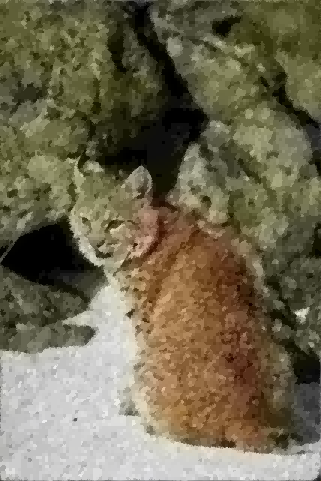}}
    \end{minipage} &
        \begin{minipage}[c]{0.184\textwidth} \centering
\subfigure{\includegraphics[width=\textwidth]{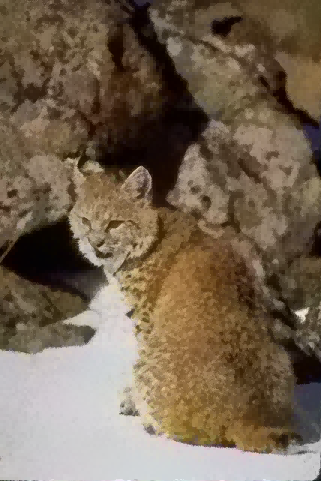}}
    \end{minipage} \\
\end{tabular}
\begin{subfigure}
\centering
\includegraphics[width=0.23\textwidth]{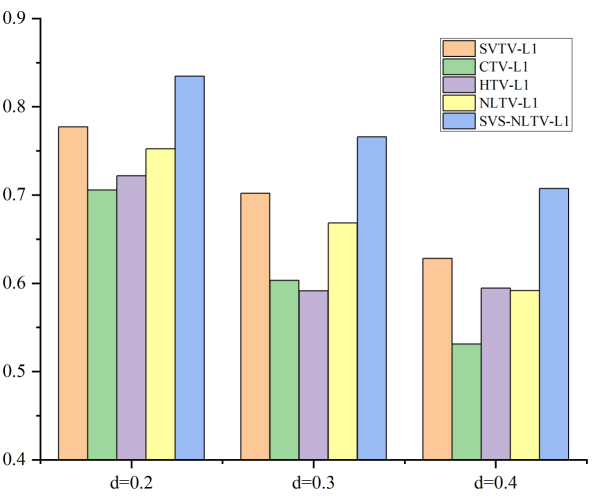}
\end{subfigure}
\begin{subfigure}
\centering
\includegraphics[width=0.23\textwidth]{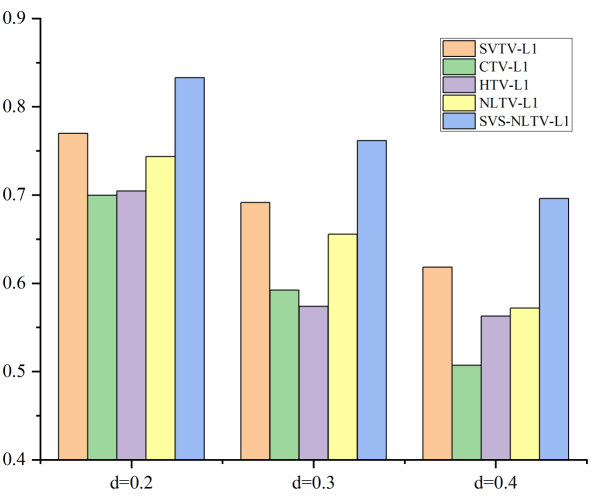}
\end{subfigure}
\begin{subfigure}
\centering
\includegraphics[width=0.23\textwidth]{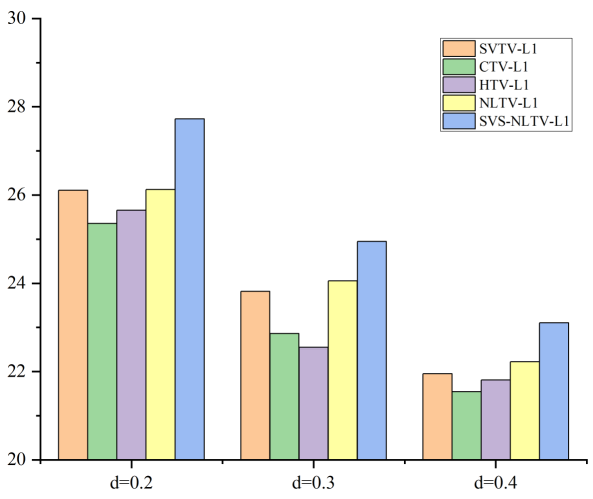}
\end{subfigure}
\begin{subfigure}
\centering
\includegraphics[width=0.23\textwidth]{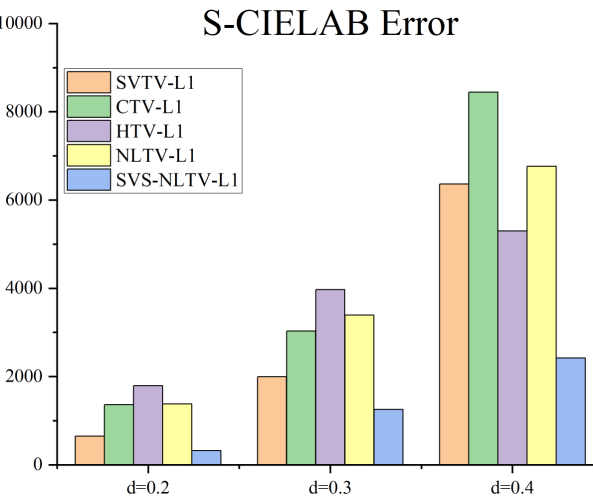}
\end{subfigure}
\caption{The first three rows: top to bottom: degraded and restored images with noise level d = 0.2, 0.3, 0.4 respectively; left to right: the restored results by using CTV, HTV, NLTV, SVTV, and SVS-NLTV respectively. The fourth row: the histograms of measure values by using different methods.}
\label{326084L1}
\end{figure}

\begin{figure}[htbp]
\centering
\begin{subfigure}
\centering
\includegraphics[width=0.32\textwidth]{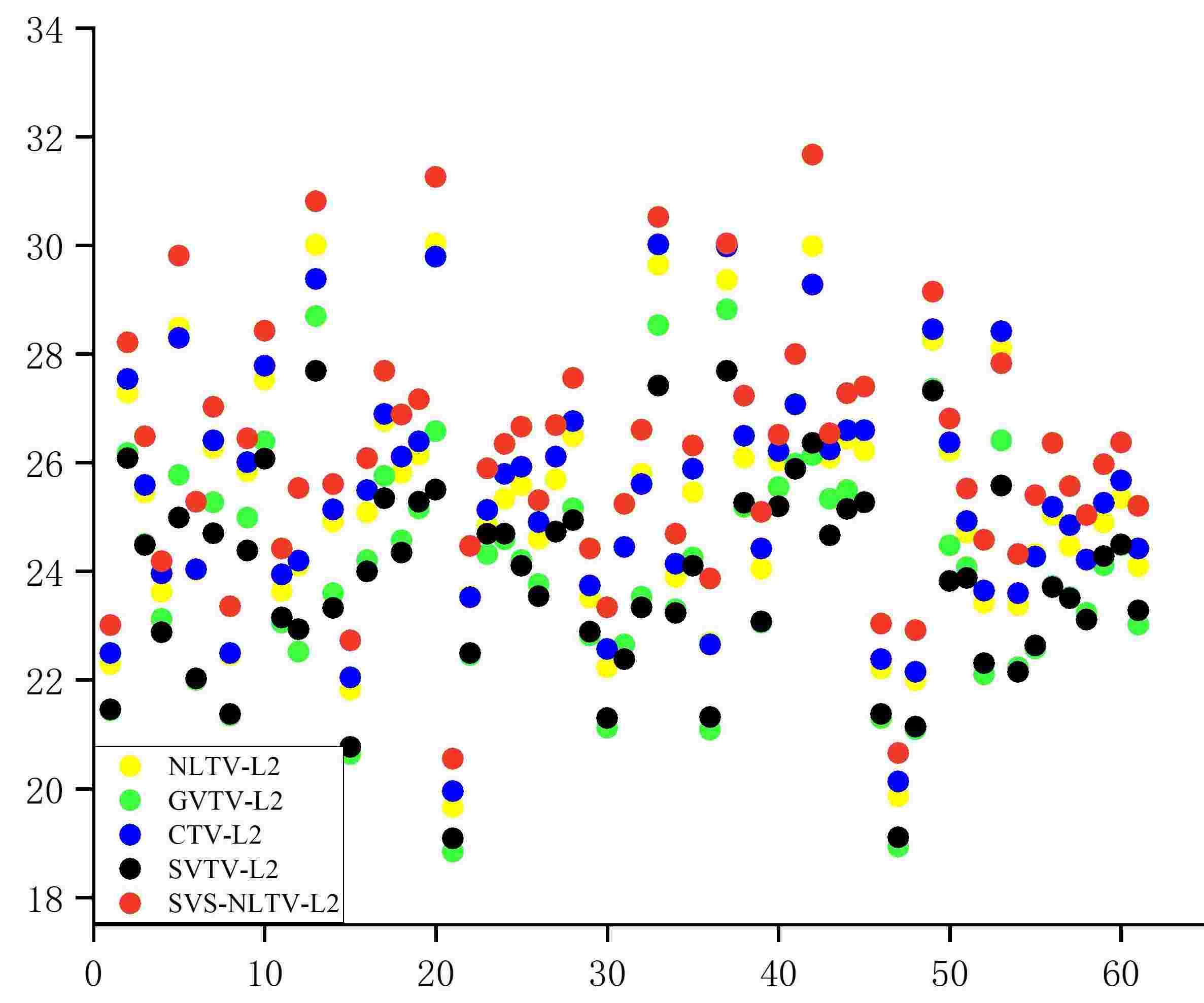}
\end{subfigure}
\begin{subfigure}
\centering
\includegraphics[width=0.32\textwidth]{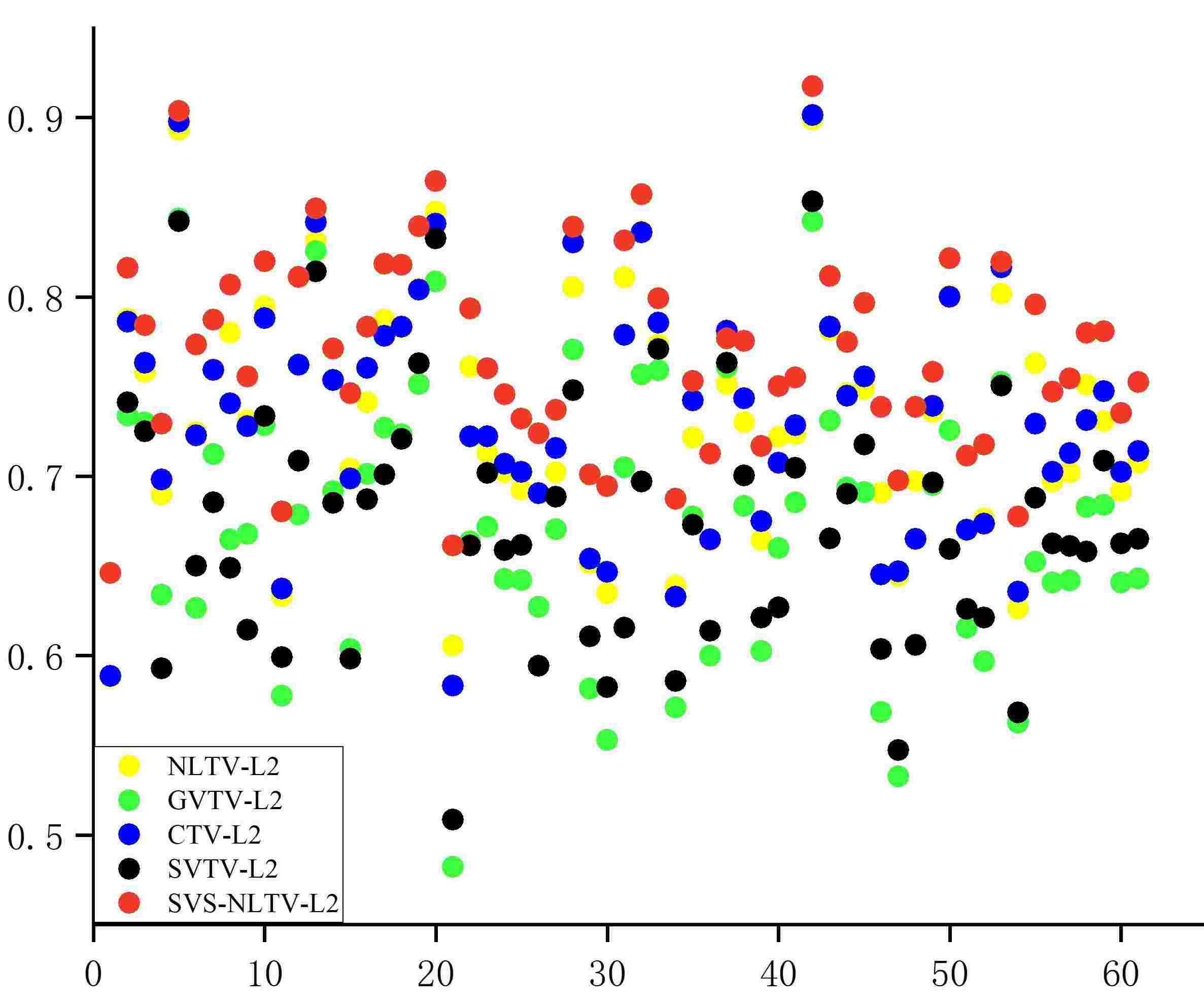}
\end{subfigure}
\begin{subfigure}
\centering
\includegraphics[width=0.32\textwidth]{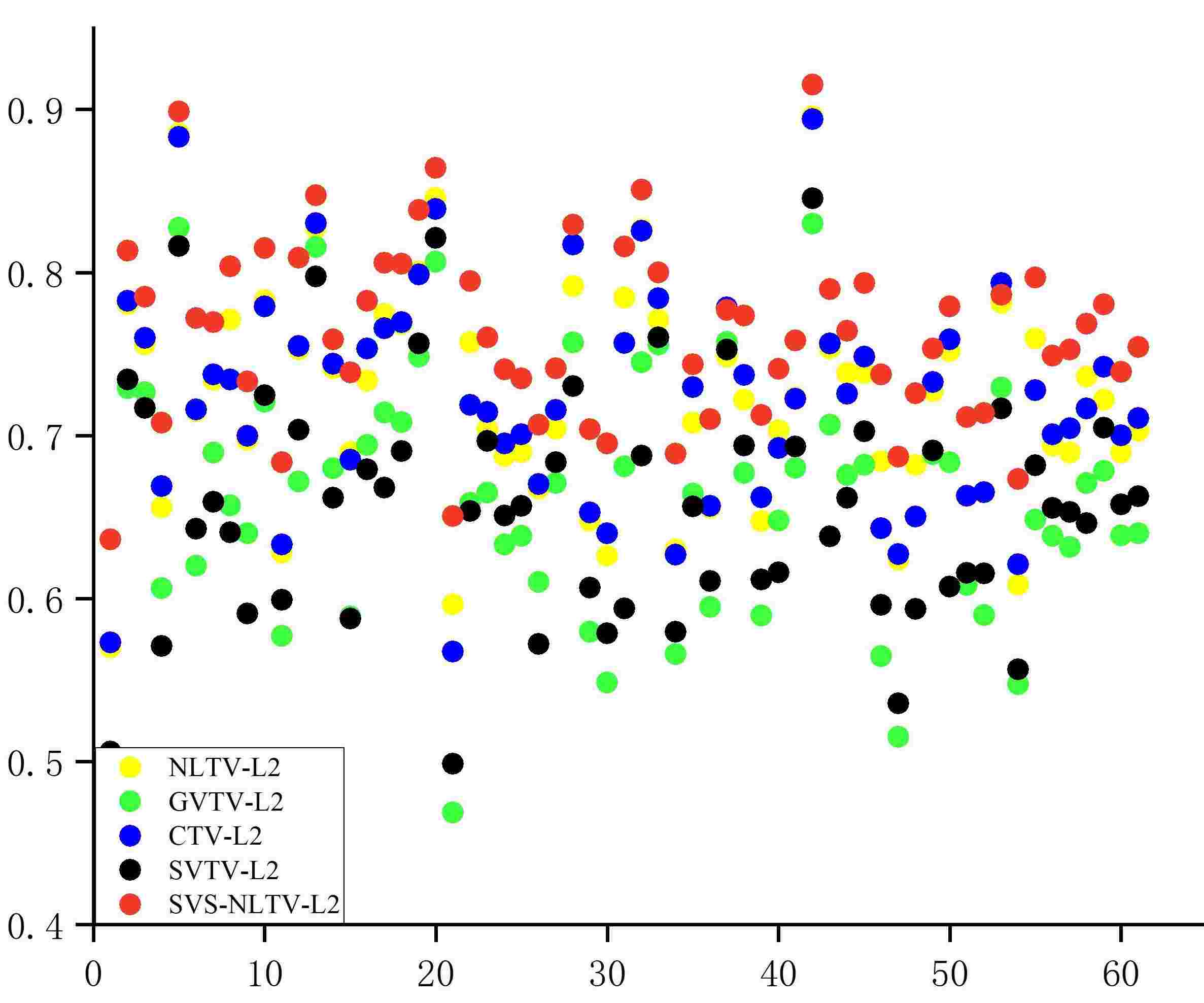}
\end{subfigure}
\caption{The spatial distributions of PSNR, QSSIM and SSIM values of 60 images with Motion blur and Gaussian noise with d = 20/255.}
\label{figspatialmotionblur}
\end{figure}

\begin{figure}[htbp]
\centering
\tabcolsep=1pt
\begin{minipage}[c]{0.21\textwidth} \centering
    {\includegraphics[height=\textwidth,width=\textwidth]{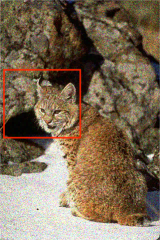}}
    \end{minipage}
\begin{tabular}{ccccccc}
    \begin{minipage}[c]{0.1\textwidth} \centering
    {\includegraphics[height=\textwidth,width=\textwidth]{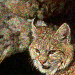}}
    \end{minipage} &
    \begin{minipage}[c]{0.1\textwidth} \centering
    {\includegraphics[height=\textwidth,width=\textwidth]{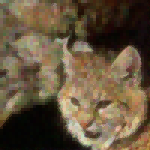}}
    \end{minipage} &
    \begin{minipage}[c]{0.1\textwidth} \centering
    {\includegraphics[height=\textwidth,width=\textwidth]{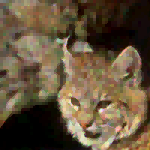}}
    \end{minipage} &
    \begin{minipage}[c]{0.1\textwidth} \centering
    {\includegraphics[height=\textwidth,width=\textwidth]{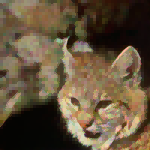}}
    \end{minipage} &
    \begin{minipage}[c]{0.1\textwidth} \centering
    {\includegraphics[height=\textwidth,width=\textwidth]{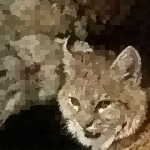}}
    \end{minipage} &
    \begin{minipage}[c]{0.1\textwidth} \centering
    {\includegraphics[height=\textwidth,width=\textwidth]{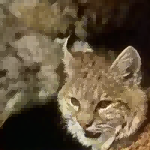}}
    \end{minipage} &
    \begin{minipage}[c]{0.1\textwidth} \centering
    {\includegraphics[height=\textwidth,width=\textwidth]{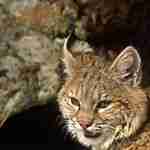}}
    \end{minipage}\\ \specialrule{0em}{1pt}{1pt}
    \begin{minipage}[c]{0.1\textwidth} \centering
    {\includegraphics[height=\textwidth,width=\textwidth]{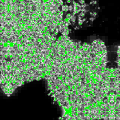}}
    \end{minipage} &
     \begin{minipage}[c]{0.1\textwidth} \centering
    {\includegraphics[height=\textwidth,width=\textwidth]{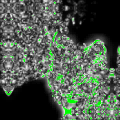}}
    \end{minipage} &
     \begin{minipage}[c]{0.1\textwidth} \centering
    {\includegraphics[height=\textwidth,width=\textwidth]{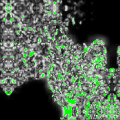}}
    \end{minipage} &
     \begin{minipage}[c]{0.1\textwidth} \centering
    {\includegraphics[height=\textwidth,width=\textwidth]{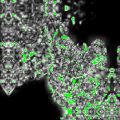}}
    \end{minipage} &
     \begin{minipage}[c]{0.1\textwidth} \centering
    {\includegraphics[height=\textwidth,width=\textwidth]{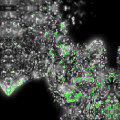}}
    \end{minipage} &
     \begin{minipage}[c]{0.1\textwidth} \centering
    {\includegraphics[height=\textwidth,width=\textwidth]{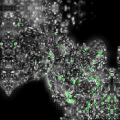}}
    \end{minipage} \\
    \end{tabular}
\begin{minipage}[c]{0.21\textwidth} \centering
    {\includegraphics[height=\textwidth,width=\textwidth]{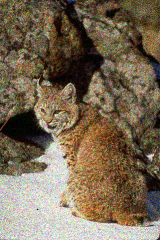}}
    \end{minipage}
\begin{tabular}{ccccccc}
    \begin{minipage}[c]{0.1\textwidth} \centering
    {\includegraphics[height=\textwidth,width=\textwidth]{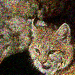}}
    \end{minipage} &
    \begin{minipage}[c]{0.1\textwidth} \centering
    {\includegraphics[height=\textwidth,width=\textwidth]{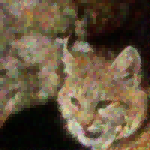}}
    \end{minipage} &
    \begin{minipage}[c]{0.1\textwidth} \centering
    {\includegraphics[height=\textwidth,width=\textwidth]{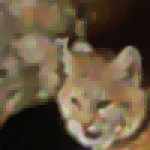}}
    \end{minipage} &
    \begin{minipage}[c]{0.1\textwidth} \centering
    {\includegraphics[height=\textwidth,width=\textwidth]{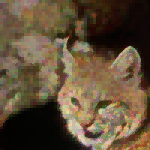}}
    \end{minipage} &
    \begin{minipage}[c]{0.1\textwidth} \centering
    {\includegraphics[height=\textwidth,width=\textwidth]{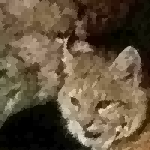}}
    \end{minipage} &
    \begin{minipage}[c]{0.1\textwidth} \centering
    {\includegraphics[height=\textwidth,width=\textwidth]{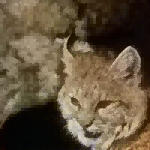}}
    \end{minipage} &
    \begin{minipage}[c]{0.1\textwidth} \centering
    {\includegraphics[height=\textwidth,width=\textwidth]{figs/326085,L1,denoise/326085_zoom.jpg}}
    \end{minipage}\\ \specialrule{0em}{1pt}{1pt}
    \begin{minipage}[c]{0.1\textwidth} \centering
    {\includegraphics[height=\textwidth,width=\textwidth]{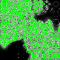}}
    \end{minipage} &
     \begin{minipage}[c]{0.1\textwidth} \centering
    {\includegraphics[height=\textwidth,width=\textwidth]{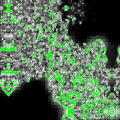}}
    \end{minipage} &
     \begin{minipage}[c]{0.1\textwidth} \centering
    {\includegraphics[height=\textwidth,width=\textwidth]{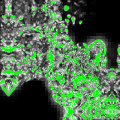}}
    \end{minipage} &
     \begin{minipage}[c]{0.1\textwidth} \centering
    {\includegraphics[height=\textwidth,width=\textwidth]{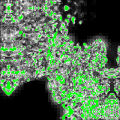}}
    \end{minipage} &
     \begin{minipage}[c]{0.1\textwidth} \centering
    {\includegraphics[height=\textwidth,width=\textwidth]{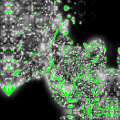}}
    \end{minipage} &
     \begin{minipage}[c]{0.1\textwidth} \centering
    {\includegraphics[height=\textwidth,width=\textwidth]{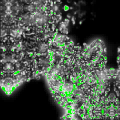}}
    \end{minipage} \\
    \end{tabular}
    \begin{minipage}[c]{0.21\textwidth} \centering
    {\includegraphics[height=\textwidth,width=\textwidth]{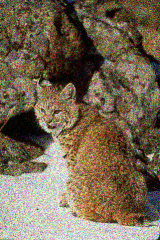}}
    \end{minipage}
\begin{tabular}{ccccccc}
     \begin{minipage}[c]{0.1\textwidth} \centering
    {\includegraphics[height=\textwidth,width=\textwidth]{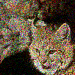}}
    \end{minipage} &
    \begin{minipage}[c]{0.1\textwidth} \centering
    {\includegraphics[height=\textwidth,width=\textwidth]{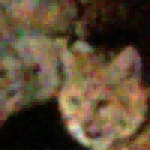}}
    \end{minipage} &
    \begin{minipage}[c]{0.1\textwidth} \centering
    {\includegraphics[height=\textwidth,width=\textwidth]{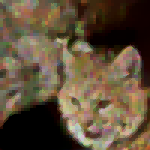}}
    \end{minipage} &
    \begin{minipage}[c]{0.1\textwidth} \centering
    {\includegraphics[height=\textwidth,width=\textwidth]{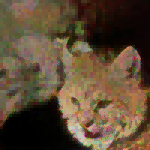}}
    \end{minipage} &
    \begin{minipage}[c]{0.1\textwidth} \centering
    {\includegraphics[height=\textwidth,width=\textwidth]{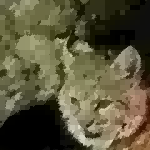}}
    \end{minipage} &
    \begin{minipage}[c]{0.1\textwidth} \centering
    {\includegraphics[height=\textwidth,width=\textwidth]{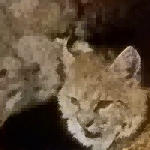}}
    \end{minipage} &
    \begin{minipage}[c]{0.1\textwidth} \centering
    {\includegraphics[height=\textwidth,width=\textwidth]{figs/326085,L1,denoise/326085_zoom.jpg}}
    \end{minipage}\\ \specialrule{0em}{1pt}{1pt}
     \begin{minipage}[c]{0.1\textwidth} \centering
    {\includegraphics[height=\textwidth,width=\textwidth]{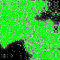}}
    \end{minipage} &
     \begin{minipage}[c]{0.1\textwidth} \centering
    {\includegraphics[height=\textwidth,width=\textwidth]{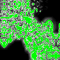}}
    \end{minipage} &
     \begin{minipage}[c]{0.1\textwidth} \centering
    {\includegraphics[height=\textwidth,width=\textwidth]{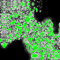}}
    \end{minipage} &
     \begin{minipage}[c]{0.1\textwidth} \centering
    {\includegraphics[height=\textwidth,width=\textwidth]{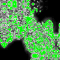}}
    \end{minipage} &
     \begin{minipage}[c]{0.1\textwidth} \centering
    {\includegraphics[height=\textwidth,width=\textwidth]{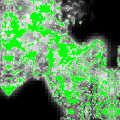}}
    \end{minipage} &
     \begin{minipage}[c]{0.1\textwidth} \centering
    {\includegraphics[height=\textwidth,width=\textwidth]{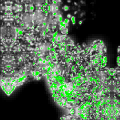}}
    \end{minipage}\\
    \end{tabular}
    \caption{Top to bottom: the corresponding results with noise level d = 0.2, 0.3, 0.4 respectively. The results include the noisy image (left large picture), the corresponding zoom-in parts of the noise image, the restored results by using CTV, HTV, NLTV, SVTV, SVS-NLTV, the ground-truth image respectively. The spatial distributions of S-CIELAB error (larger than 15 units) are also shown.}
    \label{326085L1zoom}
\end{figure}

\begin{table}[htbp]
\centering
\caption{Measure values of the restored results (Motion blur) by using different methods}
\begin{tabular}{|c|c|c|c|c|c|c|}
\hline
\multicolumn{1}{|l|}{} & \multicolumn{1}{l|}{} & CTV & GVTV  & NLTV & SV-TV & Proposed\\ \hline
\multirow{4}{*}{Fig \ref{figmotionblur} (a)}  & QSSIM  & 0.66495 & 0.6063 & 0.69742 & 0.6058 & \textbf{0.73872}\\ \cline{2-7} 
& SSIM     & 0.65032 & 0.59372 & 0.68193 & 0.59372 & \textbf{0.72605}\\ \cline{2-7} 
& PSNR  & 22.1474 & 21.0914  & 21.9889 & 21.1379 & \textbf{22.9145} \\ \cline{2-7} 
& S-CIELAB   & 9259  & 13567  & 10447  & 15106  & \textbf{7365}  \\ \hline
\multirow{4}{*}{Fig \ref{figmotionblur} (b)}  & QSSIM    & 0.58332 & 0.48217  & 0.60557 &0.50845 & \textbf{0.66137} \\ \cline{2-7} 
& SSIM    & 0.56739  & 0.46872  & 0.59644 &0.49868 & \textbf{0.65047}  \\ \cline{2-7} 
& PSNR   & 19.9519 & 18.8467  &19.6638 & 19.0823 & \textbf{20.5542}       \\ \cline{2-7} 
& S-CIELAB    & 15729  & 20770     & 18613  & 18045 & \textbf{10109}  \\ \hline
\multirow{4}{*}{Fig \ref{figmotionblur} (c)}  & QSSIM  & 0.58862 & 494  & 58819 & 0.51897 & \textbf{0.64619}   \\ \cline{2-7} 
& SSIM  & 0.57229 & 0.48287  & 0.56983 & 50617 & \textbf{0.6363}   \\ \cline{2-7} 
& PSNR   & 22.4961 & 21.4421  & 22.2852 &  21.4578 & \textbf{23.0126}     \\ \cline{2-7} 
& S-CIELAB   & 12453  & 18640  & 14127  & 19495   & \textbf{10847}  \\ \hline
\multirow{4}{*}{\thead{\scriptsize Average of\\ 60 testing images}} & QSSIM    & 0.73279 & 0.67031   & 0.73297 & 0.6715         & \textbf{0.76944} \\ \cline{2-7} 
& SSIM    & 0.72219 & 0.6598 & 0.7214 & 0.6588  & \textbf{0.7634} \\ \cline{2-7} 
& PSNR    & 25.3757 & 24.0101 & 25.1935 & 23.8565 & \textbf{26.1201}      \\ \cline{2-7} 
& S-CIELAB & 60409  & 52501           & 61159  & 35116   & \textbf{32239}  \\ \hline
\end{tabular}
\label{table5.2}
\end{table}

\subsection{Image deblurring: Gaussian blur and Motion blur}

\begin{figure}[htbp]
\centering
\tabcolsep=1pt
\begin{tabular}{cccccc}
    \begin{minipage}[c]{0.16\textwidth} \centering
   \subfigure{\includegraphics[width =\textwidth]{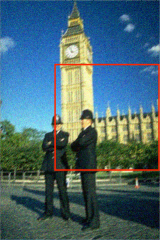}}
    \end{minipage} &
        \begin{minipage}[c]{0.16\textwidth} \centering
   \subfigure{\includegraphics[width =\textwidth]{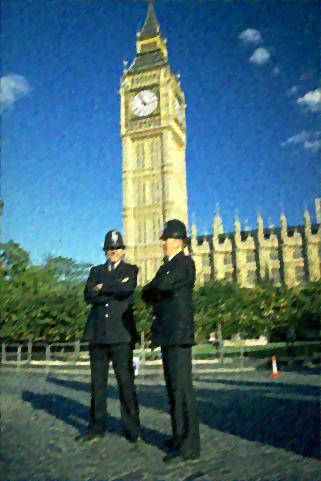}}
    \end{minipage} &
        \begin{minipage}[c]{0.16\textwidth} \centering
   \subfigure{\includegraphics[width =\textwidth]{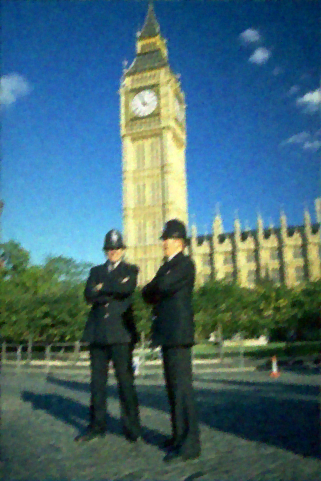}}
   \end{minipage}  &
    \begin{minipage}[c]{0.16\textwidth} \centering
   \subfigure{\includegraphics[width = \textwidth]{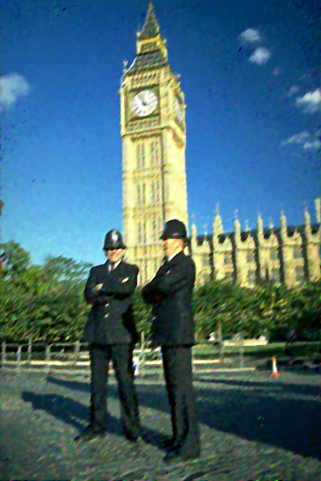}}
    \end{minipage} &
        \begin{minipage}[c]{0.16\textwidth} \centering
   \subfigure{\includegraphics[width = \textwidth]{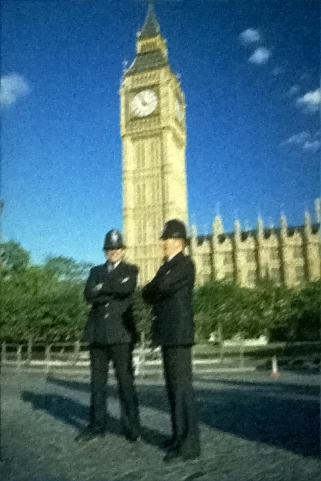}}
    \end{minipage} &
       \begin{minipage}[c]{0.16\textwidth} \centering
   \subfigure{\includegraphics[width =\textwidth]{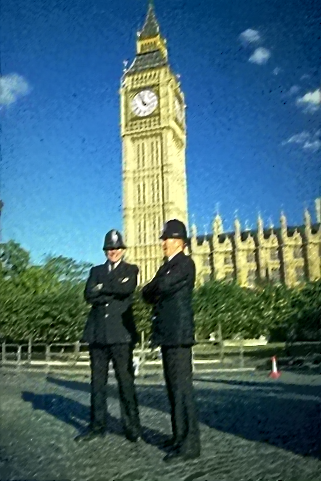}}
    \end{minipage} \\
        \begin{minipage}[c]{0.16\textwidth} \centering
   \subfigure{\includegraphics[width =\textwidth]{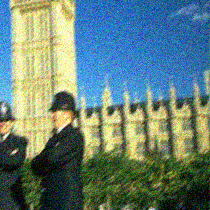}}
    \end{minipage} &
        \begin{minipage}[c]{0.16\textwidth} \centering
   \subfigure{\includegraphics[width =\textwidth]{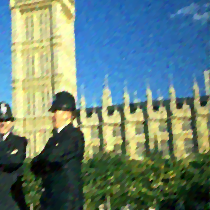}}
    \end{minipage} &
    \begin{minipage}[c]{0.17\textwidth} \centering
   \subfigure{\includegraphics[width = \textwidth]{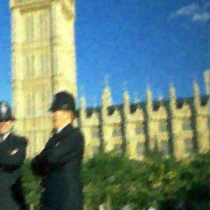}}
    \end{minipage} &
        \begin{minipage}[c]{0.16\textwidth} \centering
   \subfigure{\includegraphics[width = \textwidth]{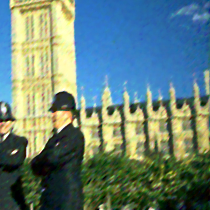}}
    \end{minipage} &
    \begin{minipage}[c]{0.16\textwidth} \centering
   \subfigure{\includegraphics[width =\textwidth]{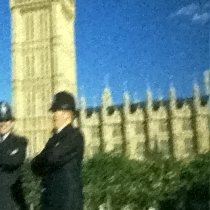}}
    \end{minipage} &
        \begin{minipage}[c]{0.16\textwidth} \centering
\subfigure{\includegraphics[width =\textwidth]{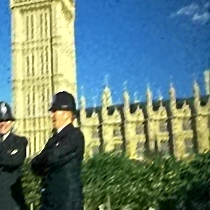}}
    \end{minipage} \\
        \begin{minipage}[c]{0.16\textwidth} \centering
\subfigure{\includegraphics[width=\textwidth]{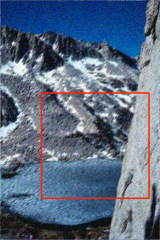}}
    \end{minipage} &
        \begin{minipage}[c]{0.16\textwidth} \centering
\subfigure{\includegraphics[width =\textwidth]{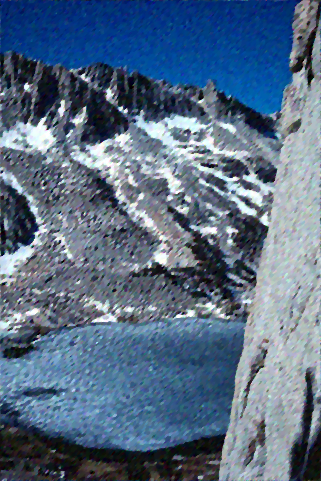}}
    \end{minipage} &
        \begin{minipage}[c]{0.16\textwidth} \centering
\subfigure{\includegraphics[width=\textwidth]{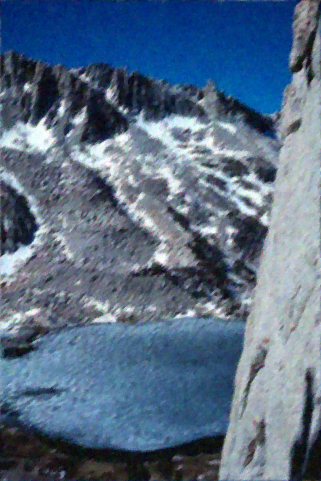}}
    \end{minipage} &
        \begin{minipage}[c]{0.16\textwidth} \centering
   \subfigure{\includegraphics[width =\textwidth]{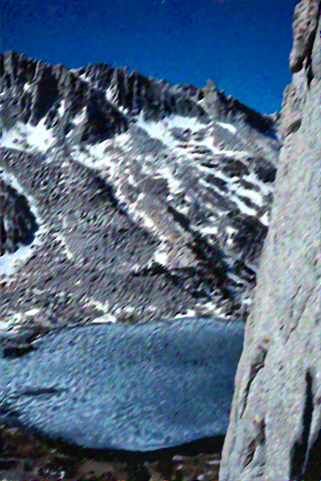}}
    \end{minipage} &
        \begin{minipage}[c]{0.16\textwidth} \centering
   \subfigure{\includegraphics[width =\textwidth]{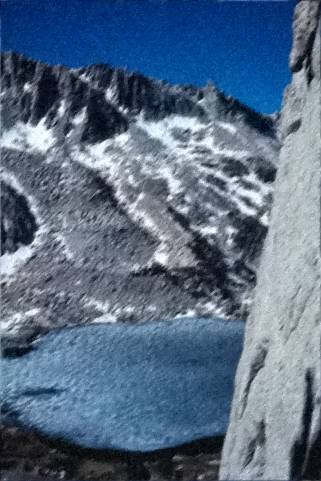}}
    \end{minipage} &
        \begin{minipage}[c]{0.16\textwidth} \centering
   \subfigure{\includegraphics[width =\textwidth]{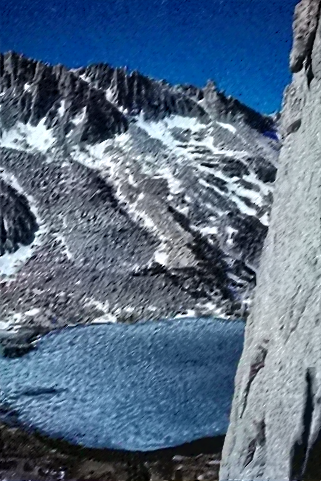}}
   \end{minipage} \\
           \begin{minipage}[c]{0.16\textwidth} \centering
   \subfigure{\includegraphics[width =\textwidth]{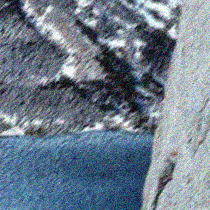}}
    \end{minipage} &
        \begin{minipage}[c]{0.16\textwidth} \centering
   \subfigure{\includegraphics[width =\textwidth]{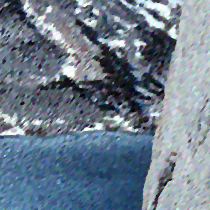}}
    \end{minipage} &
    \begin{minipage}[c]{0.16\textwidth} \centering
   \subfigure{\includegraphics[width = \textwidth]{figs/SCIELAB_ERROR/167083,motionblur/167083_CTV-L2_motion_zoom.png}}
    \end{minipage} &
        \begin{minipage}[c]{0.16\textwidth} \centering
   \subfigure{\includegraphics[width = \textwidth]{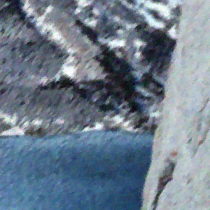}}
    \end{minipage} &
    \begin{minipage}[c]{0.16\textwidth} \centering
   \subfigure{\includegraphics[width =\textwidth]{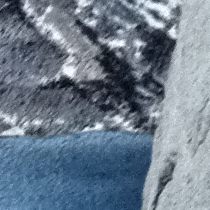}}
    \end{minipage} &
        \begin{minipage}[c]{0.16\textwidth} \centering
\subfigure{\includegraphics[width =\textwidth]{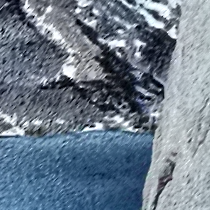}}
    \end{minipage} \\
           \begin{minipage}[c]{0.16\textwidth} \centering
\subfigure{\includegraphics[width=\textwidth]{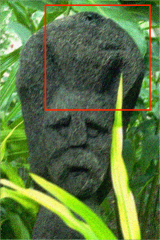}}
    \end{minipage} &
        \begin{minipage}[c]{0.16\textwidth} \centering
\subfigure{\includegraphics[width =\textwidth]{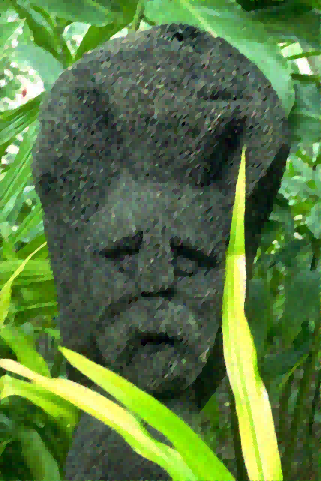}}
    \end{minipage} &
        \begin{minipage}[c]{0.16\textwidth} \centering
\subfigure{\includegraphics[width=\textwidth]{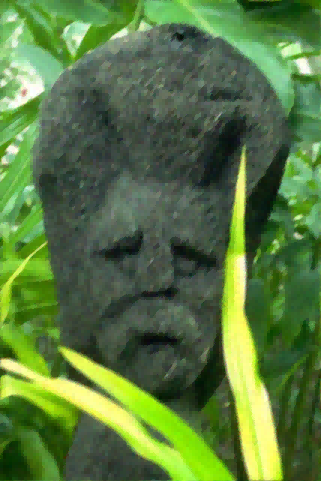}}
    \end{minipage} &
        \begin{minipage}[c]{0.16\textwidth} \centering
   \subfigure{\includegraphics[width =\textwidth]{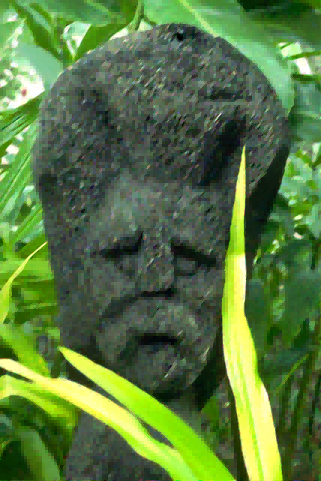}}
    \end{minipage} &
        \begin{minipage}[c]{0.16\textwidth} \centering
   \subfigure{\includegraphics[width =\textwidth]{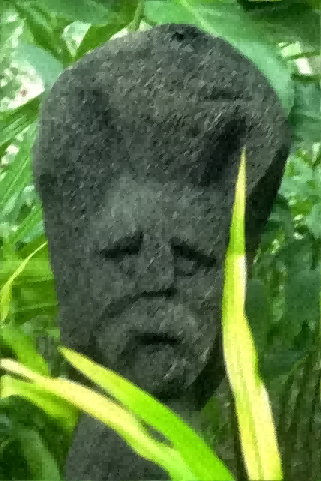}}
    \end{minipage} &
        \begin{minipage}[c]{0.16\textwidth} \centering
   \subfigure{\includegraphics[width =\textwidth]{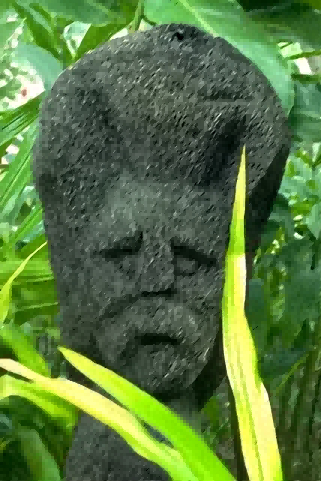}}
   \end{minipage} \\
           \begin{minipage}[c]{0.16\textwidth} \centering
   \subfigure{\includegraphics[width =\textwidth]{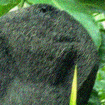}}
    \end{minipage} &
        \begin{minipage}[c]{0.16\textwidth} \centering
   \subfigure{\includegraphics[width =\textwidth]{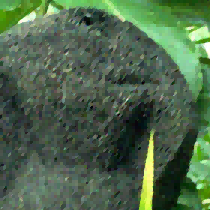}}
    \end{minipage} &
    \begin{minipage}[c]{0.16\textwidth} \centering
   \subfigure{\includegraphics[width = \textwidth]{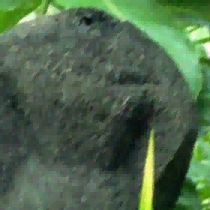}}
    \end{minipage} &
        \begin{minipage}[c]{0.16\textwidth} \centering
   \subfigure{\includegraphics[width = \textwidth]{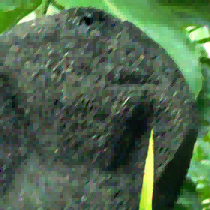}}
    \end{minipage} &
    \begin{minipage}[c]{0.16\textwidth} \centering
   \subfigure{\includegraphics[width =\textwidth]{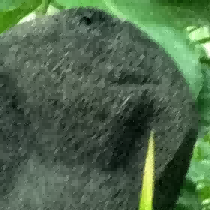}}
    \end{minipage} &
        \begin{minipage}[c]{0.16\textwidth} \centering
\subfigure{\includegraphics[width =\textwidth]{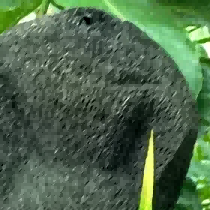}}
    \end{minipage}
\end{tabular}\\
\caption{From left to right: The degraded image, and the restored results by using CTV, GVTV, NLTV, SVTV, SVS-NLTV respectively. The corresponding zooming parts are also shown.}
\label{figmotionblur}
\end{figure}

\begin{figure}[htbp]
\centering
\begin{subfigure}
\centering
\includegraphics[width=0.3\textwidth]{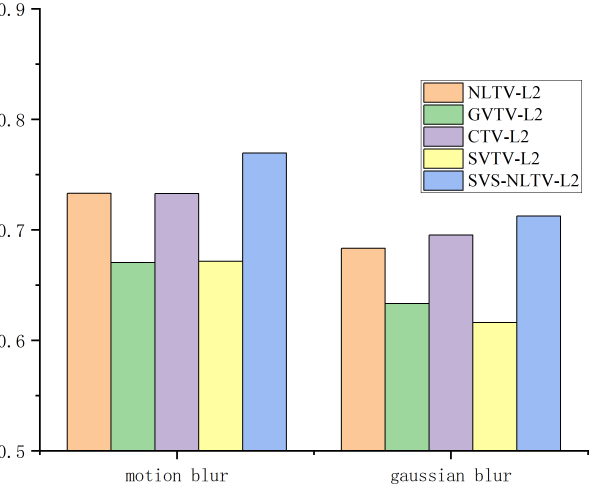}
\end{subfigure}
\begin{subfigure}
\centering
\includegraphics[width=0.3\textwidth]{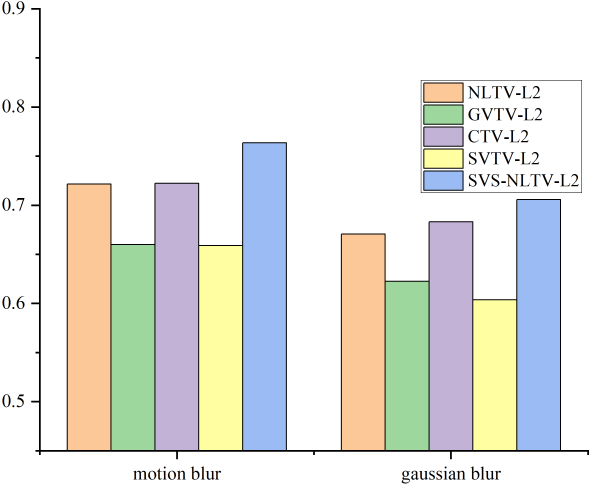}
\end{subfigure}
\begin{subfigure}
\centering
\includegraphics[width=0.3\textwidth]{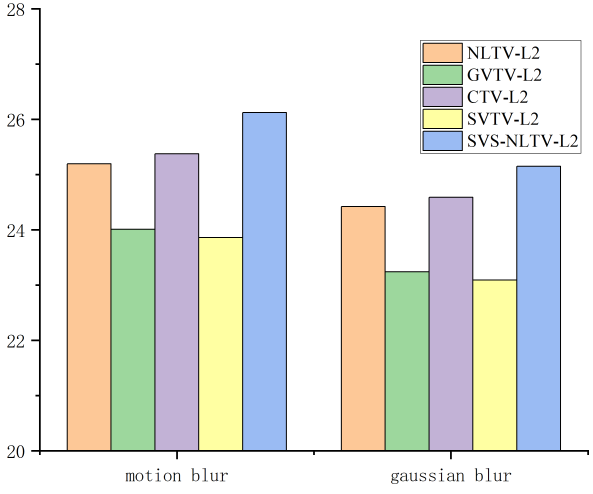}
\end{subfigure}
\caption{The histograms of QSSIM, SSIM, PSNR values of 60 images with Gaussian and Motion blur with Gaussian noise d = 20/255.}
\label{figaveragehistogramblur}
\end{figure}

In this section, we test the performance of the proposed $\stvct$ model for image deblurring problem. For the blurring kernel, we consider the Gaussian kernel of standard deviation 1.5 and the Motion kernel of motion length 3 and motion angle 45. We add Gaussian noise of standard deviation 20/255 in each channel to further degrade the blurred pictures and obtain the degraded testing images. For comparison, we consider CTV-L2\cite{bresson2008fast}, GVTV-L2\cite{rodriguez2009generalized}, SVTV-L2\cite{jia2019color}, NLTV-L2 \cite{Xiaoqun2010Bregmanized} and the proposed $\stvct$. For the proposed $\stvct$ model, we set the parameter of the value channel to be $\mu$ = 0.05, the parameters $\lambda$, $\delta$ in Bregman iteration to be $\lambda =1$, $\delta = 1$. For the regularization parameter $\alpha$, we set a range of [$\frac{\sqrt{N}}{1000}$, $\frac{\sqrt{N}}{10}$] with a step size of 0.01. For the parameter of the regularization parameter($\lambda$), we set a range of [$\frac{\sqrt{N}}{1000}$, $\frac{\sqrt{N}}{10}$] with a step size of 0.01 for both $\stvc$ and NLTV. The regularization parameter($\lambda$) range for CTV model is set to be [$\frac{\sqrt{N}}{1000}$, $\frac{\sqrt{N}}{10}$]. For SVTV and GVTV model, we set a range of [$\frac{\sqrt{N}}{1000}$, $\frac{\sqrt{N}}{10}$].

We compute the PSNR, SSIM, QSSIM values and the S-CIELAB error value (pixel number) for each restored result which is corresponding to the optimal regularization parameter in terms of PSNR value. In Figures \ref{figspatialgaussianblur} and \ref{figspatialmotionblur}, we display the spatial distributions of PSNR, SSIM, and QSSIM values of the restored results by using CTV, GVTV, NLTV, SVTV, SVS-NLTV for 60 testing images. We also show the histograms of the average PSNR, SSIM, and QSSIM values in \ref{figaveragehistogramblur}. We observe from the figures that the proposed SVS-NLTV-L2 model provides competitive individual values and best average values of PSNR, SSIM, and QSSIM.

As examples, we show 6 sets of restored results and the corresponding zoom-in parts in Figures \ref{figgaussianblur} and \ref{figmotionblur} respectively. We see from the results that some detailed geometries and textures can be well recovered, and the noise/color artifacts are effectively eliminated at the mean time by using the proposed $\stvc$ model. Combining the above mentioned results, we emphasize that $\stvc$ performs better than other testing methods. In Table \ref{table5.1} and \ref{table5.2}, we report the measure values of the restored results in Figures \ref{figgaussianblur} and \ref{figmotionblur}. The best values are marked in bold for presentation. We see that the proposed model has the best results among all the testing methods for the testing images.

\begin{figure}[htbp]
\centering
\tabcolsep=1pt
\begin{tabular}{cccccc}
    \begin{minipage}[c]{0.16\textwidth} \centering
   \subfigure{\includegraphics[width =\textwidth]{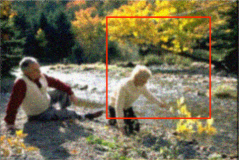}}
    \end{minipage} &
        \begin{minipage}[c]{0.16\textwidth} \centering
   \subfigure{\includegraphics[width =\textwidth]{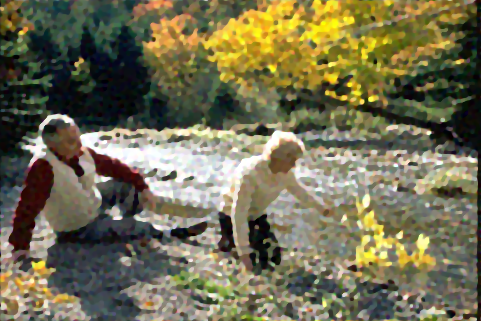}}
    \end{minipage} &
        \begin{minipage}[c]{0.16\textwidth} \centering
   \subfigure{\includegraphics[width =\textwidth]{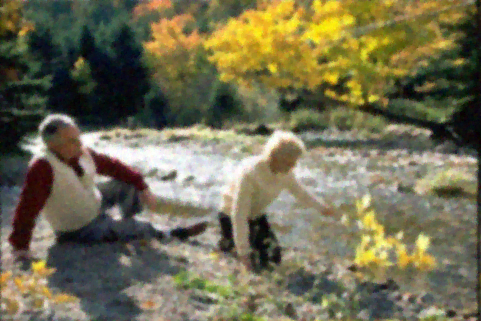}}
   \end{minipage}  &
    \begin{minipage}[c]{0.16\textwidth} \centering
   \subfigure{\includegraphics[width = \textwidth]{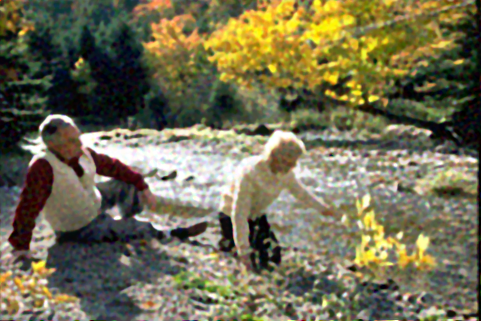}}
    \end{minipage} &
        \begin{minipage}[c]{0.16\textwidth} \centering
   \subfigure{\includegraphics[width = \textwidth]{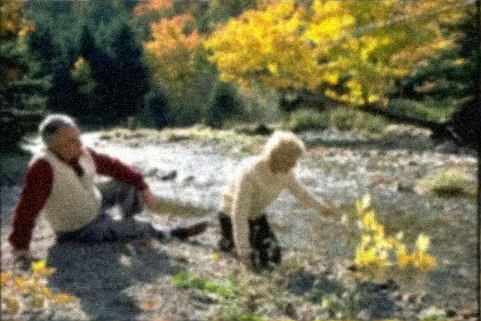}}
    \end{minipage} &
       \begin{minipage}[c]{0.16\textwidth} \centering
   \subfigure{\includegraphics[width =\textwidth]{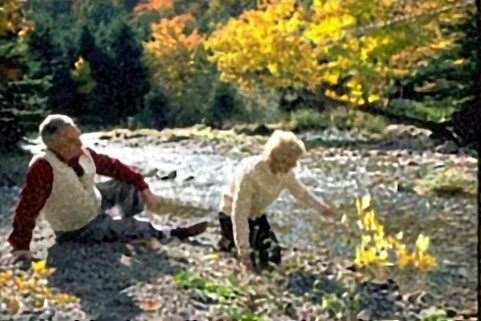}}
    \end{minipage} \\
        \begin{minipage}[c]{0.16\textwidth} \centering
   \subfigure{\includegraphics[width =\textwidth]{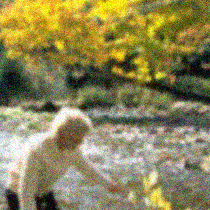}}
    \end{minipage} &
        \begin{minipage}[c]{0.16\textwidth} \centering
   \subfigure{\includegraphics[width =\textwidth]{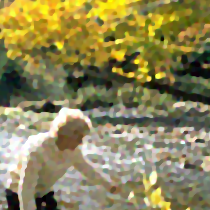}}
    \end{minipage} &
    \begin{minipage}[c]{0.16\textwidth} \centering
   \subfigure{\includegraphics[width = \textwidth]{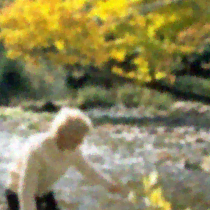}}
    \end{minipage} &
        \begin{minipage}[c]{0.16\textwidth} \centering
   \subfigure{\includegraphics[width = \textwidth]{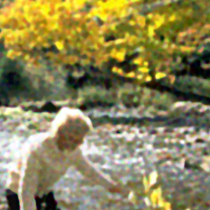}}
    \end{minipage} &
    \begin{minipage}[c]{0.16\textwidth} \centering
   \subfigure{\includegraphics[width =\textwidth]{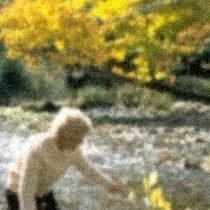}}
    \end{minipage} &
        \begin{minipage}[c]{0.16\textwidth} \centering
\subfigure{\includegraphics[width =\textwidth]{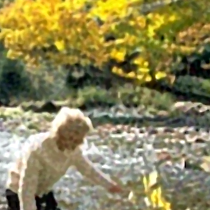}}
    \end{minipage} \\
        \begin{minipage}[c]{0.16\textwidth} \centering
\subfigure{\includegraphics[width=\textwidth]{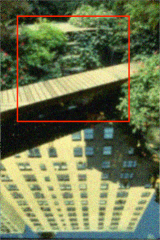}}
    \end{minipage} &
        \begin{minipage}[c]{0.16\textwidth} \centering
\subfigure{\includegraphics[width =\textwidth]{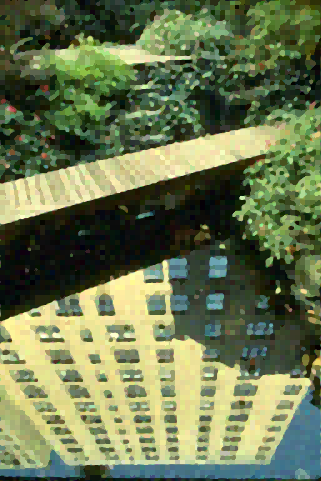}}
    \end{minipage} &
        \begin{minipage}[c]{0.16\textwidth} \centering
\subfigure{\includegraphics[width=\textwidth]{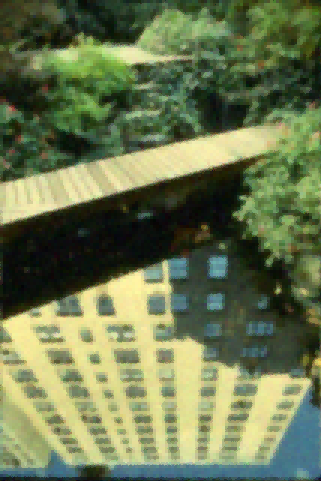}}
    \end{minipage} &
        \begin{minipage}[c]{0.16\textwidth} \centering
   \subfigure{\includegraphics[width =\textwidth]{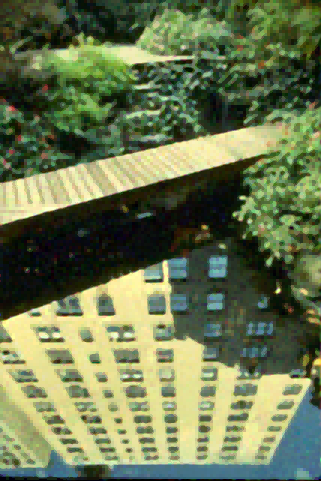}}
    \end{minipage} &
        \begin{minipage}[c]{0.16\textwidth} \centering
   \subfigure{\includegraphics[width =\textwidth]{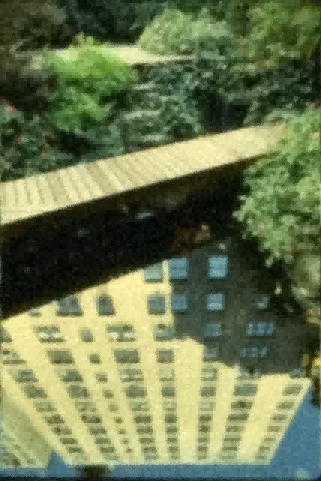}}
    \end{minipage} &
        \begin{minipage}[c]{0.16\textwidth} \centering
   \subfigure{\includegraphics[width =\textwidth]{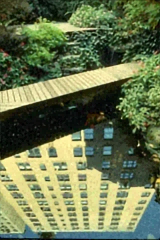}}
   \end{minipage} \\
           \begin{minipage}[c]{0.16\textwidth} \centering
   \subfigure{\includegraphics[width =\textwidth]{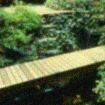}}
    \end{minipage} &
        \begin{minipage}[c]{0.16\textwidth} \centering
   \subfigure{\includegraphics[width =\textwidth]{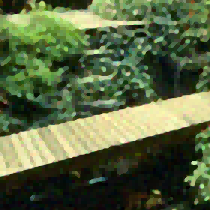}}
    \end{minipage} &
    \begin{minipage}[c]{0.16\textwidth} \centering
   \subfigure{\includegraphics[width = \textwidth]{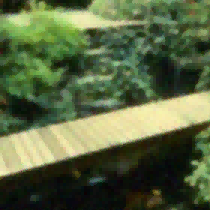}}
    \end{minipage} &
        \begin{minipage}[c]{0.16\textwidth} \centering
   \subfigure{\includegraphics[width = \textwidth]{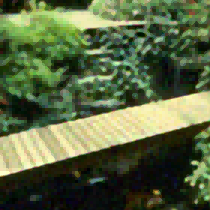}}
    \end{minipage} &
    \begin{minipage}[c]{0.16\textwidth} \centering
   \subfigure{\includegraphics[width =\textwidth]{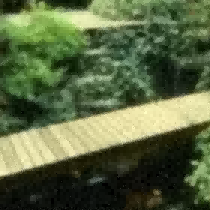}}
    \end{minipage} &
        \begin{minipage}[c]{0.16\textwidth} \centering
\subfigure{\includegraphics[width =\textwidth]{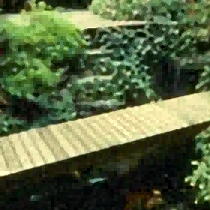}}
    \end{minipage} \\
           \begin{minipage}[c]{0.16\textwidth} \centering
\subfigure{\includegraphics[width=\textwidth]{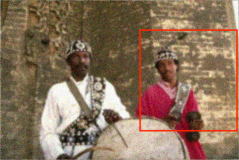}}
    \end{minipage} &
        \begin{minipage}[c]{0.16\textwidth} \centering
\subfigure{\includegraphics[width =\textwidth]{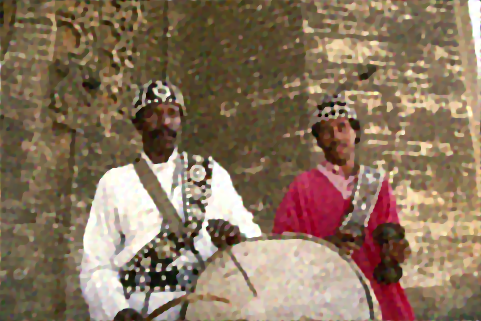}}
    \end{minipage} &
        \begin{minipage}[c]{0.16\textwidth} \centering
\subfigure{\includegraphics[width=\textwidth]{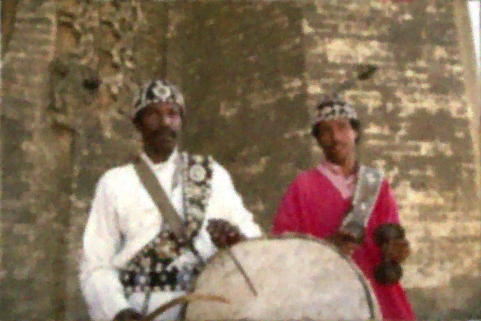}}
    \end{minipage} &
        \begin{minipage}[c]{0.16\textwidth} \centering
   \subfigure{\includegraphics[width =\textwidth]{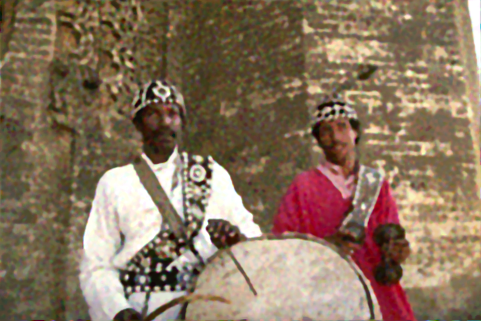}}
    \end{minipage} &
        \begin{minipage}[c]{0.16\textwidth} \centering
   \subfigure{\includegraphics[width =\textwidth]{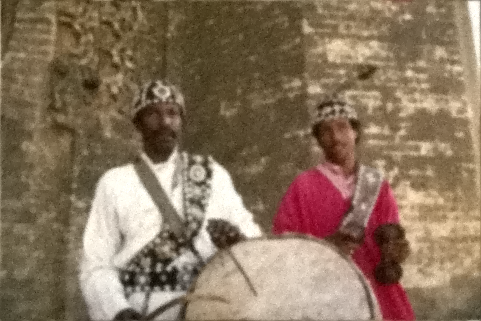}}
    \end{minipage} &
        \begin{minipage}[c]{0.16\textwidth} \centering
   \subfigure{\includegraphics[width =\textwidth]{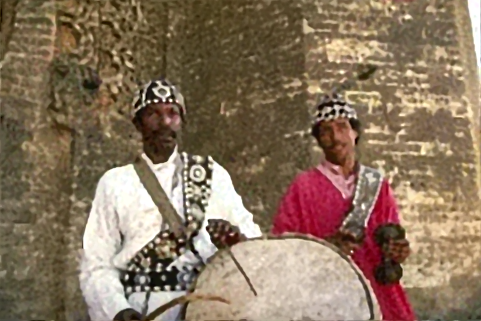}}
   \end{minipage} \\
           \begin{minipage}[c]{0.16\textwidth} \centering
   \subfigure{\includegraphics[width =\textwidth]{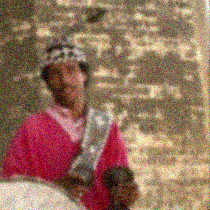}}
    \end{minipage} &
        \begin{minipage}[c]{0.16\textwidth} \centering
   \subfigure{\includegraphics[width =\textwidth]{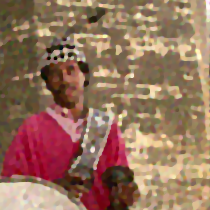}}
    \end{minipage} &
    \begin{minipage}[c]{0.16\textwidth} \centering
   \subfigure{\includegraphics[width = \textwidth]{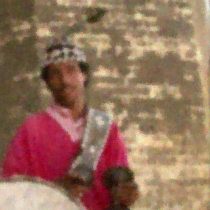}}
    \end{minipage} &
        \begin{minipage}[c]{0.16\textwidth} \centering
   \subfigure{\includegraphics[width = \textwidth]{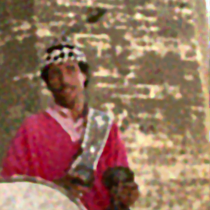}}
    \end{minipage} &
    \begin{minipage}[c]{0.16\textwidth} \centering
   \subfigure{\includegraphics[width =\textwidth]{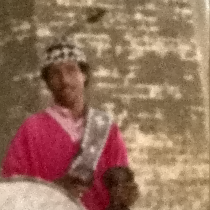}}
    \end{minipage} &
        \begin{minipage}[c]{0.16\textwidth} \centering
\subfigure{\includegraphics[width =\textwidth]{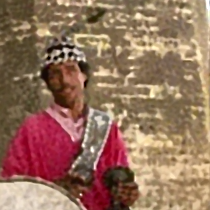}}
    \end{minipage}
\end{tabular}\\
\caption{From left to right: The degraded image, and the restored results by using CTV, GVTV, NLTV, SVTV, SVS-NLTV respectively. The corresponding zooming parts are also shown.}
\label{figgaussianblur}
\end{figure}

\section{Conclusion}
As a summary, we propose and develop a new nonlocal variational technique based on saturation-value similarity for color image restoration. By considering saturation-value similarity of color image patches, two types of total variation functions are studied based on the nonlocal gradient in saturation-value space. The contribution of this paper is twofold. First, we establish saturation-value similarity based nonlocal total variation by incorporating nonlocal method into saturation-value space of color images. We then formulate the proposed color image restoration models by considering L2 fidelity and L1 fidelity to handle different types of noise, e.g., Gaussian noise, Poisson noise, etc. Second, we design an effective and efficient algorithm to solve the proposed optimization problem numerically by employing bregmanized operator splitting method numerically. We also study the convergence of the proposed algorithm. Numerical examples are presented to demonstrate the effectiveness of the proposed models and the efficiency of the numerical scheme, and shows the performance of the proposed model is better than other testing methods.

\section{Appendix}\label{appendix}

\subsection{Proof of proposition \ref{prole}}\hspace{10cm}

\begin{proof}
We take $\mathbf{u}_v$ as example, for fixed \(\phi\in C^1_c(\Omega\times\Omega)^2\) satisfying \(\|\phi\|_\infty\le1\), by using \ref{pro2.1} we have
\begin{gather*}
\int_\Omega \mathbf{u}_v \,\mathrm{div}_{\omega}^v \phi\,\mathrm d x
= \lim_{n\to\infty}\int_\Omega \mathbf{u}_v^n \,\mathrm{div}_{\omega}^v \phi\,\mathrm d x
\;\le\;
\liminf_{n\to\infty}\int_\Omega |\nabla_{\omega}^v \mathbf{u}_v^n|\mathrm d x.
\end{gather*}
Taking the supremum over all such \(\phi\) yields the stated inequality.
\end{proof}

\subsection{Proof of proposition \ref{proapp}}\hspace{10cm}

\begin{proof}
We also take $\mathbf{u}_v$ as example. First, we remark that it's easy to deduce that we can choose \(\mathbf{u}^\varepsilon\in W^{1, 1}(\Omega)\cap C^\infty(\Omega)\) satisfying, 
\begin{gather*}
\int_\Omega|\mathbf{u}^\varepsilon - \mathbf{u}|\,\mathrm d x \;\to\;0
\quad(\varepsilon\to0).
\end{gather*}
First, $\omega_v$ clearly has the upper bounded M, with Jensen inequality and absolute value inequality we have
\begin{gather*}
\begin{aligned}
&\int_{\Omega} |\nabla^v_{\omega} (\mathbf{u}^{\epsilon}-\mathbf{u})| \mathrm d x\\
=&\int_{\Omega} \sqrt{ \int_{\Omega} \bigl((\mathbf{u}^{\epsilon}-\mathbf{u})(x)-(\mathbf{u}^{\epsilon}-\mathbf{u})(y)\bigr)^2 \omega_v  \mathrm d y  } \mathrm d x\\
\le & \sqrt{M} \int_{\Omega} \int_{\Omega} |\mathbf{u}^{\epsilon}-\mathbf{u})(x)|+|(\mathbf{u}^{\epsilon}-\mathbf{u})(y)| \mathrm d y  \mathrm d x\\
= & 2 \sqrt{M} C(\Omega) \int_{\Omega} |\mathbf{u}^{\epsilon}-\mathbf{u}| \mathrm d x,\\
\end{aligned}
\end{gather*}
since $\mathbf{u}^{\epsilon} \xrightarrow{L^1(\Omega)} \mathbf{u}$, any $\delta > 0$, we can find $\epsilon_0 > 0$, such that if $\epsilon < \epsilon_0$,
\begin{gather*}
\delta > \int_{\Omega} |\nabla^v_{\omega} (\mathbf{u}^{\epsilon}-\mathbf{u})| \mathrm d x \geq \int_{\Omega} | (\mathbf{u}^{\epsilon}_v-\mathbf{u}_v) div^v_{\omega} \phi | \mathrm d x,
\end{gather*}
thus we have
\begin{gather*}
\delta+\int_{\Omega} \mathbf{u}_v div^v_{\omega} \phi \mathrm d x \ge \int_{\Omega} \mathbf{u}^{\epsilon}_v) div^v_{\omega} \phi \mathrm d x.
\end{gather*}
Then take the supremum of both sides, by the arbitrariness of $\delta$ , for $\epsilon \le \epsilon_0$, we have 
\begin{equation}
\int_{\Omega} |\nabla^v_{\omega} \mathbf{u}| \mathrm d x\geq \int_{\Omega} |\nabla^v_{\omega} \mathbf{u}^{\epsilon}|\mathrm d x.  
\label{eq2.3.2}
\end{equation}
On the other hand, by using lower semi-continuity \ref{prole}, we have
\begin{equation}
\liminf_{\varepsilon\to0}\int_\Omega|\nabla_{\omega}^v \mathbf{u}^\varepsilon|\mathrm d x
\;\ge\;
\int_\Omega|\nabla_{\omega}^v \mathbf{u}|\mathrm d x. 
\label{eq2.3.1}
\end{equation}
Combining these two inequalities \ref{eq2.3.1} and \ref{eq2.3.2} yields
\begin{gather*}
\int_\Omega|\nabla_{\omega}^v \mathbf{u}|\mathrm d x
\;\le\;
\liminf_{\varepsilon\to0}\int_\Omega|\nabla_{\omega}^v \mathbf{u}^\varepsilon|\mathrm d x
\;\le\;
\limsup_{\varepsilon\to0}\int_\Omega|\nabla_{\omega}^v \mathbf{u}^\varepsilon|\mathrm d x
\;\le\;
\int_\Omega|\nabla_{\omega}^v \mathbf{u}|\mathrm d x.
\end{gather*}
Thus we have
\begin{gather*}
\lim_{\varepsilon\to0}\int_\Omega|\nabla_{\omega}^v \mathbf{u}^\varepsilon|\mathrm d x
=\int_\Omega|\nabla_{\omega}^v \mathbf{u}|\mathrm d x,
\end{gather*}
as claimed.
\end{proof}

\subsection{Proof of proposition {\ref{procom}}}\hspace{10cm}

\begin{lemma}\label{lemma2.5}
Let $\Omega\subset\mathbb R^n$ be bounded, if $\mathbf{u} \in W^{1,\infty}(\Omega)$, then for $ x\ a.e. \in\Omega$,  there exists $r_0>0$ and a constant $C(n) > 0$ such that 
\begin{gather*}
\int_{B_{r_0}(x)}\bigl|\mathbf{u}(y)-\mathbf{u}(x)\bigr|\,\mathrm d y
\;\ge\;
C(n)\,\bigl|\nabla \mathbf{u}(x)\bigr|.
\end{gather*}
\end{lemma}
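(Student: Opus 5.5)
The plan is to argue pointwise at points of differentiability, via Rademacher's theorem, and compare $\mathbf{u}$ with its first-order Taylor expansion on a small ball. Since $\mathbf{u}\in W^{1,\infty}(\Omega)$, $\mathbf{u}$ has a locally Lipschitz representative. By Rademacher's theorem it is differentiable at almost every $x\in\Omega$, with classical gradient equal to the weak gradient $\nabla\mathbf{u}(x)$. Fix such an $x$. If $\nabla\mathbf{u}(x)=0$ the inequality is trivial, since the left-hand side is nonnegative. So assume $\nabla\mathbf{u}(x)\neq 0$ and choose $r_1>0$ with $B_{r_1}(x)\subset\Omega$.

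First, write for $y\in B_{r}(x)$, $r\le r_1$,
\[
\mathbf{u}(y)-\mathbf{u}(x)=\nabla\mathbf{u}(x)\cdot(y-x)+R_x(y),\qquad |R_x(y)|\le \eta(|y-x|)\,|y-x|,
\]
where $\eta(t)\to0$ as $t\to0$ by differentiability at $x$. By the reverse triangle inequality,
\[
\int_{B_r(x)}|\mathbf{u}(y)-\mathbf{u}(x)|\,\mathrm d y\ \ge\ \int_{B_r(x)}|\nabla\mathbf{u}(x)\cdot(y-x)|\,\mathrm d y-\int_{B_r(x)}|R_x(y)|\,\mathrm d y .
\]
Second, evaluate the main term. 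Substitute $y=x+rz$ and rotate so that $\nabla\mathbf{u}(x)$ points along $e_1$. This gives
\[
\int_{B_r(x)}|\nabla\mathbf{u}(x)\cdot(y-x)|\,\mathrm d y=\kappa_n\,r^{n+1}\,|\nabla\mathbf{u}(x)|,\qquad \kappa_n:=\int_{B_1(0)}|z_1|\,\mathrm d z>0,
\]
which depends only on $n$. The remainder is bounded by $\sup_{t\le r}\eta(t)\,|B_1|\,r^{n+1}$.

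Third, choose $r_0\le r_1$ so small that $\sup_{t\le r_0}\eta(t)\,|B_1|\le \tfrac{\kappa_n}{2}|\nabla\mathbf{u}(x)|$. This is possible because $|\nabla\mathbf{u}(x)|>0$ and $\eta\to0$. We then obtain
\[
\int_{B_{r_0}(x)}|\mathbf{u}(y)-\mathbf{u}(x)|\,\mathrm d y\ \ge\ \tfrac{\kappa_n}{2}\,r_0^{\,n+1}\,|\nabla\mathbf{u}(x)| .
\]

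The main obstacle is the form of the constant. The left-hand side scales like $r_0^{n+1}$, so a constant depending only on $n$, with no dependence on $r_0$, cannot hold literally for small balls. I would read the statement either with $C=C(n)r_0^{n+1}$, or in the normalized form
\[
\frac{1}{r_0^{\,n+1}}\int_{B_{r_0}(x)}|\mathbf{u}(y)-\mathbf{u}(x)|\,\mathrm d y\ \ge\ \tfrac{\kappa_n}{2}|\nabla\mathbf{u}(x)|,
\]
in which $C(n)=\kappa_n/2$ is genuinely dimensional. I would state the lemma this way, since that is the version needed for the compactness argument of Proposition \ref{procom}, where the lower bound $N$ on the weights converts nonlocal control into local gradient control on balls of fixed radius. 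Beyond this, the only care needed is that $r_0$ depends on $x$ through the modulus $\eta$ and through $|\nabla\mathbf{u}(x)|$.
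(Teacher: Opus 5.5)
Your proof is correct and follows the paper's route exactly. Like the paper, you expand to first order at a point of differentiability; you properly justify this via Rademacher and handle the case $\nabla\mathbf{u}(x)=0$, both of which the paper skips. You then use the reverse triangle inequality and compute the main term $\kappa_n r^{n+1}|\nabla\mathbf{u}(x)|$, where your $\kappa_n$ equals the paper's $C_0=\frac{2\omega_{n-2}}{(n-1)(n+1)}$. Finally you choose $r_0$ small enough to absorb the remainder, arriving at the same constant $\tfrac12 C_0 r_0^{n+1}$ that the paper calls $C(n)$.

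You are also right to let $r_0$ depend on $x$. The paper's closing attempt to make it uniform, taking $r_0=C_0/(D_0L)$ with $L=\|\nabla\mathbf{u}\|_\infty$, is unjustified: $L$ only gives $|R_x(y)|\le 2L|y-x|$ and says nothing about the rate at which $|R_x(y)|/|y-x|\to 0$. For the same reason, the lemma cannot supply the ``balls of fixed radius'' you allude to for the compactness argument.
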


\begin{proof}
Since $\mathbf{u}\in W^{1, 1} $, we take the first order Taylor expansion around $x$,
\begin{gather*}
\mathbf{u}(y)-\mathbf{u}(x)
=\nabla \mathbf{u}(x)\cdot(y-x)+R_x(y),
\end{gather*}
where the remainder $R_x(y)$ satisfies
\begin{gather*}
\bigl|R_x(y)\bigr|\le\varepsilon(r)\,|y-x|,
\quad
\varepsilon(r)\to0\quad(r\to0).
\end{gather*}
Hence 
\begin{gather*}
\int_{B_r(x)}\bigl|\mathbf{u}(y)-\mathbf{u}(x)\bigr|\,\mathrm d y
\;\ge\;
\int_{B_r(x)}\bigl|\nabla \mathbf{u}(x)\cdot(y-x)\bigr|\,\mathrm d y
\;-\;
\int_{B_r(x)}\bigl|R_x(y)\bigr|\,\mathrm d y.
\end{gather*}
\medskip
Let $\nabla \mathbf{u}(x)=\beta\,e$, $\beta =|\nabla \mathbf{u}(x)|$, $e\in S^{n-1}$.  Then
\begin{gather*}
\int_{B_r(x)}\bigl|\nabla \mathbf{u}(x)\cdot(y-x)\bigr|\,\mathrm d y
=\beta \int_{|h|\le r}|e\cdot h|\,\mathrm d h.
\end{gather*}
In polar coordinates $h=r'\theta$, $\theta\in S^{n-1}$, $\mathrm d h=(r')^{n-1}\mathrm d r'\,\mathrm d \theta$, and a standard identity
\(\int_{S^{n-1}}|\theta\cdot e|\,\mathrm d \theta
=\tfrac{2 \omega_{n-2}}{n-1}\)
gives ( Here $\omega_{n-1}=\mathrm{Area}(S^{n-1})$.) 
\begin{gather*}
\int_{|h|\le r}|e\cdot h|\,\mathrm d h
=\int_0^r(r')^n\,\mathrm d r'\,
\int_{S^{n-1}}|\theta\cdot e|\,\mathrm d\theta
=\frac{2\,\omega_{n-2}}{(n-1)(n+1)}\,r^{n+1}.
\end{gather*}
Set $C_0=\tfrac{2\,\omega_{n-2}}{(n-1)(n+1)}$, thus
\begin{gather*}
\int_{B_r(x)}\bigl|\nabla \mathbf{u}(x)\cdot(y-x)\bigr|\,\mathrm d y
=C_0 \beta r^{n+1}.
\end{gather*}

\begin{gather*}
\int_{B_r(x)}\bigl|R_x(y)\bigr|\,\mathrm d y
\le\varepsilon(r)\int_{B_r(x)}|y-x|\,\mathrm d y
=\varepsilon(r)\,\frac{\omega_{n-1}}{n+1}\,r^{n+1}.
\end{gather*}
Set $D_0=\tfrac{\omega_{n-1}}{n+1}$, we have 
\begin{gather*}
\int_{B_r(x)}\bigl|\mathbf{u}(y)-\mathbf{u}(x)\bigr|\,\mathrm d y
\ge r^{n+1}\bigl(C_0 - \frac{D_0}{\beta}\,\varepsilon(r)\bigr) \beta.
\end{gather*}
Since $\varepsilon(r)\to0$, choose $r_0>0$ so small that for all $r\le r_0$, $\varepsilon(r)\le \tfrac12\,\frac{C_0 }{D_0 \beta} $, then for $r\le r_0$,
\begin{gather*}
\int_{B_r(x)}\bigl|\mathbf{u}(y)-\mathbf{u}(x)\bigr|\,\mathrm d y
\ge \frac{C_0}{2}\,r^{n+1}\beta
=\frac{\omega_{n-2}}{(n-1)(n+1)}\,r^{n+1}\,\bigl|\nabla \mathbf{u}(x)\bigr|
\end{gather*}
For every $0<r\le r_0$ and every $x\in\Omega$, we have
\begin{gather*}
\int_{B_r(x)}\bigl|\mathbf{u}(y)-\mathbf{u}(x)\bigr|\,\mathrm d y
\;\ge\;
\tfrac12\,C_0\,r^{n+1}\,\bigl|\nabla \mathbf{u}(x)\bigr|.
\end{gather*}
In particular, noting that $\mathbf{u} \in W^{1, \infty}$, setting $\| \nabla u(x) \|_{\infty} =L$, by taking $r_0 = \frac{C_0}{D_0 L}$ 
\begin{gather*}
C(n) \;=\;\tfrac12\,C_0\,r_0^{\,n+1}
\;=\;\frac{\omega_{n-2}}{(n-1)\omega_{n-1}}\,r_0^{\,n+1},
\end{gather*}
yields
\begin{gather*}
\int_{B_{r_0}(x)}\bigl|\mathbf{u}(y)-\mathbf{u}(x)\bigr|\,\mathrm d y
\;\ge\;
C(n) \,\bigl|\nabla \mathbf{u}(x)\bigr|,
\quad\forall\ x\in\Omega.
\end{gather*}
as claimed.
\end{proof}

We finally give the proof of proposition \ref{procom}.

\begin{proof}
Assume that there exists $M>0$ such that
\begin{gather*}
\|\mathbf{u}^n \|_{L^1(\Omega)} + \int_\Omega |\nabla_{\omega}^s \mathbf{u}^n|\mathrm d x + \int_\Omega |\nabla_{\omega}^v \mathbf{u}^n|\mathrm d x\;\le\;M
\quad\forall\,n.
\end{gather*}
By using Proposition \ref{proapp}, for each $n\in\mathbb N$, we can choose a smooth approximation $\mathbf{p}^n\in  W^{1, \infty}(\Omega) \cap C^\infty(\Omega)$ such that
\begin{gather*}
\|\mathbf{u}^n - \mathbf{p}^n\|_{L^1(\Omega)} \le \tfrac1n,
\quad
\int_\Omega |\nabla_{\omega} \mathbf{p}^n|\mathrm d x \le \int_\Omega |\nabla_{\omega} \mathbf{u}^n|\mathrm d x + \tfrac1n.
\end{gather*}
Noting that $\{\mathbf{u}^n\}$ is uniformly bounded in $\sbvc (\Omega)$, we derive the following inequalities by using Jensen inequality,
\begin{gather*}
\begin{aligned}
&\int_{\Omega} |\nabla_{\omega}^s \mathbf{u}^n| \mathrm d x\,+\mu \int_{\Omega} |\nabla_{\omega}^v \mathbf{u}^n| \mathrm d x\, +\frac{1+\mu}{n}\\
\ge & \int_{\Omega} |\nabla_{\omega}^s \mathbf{p}^n| \mathrm d x\,+\mu \int_{\Omega} |\nabla_{\omega}^v \mathbf{p}^n| \mathrm d x\,\\
\ge &N \Bigl( \int_{\Omega} \sqrt{\int_{\Omega}  \bigl(\mathbf{p}^n_s(x)-\mathbf{p}^n_s(y) \bigr)^2  \mathrm d y\,} \mathrm d x\,+\mu \int_{\Omega} \sqrt{\int_{\Omega}  \bigl(\mathbf{p}^n_v(x)-\mathbf{p}^n_v(y) \bigr)^2  \mathrm d y\,} \mathrm d x\, \Bigr)\\
\end{aligned}
\end{gather*}
By using Lemma \ref{lemma2.5}, we know that for a zero measure set E in $\Omega$, the integral on E is 0, thus we can transform the above inequality into the following inequality
\begin{gather*}
\begin{aligned}
 &N  \big( \int_{\Omega/E}\int_{r_x}\bigl|\mathbf{p}^n_s(x)-\mathbf{p}^n_s(y)\bigr|\,\mathrm d y\,\mathrm d x\, +\mu \int_{\Omega/E}\int_{r_x}\bigl|\mathbf{p}^n_v(x)-\mathbf{p}^n_v(y)\bigr|\,\mathrm d y\,\mathrm d x \big)\,\\
\ge &\mu N C \int_{\Omega/E} \bigl|\nabla \mathbf{p}^n_s(x) \bigr| + \bigl|\nabla \mathbf{p}^n_v(x) \bigr| \mathrm d x\\
\ge &\mu N C \int_{\Omega/E} \bigl|\nabla \mathbf{p}^n(x) \bigr| \mathrm d x\, =\mu N C \int_{\Omega} \bigl|\nabla \mathbf{p}^n(x) \bigr| \mathrm d x \\
\end{aligned}
\end{gather*}
We then obtain
\begin{gather*}
\begin{aligned}
\int_\Omega |\nabla \mathbf{p}^n|\,dx  + \|\mathbf{p}^n\|_{L^1}
\;& \le\;
\mu N C \Bigl(\int_\Omega |\nabla_{\omega}^s \mathbf{p}^n| + \int_\Omega |\nabla_{\omega}^v \mathbf{p}^n| +  \|\mathbf{p}^n\|_{L^1(\Omega)}\Bigr) \\
&\le \mu M N C+\frac{1+\mu}{n}.
\end{aligned}
\end{gather*}
Hence $\{\mathbf{p}^n\}$ (and therefore $\{\mathbf{u}^n\}$) is uniformly bounded in $\bv (\Omega)$. By using the embedding property of $\bv (\Omega)$, we can extract a subsequence (still denoted as $\mathbf{u}^n$) and find $\mathbf{u} \in \bv (\Omega)$ such that
\begin{gather*}
\mathbf{u}^n \longrightarrow \mathbf{u}
\quad\text{in }L^1(\Omega).
\end{gather*}
Lower semicontinuity \ref{prole} then gives
\begin{gather*}
\int_\Omega |\nabla_{\omega}^s \mathbf{u}|\;\le\;\liminf_{n\to\infty}\int_\Omega |\nabla_{\omega}^s \mathbf{u}^n|, \quad \int_\Omega |\nabla_{\omega}^v \mathbf{u}|\;\le\;\liminf_{n\to\infty}\int_\Omega |\nabla_{\omega}^v \mathbf{u}^n|,
\end{gather*}
which shows $\mathbf{u}\in \sbvc (\Omega)$ and completes the proof.

\end{proof}

	\bibliographystyle{siam}
	\bibliography{ref.bib}

\end{document}